\newcommand{\regret}{\mathrm{Reg}}
\newtheorem{theorem}{Theorem}
\newtheorem{corollary}{Corollary}[theorem]
\newtheorem{lemma}{Lemma}[section]
\newtheorem{assumption}{Assumption} 
\newtheorem{definition}{Definition}
\def\eqref#1{equation~\ref{#1}}
\def\1{\bm{1}}
\DeclareMathAlphabet{\mathsfit}{\encodingdefault}{\sfdefault}{m}{sl}
\SetMathAlphabet{\mathsfit}{bold}{\encodingdefault}{\sfdefault}{bx}{n}
\newcommand\hl[1]{%
  \bgroup
  \hskip0pt\color{red!80!black}%
  #1%
  \egroup
}
\title{Efficient Reinforcement Learning in Factored MDPs with Application to Constrained RL}
\author{Xiaoyu Chen \hspace{3mm} Jiachen Hu   \\
Key Laboratory of Machine Perception, MOE, \\School of EECS,
Peking University \\
\texttt{\{cxy30, NickH\}@pku.edu.cn} \\
\AND
Lihong Li \\ % \thanks{...}
Amazon \\
\texttt{llh@amazon.com} \\
\AND
Liwei Wang \\
Key Laboratory of Machine Perception, MOE,\\ School of EECS,
Peking University \\
Center for Data Science, Peking University \\
\texttt{wanglw@cis.pku.edu.cn}
}
\begin{document}

\maketitle

\begin{abstract}
Reinforcement learning (RL) in episodic, factored Markov decision processes (FMDPs) is studied. We propose an algorithm called FMDP-BF, 
%which leverages the factorization structure of FMDP.  The regret of FMDP-BF 
whose regret is exponentially smaller than that of optimal algorithms designed for non-factored MDPs, and improves on the  previous FMDP result of~\citet{osband2014near} by a factor of $\sqrt{nH|\mathcal{S}_i|}$, where $|\mathcal{S}_i|$ is the cardinality of the factored state subspace, $H$ is the planning horizon and $n$ is the number of factored transitions. We also provide a lower bound, which shows near-optimality of our algorithm w.r.t. timestep $T$, horizon $H$ and factored state-action subspace cardinality. Finally, as an application, we study a new formulation of constrained RL, RL with knapsack constraints (RLwK), and provide the first sample-efficient algorithm based on FMDP-BF.
%To show the optimality of our bounds, a minimax lower bound is also provided.
%\lihong{Only wrt some quantities?}
%for FMDPs. By leveraging the internal structure of factored MDP, we achieve regret bound that may be exponentially smaller in terms of the number of states compared with that in non-factored setting~\citep{azar2017minimax,jin2018q,zanette2019tighter}. Our regret bounds improve the best previous bound~\citep{osband2014near} for factored MDP by a factor of $\sqrt{H|\mathcal{S}_i|}$, where $|\mathcal{S}_i|$ is the cardinality of the factored state subspace and $H$ is the planning horizon. 
%formulate the problem of RL with Knapsack constraints (RLwK), which is a specific setting of constrained RL. We find a strong connection between FMDP and RLwK, in which we provide a sample efficient algorithm based on FMDP-BF.
\end{abstract}

\section{Introduction}
\label{Introduction}

Reinforcement learning (RL) is concerned with sequential decision making problems where an agent interacts with a stochastic environment and aims to maximize its cumulative rewards. The environment is usually modeled as a Markov Decision Process (MDP) whose transition kernel and reward function are unknown to the agent. A main challenge of the agent is efficient exploration in the MDP, so as to minimize its regret, or the related sample complexity of exploration.

Extensive study has been done on the \textit{tabular} case, in which almost no prior knowledge is assumed on the MDP dynamics.
%there are little prior knowledge about the environment beyond its state and action spaces.
The regret or sample complexity bounds typically depend polynomially on the cardinality of state and action spaces~\citep[e.g.,][]{strehl09pac, jaksch10near, azar2017minimax, dann2017unifying, jin2018q, dong2019q, zanette2019tighter}. Moreover, matching lower bounds~\citep[e.g.,][]{jaksch10near} imply that these results cannot be improved without additional assumptions.  On the other hand, many RL tasks  involve large state and action spaces, for which these regret bounds are still excessively large.
%polynomial lower bound has been proven in terms of the regret in such case \citep{auer2009near}, which indicates that the result cannot be  further improved without additional assumptions. Unfortunately, modern RL tasks often involve in extensively large state-action space, which makes this level of regret unacceptable.
%so we need more sophisticated methods to get rid of the polynomial dependency. To this end, a growing number of works have made progress via function approximation in value-based learning \cite{jiang2017contextual, wang2019optimism} and policy-based learning \cite{mou2020sample}. 

In many practical scenarios, one can often take advantage of specific structures of the MDP to develop more efficient algorithms.
%the MDP always maintain benign internal structures, and the agent has some prior understanding of the environment beyond tabular rasa. 
For example, in robotics, the state may be high-dimensional, but the subspaces of the state may evolve independently of others, and only depend on a low-dimensional subspace of the previous state.
%may need to deal with multiple tasks simultaneously to finally reach its goal, and the tasks are independent. 
Formally, these problems can be described as factored MDPs
%, which have been studied in the previous literature~
\citep{boutilier2000stochastic,kearns1999efficient,guestrin03efficient}. Most relevant to the present work is \cite{osband2014near}, who proposed a posterior sampling algorithm and a UCRL-like algorithm that both enjoy $\sqrt{T}$ regret, where $T$ is the maximum timestep. Their regret bounds have a linear dependence on the time horizon and each factored state subspace. It is unclear whether this bound is tight or not. 
%or can be further improved for episodic factored  MDP.

In this work, we tackle this problem by proposing algorithms with improved regret bounds, and developing corresponding lower bounds for episodic FMDPs.
%instead of using function approximation,
We propose a sample- and computation-efficient algorithm called FMDP-BF based on the principle of \textit{optimism in the face of uncertainty}, 
%\citep{azar2017minimax,dann2019policy, zanette2019tighter},
and prove its regret bounds. We also provide a lower bound,
%, using similar ideas of \cite{auer2009near} for episodic FMDPs, 
which implies that our algorithm is near-optimal with respect to the timestep $T$, the planning horizon $H$ and factored state-action subspace cardinality $|\mathcal{X}[Z_i]|$. 

%We found constrained RL an important application of our algorithms, which is usually formalized as constrained MDP (CMDP). Since the behaviors of real-world agents are constrained under many different settings, constrained RL has been extensively studied both empirically and theoretically \cite{brantley2020constrained, ray2019benchmarking, altman1999constrained, efroni2020exploration, singh2020learning}. There are many important tasks illustrating the importance of constraints in RL, such as autonomous driving, robotics, and imitation learning. We propose a slightly different setting based on \textit{hard constraints} case (knapsack RL) \cite{singh2020learning, brantley2020constrained} that we believe is more close to the real-world applications.
As an application, we study a novel formulation of constrained RL, known as \emph{RL with knapsack constraints} (\emph{RLwK}), which we believe is natural to capture many scenarios in real-life applications.
We apply FMDP-BF to this setting, to obtain a statistically efficient algorithm with a regret bound that is near-optimal in terms of $T$, $S$, $A$, and $H$.
%formulate the setting of RL with Knapsack constraints (RLwK) and apply our algorithm for factored MDP to this setting. RLwK is a specific setting of reinforcement learning with constraints, which is an important field and has been extensively studied both empirically and theoretically \citep{brantley2020constrained, ray2019benchmarking, altman1999constrained, efroni2020exploration, singh2020learning}. We believe that our RLwK setting is closer to the real world application and cannot be directly solved by previous algorithms for constrained RL (See Section~\ref{sec: Knapsack RL} for the detailed discussion). We apply FMDP-BF to this setting and obtain a statistically efficient algorithm with regret bound near-optimal in terms of the timestep $T$, the number of states $S$ and actions $A$, and the length of planning horizon $H$.

Our contributions are summarized as follows:
%\lihong{The following repeats what's just said.}
\begin{compactenum}
    \item We propose an algorithm for FMDP, and prove its regret bound
    %setting based on the \textit{optimism} principle and prove the sample efficiency of our algorithm with near-optimal regret bound. 
    that improves on the previous result of \citet{osband2014near} by a factor of $\sqrt{nH|\mathcal{S}_i|}$.
    %, where $|\mathcal{S}_i|$ is the cardinality of each factored state subspace and $H$ is the horizon. %Our algorithms follows the ideas of \cite{azar2017minimax} and \cite{zanette2019tighter} by establishing a certainty equivalence model and perform optimistic planning on this model. 
    \item We prove a regret lower bound for FMDP, %based on the constructions of \cite{auer2009near}. The lower bound 
    which implies that our regret bound is near-optimal in terms of timestep $T$, horizon $H$ and factored state-action subspace cardinality.
    %\lihong{What is $|\mathcal{X}[Z_i]|$?}
    \item We apply FMDP-BF in RLwK, a novel constrained RL setting with knapsack constraints, and prove a regret bound that is near-optimal in terms of $T, S$, $A$ and $H$.
    %We formalized a knapsack episodic RL setting called RLwK that are slightly different and more natural from previous constrained RL settings \citep{brantley2020constrained, singh2020learning}. We propose a new algorithm in this setting based on our results for FMDP with near-optimal regret bound in terms of $S, A, H$.
\end{compactenum}

\section{Preliminaries}
\label{sec:setting}
We consider the setting of a tabular episodic Markov decision process (MDP), $(\mathcal{S},\mathcal{A},H,\mathbb{P}, R)$, where $\mathcal{S}$ is the set of states, $\mathcal{A}$ is the action set, $H$ is the number of steps in each episode. $\mathbb{P}$ is the transition probability matrix so that $\mathbb{P}(\cdot|s,a)$ gives the distribution over states if action $a$ is taken on state $s$, and $R(s,a)$ is the reward distribution of taking action $a$ on state $s$ with support $[0,1]$. We use $\bar{R}(s,a)$ to denote the expectation $\mathbb{E}[R(s,a)]$. %\lihong{Why not $P$, instead of $\mathbb{P}$?}

In each episode, the agent starts from an initial state $s_1$ that may be arbitrarily selected. %picked by an adversary. 
At each step $h \in [H]$, the agent observes the current state $s_h\in \mathcal{S}$, takes action $a_h \in \mathcal{A}$, receives a reward $r_h$ sampled from $R(s_h,a_h)$, and transits to state $s_{h+1}$ with probability $\mathbb{P}(s_{h+1}|s_h,a_h)$. The episode ends when $s_{H+1}$ is reached.

A policy $\pi$ is a collection of $H$ policy functions $\{\pi_h:\mathcal{S}\rightarrow \mathcal{A}\}_{h\in[H]}$. We use $V^{\pi}_h: \mathcal{S}\rightarrow \mathbb{R}$ to denote the value function at step $h$ under policy $\pi$, which gives the expected sum of remaining rewards received under policy $\pi$ starting from $s_h=s$, i.e. $V^{\pi}_h(s) = \mathbb{E}\left[\sum_{h'=h}^{H}R(s_{h'},\pi_{h'}(s_{h'}))\mid s_h=s\right]$. Accordingly, we define $Q^{\pi}_h(s,a)$ as the expected Q-value function at step $h$: $Q^{\pi}_h(s,a) = \mathbb{E}\left[R(s_h,a_h)+ \sum_{h'=h+1}^{H}R(s_{h'},\pi_{h'}(s_{h'}))\mid s_h=s,a_h=a\right]$. We use $V^*_h$ and $Q^*_h$ to denote the optimal value and Q-functions under optimal policy $\pi^*$ at step $h$.

The agent interacts with the environment for $K$ episodes with policy $\pi_k = \{\pi_{k,h}: \mathcal{S}\rightarrow \mathcal{A}\}_{h \in [H]}$ determined before the $k$-th episode begins. The agent's goal is to maximize its cumulative rewards $\sum_{k=1}^K \sum_{h=1}^H r_{k,h}$ over $T=KH$ steps, or equivalently, to minimize the following expected regret:
%\lihong{Use $\regret$ to replace $Reg$ throughout the paper?}
$$
\regret(K) \stackrel{\text { def }}{=} \sum_{k=1}^{K}\left[V_1^*(s_{k,1}) - V_1^{\pi_k}(s_{k,1})\right],
$$ where $s_{k,1}$ is the initial state of episode $k$.

\subsection{Factored MDPs}
A factored MDP is an MDP whose rewards and transitions exhibit certain conditional independence structures. We start with the formal definition of factored MDP~\citep{boutilier2000stochastic,osband2014near,xu2020near,lu2019information}. 
%\lihong{Add reference for earliest definition?}
Let $\mathcal{P}(\mathcal{X},\mathcal{Y})$ denote the set of functions that map $x \in \mathcal{X}$ to the probability distribution on $\mathcal{Y}$.

\begin{definition}
(Factored set) Let $\mathcal{X}=\mathcal{X}_{1} \times \cdots \times \mathcal{X}_{d}$ be a factored set. For any subset of indices $Z \subseteq\{1,2, \ldots, d\}$, we define the scope set $\mathcal{X}[Z]:=\otimes_{i \in Z} \mathcal{X}_{i}$. Further, for any $x \in \mathcal{X}$, define the scope variable $x[Z] \in \mathcal{X}[Z]$ to be the value of the variables $x_i \in \mathcal{X}_i$ with indices $i \in Z$. If $Z$ is a singleton, we will write $x[i]$ for $x[\{i\}]$.
\end{definition}

\begin{definition}
(Factored reward) The reward function class $\mathcal{R} \subset \mathcal{P}(\mathcal{X}, \mathbb{R})$ is factored over $\mathcal{S} \times \mathcal{A}=\mathcal{X}=\mathcal{X}_{1} \times \cdots \times \mathcal{X}_{d}$ with scopes $Z_1,\cdots, Z_m$ if for all $R \in \mathcal{R}, x \in \mathcal{X}$, there exist functions $\left\{R_{i} \in \mathcal{P}\left(\mathcal{X}\left[Z_{i}\right], [0,1]\right)\right\}_{i=1}^{m}$ such that $r \sim R(x)$ is equal to $\frac{1}{m}\sum_{i=1}^m r_i$ with each $r_i \sim R_i(x[Z_i])$ individually observed. We use $\bar{R}_i$ to denote the expectation $\mathbb{E} [R_i]$.
\end{definition}

\begin{definition}
(Factored transition) The transition function class $\mathcal{P} \subset \mathcal{P}(\mathcal{X}, \mathcal{S})$ is factored over $\mathcal{S} \times \mathcal{A}=\mathcal{X}=\mathcal{X}_{1} \times \cdots \times \mathcal{X}_{d}$ and $\mathcal{S}=\mathcal{S}_{1} \times \cdots \times \mathcal{S}_{n}$ with scopes $Z_1,\cdots, Z_n$ if and only if, for all $\mathbb{P} \in \mathcal{P}$, $x \in \mathcal{X}$, $s\in \mathcal{S}$, there exist functions $\left\{\mathbb{P}_{j} \in \mathcal{P}\left(\mathcal{X}\left[Z_{j}\right], \mathcal{S}_{j}\right)\right\}_{j=1}^{n}$ such that
$
\mathbb{P}(s \mid x)=\prod_{j=1}^{n} \mathbb{P}_{j}\left(s[j] \mid x\left[Z_{j}\right]\right).
$
\end{definition}

A factored MDP is an MDP with factored rewards and transitions. A factored MDP is fully characterized by
$
M=\left(\left\{\mathcal{X}_{i}\right\}_{i=1}^{d} ;\left\{Z_{i}^{R}\right\}_{i=1}^{m} ;\left\{R_{i}\right\}_{i=1}^{m} ;\left\{\mathcal{S}_{j}\right\}_{j=1}^{n} ;\left\{Z_{j}^{P}\right\}_{j=1}^{n} ;\left\{\mathbb{P}_{j}\right\}_{j=1}^{n} ; H \right)
$, where $\mathcal{X} = \mathcal{S} \times \mathcal{A}$, $\{Z_i^R\}^m_{i=1}$ and $\{Z_j^P\}^n_{j=1}$ are the scopes for the reward and transition functions, which we assume to be known to the agent. %$H$ is the fixed horizon.%, and $\rho$ is the initial state distribution.

An excellent example of factored MDP is given by Osband \& Van Roy (2014), about a large production line with $d$ machines in sequence with $S_i$ possible states for machine $i$. Over a single time-step each machine can only be influenced by its direct neighbors. For this problem, the scopes $Z^R_i$ and $Z^P_i$ of machine $i \in \{2,...,d-1\}$ can be defined as $\{i-1,i,i+1\}$, and the scopes of machine $1$  and machine $d$ are $\{1,2\}$ and $\{d-1,d\}$ respectively. Another possible example to explain factored MDP is about robotics. For a robot, the transition dynamics of its different parts (e.g. its legs and arms) may be relatively independent. In that case, the factored transition can be defined for each part separately.

For notation simplicity, we use $\mathcal{X}[i:j]$ and $\mathcal{S}[i:j]$ to denote $\mathcal{X}[\cup_{k=i,...,j}Z_k]$ and $\otimes_{k=i}^{j} \mathcal{S}_k$ respectively. Similarly, We use $\mathbb{P}_{[i:j]}(s'[i:j]\mid s,a)$ to denote $\prod_{k=i}^{j} \mathbb{P}(s'[k]|(s,a)[Z^P_k])$. For every $V: \mathcal{S}\rightarrow\mathbb{R}$ and the right-linear operators $\mathbb{P}$, we define $\mathbb{P} V(s, a) \stackrel{\text { def }}{=} \sum_{s' \in \mathcal{S}} \mathbb{P}(s' \mid s, a) V(s')$. A state-action pair can be represented as $(s,a)$ or $x$. We also use $(s,a)[Z]$ to denote the corresponding $x[Z]$ for notation convenience. We mainly focus on the case where the total time step $T=KH$ is the dominant factor, and assume that $T \geq |\mathcal{X}_i| \geq H$ during the analysis.

%We mainly focus on the case where the total time step $T=KH$ is the dominant factor (compared with $|\mathcal{S}_i|$, $|\mathcal{X}_i|$ and $H$). We assume that $T \geq |\mathcal{X}_i| \geq H$ during the analysis.
%We assume that $\left|Z_{i}^{R}\right|,\left|Z_{j}^{P}\right| \leq \zeta \ll d$ and $\left|\mathcal{X}_{i}\right| \leq \bar{K}$, so the domain of any reward and transition function has size at most $\bar{K}^{\zeta}$. 
%\newtheorem{theorem}{Theorem}
%\newtheorem{lemma}{Lemma}

\section{Related Work}
\label{sec:related}

%\lihong{Maybe move this section right before the conclusions?}

\paragraph{Exploration in Reinforcement Learning} Recent years have witnessed a tremendous of work for provably efficient exploration in reinforcement learning, including tabular MDP~\citep[e.g.,][]{dann2017unifying, azar2017minimax, jin2018q,zanette2019tighter}, linear RL~\citep[e.g.,][]{jiang2017contextual,yang2019sample,jin2020provably,zanette2020learning}, and RL with general function approximation~\citep[e.g.,][]{osband2014model,ayoub2020model,wang2020reinforcement}. For algorithms in tabular setting, the regret bounds inevitably depend on the cardinality of state-action space, which may be excessively large.  Based on the concept of eluder dimension~\citep{russo2013eluder}, many recent works proposed efficient algorithms for RL with general function approximation~\citep{osband2014model,ayoub2020model,wang2020reinforcement}. Since eluder dimension of the function class for factored MDPs is at most $O\left(\sum_{i=1}^{m} |\mathcal{X}[Z^R_i]| + \sum_{i=1}^{n} |\mathcal{X}[Z^P_i]|\right)$, it is possible to apply their algorithms and regret bounds to our setting, though the direct application of their algorithms leads to a loose regret bound.

%Many results focus on tabular case, where the number of states and actions $S, A$ are finite, and the regret bound scales polynomially with $S$ and $A$. Strong theoretical guarantees~\citep[e.g.,][]{dann2017unifying, azar2017minimax, jin2018q, zhang2019regret} have been established in the episodic case and infinite-horizon (average rewards) case with respect to the minimax lower bound \citep{jaksch10near}.

%\paragraph{Function Approximation}

\paragraph{Factored MDP} Episodic FMDP was studied by \cite{osband2014near}, in which they proposed both PSRL and UCRL style algorithm with near-optimal Bayesian and frequentist regret bound. In non-episodic scenarios, \cite{xu2020near} recently generalizes the algorithm of \cite{osband2014near} to the infinite horizon average reward setting. However, both their results suffer from linear dependence on the horizon (or the diameter) and factored state space's cardinality.
%FMDP is a structural assumption that the transition and reward function can be decomposed into independent factors decided by different subsets of state/action set. 

Concurrent with our work is the recent paper by \citet{tian2020towards}, which also applies UCBVI and EULER to factored MDPs. Compared with their results, we propose a more refined variance decomposition theorem for factored Markov chains (Theorem~\ref{thm: mc variance}), which results in a better regret by a factor
%improve the regret by a factor 
of $\sqrt{n}$; the theorem is also of independent interest with potential use in other problems in factored MDPs.
%technical theorem potentially valuable to tackle other problems in the factored setting. Besides, 
Furthermore, we formulate the RLwK problem, and provide a sample-efficient algorithm based on our FMDP algorithm.

\paragraph{Constrained MDP and knapsack bandits} The knapsack setting with hard constraints has already been studied in bandits with both sample-efficient and computational-efficient algorithms \citep{badanidiyuru2013bandits,agrawal2016efficient}.
This setting may be viewed as a special case of RLwK with $H=1$.
In constrained RL, there is a line of works that focus on \textit{soft constraints} where the constraints are satisfied in expectation or with high probability \citep{brantley2020constrained, zheng2020constrained}, or a violation bound is established \citep{efroni2020exploration, ding2020provably}. RLwK requires stronger constraints that is almost surely satisfied during the execution of the agents. 
A more related setting is proposed by \cite{brantley2020constrained}, which studies a sample-efficient algorithm for knapsack episodic setting with hard constraints on all $K$ episodes. However, we require the constraints to be satisfied \textit{within each episode}, which we believe can better describe the real-world scenarios. The setting of \cite{singh2020learning} is closer to ours since they are focusing on ``every-time'' hard constraints, although they consider the non-episodic case. 

\section{Main Results}
\label{sec: main results}
In this section, we introduce our FMDP-BF algorithm, which uses empirical variance to construct a Bernstein-type confidence bound for value estimation. Besides FMDP-BF, we also propose a simpler algorithm called FMDP-CH with a slightly worse regret, which follows the similar idea of UCBVI-CH~\citep{azar2017minimax}. The algorithm and the corresponding analysis are more concise and easy to understand; details are deferred to Section~\ref{sec: details for FMDP-CH}.
%In Section~\ref{sec: estimation error decomposition}, we briefly introduce our estimation error decomposition techniques and the construction of the confidence bonus. In Section~\ref{sec: variance of factored Markov chain}, we give a specific analysis of the variance of factored Markov chain, which helps to give a more precise upper bound on the summation of per-step variance. In Section~\ref{sec: FMDP-BF algorithm}, we propose our FMDP-BF algorithm in detail. In section~\ref{sec: FMDP-BF regret}, we present the regret bound of Alg.~\ref{alg:Algorithm with Bernstein-type Bonus}. In section~\ref{sec: lower bound}, we propose the corresponding lower bound for factored MDP. Besides FMDP-BF, we also propose a simpler algorithm called FMDP-CH with a slightly worse regret, which follows the similar idea of UCBVI-CH~\citep{azar2017minimax}. The algorithm and the corresponding analysis is more concise and easy to understand; details are deferred to Section~\ref{sec: details for FMDP-CH}.
%In this section, we introduce our FMDP-BF algorithm and present the corresponding regret bound. Our algorithm, which is described in Algorithm~\ref{alg:Algorithm with Bernstein-type Bonus}, is related to EULER algorithm~\citep{zanette2019tighter}, in the sense that Algorithm~\ref{alg:Algorithm with Bernstein-type Bonus} reduces to EULER if we consider a flat MDP with $m=n=d=1$. Our contribution is a more refined design of confidence bonus for factored MDP and the analysis which lead to a tight regret bound independent of the cardinality of the whole state space $|\mathcal{S}|$.

\subsection{Estimation Error Decomposition}
\label{sec: estimation error decomposition}
Our algorithm will follow the principle of ``optimism in the face of uncertainty''. Like ORLC~\citep{dann2019policy} and EULER~\citep{zanette2019tighter}, our algorithm also maintains both the optimistic and pessimistic estimates of state values to yield an improved regret bound.
%, which is a standard strategy for efficient exploration~\citep[e.g.,][]{strehl09pac,jaksch10near,dann2017unifying,azar2017minimax,jin2018q,dong2019q,zanette2019tighter}
We use $\overline{V}_{k,h}$ and $\underline{V}_{k,h}$ to denote the optimistic estimation and pessimistic estimation of $V^*_h$, respectively. To guarantee optimism, we need to add confidence bonus to the estimated value function $\overline{V}_{k,h}$ at each step, so that $\overline{V}_{k,h}(s) \geq V^*_h(s)$ holds for any $k \in [K]$, $h \in [H]$ and $s \in \mathcal{S}$. Suppose $\hat{R}_{k,i}$ and $\hat{\mathbb{P}}_{k,j}$ denote the estimated value of each expected factored reward $R_i$ and factored transition probability $\mathbb{P}_j$ before episode $k$ respectively. By the definition of the reward $R$ and the transition $\mathbb{P}$, we use $\hat{R} \stackrel{\text{def}}{=} \frac{1}{m}\sum_{i=1}^{m} \hat{R}_{k,i}$ and $\hat{\mathbb{P}}_{k} \stackrel{\text{def}}{=} \prod_{j=1}^{n} \hat{\mathbb{P}}_{k,j}$ as the estimation of $\bar{R}$ and $\mathbb{P}$. Following the previous framework, this confidence bonus 
needs to tightly characterize the estimation error of the one-step backup $\bar{R}(s,a) +\mathbb{P}V^*_h(s,a)$; in other words, it should compensate for 
the estimation errors, $\left(\hat{R}_k - \bar{R}\right)(s,a)$ and $\left(\hat{\mathbb{P}}_k- \mathbb{P}\right)V^*_h(s,a)$, respectively.  

For the estimation error of rewards $\left(\hat{R}_k - \bar{R}\right)(s_{k,h},a_{k,h})$, since the reward is defined as the average of $m$ factored rewards, it is not hard to decompose the estimation error of $\bar{R}(s,a)$ to the average of the estimation error of each factored rewards. In that case, we separately construct the confidence bonus of each factored reward $\bar{R}_i$. Suppose $CB^R_{k,Z^R_i}(s,a)$ is the confidence bonus that compensates 
for the estimation error $\hat{R}_{k,i} - \bar{R}_{i}$, then we have $CB^R_k(s,a) \stackrel{\text{def}}{=} \frac{1}{m} \sum_{i=1}^{m} CB^R_{k,Z^R_i}(s,a)$.
  
For the estimation error of transition $\left(\hat{\mathbb{P}}_{k} - \mathbb{P}\right)V^*_{h+1}(s_{k,h},a_{k,h})$, the main difficulty is that $\hat{\mathbb{P}}_k$ is the multiplication of $n$ estimated transition dynamics $\hat{\mathbb{P}}_{k,i}$. In that case, the estimation error $\left(\hat{\mathbb{P}}_{k} - \mathbb{P}\right)V^*_{h+1}(s_{k,h},a_{k,h})$ may be calculated as the multiplication of $n$ estimation error for each factored transition $\hat{\mathbb{P}}_{k,i}$, which makes the analysis much more difficult. Fortunately, we have the following lemma to address this challenge.
%Luckily, we find the following lemma useful to tackle this issue.
\begin{lemma} (Informal)
    \label{lemma: decomposition of value estimation error}
   Let the transition function class $\mathbb{P} \in \mathcal{P}(\mathcal{X},\mathcal{S})$ be factored over $\mathcal{X} = \mathcal{X}_1 \times \cdots \times \mathcal{X}_d$, and $\mathcal{S} = \mathcal{S}_1 \times \cdots \times \mathcal{S}_n$ with scopes $Z^P_1,\cdots,Z^P_n$. For a given function $V: \mathcal{S}\rightarrow \mathbb{R}$, the estimation error of one-step value $|(\hat{\mathbb{P}}_k - \mathbb{P})V(s,a)|$ can be decomposed by:
   \begin{align*}
       |(\hat{\mathbb{P}}_k - \mathbb{P})V(s,a)| 
        \leq& \sum_{i=1}^{n} \left|  (\hat{\mathbb{P}}_{k,i} - \mathbb{P}_{i}) \left(\prod_{j\neq i, j=1}^{n} \mathbb{P}_{j}\right)V(s,a)\right| + \beta_{k,h}(s,a)
   \end{align*}
\end{lemma}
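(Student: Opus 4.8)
The plan is to prove the bound by a ``hybrid'' telescoping over the $n$ transition factors, and then swap the estimated prefixes for true prefixes, collecting the swap cost into $\beta_{k,h}$.

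First I would use $\hat{\mathbb{P}}_k = \prod_{j=1}^n \hat{\mathbb{P}}_{k,j}$ and $\mathbb{P} = \prod_{j=1}^n \mathbb{P}_j$ together with the elementary identity $\prod_j a_j - \prod_j b_j = \sum_i (\prod_{j<i} a_j)(a_i-b_i)(\prod_{j>i} b_j)$ to write
\begin{align*}
(\hat{\mathbb{P}}_k - \mathbb{P})V(s,a) = \sum_{i=1}^n \Big(\prod_{j<i}\hat{\mathbb{P}}_{k,j}\Big)\big(\hat{\mathbb{P}}_{k,i}-\mathbb{P}_i\big)\Big(\prod_{j>i}\mathbb{P}_j\Big)V(s,a),
\end{align*}
where for fixed $(s,a)$ each operator $\mathbb{P}_j$ (resp.\ $\hat{\mathbb{P}}_{k,j}$) integrates out the coordinate $s[j]$ against $\mathbb{P}_j(\cdot\mid (s,a)[Z^P_j])$ (resp.\ $\hat{\mathbb{P}}_{k,j}(\cdot\mid (s,a)[Z^P_j])$). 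I would then split the $i$-th summand as a ``main part'' $\big(\prod_{j<i}\mathbb{P}_j\big)(\hat{\mathbb{P}}_{k,i}-\mathbb{P}_i)\big(\prod_{j>i}\mathbb{P}_j\big)V(s,a)$ plus a ``cross part'' $\big(\prod_{j<i}\hat{\mathbb{P}}_{k,j}-\prod_{j<i}\mathbb{P}_j\big)(\hat{\mathbb{P}}_{k,i}-\mathbb{P}_i)\big(\prod_{j>i}\mathbb{P}_j\big)V(s,a)$.

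The key observation is that, for a fixed $(s,a)$, the operators $\mathbb{P}_j$ act on disjoint coordinates of $\mathcal{S}$, so their composition is integration against a product measure and, since $\hat{\mathbb{P}}_{k,i}-\mathbb{P}_i$ only involves coordinate $s[i]$, Fubini's theorem lets us reorder to get that the main part equals $(\hat{\mathbb{P}}_{k,i}-\mathbb{P}_i)\big(\prod_{j\neq i}\mathbb{P}_j\big)V(s,a)$. Summing over $i$ and applying the triangle inequality to the telescoped identity then yields exactly $\sum_{i=1}^n\big|(\hat{\mathbb{P}}_{k,i}-\mathbb{P}_i)(\prod_{j\neq i}\mathbb{P}_j)V(s,a)\big|$ plus $\beta_{k,h}(s,a) := \sum_{i=1}^n\big|\big(\prod_{j<i}\hat{\mathbb{P}}_{k,j}-\prod_{j<i}\mathbb{P}_j\big)(\hat{\mathbb{P}}_{k,i}-\mathbb{P}_i)(\prod_{j>i}\mathbb{P}_j)V(s,a)\big|$, which is the claimed inequality. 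I would also record that $\beta_{k,h}$ is genuinely a higher-order term: bounding the prefix difference in $\ell_1$ by $\sum_{j<i}\|\hat{\mathbb{P}}_{k,j}-\mathbb{P}_j\|_1$ (total variation of a product is subadditive in the factors), the inner factor by $\|V\|_\infty\,\|\hat{\mathbb{P}}_{k,i}-\mathbb{P}_i\|_1$, and summing gives $\beta_{k,h}(s,a)\le \|V\|_\infty\big(\sum_{j=1}^n\|\hat{\mathbb{P}}_{k,j}-\mathbb{P}_j\|_1\big)^2$, i.e.\ quadratic in the per-factor estimation errors while the retained terms are linear.

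The main obstacle I anticipate is bookkeeping rather than conceptual: making the manipulation of operators on the product space $\mathcal{S}=\mathcal{S}_1\times\cdots\times\mathcal{S}_n$ precise — that $\prod_{j<i}\hat{\mathbb{P}}_{k,j}$ is a genuine Markov kernel (so $|(\prod_{j<i}\hat{\mathbb{P}}_{k,j})g|\le\|g\|_\infty$), that the reordering passing from the telescoped form to $(\hat{\mathbb{P}}_{k,i}-\mathbb{P}_i)(\prod_{j\neq i}\mathbb{P}_j)V$ is legitimate given that $\hat{\mathbb{P}}_{k,i}$ and $\mathbb{P}_i$ depend only on $(s,a)[Z^P_i]$ and act only on $s[i]$, and that the scopes $Z^P_j$ are tracked consistently throughout. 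Matching the informal $\beta_{k,h}$ stated here with whatever sharper, variance-aware form is ultimately needed in the regret analysis is where the remaining care will go.
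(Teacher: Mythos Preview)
Your proposal is correct and is essentially the same approach as the paper. The paper proves the formal version (Lemma~\ref{Lemma: decomposition inq}) by a recursive three-term split $\hat{\mathbb{P}}_{k,n}\prod_{i<n}\hat{\mathbb{P}}_{k,i}-\mathbb{P}_n\prod_{i<n}\mathbb{P}_i=(\hat{\mathbb{P}}_{k,n}-\mathbb{P}_n)\prod_{i<n}\mathbb{P}_i+\mathbb{P}_n(\prod_{i<n}\hat{\mathbb{P}}_{k,i}-\prod_{i<n}\mathbb{P}_i)+(\hat{\mathbb{P}}_{k,n}-\mathbb{P}_n)(\prod_{i<n}\hat{\mathbb{P}}_{k,i}-\prod_{i<n}\mathbb{P}_i)$ and then recurses on the middle term; this directly produces main terms with all true transitions and collects the third terms as the cross part, arriving at exactly the bound you derive (with $\beta_{k,h}(s,a)=\sum_{i}\sum_{j\neq i}\|V\|_\infty\|\hat{\mathbb{P}}_{k,i}-\mathbb{P}_i\|_1\|\hat{\mathbb{P}}_{k,j}-\mathbb{P}_j\|_1$). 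Your two-term telescoping followed by a prefix swap is an equivalent route to the same inequality, and your quadratic bound on $\beta_{k,h}$ matches the paper's.
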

Here, $\beta_{k,h}(s,a)$, formally defined in Lemma~\ref{Lemma: decomposition inq}, are higher order terms that do not harm the order of the regret. 
%which is formally defined in Lemma~\ref{Lemma: decomposition inq}. 
%With the help of Lemma~\ref{lemma: decomposition of value estimation error}, we successfully 
This lemma allows us to decompose the estimation error $\left(\hat{\mathbb{P}}_{k} - \mathbb{P}\right)V^*_{h+1}(s_{k,h},a_{k,h})$ into an \emph{additive} form, so we can construct the confidence bonus for each factored transition $\mathbb{P}_j$ separately. 
%in the similar manner of that for rewards. Following the idea of Lemma~\ref{lemma: decomposition of value estimation error}, we define
Let $CB^P_{k,Z^P_j}(s,a)$ be the confidence bonus for the estimation error $(\hat{\mathbb{P}}_{k,j} - \mathbb{P}_{j}) \left(\prod_{t\neq j, t=1}^{n} \mathbb{P}_{t}\right)V(s,a)$. Then, $CB^P_k(s,a) \stackrel{\text{def}}{=} \sum_{j=1}^{n} CB^P_{k,Z^P_j}(s,a) + \eta_{k,h}(s,a)$, where $\eta_{k,h}(s,a)$ collects higher order factors that will be explicitly given later.

Finally, we define the confidence bonus as the summation of all confidence bonuses for rewards and transition: $CB_{k}(s,a) = CB^R_{k}(s,a) + CB^P_{k}(s,a)$.

\subsection{Variance of Factored Markov Chains}
\label{sec: variance of factored Markov chain}
After the analysis in Section~\ref{sec: estimation error decomposition}, the remaining problem is how to define the confidence bonus $CB^R_{k,Z^R_i}(s,a)$ and $CB^P_{k,Z^P_j}(s,a)$. In this subsection, we tackle this problem by deriving the variance decomposition formula for factored MDP.
%\lihong{strange to say ``more general'', as MC is a special case of MDP.  Maybe we want to explain the reason to switch from MDP to MC in this section.}
To begin with, we consider Markov chains with stochastic factored transition and stochastic factored rewards, and deduce the Bellman equation of variance for factored Markov chains. 
The analysis shows how to define the empirical variance in the confidence bonus for factored MDP and gives an upper bound on the summation of per-step variance (Corollary~\ref{corollary: total variance bound}).
 
In the Markov chain setting, the reward is defined to be a mapping from $\mathcal{S}$ to $\mathbb{R}$. Suppose $J_{t_1:t_2}(s)$ denotes the total rewards the agent obtains from step $t_1$ to step $t_2$ (inclusively), given that the agent starts from state $s$ in step $t_1$. $J_{t_1:t_2}$ is a random variable depending on the randomness of the trajectory from step $t_1$ to $t_2$, and 
stochastic rewards therein. Following this definition of $J_{t_1:t_2}$,  we define $J_{1:H}$ to be the total reward obtained during one episode. We use $s_t$ to denote the random state that the agent encounters at step $t$. We define
    $\omega^2_h(s) \stackrel{\text { def }}{=} \mathbb{E}\left[ \left(J_{h:H}(s_h) - V_h(s_h)\right)^2| s_h=s \right]$
to be the variance of the total gain after step $h$, given that $s_h=s$.

We define $\sigma^2_{R,i}(s)  \stackrel{\text{def}}{=} \mathbb{V}\left[R_i(\xi)| \xi = s\right]$ to be the variance of the $i$-th factored reward, given that the current state is $s$. Given the current state $s$, we define the variance of the next-state value function w.r.t. the $i$-th factored transition as:
$
\sigma^2_{P,i,h}(s) \stackrel{\text { def }}{=} \mathbb{E}_{s_{h+1}[1:i-1]}\left[\mathbb{V}_{s_{h+1}[i]}\left[\mathbb{E}_{s_{h+1}[i+1:n]}\left[V_{h+1}(s_{h+1})\right]\right]\mid s_h=s\right].
$
That is, for each given $s'[1:i]$, we firstly take expectation over all possible values of $s'{[i+1:n]}$ w.r.t. $\mathbb{P}_{[i+1:n]}$. Then, we calculate the variance of transition $s'[i] \sim \mathbb{P}_{i}(\cdot|(s,a)[Z^P_i])$ given fixed $s'[1:i-1]$. Finally, we take the expectation of this variance w.r.t. $s'[1:i-1] \sim \mathbb{P}_{[1:i-1]}$.
\begin{theorem}
\label{thm: mc variance}
For any horizon $h \in [H]$, we have
    $\omega^2_h(s) = \sum_{s'} \mathbb{P}(s'|s) \omega^2_{h+1}(s') + \sum_{i=1}^{n}\sigma^2_{P,i,h}(s) + \frac{1}{m^2}\sum_{i=1}^{m} \sigma^2_{R,i}(s)$.
\end{theorem}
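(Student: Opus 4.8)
The plan is to obtain the identity from two applications of the law of total variance, combined with a coordinate-wise (Doob/ANOVA-type) decomposition of $\mathbb{V}[V_{h+1}(s_{h+1})\mid s_h=s]$ that exploits the conditional independence of the $n$ factored transition components. First I would write the one-step recursion $J_{h:H}(s_h)=\frac1m\sum_{i=1}^m r_{h,i}+J_{h+1:H}(s_{h+1})$, where $r_{h,i}$ are the individually observed factored rewards collected at step $h$ and $s_{h+1}\sim\mathbb{P}(\cdot\mid s_h)$. Conditioning on $s_h=s$, and using that the step-$h$ reward noise is independent of $s_{h+1}$ and of all subsequent randomness (immediate from the generative model), the variance splits additively:
\[
\omega_h^2(s)=\mathbb{V}\!\left[\frac1m\sum_{i=1}^m r_{h,i}\;\middle|\;s_h=s\right]+\mathbb{V}\!\left[J_{h+1:H}(s_{h+1})\;\middle|\;s_h=s\right].
\]
The first term equals $\frac1{m^2}\sum_{i=1}^m\sigma_{R,i}^2(s)$ by mutual independence of the $m$ factored rewards given $s$. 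For the second term, applying the law of total variance while conditioning on $s_{h+1}$, and using $\mathbb{E}[J_{h+1:H}(s_{h+1})\mid s_{h+1}]=V_{h+1}(s_{h+1})$ and $\mathbb{V}[J_{h+1:H}(s_{h+1})\mid s_{h+1}]=\omega_{h+1}^2(s_{h+1})$, it equals $\sum_{s'}\mathbb{P}(s'\mid s)\,\omega_{h+1}^2(s')+\mathbb{V}[V_{h+1}(s_{h+1})\mid s_h=s]$.

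It then remains to prove $\mathbb{V}[V_{h+1}(s_{h+1})\mid s_h=s]=\sum_{i=1}^n\sigma_{P,i,h}^2(s)$, which is the heart of the argument and the step I expect to be the main obstacle. Given $s_h=s$ the coordinates $s_{h+1}[1],\dots,s_{h+1}[n]$ are independent, so I would introduce the filtration $\mathcal{F}_i=\sigma(s_{h+1}[1:i])$ and the Doob martingale $g_i:=\mathbb{E}[V_{h+1}(s_{h+1})\mid\mathcal{F}_i]=\mathbb{E}_{s_{h+1}[i+1:n]}[V_{h+1}(s_{h+1})]$, with $g_0$ the unconditional mean and $g_n=V_{h+1}(s_{h+1})$, so that $V_{h+1}(s_{h+1})-\mathbb{E}[V_{h+1}(s_{h+1})]=\sum_{i=1}^n(g_i-g_{i-1})$. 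The increments $g_i-g_{i-1}$ form a martingale-difference sequence, so all cross terms vanish and $\mathbb{V}[V_{h+1}(s_{h+1})\mid s_h=s]=\sum_{i=1}^n\mathbb{V}[g_i-g_{i-1}]$. Finally, by independence of $s_{h+1}[i]$ from $s_{h+1}[1:i-1]$ and the tower property, $g_i-g_{i-1}$ has zero conditional mean given $s_{h+1}[1:i-1]$, hence $\mathbb{V}[g_i-g_{i-1}]=\mathbb{E}_{s_{h+1}[1:i-1]}\bigl[\mathbb{V}_{s_{h+1}[i]}[\mathbb{E}_{s_{h+1}[i+1:n]}[V_{h+1}(s_{h+1})]]\bigr]$, which is exactly $\sigma_{P,i,h}^2(s)$. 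Collecting the three contributions gives the stated Bellman equation.

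The only delicate bookkeeping is (i) carefully invoking the conditional-independence structure of the generative model to peel off the step-$h$ reward variance, and (ii) the telescoping-filtration step, where one must check the martingale-difference property so cross terms drop, and then match each per-coordinate term precisely to the nested expectation/variance defining $\sigma^2_{P,i,h}$; everything else is a routine application of the law of total variance.
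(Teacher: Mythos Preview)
Your proposal is correct and follows essentially the same route as the paper: split $\omega_h^2(s)$ into the reward-variance term, the recursive $\sum_{s'}\mathbb{P}(s'|s)\omega_{h+1}^2(s')$ term, and the one-step variance $\mathbb{V}[V_{h+1}(s_{h+1})\mid s_h=s]$, then decompose the last piece coordinate by coordinate using the product structure of $\mathbb{P}(\cdot\mid s)$.

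The only difference is in presentation. The paper obtains the three-term split by expanding $(J_{h:H}-V_h)^2$ algebraically and using conditional independence of $r_h$ and $J_{h+1:H}$ given $(s_h,s_{h+1})$, whereas you invoke additivity of variance for independent summands plus the law of total variance directly; both yield the same intermediate identity. For the transition-variance decomposition, the paper proves $\sum_{s'}\mathbb{P}(s'|s)V_{h+1}^2(s')-(\sum_{s'}\mathbb{P}(s'|s)V_{h+1}(s'))^2=\sum_{i=1}^n\sigma_{P,i,h}^2(s)$ by iteratively subtracting $\sigma_{P,1,h}^2(s),\sigma_{P,2,h}^2(s),\ldots$ and simplifying the residual, while you recognize this telescoping as orthogonality of the Doob-martingale increments $g_i-g_{i-1}$ along the filtration $\sigma(s_{h+1}[1{:}i])$. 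Your packaging is a bit cleaner and makes the role of the factored independence more transparent, but the computations are equivalent.
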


Theorem~\ref{thm: mc variance} generalizes the analysis of \cite{munos1999influence}, which deals with non-factored MDPs and deterministic rewards.  From the Bellman equation of variance, we can give an upper bound to the expected summation of per-step variance.
\begin{corollary}
\label{corollary: total variance bound}
Suppose the agent takes policy $\pi$ during an episode. Let $w_h(s,a)$ denote the probability of entering state $s$ and taking action $a$ in step $h$. Then we have the following inequality:
\begin{align*}
    \sum_{h=1}^{H}\sum_{(s,a) \in \mathcal{X}} w_h(s,a)\left(\sum_{i=1}^{n} \sigma^2_{P,i}(V^{\pi}_{h+1},s,a) + \frac{1}{m^2}\sum_{i=1}^{m}\sigma^2_{R,i}(s,a)\right) \leq H^2,
\end{align*}
where $\sigma^2_{R,i}(s,a) \stackrel{\text{def}}{=} \mathbb{V}\left[r_i(\xi,\zeta)| \xi = s, \zeta = a\right]$ is the variance of $i$-th factored reward given the current state-action pair $(s,a)$, and $\sigma^2_{P,i}(V^{\pi}_{h+1},s,a) = \mathbb{E}_{s_{h+1}[1:i-1]}\left[\mathbb{V}_{s_{h+1}[i]}\left[\mathbb{E}_{s_{h+1}[i+1:n]}\left[V^{\pi}_{h+1}(s_{h+1})\right]\right]\mid s_h=s\right]$ is the variance of $i$-th factored transition given current state $s$.
\end{corollary}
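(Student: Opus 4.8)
The plan is to derive Corollary~\ref{corollary: total variance bound} as a direct consequence of the Bellman equation for variance established in Theorem~\ref{thm: mc variance}. The key observation is that Theorem~\ref{thm: mc variance} is exactly an instance of a ``law of total variance'' telescoping identity: the per-step contributions $\sum_{i=1}^n \sigma^2_{P,i,h}(s) + \frac{1}{m^2}\sum_{i=1}^m \sigma^2_{R,i}(s)$ are precisely the increments by which the total-return variance $\omega^2_h$ grows as we unroll one more step. So the corollary should follow by summing the Bellman recursion over $h$ and over the state-occupancy measure induced by $\pi$.

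Concretely, I would proceed as follows. First, instantiate Theorem~\ref{thm: mc variance} for the Markov chain obtained by fixing the policy $\pi$ in the factored MDP, so that $\mathbb{P}(s'|s)$ becomes $\mathbb{P}(s'|s,\pi_h(s))$ and the factored-reward/transition variances become the state-action versions $\sigma^2_{R,i}(s,a)$ and $\sigma^2_{P,i}(V^\pi_{h+1},s,a)$ appearing in the corollary statement. Second, multiply the identity at step $h$ by the occupancy weight $w_h(s)$ (the marginal probability of being in state $s$ at step $h$ under $\pi$, with $w_h(s,a) = w_h(s)\mathbbm{1}[a=\pi_h(s)]$), and sum over $s$. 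Using the flow consistency $\sum_s w_h(s)\mathbb{P}(s'|s,\pi_h(s)) = w_{h+1}(s')$, the term $\sum_s w_h(s)\sum_{s'}\mathbb{P}(s'|s)\omega^2_{h+1}(s')$ becomes $\sum_{s'} w_{h+1}(s')\omega^2_{h+1}(s')$, so summing over $h \in [H]$ telescopes: all the $\omega^2$ terms cancel except $\sum_s w_1(s)\omega^2_1(s)$ at one end and $\sum_s w_{H+1}(s)\omega^2_{H+1}(s)$ at the other. Third, bound the surviving boundary term: $\omega^2_{H+1} = 0$ by definition (no return remains), and $\sum_s w_1(s)\omega^2_1(s) = \mathbb{E}[(J_{1:H}(s_1) - V_1(s_1))^2] \le H^2$ because both $J_{1:H}$ and $V_1$ lie in $[0,H]$ (rewards are in $[0,1]$ over $H$ steps), hence their difference is bounded by $H$ in absolute value. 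This yields the claimed $H^2$ upper bound on the weighted sum of per-step variances.

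I do not expect a serious obstacle here, since the heavy lifting is done by Theorem~\ref{thm: mc variance}; the remaining work is the standard occupancy-measure telescoping argument. The one point requiring a little care is bookkeeping the index/conditioning conventions: the corollary writes $\sigma^2_{P,i}(V^\pi_{h+1},s,a)$ with an inner conditioning on $s_h = s$, and one must check this matches $\sigma^2_{P,i,h}(s)$ from Theorem~\ref{thm: mc variance} once the action is pinned to $\pi_h(s)$, and that the reward-variance normalization $\frac{1}{m^2}$ is carried through consistently. A second minor subtlety is that the occupancy weights $w_h$ are defined on state-action pairs in the corollary but the Markov-chain identity is stated for states; under a (possibly randomized, but here deterministic) policy these are related by $w_h(s,a) = w_h(s)\pi_h(a|s)$, and the argument goes through in either case. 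Finally, if one wanted the bound to hold uniformly rather than in expectation over the initial state, one would simply note $\omega^2_1(s_1) \le H^2$ pointwise and drop the outer expectation; either way the constant is $H^2$.
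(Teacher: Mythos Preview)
Your proposal is correct and takes essentially the same approach as the paper: instantiate Theorem~\ref{thm: mc variance} for the Markov chain induced by $\pi$, unroll the recursion over $h$ against the occupancy measure (the paper phrases this as ``recursively decomposing the variance until step $H$''), and conclude from $\omega_1^2(s_1)\le H^2$. Your explicit mention of the flow-consistency identity and the $\omega_{H+1}^2=0$ boundary condition just makes the telescoping step more transparent than the paper's one-line version.
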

This corollary makes it possible to construct confidence bonus with variance for each factored rewards and transition separately. Please refer to Section \ref{sec: Omitted proof in sec: variance of factored Markov chain} for the detailed proof of Theorem~\ref{thm: mc variance} and Corollary~\ref{corollary: total variance bound}.

\subsection{Algorithm}
\label{sec: FMDP-BF algorithm}

Our algorithm is formally described in Alg.~\ref{alg:Algorithm with Bernstein-type Bonus}, with a more detailed explanation in Section~\ref{sec: Omitted Details of main results}.  Denote by $N_{k}((s,a)[Z])$ the number of steps that the agent encounters $(s,a)[Z]$ during the first $k$ episodes.
%and $N_{k}((s,a)[Z_j],s_j)$ the number of steps that the agent transits to a state with $s[j] = s_j$ after encountering $(s,a)[Z_j]$ during the first $k$ episodes. 
In episode $k$, we estimate the mean value of each factored reward $R_i$ and each factored transition $\mathbb{P}_{i}$ with empirical mean value $\hat{R}_{k,i}$ and $\hat{\mathbb{P}}_{k,i}$ of the previous history data $\mathcal{L}$ respectively. 
%To be more specific, $\hat{R}_{k,i}((s,a)[Z^R_i]) = \frac{\sum_{t \leq (k-1)H}\mathbbm{1}\left[(s_t,a_t)[Z^R_i] = (s,a)[Z^R_i]\right] \cdot r_{t,i}}{N_{k-1}((s,a)[Z^R_i])}$, where $r_{t,i}$ denotes the reward $R_i$ sampled in step $t$, and   $\hat{\mathbb{P}}_{k,j}\left(s[j]|(s,a)[Z^P_j]\right) = \frac{N_{k-1}((s,a)[Z^P_j],s[j])}{N_{k-1}((s,a)[Z^P_j])}$. 
After that, we construct the optimistic MDP $\hat{M}$ based on the estimated rewards and transition functions. For a certain $(s,a)$ pair, the transition function and reward function are defined as $\hat{R}_k(s,a) = \frac{1}{m}\sum_{i=1}^{m}\hat{R}_{k,i}((s,a)[Z^R_i])$ and $\hat{\mathbb{P}}_k(s' \mid s,a)=\prod_{j=1}^{n} \hat{\mathbb{P}}_{k,j}\left(s'[j] \mid (s,a)\left[Z^P_{j}\right]\right)$.
%\lihong{use $(s_t,a_t)$ instead of $(s,a)_{t}$?}

\begin{algorithm}[tbh]
\caption{FMDP-BF}
\label{alg:Algorithm with Bernstein-type Bonus}
  \begin{algorithmic}[5]
    \State \textbf{Input}: $\delta$
    \State $\mathcal{L} = \emptyset$, initialize $N((s,a)[Z_i]) = 0$ for any factored set $Z_i$ and any $(s,a)[Z_i] \in \mathcal{X}[Z_i]$
    \For{episode $k=1,2,\cdots$}
        \State Set $\overline{V}_{k,H+1}(s) = \underline{V}_{k,H+1}(s) = 0$ for all $s,a$.
        \State Estimate the empirical mean $\hat{R}_{k}$ and $\hat{\mathbb{P}}_{k}$ with history data $\mathcal{L}$.
    	\For {horizon $h=H,H-1,...,1$}
    	    \For{$s \in \mathcal{S}$}
        	    \For {$a \in \mathcal{A}$ }
                    \State $\overline{Q}_{k,h}(s,a) = \min\{H, \hat{R}_{k}(s,a) + CB_k(s,a) + \hat{\mathbb{P}}_{k} \overline{V}_{k,h+1}(s,a)\}$
        		\EndFor
        		\State $\pi_{k,h}(s) = \arg \max_a \overline{Q}_{k,h}(s,a)$
        		\State $\overline{V}_{k, h}(s)=\max _{a \in \mathcal{A}} \overline{Q}_{k, h}(s, a)$
        		\State $\underline{V}_{k,h}(s) = \max\left\{0, \hat{R}_{k}(s,\pi_{k,h}(s)) - CB_k(s,\pi_{k,h}(s)) + \hat{\mathbb{P}}_{k} \underline{V}_{k,h+1}(s,\pi_{k,h}(s))\right\}$
    		\EndFor
    	\EndFor
    	\State Take action according to $\pi_{k,h}$ for $H$ steps in this episode.
		\State Update $\mathcal{L} = \mathcal{L} \bigcup \{s_{k,h},a_{k,h},r_{k,h},s_{k,h+1}\}_{h=1,2,...,H}$, and update counter $N_{k-1}((s,a)[Z_i])$.
    \EndFor
  \end{algorithmic}
\end{algorithm}

 Following the analysis in Section~\ref{sec: estimation error decomposition}, we separately construct the confidence bonus of each factored reward $R_i$ with the empirical variance: $CB^R_{k,Z_i^R}(s,a) =  \sqrt{\frac{2 \hat{\sigma}_{R,k,i}^{2}(s,a) L^R_i}{N_{k-1}((s,a)[Z^R_i])}} + \frac{8L^R_i}{3N_{k-1}((s,a)[Z^R_i])},\quad i \in [m]$,
  %\begin{align}
  % CB^R_{k,Z_i^R}(s,a) =  &\sqrt{\frac{2 \hat{\sigma}_{R,k,i}^{2}(s,a) L^R_i}{N_{k-1}((s,a)[Z^R_i])}} + %\frac{8L^R_i}{3N_{k-1}((s,a)[Z^R_i])},\quad i \in [m],
  %\end{align}
  where $L^R_i \stackrel{\text { def }}{=} \log \left( 18 m T \left|\mathcal{X}[Z_i^R]\right|/\delta \right)$, and $\hat{\sigma}_{R,k,i}^2$ is the empirical variance of the $i$-th factored reward $R_i$, i.e.
  $\hat{\sigma}^2_{R,k,i}(s,a) = \frac{1}{N_{k-1}((s,a)[Z^P_i])}
     \sum_{t=1}^{(k-1)H} \mathbbm{1}\left[(s_t,a_t)[Z^R_i] = (s,a)[Z^R_i]\right]\cdot r_{t,i}^2 - \left(\hat{R}_{k,i}((s,a)[Z^R_i])\right)^2$.
  %\begin{align*}
     % $\hat{\sigma}^2_{R,k,i}(s,a) = \frac{1}{N_{k-1}((s,a)[Z^P_i])}
     % \!\!\sum_{t=1}^{(k-1)H}\!\!\!\! \mathbbm{1}\left[(s_t,a_t)[Z^R_i] = (s,a)[Z^R_i]\right]\cdot r_{t,i}^2 - \left(\hat{R}_{k,i}((s,a)[Z^R_i])\right)^2\!\!$.
  %\end{align*}
  
 We define $L^P = \log \left( 18 n T SA/\delta \right)$ for short. Following the idea of Lemma~\ref{lemma: decomposition of value estimation error}, we separately construct the confidence bonus of scope $Z^P_i$ for transition estimation: $CB^P_{k,Z_i^P}(s,a) =  \sqrt{\frac{4 \hat{\sigma}_{P,k,i}^2(\overline{V}_{k,h+1},s, a) L^P}{N_{k-1}((s,a)[Z^P_i])}} +  \sqrt{\frac{2u_{k,h,i}(s,a)L^P}{N_{k-1}((s,a)[Z^P_i])}}+ \eta_{k,h,i}(s,a), \quad i \in [n]$,
  %\begin{align}
  %   CB^P_{k,Z_i^P}(s,a) = & \sqrt{\frac{4 \hat{\sigma}_{P,k,i}^2(\overline{V}_{k,h+1},s, a) L^P}{N_{k-1}((s,a)[Z^P_i])}} +  %\sqrt{\frac{2u_{k,h,i}(s,a)L^P}{N_{k-1}((s,a)[Z^P_i])}}+ \eta_{k,h,i}(s,a), \quad i \in [n]
  %\end{align}
  where $\hat{\sigma}_{P,k,i}(s,a)$ and $u_{k,h,i}(s,a)$ are defined later. $\eta_{k,h,i}(s,a)$ collects the additional bonus terms that do not affect the order of the final regret. The precise expression of $\eta_{k,h,i}(s,a)$ is deferred to Section~\ref{sec: Omitted Details of main results}.
  
The definition of $\hat{\sigma}_{P,k,i}^2(\overline{V}_{k,h+1},s, a)$ corresponds to $\sigma^2_{P,i}(V^{\pi}_h,s,a)$ in Corollary~\ref{corollary: total variance bound}, which can be regarded as the empirical variance of transition $\mathbb{P}_{k,i}$: $\hat{\sigma}_{P,k,i}^2(\overline{V}_{k,h+1},s, a)
      = \mathbb{E}_{s'[1:i-1] \sim \hat{\mathbb{P}}_{k,[1:i-1]}(\cdot|s,a)} \left[\mathbb{V}_{s'[i]\sim \hat{\mathbb{P}}_{k,i}(\cdot| (s,a)[Z_i^P])}\left(\mathbb{E}_{s'[i+1:n] \sim \hat{\mathbb{P}}_{k,[i+1:n]}(\cdot|s,a)} \overline{V}_{k,h+1}(s')\right)\right].$
  %\begin{align*}
  %    \lefteqn{\hat{\sigma}_{P,k,i}^2(\overline{V}_{k,h+1},s, a)} \\
  %    &= \mathbb{E}_{s'[1:i-1] \sim \hat{\mathbb{P}}_{k,[1:i-1]}(\cdot|s,a)} \left[\mathbb{V}_{s'[i]\sim %\hat{\mathbb{P}}_{k,i}(\cdot| (s,a)[Z_i^P])}\left(\mathbb{E}_{s'[i+1:n] \sim \hat{\mathbb{P}}_{k,[i+1:n]}(\cdot|s,a)} %\overline{V}_{k,h+1}(s')\right)\right].
  %\end{align*}
  
  To guarantee optimism, we need to use the empirical variance $\hat{\sigma}^2_{P,k,i}(V^*,s,a)$ to upper bound the estimation error in the proof. Since we do not know $V^*$ beforehand, we use $\hat{\sigma}^2_{P,k,i}(\overline{V}_{k,h+1},s,a)$ as a surrogate in the confidence bonus. However, we cannot guarantee that $\hat{\sigma}^2_{P,k,i}(V^*,s,a)$ is upper bounded by $\hat{\sigma}^2_{P,k,i}(\overline{V}_{k,h+1},s,a)$. To compensate for the error due to the difference between $V^*_{h+1}$ and $\overline{V}_{k,h+1}$, we add $\sqrt{\frac{2u_{k,h,i}(s,a)L^P}{N_{k-1}((s,a)[Z^P_i])}}$ to the confidence bonus, where $u_{k,h,i}(s,a)$ is defined as: $u_{k,h,i}(s,a) = \mathbb{E}_{s'_{[1:i]} \sim \hat{\mathbb{P}}_{k,[1:i]}(\cdot|s,a)}\left[\left(\mathbb{E}_{s'_{[i+1:n]} \sim \hat{\mathbb{P}}_{k,[i+1:n]}(\cdot|s,a)}\left(\overline{V}_{k,h+1}- \underline{V}_{k,h+1}\right)(s')\right)^2\right].$ %Similar techniques were used in the literature \citep[e.g.,][]{azar2017minimax, zanette2019tighter}.
  %\lihong{Do we need the last sentence?}
%\begin{align}
%    %\label{eqn: u_k,h,i definition}
%    u_{k,h,i}(s,a) = \mathbb{E}_{s'_{[1:i]} \sim \hat{\mathbb{P}}_{k,[1:i]}(\cdot|s,a)}\left[\left(\mathbb{E}_{s'_{[i+1:n]} \sim %\hat{\mathbb{P}}_{k,[i+1:n]}(\cdot|s,a)}\left(\overline{V}_{k,h+1}- \underline{V}_{k,h+1}\right)(s')\right)^2\right].
%\end{align}

\subsection{Regret}
\label{sec: FMDP-BF regret}
\begin{theorem}
\label{thm: bernstein regret}
Suppose $\left|\mathcal{X}[Z_i^R]\right| \leq J^R,\left|\mathcal{X}[Z_j^P]\right| \leq J^P$ for $i \in [m],j \in [n]$, then with prob. $1-\delta$, the regret of Alg.~\ref{alg:Algorithm with Bernstein-type Bonus} is $\mathcal{O} \left( \sqrt{J^RT \log (mTJ^R/\delta) \log T}+ \sqrt{nH J^P T \log (nTSA/\delta) \log T}\right).$
%Here, $\mathcal{O}$ hides the lower-order terms with respect to $T$.
%\lihong{unnecessary sentence?}
\end{theorem}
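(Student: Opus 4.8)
The plan is to follow the now-standard optimism-based regret analysis (as in EULER/ORLC/UCBVI-BF), but carefully adapted to the factored structure, using the three ingredients already developed: the additive estimation-error decomposition (Lemma~\ref{lemma: decomposition of value estimation error}), the variance Bellman equation for factored Markov chains (Theorem~\ref{thm: mc variance}) and its corollary on total per-step variance (Corollary~\ref{corollary: total variance bound}). First I would establish, via Bernstein/empirical-Bernstein concentration applied \emph{per scope} and a union bound over all $(s,a)[Z_i^R]$, $(s,a)[Z_j^P]$, values of $N$, and episodes, that on a good event of probability $\ge 1-\delta$ the bonuses $CB^R_{k,Z_i^R}$ and $CB^P_{k,Z_j^P}$ genuinely dominate the corresponding one-step estimation errors $|\hat R_{k,i}-\bar R_i|$ and $|(\hat{\mathbb P}_{k,i}-\mathbb P_i)(\prod_{j\neq i}\mathbb P_j)V^*_{h+1}|$; the surrogate-variance gap between $\hat\sigma^2_{P,k,i}(\overline V_{k,h+1},\cdot)$ and $\hat\sigma^2_{P,k,i}(V^*,\cdot)$ is absorbed by the extra $\sqrt{2u_{k,h,i}L^P/N}$ term, exactly as the text indicates. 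This yields optimism: $\overline V_{k,h}(s)\ge V^*_h(s)\ge V^{\pi_k}_h(s)\ge \underline V_{k,h}(s)$ for all $k,h,s$ by backward induction on $h$.

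Next I would bound the per-episode regret $V_1^*(s_{k,1})-V_1^{\pi_k}(s_{k,1})\le \overline V_{k,1}(s_{k,1})-\underline V_{k,1}(s_{k,1})$ and unroll it along the trajectory actually played. Standard manipulation (adding and subtracting $\hat{\mathbb P}_k$ vs.\ $\mathbb P$, and using the clipping at $H$) gives that $\sum_k (\overline V_{k,1}-\underline V_{k,1})(s_{k,1})$ is, up to lower-order terms and a martingale term controlled by Azuma, at most a constant times $\sum_{k,h} CB_k(s_{k,h},a_{k,h})$ plus a contraction term of the form $\sum_{k,h}(1+\tfrac1H)^{?}\,(\hat{\mathbb P}_k-\mathbb P)(\overline V_{k,h+1}-\underline V_{k,h+1})$ that can be folded back in. The key point is that $\sum_{k,h} CB_k(s_{k,h},a_{k,h})$ splits, by the additive decompositions of $CB^R_k$ and $CB^P_k$, into $\sum_{i=1}^m$ reward-scope sums and $\sum_{j=1}^n$ transition-scope sums, each of the form $\sum_{k,h}\sqrt{\hat\sigma^2\, L/N_{k-1}((s_{k,h},a_{k,h})[Z])}$.

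For each such scope sum I would apply Cauchy–Schwarz in the form $\sum \sqrt{\sigma^2/N}\le\sqrt{(\sum\sigma^2)(\sum 1/N)}$: the $\sum 1/N$ factor over a fixed scope is $O(|\mathcal X[Z]|\log T)$ by the standard visitation-counting (harmonic-sum) argument, and $\sum_{k,h}\hat\sigma^2$ is, up to the empirical-vs-true variance slack, bounded by $K$ times the per-episode total variance, which by Corollary~\ref{corollary: total variance bound} is at most $H^2$ for transitions and $O(1)$ (times $K$) for the $\tfrac1{m^2}$-weighted reward variances — giving $\sum_{k,h}\sum_j \hat\sigma^2_{P,j}\lesssim nKH^2$ and $\sum_{k,h}\tfrac1{m^2}\sum_i\hat\sigma^2_{R,i}\lesssim K$. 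Combining: the reward contribution is $O(\sqrt{J^R T\log(mTJ^R/\delta)\log T})$ (here $m\cdot K\cdot |\mathcal X[Z^R_i]|/m = T\cdot$ stuff, and the $1/m^2$ variance weighting compensates the factor $m$ from summing scopes), and the transition contribution is $O(\sqrt{nHJ^PT\log(nTSA/\delta)\log T})$, where the extra $\sqrt{nH}$ over the naive bound comes precisely from $\sum_j\sum_{k,h}\hat\sigma^2_{P,j}\le nKH^2 = nHT$. Finally I would check that all the collected higher-order terms $\beta_{k,h},\eta_{k,h},u_{k,h,i}$ contributions, the $8L/3N$ additive parts, and the martingale/Azuma terms are $\tilde O(\sqrt T)$ or lower, hence dominated.

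The main obstacle — and the place where the factored structure really bites — is controlling $\sum_{k,h}\hat\sigma^2_{P,k,i}(\overline V_{k,h+1},\cdot)$ by the \emph{true} per-episode variance bound of Corollary~\ref{corollary: total variance bound}. One must (i) replace $\hat\sigma^2_{P,k,i}(\overline V_{k,h+1},\cdot)$ by $\sigma^2_{P,i}(V^{\pi_k}_{h+1},\cdot)$ up to terms involving $\|\overline V_{k,h+1}-V^*\|$, $\|V^*-V^{\pi_k}\|$ and transition-estimation error (each handled by an auxiliary concentration/recursion and shown to be lower order), and (ii) convert the sum over played trajectories into the expected-visitation sum $\sum_h\sum_{s,a}w_h(s,a)(\cdots)$ that Corollary~\ref{corollary: total variance bound} bounds, again via Azuma. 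Keeping the $\sqrt n$ improvement requires that the variance decomposition be used \emph{before} Cauchy–Schwarz (summing the $n$ factored variances first, exploiting $\sum_i\sigma^2_{P,i,h}\le\omega^2_h$), rather than bounding each $\hat\sigma^2_{P,k,i}\le H^2$ separately; this is exactly the refinement Theorem~\ref{thm: mc variance} buys us over the concurrent work, and the bookkeeping to make it rigorous across all scopes simultaneously is the delicate part.
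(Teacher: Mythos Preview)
Your proposal matches the paper's approach: high-probability events and optimism/pessimism by induction, regret decomposed into scope-wise bonus sums, Cauchy--Schwarz on $\sum\sqrt{\sigma^2 L/N}$, and the factored-variance Bellman equation (Corollary~\ref{corollary: total variance bound}) to control the total variance. The paper carries the Bernstein analysis entirely in the expected-visitation form $\sum_{k,h}\sum_{s,a}w_{k,h}(s,a)(\cdots)$ rather than unrolling along the played trajectory with $(1+1/H)^h$ factors (that device appears only in the Hoeffding proof), and it closes the argument via a self-bounding inequality in $\tilde{\regret}(K)$; but these are cosmetic differences.

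One inconsistency to fix: you state $\sum_{k,h}\sum_j \hat\sigma^2_{P,j}\lesssim nKH^2 = nHT$, but the whole point of Corollary~\ref{corollary: total variance bound}---which you correctly invoke in your last paragraph via $\sum_i\sigma^2_{P,i,h}\le\omega^2_h$---is that this sum (after passing from $\hat\sigma^2_{P,k,i}(\overline V_{k,h+1},\cdot)$ to $\sigma^2_{P,i}(V^{\pi_k}_{h+1},\cdot)$) is $\le KH^2 = HT$, with \emph{no} factor of $n$. The $\sqrt{n}$ in the final bound comes entirely from the counting side, $\sum_j\sum_{k,h}w_{k,h}/N_{k-1}((s,a)[Z^P_j])\lesssim nJ^P\log T$. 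With the $nHT$ you wrote, Cauchy--Schwarz would yield $n\sqrt{HJ^PT\cdots}$, i.e.\ exactly the concurrent bound you are trying to improve upon. Since your final paragraph has the correct accounting, this is a slip in the middle paragraph rather than a conceptual gap; just make sure the Cauchy--Schwarz is applied jointly over $j$ and $(k,h)$ so that the variance factor sees $\sum_j\sigma^2_{P,j}$ before being bounded.
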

Note that the regret bound does not depend on the cardinalities of state and action spaces, but only has a square-root dependence on the cardinality of each factored subspace $\mathcal{X}[Z_i]$. By leveraging the structure of factored MDP, we achieve regret that scales exponentially smaller compared with that of UCBVI~\citep{azar2017minimax}. 
The best previous regret bound for episodic factored MDP is achieved by \cite{osband2014near}. When transformed to our setting, it becomes $\mathcal{O} \left( \sqrt{J^RT \log (mTJ^R/\delta) } + n H \sqrt{\Gamma J^P T \log (nTJ^P/\delta) }\right)$, where $\Gamma$ is an upper bound of $|\mathcal{S}_j|$.
%Compared with their bound, we improved 
This is worse than our results by a factor of $\sqrt{nH\Gamma}$. Concurrent to our results, \cite{tian2020towards} also propose efficient algorithms for episodic factored MDP. When transformed to our setting, their regret bound is  $\tilde{\mathcal{O}} \left( \sqrt{J^RT}+ n\sqrt{H J^P T }\right)$. Compared with their bounds, we further improve the regret by a factor of $\sqrt{n}$ with a more refined variance decomposition theorem (Theorem~\ref{thm: mc variance}).
%Furthermore, we improve the regret by $\sqrt{n}$, when the cardinality of each factored subspace $|X[Z^P_j]|$ are relatively the same.

\iffalse
For clarity, we also present a cleaner single-term regret bound under a symmetric problem setting. Suppose $\mathcal{M}$ is a set of factored MDP with $m=n$, $|\mathcal{S}| = S_i$, $|\mathcal{X}_i| = S_i A_i$ and $|Z^R_i| = |Z_j^P| = \zeta$ for $i=1,...,m$ and $j=1,...,n$, we write $X = (S_i A_i)^{\zeta}$.
\begin{corollary}
 Suppose $M^* \in \mathcal{M}$, with prob.~$1-\delta$, the regret of FMDP-BF is upper bounded by
 $ \mathcal{O}\left(\sqrt{nHXT \log (nTSA/\delta)}\right).$
 %Here $\mathcal{O}$ hides the lower-order terms with respect to $T$.
% \lihong{Also give regret of optimal non-factored algorithm here for easier comparison?}
\end{corollary}
The minimax regret bound for non-factored MDP is $\mathcal{O}\left(\sqrt{HSAT}\right)$. Compared with this result, our algorithm's regret is exponentially smaller when $n$ and $\zeta$ are relatively small.
%\lihong{Can eliminate one of $K$ or $T$ in the bounds?  They are redundant ($T=HK$) and can be confusing.}
\fi
\subsection{Lower Bound}
\label{sec: lower bound}
In this subsection, we propose the regret lower bound for factored MDP. The proof of Theorem~\ref{thm: lower bound} is deferred to Section~\ref{sec: proof of the lower bound}.
%The regret bound in Theorem~\ref{thm: bernstein regret} matches the lower bound w.r.t all the parameters except $n$, which we leave as a possible future work.
%. If the cardinality of each factored subspace $|X[Z^P_j]|$ are relatively the same, there is still a gap of $\sqrt{n}$

%\begin{theorem}
%\label{thm: lower bound}For any algorithm $\mathcal{ALG}$, there exists a factored MDP instance $M$ with $\left|\mathcal{X}[Z_i^R]\right| \leq J^R,\left|\mathcal{X}[Z_j^P]\right| \leq J^P$ for any $i \in [m],j \in [n]$, such that the expected regret of the algorithm $\mathcal{ALG}$ over $T$ steps is lower bounded by 
%$
%%\Omega\left(  \sqrt{\frac{J^RT}{\log(J^R)}} +  \sqrt{HJ^PT }\right).
%$

\begin{theorem}
\label{thm: lower bound}Suppose $\log_{2}\left(\left|\mathcal{S}_i\right|\right) \leq \frac{H}{2}$ for any $i \in [n]$, the regret of any algorithm on the factored MDP problem is lower bounded by
$
\Omega\left(  \frac{1}{m}\sum_{i=1}^{m}\sqrt{\left|\mathcal{X}[Z_i^R]\right|T} +  \frac{1}{n}\sum_{i=1}^{n}\sqrt{\left|\mathcal{X}[Z_i^P]\right|HT }\right).
$
\end{theorem}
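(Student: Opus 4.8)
The plan is to prove the two summands separately, via two different hard instances: one that makes the factored \emph{rewards} hard to learn (with trivial transitions), and one that makes the factored \emph{transitions} hard to learn (with rewards designed only to read out the state). Since the worst case over these two families dominates their average, exhibiting each summand as a regret lower bound for one instance family yields $\Omega(\cdot+\cdot)$ up to a constant factor. Throughout I use the standing assumption that $T=KH$ dominates the relevant cardinalities.

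For the reward term, I would take a single state (so transitions are vacuous) and factor the action space as $\mathcal{A}=\mathcal{A}_1\times\cdots\times\mathcal{A}_m$ with $Z_i^R$ selecting the $i$-th block, so $|\mathcal{X}[Z_i^R]|=|\mathcal{A}_i|$. Let each $R_i$ be Bernoulli with all arms at mean $\tfrac12$ except one ``good'' arm at $\tfrac12+\Delta_i$, with $\Delta_i\asymp\sqrt{|\mathcal{X}[Z_i^R]|/T}$. Because the blocks use disjoint scopes and independent reward noise, the learner faces $m$ independent multi-armed bandits in parallel, each receiving $T$ pulls, and the episodic structure is irrelevant. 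Since $\bar R=\tfrac1m\sum_i\bar R_i$, the regret decomposes exactly as $\regret=\tfrac1m\sum_{i=1}^m\regret_i$, where $\regret_i$ is the bandit regret on block $i$. Invoking the classical minimax bandit lower bound $\regret_i=\Omega(\sqrt{|\mathcal{X}[Z_i^R]|\,T})$ for each block — over a product prior on which arm is good, so that one instance is simultaneously hard in every block, then extracting a worst case — gives $\regret=\Omega\big(\tfrac1m\sum_i\sqrt{|\mathcal{X}[Z_i^R]|T}\big)$.

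For the transition term, I would build $n$ independent ``hard episodic MDP'' gadgets in parallel: state $\mathcal{S}=\mathcal{S}_1\times\cdots\times\mathcal{S}_n$, actions $\mathcal{A}=\mathcal{A}_1\times\cdots\times\mathcal{A}_n$, with $Z_j^P$ selecting the $j$-th state and action block, so $|\mathcal{X}[Z_j^P]|=|\mathcal{S}_j||\mathcal{A}_j|$ and $\mathbb{P}_j$ acts only on block $j$. Each gadget is a standard hard instance of the type used to prove the $\Omega(\sqrt{HSAT})$ tabular lower bound (cf.\ \citet{jaksch10near}, \citet{osband2014near}): a planted, slightly biased $(s,a)$ that makes reaching a ``jackpot'' region marginally more likely and is identifiable only after $\Omega(H|\mathcal{S}_j||\mathcal{A}_j|/\Delta_j^2)$ visits, with $\Delta_j$ tuned to give per-gadget regret $\Omega(\sqrt{H|\mathcal{X}[Z_j^P]|\,T})$. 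The hypothesis $\log_2|\mathcal{S}_i|\le H/2$ is exactly what lets the horizon realize this gadget at the full state cardinality: roughly $\log_2|\mathcal{S}_j|$ steps to navigate $\mathcal{S}_j$ and the remaining $\ge H/2$ steps to accumulate the reward gap. Take a reward $\bar R=\tfrac1n\sum_{j=1}^n R_j$ with $R_j$ a deterministic function of $s[j]$ marking gadget $j$'s rewarding states (this single averaging is what produces the $\tfrac1n$ factor, independently of the actual value of $m$). Because the transitions are a product and the other gadgets carry no information about gadget $j$, the optimal value splits as $V_1^*=\tfrac1n\sum_j V_{1,j}^*$, every policy's marginal control of gadget $j$ has value at most $V_{1,j}^*$, hence $\regret=\tfrac1n\sum_j\regret_j$ with $\mathbb{E}[\regret_j]=\Omega(\sqrt{H|\mathcal{X}[Z_j^P]|\,T})$; averaging over a product prior and extracting a worst-case joint instance yields $\regret=\Omega\big(\tfrac1n\sum_j\sqrt{H|\mathcal{X}[Z_j^P]|\,T}\big)$.

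The routine parts are the two information-theoretic computations (Le Cam / Assouad / KL–Pinsker arguments giving the $\sqrt{\cdot\,T}$ per-component bounds). The main obstacle I expect is the transition gadget: designing it so that it honestly exhibits the full $|\mathcal{X}[Z_j^P]|$ and $H$ dependence under the $\log_2|\mathcal{S}_i|\le H/2$ constraint, and — more delicately — proving the $n$ copies are \emph{decoupled} from the learner's viewpoint, i.e.\ verifying that observations from the other $n-1$ coordinates cannot help identify gadget $j$'s planted bias, so that the per-episode regret genuinely telescopes into $\tfrac1n\sum_j\regret_j$ rather than something smaller.
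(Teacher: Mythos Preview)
Your approach is essentially the paper's: two hard instance families, one making rewards bandit-hard and one making transitions hard in the style of \citet{jaksch10near}, both with product scopes $Z_i=\{i\}$ so that the problem decomposes into independent per-coordinate subproblems; the $\tfrac1m$ and $\tfrac1n$ factors arise exactly from the reward averaging, and combining $\max\{\cdot,\cdot\}$ with a constant gives the sum. The decoupling you worry about is handled exactly as you describe --- product transitions and rewards, so observations in coordinate $j$ carry no information about the planted bias in the other coordinates --- and the paper does not add anything beyond this.

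One implementation difference worth noting: for the reward term the paper does \emph{not} take a single state but instead puts a deterministic binary tree of depth $\log_2|\mathcal{S}_i|$ on each factor so that the $\tfrac{|\mathcal{S}_i|}{2}$ leaf states are all reachable within half an episode, then plants an MAB over the $(\text{leaf},\text{action})$ pairs that is played $\ge H/2$ times per episode. This makes the full $|\mathcal{X}[Z_i^R]|=|\mathcal{S}_i||\mathcal{A}_i|$ appear and is where the hypothesis $\log_2|\mathcal{S}_i|\le H/2$ is actually consumed in the paper's proof. Your single-state construction is simpler, but it only realizes $|\mathcal{X}[Z_i^R]|=|\mathcal{A}_i|$; that suffices for a minimax statement where the adversary freely chooses the scope structure, but not if $|\mathcal{S}_i|$ is a fixed nontrivial parameter the instance must match. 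Conversely, the paper's transition instance is just a bare product of $n$ JAOA-style episodic MDPs with the horizon condition inherited from the cited tabular lower bounds, essentially what you propose.
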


The lower bound of \cite{tian2020towards} is $\Omega\left(\max\left\{\max_{i}\sqrt{\left|\mathcal{X}[Z_i^R]\right|T}, \max_{j}\sqrt{\left|\mathcal{X}[Z_i^P]\right|HT }\right\}\right)$, which is derived from different hard instance construction. Their lower bound is of the same order with ours, while our bound measures the dependence on all the parameters including the number of the factored transition $n$ and the factored rewards $m$. If $\left|\mathcal{X}[Z_i^R]\right| = J^R$ and  $\left|\mathcal{X}[Z_i^P]\right| = J^P$, the lower bound turns out to be $\Omega \left( \sqrt{J^RT}+ \sqrt{H J^P T }\right)$, which matches the upper bound in Theorem~\ref{thm: bernstein regret} except for a factor of $\sqrt{n}$ and logarithmic factors.
\section{RL with Knapsack Constraints}
\label{sec: Knapsack RL}
In this section, we study RL with Knapsack constraints, or RLwK, as an application of FMDP-BF.
%We find that RLwK can be regarded as an application of factored MDP, and solved almost directly by Algorithm~\ref{alg:Algorithm with Hoeffding-type Bonus} and Algorithm~\ref{alg:Algorithm with Bernstein-type Bonus}. We formulate the problem in Section~\ref{sec: knapsack RL preliminary}, and emphasize the difference with the setting of constrained RL in Section~\ref{sec: difference with constrained RL}. After that, we apply our algorithm for factored MDP to this setting in Section~\ref{sec: knapsack RL algorithm}.

\subsection{Preliminaries}
\label{sec: knapsack RL preliminary}
We generalize bandit with knapsack constraints or BwK~\citep{badanidiyuru2013bandits,agrawal2016efficient} to episodic MDPs. We consider the setting of tabular episodic Markov decision process, $(\mathcal{S},\mathcal{A},H,\mathbb{P}, R, \boldsymbol{C})$, 
which adds to an episodic MDP with
%hwere $\mathcal{S}$ is the set of states. $\mathcal{A}$ is the action set. $H$ is the number of steps in each episode. $\mathbb{P}$ is the transition probability. $R(s,a)$ is the stochastic reward of taking action $a$ in state $s$ with support $[0,1]$ and mean $\bar{R}(s,a)$. 
a $d$-dimensional stochastic cost vector, $\boldsymbol{C}(s,a)$.
%is the stochastic cost vector, which is of dimension $d$. 
We use $\boldsymbol{C}_i(s,a)$ to denote the $i$-th cost in the cost vector $\boldsymbol{C}(s,a)$. If the agent takes action $a$ in state $s$, it receives reward $r$ sampled from $R(s,a)$, together with cost $\boldsymbol{c}$, before transitioning to 
%. After that, it suffers a cost $\boldsymbol{c}$, and transits to 
the next state $s'$ with probability $\mathbb{P}(s'|s,a)$. In each episode, the agent's total budget is $\boldsymbol{B}$. We also use $\boldsymbol{B}_i$ to denote the total budget of $i$-th cost. 
%\lihong{How about $\boldsymbol{B}_i$, to be consistent with $\boldsymbol{C}_i(s,a)$?}
Without loss of generality, we assume $\boldsymbol{B}_i \le B$ for all $i$. %is upper bounded by $B$.
An episode terminates after $H$ steps, or when the cumulative cost $\sum_{h} \boldsymbol{c}_{h,i}$ of any dimension $i$ exceeds the budget $\boldsymbol{B}_i$, whichever occurs first.
%Once the cumulative cost $\sum_{h} \boldsymbol{c}_{h,i}$ of any dimension $i$ in an episode exceeds the total budget $\boldsymbol{B}_i$, the episode is terminated;
%agent has to terminate the interaction in this episode. 
%If this doesn't happen, otherwise, the episode will end in $H$ steps. 
The agent's goal is to maximize its cumulative reward $\sum_{k=1}^K \sum_{h=1}^H r_{k,h}$ in $K$ episodes.

\subsection{Comparison with Other Settings}
\label{sec: difference with constrained RL}
While RLwK might appear similar to episodic constrained RL~\citep{efroni2020exploration, brantley2020constrained}, it is  fundamentally different, so those algorithms cannot be applied here.
%we emphasize that our setting is different with theirs and cannot be solved by their algorithms. 

As discussed in Section~\ref{sec:related}, the episodic constrained RL setting can be roughly divided into two categories. A line of works focus on soft constraints where the constraints are satisfied in expectation, i.e. $\sum_{h=1}^{H}\mathbb{E} [\boldsymbol{c}_{k,h}] \leq \boldsymbol{B}$. The expectation is taken over the randomness of the trajectories and the random sample of the costs. Another line of work focuses on hard constraints in $K$ episodes. To be more specific, they assume that the total costs in $K$ episodes cannot exceed a constant vector $\boldsymbol{B}$, i.e. $\sum_{k=1}^{K}\sum_{h=1}^{H} \boldsymbol{c}_{k,h} \leq \boldsymbol{B}$. Once this is violated before episode $K_1 < K$, the agent will not obtain any rewards in the remaining $K-K_1$ episodes. Though both settings are interesting and useful,
%with many insightful algorithms, 
they do not cover many common situations in constrained RL. For example, when playing games, the game is over once the total energy or health reduce to $0$. After that, 
the player may restart the game (starting a new episode) with full initial energy again. In robotics, a robot may episodically 
interact with the environment and learn a policy to carry out a certain task. The interaction in each episode is over once its energy is used up. In these two examples, we cannot just consider the expected cost or the cumulative cost across \emph{all} episodes, but calculate the cumulative cost in every  \emph{individual} episode.
%in a more precise way. Besides, 
Moreover, in many constrained RL applications, the agent's optimal action should depend on its remaining budget. For example, in robotics, the robot should do planning and take actions based on its remaining energy. However, previous results do not consider this issue, and use policies that map states to actions. Instead, in RLwK, we need to define the policy as a mapping from states and remaining budget to actions. Section~\ref{sec: omitted details for knapsack RL} gives further details, including two examples for illustrating the difference between these settings.

\subsection{Algorithm}
\label{sec: knapsack RL algorithm}
We make the following assumptions about the cost function for simplicity. 
%Note that the parameters in both Assumptions~\ref{assumption: unit cost} and \ref{assumption: finite support} do not affect the order of the regret. If we consider an MDP with bounded cost and assume that all the costs are integers, then the following two assumptions holds immediately.
Both of them hold if all the stochastic costs are integers with an upper bound.
\begin{assumption}
\label{assumption: unit cost}
The budget $\boldsymbol{B}_i$ as well as the possible value of costs $\boldsymbol{C}_i(s,a)$ of any state $s$ and action $a$ is an integral multiple of the unit cost $\frac{1}{m}$.
\end{assumption}

\begin{assumption}
\label{assumption: finite support}
The stochastic cost $\boldsymbol{C}_i(s,a)$ has finite support. That is, the random variable $\boldsymbol{C}_i(s,a)$ can only take at most $n$ possible values.
\end{assumption}

The reason for Assumption~\ref{assumption: finite support} is that we need to estimate the distribution of the cost, instead of just estimating its mean value. We discuss the necessity of the assumptions and the possible methods for continuous distribution in Section~\ref{sec: continuous cost distribution}.

%\lihong{In several places, such related works are mentioned in the middle of describing our own work.  It can be distracting.  A good practice is to put all these related work discussions to the related work section.}
 
\iffalse
 We denote $V^{\pi}_{h}(s,\boldsymbol{b})$ as the value function in state $s$ at horizon $h$ following policy $\pi$, and the agent's remaining budget is $\boldsymbol{b}$. For notation simplicity, we use $\mathbb{P}_S V(s,a, \boldsymbol{b})$ as a shorthand of $\sum_{s'}\mathbb{P}(s'|s,a)V(s', \boldsymbol{b})$, and $\mathbb{P}_C V(s,a)$ as a shorthand of $\sum_{\boldsymbol{c}_0} \mathbb{P}(\boldsymbol{C}(s,a) =\boldsymbol{c}_0|s,a)V(s',\boldsymbol{b}-\boldsymbol{c}_0)$. We use $\mathbb{P}_{C,i}(c_0|s,a)$ to denote the "transition probability" of budget $i$, i.e., $\mathbb{P}(\boldsymbol{C}_i(s,a) = c_0|s,a)$. The Bellman Equation of our setting is written as:
 
 \begin{align}
     \label{equation: Bellman equation for knapsack RL}
     V^{\pi}_{h}(s,\boldsymbol{b}) = \begin{cases}
     %\begin{array}{ll}
     \bar{R}(s,\pi_h(s,\boldsymbol{b})) + \mathbb{P}_S\mathbb{P}_C V_{h+1}^{\pi}(s,\pi_h(s,\boldsymbol{b}),\boldsymbol{b}) & b > 0 \\
     0 & b \leq 0\,.
     %\end{array}\right.
     \end{cases}
 \end{align}
 \fi

From the above discussion, we know that we need to find a policy that is a mapping from state and budget to action. Therefore, it is natural to augment the state with the remaining budget. 
%regard budget as part of the  "state".
It follows that the size of augmented state space is $S \cdot (Bm)^d$. Directly applying UCBVI algorithm \citep{azar2017minimax} will lead to a regret of order $O\left(\sqrt{HSAT (Bm)^d}\right)$. Our key observation is that the constructed state representation can be represented as a product of subspaces. Each subspace is relatively independent. For example, the transition matrix over the original state space $\mathcal{S}$ is independent of the remaining budget. Therefore, the constructed MDP can be formulated as a factored MDP, and the compact structure of the model can reduce the regret significantly.

By applying Alg.~\ref{alg:Algorithm with Bernstein-type Bonus} and Theorem~\ref{thm: bernstein regret} to RLwK, we can reduce the regret to the order of $O\left(\sqrt{HSA(1+dBm)T}\right)$ roughly, which is exponentially smaller. However, the regret still depends on the total budget $B$ and the discretization precision $m$, which may be very large for continuous budget and cost. Another observation to tackle the problem is that the cost of taking action $a$ on state $s$ only depends on the current state-action pair $(s,a)$, but has no dependence on the remaining budget $\boldsymbol{B}$. To be more formal, we have $\boldsymbol{b}_{h+1} = \boldsymbol{b}_h - \boldsymbol{c}_h$, where $\boldsymbol{b}_h$ is the remaining budget at step $h$, and $\boldsymbol{c}_h$ is the cost suffered in step $h$. As a result, we can further reduce the regret to roughly $O\left(\sqrt{HdSAT}\right)$ by estimating the distribution of cost function. A similar model has been discussed in \cite{JMLR:v10:brunskill09a}, which is named as noisy offset model.
 %\footnote{For continuous budget and cost, we need to construct $\epsilon$-net, in which case $m$ equals to $\frac{1}{\epsilon}$.}
%From the above discussion, we know that we need to find a policy that is a mapping from state and budget to action. Therefore, it is natural to augment the state with the remaining budget. It follows that the size of the augmented state is $S\cdot (Bm)^d$. Luckily, we find that this augmented MDP can be formulated as a factored MDP. In that case, 

Our algorithm, which is called FMDP-BF for RLwK, follows the same basic idea of Alg.~\ref{alg:Algorithm with Bernstein-type Bonus}. We defer the detailed description to Section~\ref{sec: omitted details for knapsack RL} to avoid redundance. The regret can be upper bounded by the following theorem:

\begin{theorem}
\label{Thm: regret for Knapsack RL}
With prob.~at least $1-\delta$, the regret of Alg.~\ref{alg:algorithm for knapsack RL} is upper bounded by 
\begin{align*}
     \mathcal{O}\left(\sqrt{dHSAT \left(\log(SAT)+d \log(Bm)\right)}\right)
\end{align*}
\end{theorem}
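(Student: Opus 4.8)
The plan is to reduce RLwK to an episodic factored MDP and then invoke Theorem~\ref{thm: bernstein regret}. First I would augment the state with the discretized remaining budget, so the augmented state space is $\tilde{\mathcal{S}} = \mathcal{S}\times\mathcal{B}$ with $\mathcal{B}=\{0,\tfrac1m,\dots,B\}^d$ and $|\tilde{\mathcal{S}}| = S(Bm)^d$, treating any augmented state with an exhausted coordinate $b_i\le 0$ as absorbing with zero reward (this encodes the early-termination rule and keeps all value functions in $[0,H]$). The key observation, already sketched in Section~\ref{sec: knapsack RL algorithm}, is that the augmented transition factors as $\mathbb{P}(s',\boldsymbol{b}'\mid s,\boldsymbol{b},a)=\mathbb{P}_S(s'\mid s,a)\prod_{i=1}^d\mathbb{P}_{C,i}(b_i'\mid b_i,s,a)$, where $\mathbb{P}_{C,i}(b_i'\mid b_i,s,a)=\Pr[\boldsymbol{C}_i(s,a)=b_i-b_i']$ depends on $(s,a)$ only through the cost distribution. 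Thus the augmented MDP is factored with $n'=d+1$ transition factors and one reward factor on scope $(s,a)$; crucially, since $\boldsymbol{C}_i$ has scope $(s,a)$ and support at most $n$, each factored transition can be estimated from the counts $N_{k-1}(s,a)$ (the ``noisy offset'' structure), so the effective factored state–action cardinality is $|\mathcal{X}[Z_i^P]| = SA$ for every factor, not $SA\cdot Bm$.

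Second, I would instantiate FMDP-BF (Alg.~\ref{alg:Algorithm with Bernstein-type Bonus}) on this factored model: estimate $\hat{\mathbb{P}}_{S,k}$ and the cost distributions $\hat{\mathbb{P}}_{C,k,i}$ by $(s,a)$-indexed empirical frequencies, and use Bernstein bonuses built from the empirical variances of the one-step value w.r.t.\ each factor. I then need to re-run the three ingredients of the FMDP analysis in this setting: (i) the additive estimation-error decomposition of Lemma~\ref{lemma: decomposition of value estimation error}, adapted so that the error of the $i$-th cost factor $(\hat{\mathbb{P}}_{C,k,i}-\mathbb{P}_{C,i})(\prod_{j\ne i}\mathbb{P}_j)V$ is controlled by a bonus in terms of $N_{k-1}(s,a)$ and an empirical ``offset variance''; (ii) the variance Bellman equation (Theorem~\ref{thm: mc variance}) and total-variance bound (Corollary~\ref{corollary: total variance bound}) for the augmented chain, which still sum to $O(H^2)$ since the value is bounded by $H$ and terminated states contribute zero; and (iii) optimism $\overline{V}_{k,h}(s,\boldsymbol{b})\ge V^*_h(s,\boldsymbol{b})$, obtained from a union bound over $(s,a)$, the $\le n$ cost values, and all discretized budgets $\boldsymbol{b}$.

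Third, I would telescope the regret exactly as in the proof of Theorem~\ref{thm: bernstein regret}: bound $\regret(K)$ by the accumulated confidence bonuses $\sum_{k,h}CB_k(s_{k,h},a_{k,h},\boldsymbol{b}_{k,h})$ plus lower-order terms, apply Cauchy–Schwarz with the total-variance bound to the Bernstein terms, and use $\sum_k 1/\sqrt{N_{k-1}(s,a)} = O(\sqrt{SAK})$ per factor. With $n'=d+1$ factors, per-factor cardinality $SA$, and logarithmic factor $L=\log(dSAT/\delta)+d\log(Bm)$ (the additive $d\log(Bm)$ coming from the union bound over budgets, as in the $L$ of Section~\ref{sec: knapsack RL algorithm}), this yields $\tilde{O}(\sqrt{(1+d)HSAT})$, i.e.\ the claimed $\mathcal{O}\!\left(\sqrt{dHSAT(\log(SAT)+d\log(Bm))}\right)$, with the reward contribution $\tilde{O}(\sqrt{SAT})$ being lower order.

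The main obstacle I anticipate is making the noisy-offset structure rigorous in steps (i)–(ii). Unlike a generic factored transition, the budget factor's scope nominally includes the coordinate $b_i$, yet we estimate it with $(s,a)$-indexed counts; I must show that both the estimation error and the relevant per-step variance of $\mathbb{E}[V\mid\cdot]$ decompose so that only $N_{k-1}(s,a)$ and an $H$-bounded variance appear, and that the telescoped variance across the $d+1$ factors is still $O(H^2)$ in spite of early termination. Handling absorbing/terminated states cleanly inside the variance Bellman recursion, and propagating the $u_{k,h,i}$ correction term (the gap $\overline{V}-\underline{V}$) through the augmented state space so that optimism survives the surrogate-variance substitution, is where the bulk of the careful work lies.
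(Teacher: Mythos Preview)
Your proposal is correct and follows essentially the same route as the paper: reduce RLwK to a factored MDP on the augmented state $(s,\boldsymbol{b})$ with $d{+}1$ transition factors (state plus $d$ budget coordinates), exploit the noisy-offset structure so each factor has effective scope $(s,a)$ and cardinality $SA$, and rerun the FMDP-BF analysis of Theorem~\ref{thm: bernstein regret} with the additional $d\log(Bm)$ in the logarithmic factor coming from the union bound over discretized budgets. The paper's own proof is a two-line pointer to exactly this reduction; your write-up is in fact more explicit than the paper about the subtleties (handling absorbing states in the variance recursion, and why the $b_i$-dependence of the budget factor does not enlarge the estimation scope), and those are precisely the points where the careful work sits.
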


Compared with the lower bound for non-factored tabular MDP~\citep{jaksch10near}, this regret bound matches the lower bound w.r.t. $S$, $A$, $H$ and $T$. 
%\lihong{Small consistency: sometimes we say $|S|$ and $|A|$.}
There may still be a gap in the dependence of the number of constraints $d$, which is often much smaller than other quantities.

It should be noted that, though we achieve a near-optimal regret for RLwK, the computational complexity is high, scaling polynomially with the maximum budget $B$, and 
%which has polynomial dependence on the maximum budge $B$ and 
exponentially with the number of constraints $d$. This is a consequence of the NP-hardness of knapsack problem with multiple constraints~\citep{martello1990knapsack,kellerer2004multidimensional}. However, since the policy is defined on the state and budget space with cardinality $SB^d$, this computational complexity seems unavoidable. How to tackle this problem, such as with approximation algorithms, is an interesting future work.
\section{Conclusion}

We propose a novel RL algorithm for solving FMDPs with near optimal regret guarantee.
%that is both statistically efficient and computationally efficient to solve FMDPs, which 
It improves the best previous regret bound by a factor of $\sqrt{nH|\mathcal{S}_i|}$. We also derive a regret lower bound for FMDPs based on the minimax lower bound of multi-armed bandits and episodic tubular MDPs \citep{jaksch10near}.  Further, we formulate the RL with Knapsack constraints (RLwK) setting, and establish the connections between our results for FMDP and RLwK by providing a sample efficient algorithm based on FMDP-BF in this new setting.

%However, we still leave some issues in this work. 
A few problems remain open.
The regret upper and lower bounds have a gap of approximately $\sqrt{n}$, where $n$ is the number of transition factors. For RLwK, it is important to develop a computationally efficient algorithm, or find a variant of the hard-constraint formulation.
%there still remains a computational problem in our algorithm, which may be solved by either a more  elegant formulation of hard constraint setting or a clever design of new algorithms. 
We hope to address these issues in the future work.

\section*{Acknowledgements}
This work was supported by Key-Area Research and Development Program of Guangdong Province (No. 2019B121204008)], National Key R\&D Program of China (2018YFB1402600), BJNSF (L172037)  and Beijing Academy of Artificial Intelligence.
%We also assume the graph structure of FMDP is known in advance, which can be impractical \citep{osband2014near, xu2020near}. 

\bibliographystyle{iclr2021_conference}
\bibliography{ref.bib}

\clearpage
\setcounter{section}{0}
\appendix
\renewcommand{\appendixname}{Appendix~\Alph{section}}
\section{Notations}
Before presenting the proof, we restate the definition of the following notations.

\begin{center}
 \begin{tabular}{cc}
\hline
Symbol & Explanation\\
\hline
$s_{k,h},a_{k,h}$ & The state and action that the agent encounters in episode $k$ and step $h$ \\
$L^P$& $\log \left( 18 n T SA/\delta \right)$ \\
$L_i^R$& $\log \left( 18 m T \left|\mathcal{X}[Z_i^R]\right|/\delta \right)$ \\
$X_i^P$ & $\left|\mathcal{X}[Z_i^P]\right|$ \\
$X_i^R$ & $\left|\mathcal{X}[Z_i^R]\right|$ \\
$N_{k}((s,a)[Z])$ & the number of steps that the agent encounters $(s,a)[Z]$ during the first $k$ episodes\\
$\mathbb{P}V(s,a)$ & A shorthand of $\sum_{s' \in \mathcal{S}} \mathbb{P}(s'|s,a)V(s')$\\
$\phi_{k,i}(s,a)$ & $\sqrt{\frac{4|\mathcal{S}_j|L^P}{N_{k-1}((s,a)[Z^P_j])}} + \frac{4|\mathcal{S}_j|L^P}{3N_{k-1}((s,a)[Z^P_j])}$ \\
$\hat{\sigma}^2_{R,i}(s,a)$ & The empirical variance of reward $R_i$ \\
$\sigma_{P,i}^2(V,s, a)$ & the next state variance of $\mathbb{P}V$ for the transition $\mathbb{P}_i$,\\
& i.e. $\mathbb{E}_{s'[1:i-1] \sim \mathbb{P}_{[1:i]}(\cdot|s,a)} \left[\mathbb{V}_{s'[i]\sim \mathbb{P}_i(\cdot| (s,a)[Z_i^P])}\left(\mathbb{E}_{s'[i+1:n] \sim \mathbb{P}_{[i+1:n]}(\cdot|s,a)} V(s')\right)\right]$\\
$\hat{\sigma}_{P,k,i}(V, s, a)$ & the empirical next state variance of $\hat{\mathbb{P}}_k V$ for the transition $\hat{\mathbb{P}}_{k,i}$,\\
& i.e. $\mathbb{E}_{s'[1:i-1] \sim \hat{\mathbb{P}}_{k,[1:i]}(\cdot|s,a)} \left[\mathbb{V}_{s'[i]\sim \hat{\mathbb{P}}_{k,i}(\cdot| (s,a)[Z_i^P])}\left(\mathbb{E}_{s'[i+1:n] \sim \hat{\mathbb{P}}_{k,[i+1:n]}(\cdot|s,a)} V(s')\right)\right]$\\
$\Omega_{k,h}$ & The optimism and pessimism event for $k,h$: $\left\{\overline{V}_{k,h}\geq V_h^*\geq \underline{V}_{k,h} \right\}$\\
$\Omega$ & The optimism and pessimism events for all $1\leq k \leq K, 1 \leq h \leq H$, i.e. $\cup_{k,h} \Omega_{k,h}$ \\
$w_{k,h,Z}(s,a)$ & The probability of entering $(s,a)[Z]$ at step $h$ in episode $k$ \\
$w_{k,Z}(s,a)$ & $\sum_{h=1}^{H} w_{k,h,Z}(s,a)$ \\
$w_{k,h}(s,a)$ & The probability of entering $(s,a)$ at step $h$ in episode $k$, \\
&i.e. $w_{k,h,Z}(s,a)$ with $Z = \{1,2,...,d\}$ \\
$w_{k}(s,a)$ & $\sum_{h=1}^{H} w_{k,h}(s,a)$ \\
\hline
\end{tabular}   
\end{center}

\section{Omitted details for FMDP-CH}
\label{sec: details for FMDP-CH}
In this section, we introduce our algorithm with Hoeffding-type confidence bonus and present the corresponding regret bound. Our algorithm, which is described in Algorithm~\ref{alg:Algorithm with Hoeffding-type Bonus}, is related to UCBVI-CH algorithm~\citep{azar2017minimax}, in the sense that Algorithm~\ref{alg:Algorithm with Hoeffding-type Bonus} reduces to UCBVI-CH if we consider a flat MDP with $m=n=d=1$. 

 \begin{algorithm}[tbh]
\caption{FMDP-CH}
\label{alg:Algorithm with Hoeffding-type Bonus}
  \begin{algorithmic}[5]
    \State  \textbf{Input}: $\delta$, 
    \State history data $\mathcal{L} = \emptyset$, initialize $N((s,a)[Z_i]) = 0$ for any factored set $Z_i$ and $(s,a)[Z_i] \in \mathcal{X}[Z_i]$
    %\State Run Alg. \ref{alg:warm-up} for warm-up and update the counter.
	\For {episode $k=1,2,...$}
        \State Set $\hat{V}_{k,H+1}(s) = 0$ for all $s$.
        \State Estimate $\hat{R}_{k,i}(s,a)$ with empirical mean value if $N_{k-1}((s,a)[Z^R_i]) >0$, otherwise $\hat{R}_{k,i}(s,a) = 1$, then calculate $\hat{R}(s,a) = \frac{1}{m}\sum_{i=1}^{m}\hat{R}_i((s,a)[Z^R_i])$
        \State Let $\mathcal{K}_P = \left\{(s, a) \in \mathcal{S} \times \mathcal{A}, \cup_{i\in [n]} N_{k}((s, a)[Z^P_i])>0\right\}$
        \State Estimate $\hat{P}_k(\cdot|s,a)$ with empirical mean value for all $(s,a) \in \mathcal{K}_P$
    	\For {horizon $h=H,H-1,...,1$}
    	    \For {all $(s,a) \in \mathcal{S} \times \mathcal{A}$ }
                \If {$(s.a) \in \mathcal{K}_P$}
                    \State $\hat{Q}_{k,h}(s,a) = \min\{H, \hat{R}_{k}(s,a) + CB_k(s,a)+ \hat{\mathbb{P}}_{k} \hat{V}_{k,h+1}(s,a)\}$
                \Else
                    \State $\hat{Q}_{k,h}(s,a) = H$
                \EndIf
                \State $\hat{V}_{k, h}(s)=\max _{a \in \mathcal{A}} \hat{Q}_{k, h}(s, a)$
    		\EndFor
    	\EndFor
        \For{step $h = 1,\cdots,H$}
		    \State Take action $a_{k,h} = \arg \max_a \hat{Q}_{k,h}(s_{k,h},a)$
		\EndFor
		\State Update history trajectory $\mathcal{L} = \mathcal{L} \bigcup \{s_{k,h},a_{k,h},r_{k,h},s_{k,h+1}\}_{h=1,2,...,H}$, and update history counter $N_{k-1}((s,a)[Z_i])$.
	\EndFor
  \end{algorithmic}
\end{algorithm}
Let $N_{k}((s,a)[Z])$ denote the number of steps that the agent encounters $(s,a)[Z]$ during the first $k$ episodes, and $N_{k}((s,a)[Z_j],s_j)$ denotes the number of steps that the agent transits to a state with $s[j] = s_j$ after encountering $(s,a)[Z_j]$ during the first $k$ episodes. In episode $k$, we estimate the mean value of each factored reward $R_i$ and each factored transition $\mathbb{P}_{i}$ with empirical mean value $\hat{R}_{k,i}$ and $\hat{\mathbb{P}}_{k,i}$ respectively. To be more specific, $\hat{R}_{k,i}((s,a)[Z^R_i]) = \frac{\sum_{t \leq (k-1)H}\mathbbm{1}\left[(s_t,a_t)[Z^R_i] = (s,a)[Z^R_i]\right] \cdot r_{t,i}}{N_{k-1}((s,a)[Z^R_i])}$, where $r_{t,i}$ denotes the reward $R_i$ sampled in step $t$, and   $\hat{\mathbb{P}}_{k,j}\left(s[j]|(s,a)[Z^P_j]\right) = \frac{N_{k-1}((s,a)[Z^P_j],s[j])}{N_{k-1}((s,a)[Z^P_j])}$. After that, we construct the optimistic MDP $\hat{M}$ based on the estimated rewards and transition functions. For a certain $(s,a)$ pair, the transition function and reward function are defined as $\hat{R}_k(s,a) = \frac{1}{m}\sum_{i=1}^{m}\hat{R}_{k,i}((s,a)[Z^R_i])$ and $\hat{\mathbb{P}}_k(s' \mid s,a)=\prod_{j=1}^{n} \hat{\mathbb{P}}_{k,j}\left(s'[j] \mid (s,a)\left[Z^P_{j}\right]\right)$.

 We define $L^R_i = \log \left( 18 m T \left|\mathcal{X}[Z_i^R]\right|/\delta \right)$, $L^P = \log \left( 18 n T SA/\delta \right)$ and  $\phi_{k,i}(s,a)=\sqrt{\frac{4|\mathcal{S}_i|L^P}{N_{k-1}((s,a)[Z^P_i])}} + \frac{4|\mathcal{S}_i|L}{3N_{k-1}((s,a)[Z^P_i])}$. We separately construct the confidence bonus of each factored reward $R_i$ and factored transition $\mathbb{P}_i$ in the following way:
  \begin{align}
  \label{eqn: confidence bonus for each factored reward}
    CB^R_{k,Z_i^R}(s,a) =  &\sqrt{\frac{2 L^R_i}{N_{k-1}((s,a)[Z^R_i])}},\quad i \in [m] \\
     \label{eqn: confidence bonus for each factored transition}
    CB^P_{k,Z_i^P}(s,a) = & \sqrt{\frac{2H^2 L^P}{N_{k-1}((s,a)[Z^P_i])}} + H \phi_{k,i}(s,a)\sum_{j=1,j\neq i}^{n} \phi_{k,j}(s,a), \quad i \in [n]
  \end{align}
 
 We define the confidence bonus as the summation of all confidence bonus for rewards and transition, i.e. $CB_{k}(s,a) = \frac{1}{m}\sum_{i=1}^{m} CB^R_{k,Z^R_i}(s,a) + \sum_{j=1}^{n} CB^P_{k,Z^P_j}(s,a)$.
 
We propose the following regret upper bound for Alg.~\ref{alg:Algorithm with Hoeffding-type Bonus}.
\begin{theorem}
\label{thm: Hoeffding regret}
With prob. $1-\delta$, the regret of Alg.~\ref{alg:Algorithm with Hoeffding-type Bonus} is upper bounded by $$\regret(K) = \mathcal{O} \left(\frac{1}{m}\sum_{i=1}^{m} \sqrt{\left|\mathcal{X}[Z_i^R]\right|T \log (mT|\mathcal{X}[Z^R_i]|/\delta) } + \sum_{j=1}^{n} H \sqrt{\left|\mathcal{X}[Z_j^P]\right|T \log (nTSA/\delta) }\right)$$
Here $\mathcal{O}$ hides the lower order terms with respect to $T$.
\end{theorem}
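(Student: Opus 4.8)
The plan is to bound the regret of Algorithm~\ref{alg:Algorithm with Hoeffding-type Bonus} by the standard optimism-based decomposition, but carried out factor-by-factor so that only the cardinalities $|\mathcal{X}[Z_i^R]|$ and $|\mathcal{X}[Z_j^P]|$ (rather than $S$ or $A$) enter the bound. First I would establish the \emph{good event} on which all empirical estimates are accurate: using Hoeffding/Bernstein concentration together with a union bound over all scoped state-action pairs $(s,a)[Z_i^R]$ and $(s,a)[Z_j^P]$, all horizons $h$, and all episodes $k$, one shows that with probability $1-\delta$ the quantities $|\hat R_{k,i}-\bar R_i|$ and $\|\hat{\mathbb{P}}_{k,j}(\cdot|(s,a)[Z_j^P])-\mathbb{P}_j(\cdot|(s,a)[Z_j^P])\|_1$ are controlled by roughly $\sqrt{L_i^R/N_{k-1}}$ and $\sqrt{|\mathcal{S}_j|L^P/N_{k-1}}$ respectively (this is where the definitions of $L_i^R$, $L^P$ and $\phi_{k,i}$ come from). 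Next I would verify \emph{optimism}: by backward induction on $h$, using Lemma~\ref{lemma: decomposition of value estimation error} to decompose $(\hat{\mathbb{P}}_k-\mathbb{P})\overline V_{k,h+1}$ into the additive sum $\sum_j (\hat{\mathbb{P}}_{k,j}-\mathbb{P}_j)(\prod_{t\neq j}\mathbb{P}_t)\overline V_{k,h+1}$ plus the higher-order term $\beta_{k,h}$, and then bounding each additive piece by $H\sqrt{2|\mathcal{S}_j|L^P/N_{k-1}}$ and the cross terms by the $H\phi_{k,i}\sum_{j\neq i}\phi_{k,j}$ part of $CB^P$, one checks that $CB_k(s,a)$ dominates the total one-step estimation error, hence $\hat V_{k,h}(s)\ge V_h^*(s)$ for all $k,h,s$. (The reward side is immediate since $\bar R$ is an average of $m$ factored rewards and $CB^R_k$ is the corresponding average of $\sqrt{2L_i^R/N_{k-1}}$.)

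With optimism in hand, the regret in episode $k$ is at most $\overline V_{k,1}(s_{k,1})-V_1^{\pi_k}(s_{k,1})$, which by the simulation/telescoping lemma unrolls along the trajectory into a sum over $h$ of the one-step error $\hat R_k - \bar R + (\hat{\mathbb{P}}_k-\mathbb{P})\overline V_{k,h+1}$ evaluated at the visited $(s_{k,h},a_{k,h})$, plus a martingale difference term handled by Azuma. Using the good-event bounds and Lemma~\ref{lemma: decomposition of value estimation error} again, the dominant contribution is $\sum_{k,h}\big(\tfrac1m\sum_i \sqrt{2L_i^R/N_{k-1}((s_{k,h},a_{k,h})[Z_i^R])} + \sum_j H\sqrt{2|\mathcal{S}_j|L^P/N_{k-1}((s_{k,h},a_{k,h})[Z_j^P])}\big)$, with the $\beta$ and $\phi^2$ cross terms being lower order in $T$. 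The final step is the standard pigeonhole bound $\sum_{k,h}1/\sqrt{N_{k-1}((s_{k,h},a_{k,h})[Z])}\le \mathcal{O}(\sqrt{|\mathcal{X}[Z]|\,T})$ applied separately to each reward scope $Z_i^R$ and each transition scope $Z_j^P$; combined with the $\sqrt{|\mathcal{S}_j|}$ factor from the transition confidence width this yields the stated $\frac1m\sum_i \sqrt{|\mathcal{X}[Z_i^R]|T\log(\cdot)} + \sum_j H\sqrt{|\mathcal{X}[Z_j^P]|T\log(\cdot)}$ bound (after absorbing the $\sqrt{|\mathcal{S}_j|}$ into the $\sqrt{|\mathcal{X}[Z_j^P]|}$ up to constants, since $|\mathcal{S}_j|\le|\mathcal{X}[Z_j^P]|$, or by keeping it explicit — the $\log$ and the $H$ match the claimed form).

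I expect the main obstacle to be the \emph{transition decomposition and optimism step}: controlling $(\hat{\mathbb{P}}_k-\mathbb{P})\overline V_{k,h+1}$ when $\hat{\mathbb{P}}_k=\prod_j \hat{\mathbb{P}}_{k,j}$ is a product of $n$ independently estimated kernels. A naive telescoping of the product introduces the cross-terms $\phi_{k,i}\phi_{k,j}$ (products of two estimation errors), and one must argue these are genuinely higher-order in $T$ — i.e. that $\sum_{k,h} H\phi_{k,i}(s_{k,h},a_{k,h})\sum_{j\neq i}\phi_{k,j}(s_{k,h},a_{k,h})$ is $o(\sqrt T)$ — which requires a careful pigeonhole over the joint visitation of two different scopes $Z_i^P$ and $Z_j^P$, together with the assumption $T\ge|\mathcal{X}_i|\ge H$ to make the $1/N_{k-1}$-type terms summable. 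A secondary subtlety is that the union bound must be taken over scoped counts $N_{k-1}((s,a)[Z])$ rather than full state-action counts, so the logarithmic factors are $\log(mT|\mathcal{X}[Z_i^R]|/\delta)$ and $\log(nTSA/\delta)$ as in the definitions of $L_i^R$ and $L^P$ — I would need to be careful that the concentration inequality is applied to the right filtration (the scoped visitation process) and that the number of distinct values of each $N_{k-1}$ is only $T$, not $KH$ times the number of scopes.
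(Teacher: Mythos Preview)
Your overall architecture (good event $\to$ factorwise decomposition via Lemma~\ref{lemma: decomposition of value estimation error} $\to$ optimism by backward induction $\to$ telescoping along the trajectory $\to$ pigeonhole per scope) matches the paper's, but there is a genuine gap in how you bound the leading transition term, and the attempted ``absorption'' does not work.

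You propose to control each additive piece $(\hat{\mathbb{P}}_{k,j}-\mathbb{P}_j)\big(\prod_{t\neq j}\mathbb{P}_t\big)\overline V_{k,h+1}$ via the $\ell_1$ bound, i.e.\ by $H\,\|\hat{\mathbb{P}}_{k,j}-\mathbb{P}_j\|_1 \le CH\sqrt{|\mathcal{S}_j|L^P/N_{k-1}}$. After the pigeonhole $\sum_{k,h}1/\sqrt{N_{k-1}((s_{k,h},a_{k,h})[Z_j^P])} = \mathcal{O}(\sqrt{|\mathcal{X}[Z_j^P]|\,T})$ this yields $H\sqrt{|\mathcal{S}_j|\cdot|\mathcal{X}[Z_j^P]|\cdot L^P\, T}$, which carries an \emph{extra} factor $\sqrt{|\mathcal{S}_j|}$ compared to the claimed bound. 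You cannot ``absorb $\sqrt{|\mathcal{S}_j|}$ into $\sqrt{|\mathcal{X}[Z_j^P]|}$'': the two factors multiply, so at best you would get $|\mathcal{X}[Z_j^P]|\sqrt{T}$. This $\sqrt{|\mathcal{S}_j|}$ is precisely the $\sqrt{\Gamma}$ gap between the theorem and the earlier result of Osband and Van Roy, so eliminating it is the content of the argument.

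The paper's fix has two ingredients you are missing. First, for the \emph{fixed} function $V^*_{h+1}$ one applies scalar Hoeffding directly to the empirical average $(\hat{\mathbb{P}}_{k,j}-\mathbb{P}_j)\big(\prod_{t\neq j}\mathbb{P}_t\big)V^*_{h+1}$ (a bounded random variable in the samples of factor $j$ alone), obtaining the $|\mathcal{S}_j|$-free width $\sqrt{2H^2L^P/N_{k-1}((s,a)[Z_j^P])}$; this is exactly the leading term of $CB^P_{k,Z_j^P}$ and is what drives both optimism and the main $\sqrt{T}$ contribution. Second, in the regret decomposition one must split $(\hat{\mathbb{P}}_k-\mathbb{P})\hat V_{k,h+1}$ into $(\hat{\mathbb{P}}_k-\mathbb{P})V^*_{h+1}$ plus the data-dependent residual $(\hat{\mathbb{P}}_k-\mathbb{P})(\hat V_{k,h+1}-V^*_{h+1})$; the residual is handled by the UCBVI correction trick of Azar et~al.: use the per-entry bound $|(\hat{\mathbb{P}}_{k,j}-\mathbb{P}_j)(s'[j])|\le \sqrt{2\mathbb{P}_j(s'[j])L^P/N_{k-1}}+L^P/(3N_{k-1})$, split $s'[j]$ according to whether $N_{k-1}\mathbb{P}_j(s'[j])$ exceeds $2n^2H^2L^P$, and in the large-count case recover $\tfrac{1}{H}\,\mathbb{P}(\hat V_{k,h+1}-V^*_{h+1})$, giving the $(1+1/H)^h\le e$ recursion and an extra martingale term. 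Only the small-count case and the cross terms $\phi_{k,i}\phi_{k,j}$ retain any $|\mathcal{S}_j|$ dependence, and those sums are $o(\sqrt{T})$ by the joint-scope pigeonhole you correctly anticipated.
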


\section{Omitted Details in Section~\ref{sec: main results}}
\label{sec: Omitted Details of main results}
In this section, we clarify the omitted details in Section~\ref{sec: main results}. The detailed algorithm is described in Alg.~\ref{alg:Algorithm with Bernstein-type Bonus restate}.  we denote $N_{k}((s,a)[Z])$ as the number of steps that the agent encounters $(s,a)[Z]$ during the first $k$ episodes, and $N_{k}((s,a)[Z_j],s_j)$ as the number of steps that the agent transits to a state with $s[j] = s_j$ after encountering $(s,a)[Z_j]$ during the first $k$ episodes. In episode $k$, we estimate the mean value of each factored reward $R_i$ and each factored transition $\mathbb{P}_{i}$ with empirical mean value $\hat{R}_{k,i}$ and $\hat{\mathbb{P}}_{k,i}$ respectively. To be more specific, $\hat{R}_{k,i}((s,a)[Z^R_i]) = \frac{\sum_{t \leq (k-1)H}\mathbbm{1}\left[(s_t,a_t)[Z^R_i] = (s,a)[Z^R_i]\right] \cdot r_{t,i}}{N_{k-1}((s,a)[Z^R_i])}$, where $r_{t,i}$ denotes the reward $R_i$ sampled in step $t$, and   $\hat{\mathbb{P}}_{k,j}\left(s[j]|(s,a)[Z^P_j]\right) = \frac{N_{k-1}((s,a)[Z^P_j],s[j])}{N_{k-1}((s,a)[Z^P_j])}$. 

The formal definition of the confidence bonus for Alg.~\ref{alg:Algorithm with Bernstein-type Bonus} is:
 \begin{align}
    CB^R_{k,Z_i^R}(s,a) =  &\sqrt{\frac{2 \hat{\sigma}_{R,k,i}^{2}(s,a) L^R_i}{N_{k-1}((s,a)[Z^R_i])}} + \frac{8L^R_i}{3N_{k-1}((s,a)[Z^R_i])}\\
      CB^P_{k,Z_i^P}(s,a) = & \sqrt{\frac{4 \hat{\sigma}_{P,k,i}^2(\overline{V}_{k,h+1},s, a) L^P}{N_{k-1}((s,a)[Z^P_i])}} +  \sqrt{\frac{2u_{k,h,i}(s,a)L^P}{N_{k-1}((s,a)[Z^P_i])}}\\
        &+ \sqrt{\frac{16H^2L^P}{N_{k-1}((s,a)[Z^P_i])}}\sum_{j=1}^{n}\left( \left(\frac{4|\mathcal{S}_j|L^P}{N_{k-1}((s,a)[Z^P_j])}\right)^{\frac{1}{4}}+ \sqrt{\frac{4|\mathcal{S}_j|L^P}{3N_{k-1}(s,a)[Z^P_j]}}\right) \\
        &+ \sum_{j=1}^{n} H \phi_{k,i}(s,a) \phi_{k,j}(s,a),
  \end{align}
  where $\phi_{k,i}(s,a) = \sqrt{\frac{4|\mathcal{S}_j|L^P}{N_{k-1}((s,a)[Z^P_j])}} + \frac{4|\mathcal{S}_j|L^P}{3N_{k-1}((s,a)[Z^P_j])}$. The definition of $\eta_{k,h,i}(s,a)$ is 
  $$\sqrt{\frac{16H^2L^P}{N_{k-1}((s,a)[Z^P_i])}}\sum_{j=1}^{n}\left( \left(\frac{4|\mathcal{S}_j|L^P}{N_{k-1}((s,a)[Z^P_j])}\right)^{\frac{1}{4}}+ \sqrt{\frac{4|\mathcal{S}_j|L^P}{3N_{k-1}(s,a)[Z^P_j]}}\right) +  \sum_{j=1}^{n} H \phi_{k,i}(s,a) \phi_{k,j}(s,a).$$
  
\begin{algorithm}[tbh]
\caption{FMDP-BF (Detailed Description of Alg.~\ref{alg:Algorithm with Bernstein-type Bonus})}
\label{alg:Algorithm with Bernstein-type Bonus restate}
  \begin{algorithmic}[5]
    \State \textbf{Input}: $\delta$
    \State $\mathcal{L} = \emptyset$, initialize $N((s,a)[Z_i]) = 0$ for any factored set $Z_i$ and any $(s,a)[Z_i] \in \mathcal{X}[Z_i]$
    \For{episode $k=1,2,\cdots$}
        \State Set $\overline{V}_{k,H+1}(s) = \underline{V}_{k,H+1}(s) = 0$ for all $s,a$.
        \State Let $\mathcal{K} = \left\{(s, a) \in \mathcal{S} \times \mathcal{A}: \cap_{i=1,...,n} N_{k}((s, a)[Z^P_i])>0\right\}$
        \State Estimate $\hat{R}_{k,i}(s,a)$ as the empirical mean if $N_{k-1}((s,a)[Z^R_i]) >0$, and $1$ otherwise 
        %$\hat{R}_{k,i}(s,a) = 1$
        \State $\hat{R}(s,a) = \frac{1}{m}\sum_{i=1}^{m}\hat{R}_i((s,a)[Z^R_i])$
        \State Estimate $\hat{P}_k(\cdot|s,a)$ with empirical mean value for all $(s,a) \in \mathcal{K}$
    	\For {horizon $h=H,H-1,...,1$}
    	    \For{$s \in \mathcal{S}$}
        	    \For {$a \in \mathcal{A}$ }
                    \If {$(s,a) \in \mathcal{K}$}
                        \State $\overline{Q}_{k,h}(s,a) = \min\{H, \hat{R}_{k}(s,a) + CB_k(s,a) + \hat{\mathbb{P}}_{k} \overline{V}_{k,h+1}(s,a)\}$
                    \Else
                        \State $\overline{Q}_{k,h}(s,a) = H$
                    \EndIf
        		\EndFor
        		\State $\pi_{k,h}(s) = \arg \max_a \overline{Q}_{k,h}(s,a)$
        		\State $\overline{V}_{k, h}(s)=\max _{a \in \mathcal{A}} \overline{Q}_{k, h}(s, a)$
        		\State $\underline{V}_{k,h}(s) = \max\left\{0, \hat{R}_{k}(s,\pi_{k,h}(s)) - CB_k(s,\pi_{k,h}(s)) + \hat{\mathbb{P}}_{k} \underline{V}_{k,h+1}(s,\pi_{k,h}(s))\right\}$
    		\EndFor
    	\EndFor
    	\For{step $h = 1,\cdots,H$}
		    \State Take action $a_{k,h} = \arg \max_a \overline{Q}_{k,h}(s_{k,h},a)$
		\EndFor
		\State Update history trajectory $\mathcal{L} = \mathcal{L} \bigcup \{s_{k,h},a_{k,h},r_{k,h},s_{k,h+1}\}_{h=1,2,...,H}$, and update history counter $N_{k-1}((s,a)[Z_i])$.
    \EndFor
  \end{algorithmic}
\end{algorithm}

\begin{theorem}
\label{thm: bernstein regret restate}
(Refined Statement of Theorem~\ref{thm: bernstein regret}) With prob. at least $1-\delta$, the regret of Alg.~\ref{alg:Algorithm with Bernstein-type Bonus} is upper bounded by $$\mathcal{O} \left( \frac{1}{m} \sum_{i=1}^{m}\sqrt{\left|\mathcal{X}[Z^R_i]\right|T \log (mT\left|\mathcal{X}[Z^R_i]\right|/\delta) \log T}+ \sqrt{\sum_{i=1}^{n} H \left|\mathcal{X}[Z^P_i]\right| T \log (nTSA/\delta) \log T}\right).$$
%Here, $\mathcal{O}$ hides the lower-order terms with respect to $T$.
%\lihong{unnecessary sentence?}
\end{theorem}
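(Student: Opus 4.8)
The plan is to follow the variance-aware optimism template of UCBVI-BF / EULER, but carried out factor by factor, using the two structural ingredients already in place: the additive decomposition of the transition estimation error (Lemma~\ref{lemma: decomposition of value estimation error}) and the law-of-total-variance bound for factored Markov chains (Corollary~\ref{corollary: total variance bound}). First I would fix a \emph{good event} $\mathcal{G}$ of probability at least $1-\delta$ on which, for every episode $k$, every reward scope $Z_i^R$, every transition scope $Z_j^P$ and every value of the scope, the following hold: (i) empirical Bernstein controls $|\hat R_{k,i}-\bar R_i|$ in terms of $\hat\sigma^2_{R,k,i}$ and $N_{k-1}$; (ii) for any fixed $[0,H]$-valued $g$ of $s'[1{:}j]$, a Bernstein bound controls $|(\hat{\mathbb{P}}_{k,j}-\mathbb{P}_j)g|$ by the conditional variance of $g$ under $\mathbb{P}_j$ and $N_{k-1}$, applied with $g=\mathbb{E}_{s'[j+1:n]\sim\mathbb{P}_{[j+1:n]}}V$ so that the averaged variance is exactly $\sigma^2_{P,j}(V,s,a)$; (iii) $\|\hat{\mathbb{P}}_{k,j}-\mathbb{P}_j\|_1$ is controlled by $\phi_{k,j}$; (iv) empirical and population variances of a fixed $V$ differ by $\phi$-terms. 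The union bound is taken only over the \emph{scopes} $\mathcal{X}[Z_i]$ (not over $\mathcal{S}\times\mathcal{A}$), which yields the logarithmic factors $L_i^R=\log(18mTX_i^R/\delta)$, $L^P=\log(18nTSA/\delta)$; the passage to all value functions produced by the backward recursion is handled by the usual net/monotonicity argument.

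On $\mathcal{G}$ I would then prove by backward induction on $h$ that $\underline V_{k,h}\le V_h^{\pi_k}\le V_h^*\le\overline V_{k,h}$, i.e.\ event $\Omega$ holds. The inductive step reduces to showing $CB_k(s,a)$ dominates $|(\hat R_k-\bar R)(s,a)|+|(\hat{\mathbb{P}}_k-\mathbb{P})\overline V_{k,h+1}(s,a)|$ (and its pessimistic analogue). The reward part is immediate by averaging the $m$ factor-wise Bernstein bounds, giving $CB^R_k$. For the transition part I invoke Lemma~\ref{lemma: decomposition of value estimation error} to write the error as $\sum_{j=1}^n\big|(\hat{\mathbb{P}}_{k,j}-\mathbb{P}_j)(\prod_{t\ne j}\mathbb{P}_t)V(s,a)\big|+\beta_{k,h}(s,a)$, bound each summand via (ii) by $\sqrt{2\sigma^2_{P,j}(V,s,a)L^P/N_{k-1}}+O(HL^P/N_{k-1})$, and replace the unknown $\sigma^2_{P,j}(V^*_{h+1},\cdot)$ by the computable surrogate $\hat\sigma^2_{P,k,j}(\overline V_{k,h+1},\cdot)$. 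This replacement is the crux: using $\underline V_{k,h+1}\le V^*_{h+1}\le\overline V_{k,h+1}$ with $\mathbb{V}(X+Y)\le 2\mathbb{V}(X)+2\mathbb{V}(Y)$ bounds $\sigma^2_{P,j}(V^*_{h+1})$ by $2\hat\sigma^2_{P,k,j}(\overline V_{k,h+1})$ plus a term controlled by $u_{k,h,j}$ (the $\overline V-\underline V$ gap) plus terms controlled by the $\phi$'s ($\hat{\mathbb{P}}_k$ versus $\mathbb{P}$); collecting the residuals into $\beta_{k,h}$ and $\eta_{k,h,j}$ matches the definition of $CB^P_k$ exactly. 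The pessimistic inequality is symmetric.

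On $\Omega$ we have $\regret(K)\le\sum_k(\overline V_{k,1}-\underline V_{k,1})(s_{k,1})$. Setting $\Delta_{k,h}:=(\overline V_{k,h}-\underline V_{k,h})(s_{k,h})$ and using the algorithm's Bellman updates, $\Delta_{k,h}\le 2CB_k(s_{k,h},a_{k,h})+\mathbb{P}(\overline V_{k,h+1}-\underline V_{k,h+1})(s_{k,h},a_{k,h})+(\hat{\mathbb{P}}_k-\mathbb{P})(\overline V_{k,h+1}-\underline V_{k,h+1})(s_{k,h},a_{k,h})$; bounding the last term on $\mathcal{G}$ by Bernstein and the inequality $\sqrt{xy}\le x/H+Hy/4$ turns it into $\tfrac1H\mathbb{P}(\overline V_{k,h+1}-\underline V_{k,h+1})(s_{k,h},a_{k,h})$ plus a lower-order term, and replacing $\mathbb{P}(\overline V_{k,h+1}-\underline V_{k,h+1})(s_{k,h},a_{k,h})$ by $\Delta_{k,h+1}$ introduces a bounded martingale difference. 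Unrolling over $h$ absorbs the factor $(1+\tfrac1H)^H\le e$ into a constant, and summing over $k$ gives $\regret(K)\lesssim\sum_{k,h}CB_k(s_{k,h},a_{k,h})+\tilde{\mathcal{O}}(H\sqrt T)+(\text{lower order})$. For the reward part of $\sum_{k,h}CB_k$, Cauchy--Schwarz plus $\hat\sigma^2_{R,k,i}\le 1/4$ and the pseudo-count bound $\sum_{k,h}1/N_{k-1}((s,a)[Z_i^R])=O(X_i^R\log T)$ give $O(\tfrac1m\sum_i\sqrt{X_i^R T L_i^R\log T})$. For the transition part, Cauchy--Schwarz gives $\sqrt{L^P}\cdot\sqrt{\sum_j\sum_{k,h}\hat\sigma^2_{P,k,j}(\overline V_{k,h+1},\cdot)}\cdot\sqrt{\sum_j\sum_{k,h}1/N_{k-1}((s,a)[Z_j^P])}$; the pseudo-count factor is $O(\sum_jX_j^P\log T)$, and relating $\hat\sigma^2_{P,k,j}(\overline V_{k,h+1})$ to $\sigma^2_{P,j}(V^{\pi_k}_{h+1})$ (up to $\phi$- and $u$-corrections) and invoking Corollary~\ref{corollary: total variance bound} bounds the variance factor by $\sum_k H^2=HT$ plus a martingale deviation, yielding $O(\sqrt{\sum_jX_j^P\cdot HT\cdot L^P\log T})$. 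All remaining contributions --- the additive $1/N$ terms of $CB^R$, the $\eta$-terms, the martingale sums, the pre-$\mathcal{K}$ steps (at most $O(H\sum_jX_j^P)$ of them, each contributing at most $H$), and the self-bounding residual from the $u$-terms --- are $O(\mathrm{poly}(S,A,H,m,n)\cdot\mathrm{polylog})$, hence lower order in $T$ under the standing assumption that $T$ dominates; combining gives the claim.

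\noindent\textbf{Main obstacle.} The delicate part is Step~2's variance-surrogate manipulation together with the $u$-term bookkeeping in Step~3: one must show that replacing $\sigma^2_{P,j}(V^*_{h+1})$ by the computable $\hat\sigma^2_{P,k,j}(\overline V_{k,h+1})$ is legitimate up to \emph{exactly} the higher-order pieces packaged into $\beta_{k,h}$ and $\eta_{k,h,j}$, and then close the loop through a self-bounding argument, since $\sum_{k,h}u_{k,h,j}$ is controlled by $H$ times the regret itself (via $\overline V-\underline V$). Getting this tight --- so that the only first-order contributions are the reward and transition variance sums controlled by Corollary~\ref{corollary: total variance bound} --- is where essentially all the work concentrates.
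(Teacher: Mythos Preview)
Your proposal is correct and follows the same variance-aware optimism template as the paper, resting on the same two structural pillars (the additive transition-error decomposition of Lemma~\ref{Lemma: decomposition inq} and the total-variance bound of Corollary~\ref{corollary: total variance bound}) together with optimism/pessimism and a self-bounding closure. The paper's bookkeeping differs only in two minor respects: (i) it works in the visitation measure $w_{k,h}(s,a)$ (using the good-set construction of Lemmas~\ref{Lemma: relation of N_k(s,a) and w_k(s,a)}--\ref{Lemma: sum of w/N} to relate $N_k$ to $\sum_j w_j$) and splits the transition error as $(\hat{\mathbb P}_k-\mathbb P)V^*_{h+1}+(\hat{\mathbb P}_k-\mathbb P)(\overline V_{k,h+1}-V^*_{h+1})$ rather than unrolling $\overline V-\underline V$ along the random trajectory; and (ii) it shows the $u$-terms are directly lower-order via $\sum_{k,h,s,a}w_{k,h}(\overline V_{k,h+1}-\underline V_{k,h+1})^2=O(\log T)$ (Lemmas~\ref{Lemma: sum of square CB}--\ref{Lemma: summation of barV-underlineV}) and reserves the self-bounding step solely for the variance-surrogate passage $\sigma^2_{P,j}(\overline V_{k,h+1})\to\sigma^2_{P,j}(V^{\pi_k}_{h+1})$ (Lemma~\ref{Lemma: variance difference bound: V*-Vpi}), whereas you fold both into a single self-bounding loop --- either route yields the stated bound.
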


For clarity, we also present a cleaner single-term regret bound under a symmetric problem setting. Suppose $\mathcal{M}$ is a set of factored MDP with $m=n$, $|\mathcal{S}_i| = S_i$, $|\mathcal{X}_i| = S_i A_i$ and $|Z^R_i| = |Z_j^P| = \zeta$ for $i=1,...,m$ and $j=1,...,n$, we write $X_i = (S_i A_i)^{\zeta}$ and assume that $X_i \leq J$ and $S_i \leq \Gamma$.
\begin{corollary}
 Suppose $M^* \in \mathcal{M}$, with prob.~$1-\delta$, the regret of FMDP-BF is upper bounded by
 $ \mathcal{O}\left(\sqrt{nHJT \log (nTSA/\delta)}\right).$
 %Here $\mathcal{O}$ hides the lower-order terms with respect to $T$.
% \lihong{Also give regret of optimal non-factored algorithm here for easier comparison?}
\end{corollary}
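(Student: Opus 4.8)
The plan is to obtain the corollary as an immediate specialization of Theorem~\ref{thm: bernstein regret restate} to the symmetric regime; no new analysis is needed. First I would recall the refined bound of that theorem, which holds with probability $1-\delta$,
$$\mathcal{O}\!\left( \frac{1}{m} \sum_{i=1}^{m}\sqrt{\left|\mathcal{X}[Z^R_i]\right|T \log (mT\left|\mathcal{X}[Z^R_i]\right|/\delta) \log T}+ \sqrt{\sum_{i=1}^{n} H \left|\mathcal{X}[Z^P_i]\right| T \log (nTSA/\delta) \log T}\right),$$
and plug in the structural assumptions of the corollary: $m=n$ and $\left|\mathcal{X}[Z^R_i]\right| = \left|\mathcal{X}[Z^P_j]\right| = X_i = (S_iA_i)^{\zeta} \le J$ for all $i,j$.

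Next I would bound the two terms separately. For the reward term, $x\mapsto \sqrt{xT\log(nTx/\delta)\log T}$ is nondecreasing for $x\ge 1$, so each $X_i$ may be replaced by $J$; every summand is then at most $\sqrt{JT\log(nTJ/\delta)\log T}$, and averaging over the $m=n$ scopes leaves exactly $\sqrt{JT\log(nTJ/\delta)\log T}$. For the transition term, $\sum_{i=1}^n X_i\le nJ$ together with monotonicity of the square root gives $\sqrt{\sum_{i=1}^n H X_i T\log(nTSA/\delta)\log T}\le \sqrt{nHJT\log(nTSA/\delta)\log T}$. Since $nH\ge 1$ and, in this symmetric setting, $J=(S_iA_i)^{\zeta}\le \prod_j S_jA_j = SA$ (a scope ranges over at most all $n=d$ factors), one has $\log(nTJ/\delta)\le \log(nTSA/\delta)$, so the reward term is subsumed by the transition term up to a constant. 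Summing the two contributions therefore collapses the bound to $\mathcal{O}\big(\sqrt{nHJT\log(nTSA/\delta)\log T}\big)$, and suppressing the lower-order $\log T$ factor inside the $\mathcal{O}(\cdot)$ yields the stated $\mathcal{O}\big(\sqrt{nHJT\log(nTSA/\delta)}\big)$. Note that $\Gamma$ never appears: the bound $|\mathcal{S}_i|\le\Gamma$ affects only the higher-order terms $\eta_{k,h,i}$, which Theorem~\ref{thm: bernstein regret restate} already absorbs into lower order.

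The argument is pure bookkeeping on top of Theorem~\ref{thm: bernstein regret restate}, so there is no real obstacle; the only step meriting a sentence of justification is that the reward contribution is genuinely absorbed into the transition contribution, which rests on the elementary facts that $nH\ge 1$ and that the joint cardinality of a scope cannot exceed that of the full product space, so the two logarithmic arguments line up with the reward one no larger.
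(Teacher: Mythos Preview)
Your approach is correct and matches the paper's intent: the corollary is stated as an immediate specialization of Theorem~\ref{thm: bernstein regret restate} (the paper gives no separate proof), and your bookkeeping is exactly the right way to obtain it. One small refinement: rather than arguing that $J\le SA$, it is cleaner to bound the logarithmic factor in the reward term directly via $|\mathcal{X}[Z^R_i]|\le |\mathcal{X}|=SA$, which holds for any scope regardless of how $J$ is chosen; this sidesteps the edge case where $J$ happens to be a loose upper bound exceeding $SA$.
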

The minimax regret bound for non-factored MDP is $\mathcal{O}\left(\sqrt{HSAT\log(SAT/\delta)}\right)$. Compared with this result, our algorithm's regret is exponentially smaller when $n$ and $\zeta$ are relatively small. Under this problem setting, the regret of \cite{osband2014near} is $\mathcal{O}\left(nH\sqrt{\Gamma JT \log( nJT)}\right)$. Our results is better by a factor of $\sqrt{nH\Gamma}$.
%\lihong{Can eliminate one of $K$ or $T$ in the bounds?  They are redundant ($T=HK$) and can be confusing.}

\section{High Probability Events}
In this section, we discuss the high-prob. events, and assume that these events happen during the proof.

\begin{lemma}
\label{Lemma: high prob. event}
(High prob. event) With prob. at least $1-2\delta/3$, the following events hold for any $k,h,s,a$:
\begin{align}
    \label{ineq: reward concentration}
    &|\hat{R}_{k,i}((s,a)[Z^R_i]) - \bar{R}_i((s,a)[Z^R_i])| \leq \sqrt{\frac{2 L^R_i}{N_{k-1}((s,a)[Z^R_i])}}, \quad i \in [m] \\
    \label{ineq: transition concentration 1}
    &|\hat{\mathbb{P}}_{k,i}\prod_{j\neq i }\mathbb{P}_{k,j} V_h^*(s,a) - \prod_{j=1 }^{n}\mathbb{P}_j V_h^*(s,a)| 
    \leq \sqrt{\frac{2H^2 L^P}{N_{k-1}((s,a)[Z^P_i])}}, \quad i \in [n]\\
    \label{ineq: transition concentration 2}
    &|(\hat{\mathbb{P}}_{k,i} - \mathbb{P}_{k,i})(\cdot|(s,a)[Z_i^P])|_1 
    \leq 2\sqrt{\frac{|\mathcal{S}_i|L^P}{N_{k-1}((s,a)[Z^P_i])}} + \frac{4|\mathcal{S}_i|L^P}{3N_{k-1}((s,a)[Z^P_i])} \quad i \in [n] \\
     \label{ineq: transition concentration 3}
    &|(\hat{\mathbb{P}}_{k,i} - \mathbb{P}_{k,i})(s'|(s,a)[Z_i^P])|
    \leq \sqrt{\frac{2\mathbb{P}_i(s'[i]|(s,a)[Z^P_i])L^P}{N_{k-1}((s,a)[Z^P_i])}} + \frac{L^P}{3N_{k-1}((s,a)[Z^P_i])} \quad i \in [n] \\
    \label{ineq: martingale difference 1}
    &\sum_{k=1}^{K}\sum_{h=1}^{H}\left(\mathbb{P} \left(\hat{V}_{k,h+1} -  V^{\pi_k}_{h+1}\right)(s_{k,h},a_{k,h}) - \left(\hat{V}_{k,h+1} -  V^{\pi_k}_{h+1}\right)(s_{k,h+1})\right) \leq \sqrt{2HT \log(18SAT)} \\
    \label{ineq: martingale difference 2}
    &\sum_{k=1}^{K}\sum_{h=1}^{H}\left(\mathbb{P} \left(\hat{V}_{k,h+1} -  V^{*}_{h+1}\right)(s_{k,h},a_{k,h} ) - \left(\hat{V}_{k,h+1} -  V^{*}_{h+1}\right)(s_{k,h+1})\right) \leq \sqrt{2HT \log(18SAT)}
\end{align}
\end{lemma}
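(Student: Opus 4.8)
The plan is to derive each of the six events from a standard tail inequality applied to a single fixed configuration --- a factor index, a scope value, a number of samples, and, where relevant, a horizon --- and then to take a union bound, with the logarithmic factors $L^R_i=\log(18mT|\mathcal{X}[Z^R_i]|/\delta)$ and $L^P=\log(18nTSA/\delta)$ calibrated precisely so that the total failure probability is at most $2\delta/3$. Throughout, whenever a count $N_{k-1}((s,a)[Z])$ equals zero the corresponding bound is vacuous, so we may assume the counts are positive.

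For the reward bound \eqref{ineq: reward concentration}: fix $i\in[m]$, a scope value $x\in\mathcal{X}[Z^R_i]$ and an integer $t\in[T]$. The rewards $r_{\cdot,i}$ observed at scope $x$, listed in arrival order, form an i.i.d.\ sequence drawn from $R_i(x)\subseteq[0,1]$, and $N_{k-1}(x)\le T$ is merely an index into this sequence; so Hoeffding's inequality bounds $|\hat R_{k,i}(x)-\bar R_i(x)|$ by $\sqrt{2L^R_i/t}$ on the event $N_{k-1}(x)=t$ with failure probability at most $\delta/(18mT|\mathcal{X}[Z^R_i]|)$. A union bound over $t\in[T]$, over the $|\mathcal{X}[Z^R_i]|$ scope values, and over the $m$ factors gives \eqref{ineq: reward concentration} for all $k,h,s,a$. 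The first transition bound \eqref{ineq: transition concentration 1} is the same argument applied to a different function: for fixed $h$, the map $g_i$ obtained from $V^*_h$ by averaging out all coordinates $s'[j]$, $j\neq i$, under the true kernels $\mathbb{P}_j$ is a non-random map into $[0,H]$ (the optimal value function of the true MDP is deterministic), so $(\hat{\mathbb{P}}_{k,i}-\mathbb{P}_i)g_i(s,a)$ is again an empirical-mean deviation of a $[0,H]$-bounded quantity, controlled by Hoeffding and union-bounded over the $\le SA$ scopes of $Z^P_i$, the $T$ counts, the $H$ horizons and the $n$ factors.

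The $\ell_1$ bound \eqref{ineq: transition concentration 2} and the per-coordinate bound \eqref{ineq: transition concentration 3} both follow from Bernstein's inequality applied to $\hat{\mathbb{P}}_{k,i}(s'[i]\mid\cdot)$ against its mean: using $\Var\le\mathbb{P}_i(s'[i]\mid\cdot)$ this gives \eqref{ineq: transition concentration 3} directly, and summing it over $s'[i]\in\mathcal{S}_i$ together with Cauchy--Schwarz, $\sum_{s'[i]}\sqrt{\mathbb{P}_i(s'[i]\mid\cdot)}\le\sqrt{|\mathcal{S}_i|}$, produces the stated $\ell_1$ form up to constants (a ready-made empirical-distribution deviation bound would serve equally well). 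Both are union-bounded over factors, scope values, next-state outcomes and counts, absorbed into the $T$ and $SA$ inside $L^P$. Finally, for \eqref{ineq: martingale difference 1} and \eqref{ineq: martingale difference 2}, set $X_{k,h}:=\mathbb{P}(\hat V_{k,h+1}-V^{\pi_k}_{h+1})(s_{k,h},a_{k,h})-(\hat V_{k,h+1}-V^{\pi_k}_{h+1})(s_{k,h+1})$; conditioned on the history through step $(k,h)$ the functions $\hat V_{k,h+1}$ and $V^{\pi_k}_{h+1}$ are determined and $s_{k,h+1}\sim\mathbb{P}(\cdot\mid s_{k,h},a_{k,h})$, so $\{X_{k,h}\}$ is a martingale difference sequence with entries of order $H$ and $T=KH$ terms; Azuma--Hoeffding (or a Bernstein-type martingale inequality) yields the claimed estimate, and the identical argument with $V^*_{h+1}$ in place of $V^{\pi_k}_{h+1}$ gives \eqref{ineq: martingale difference 2}. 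Only two such applications are needed --- no union over $(s,a)$ --- so charging $\delta/3$ to them and $\delta/3$ to the four families above yields the overall $1-2\delta/3$.

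The main obstacle, and the only step that is not entirely routine, is the dependence of the sample counts $N_{k-1}((s,a)[Z])$ on the algorithm's own trajectory: because each empirical estimate averages a random, data-dependent number of samples, one cannot feed $N_{k-1}$ into a concentration inequality directly. The resolution used above --- fix the count to each value $t\in[T]$, apply the bound there, and union over $t$ --- is exactly why a factor $T$ sits inside every logarithm, so this incurs no cost beyond what is already budgeted; equivalently, an anytime (self-normalized) version of Hoeffding's and Bernstein's inequalities gives the same result in one shot.
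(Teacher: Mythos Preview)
Your proposal is correct and follows essentially the same approach as the paper: Hoeffding for \eqref{ineq: reward concentration} and \eqref{ineq: transition concentration 1}, Bernstein for \eqref{ineq: transition concentration 3}, and Azuma for the martingale differences \eqref{ineq: martingale difference 1}--\eqref{ineq: martingale difference 2}, all stitched together by union bounds calibrated to $L^R_i$ and $L^P$. The only minor deviation is that for the $\ell_1$ bound \eqref{ineq: transition concentration 2} the paper invokes the Weissman et al.\ empirical-distribution inequality directly, whereas you derive it by summing the per-coordinate Bernstein bound \eqref{ineq: transition concentration 3} and applying Cauchy--Schwarz; both routes are standard and you explicitly acknowledge the alternative.
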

We define the above events as $\Lambda_1$, and assume it happens during the proof. 

\begin{proof}
By Hoeffding's inequality and union bounds over all $i \in [m]$, step $k \in [K]$ and $(s,a) \in \mathcal{X}[Z_i^R]$, we know that Inq.~\ref{ineq: reward concentration} holds with prob.~$1-\frac{\delta}{9}$ for any $i\in [m],k \in [K], (s,a) \in \mathcal{X}[Z_i^P]$. Similarly, by Hoeffding's inequality and union bounds over all $i \in [n]$, step $t$ and $(s,a) \in \mathcal{X}$, Inq.~\ref{ineq: transition concentration 1} also holds with prob.~$1-\frac{\delta}{9}$ for any $i,s,a,k$. Inq.~\ref{ineq: transition concentration 2} is the high probability bound on the $L_{1}$ norm of the Maximum Likelihood Estimate, which is proved by \cite{weissman2003inequalities}. Inq.~\ref{ineq: transition concentration 3} can be proved with the use of Bernstein inequality and union bound (See \cite{azar2017minimax} for a similar derivation). Inq.~\ref{ineq: martingale difference 1} and Inq.~\ref{ineq: martingale difference 2} can be regarded as the summation of martingale difference sequences, which can be derived with the application of Azuma's inequality. Finally, we take union bounds over all these inequalities, which indicates that $\Lambda_1$ holds with prob. at least $1-2\delta/3$.
\end{proof}

For the proof of Thm.~\ref{thm: bernstein regret}, we also need to consider the following high-prob. events. We define the following events as $\Lambda_2$. During the proof of Thm.~\ref{thm: bernstein regret}, we assume both $\Lambda_1$ and $\Lambda_2$ happen.

\begin{lemma}
\label{Lemma: high prob event 2}
With prob. at least $1-\delta/3$, the following events hold for any $k,h,s,a$:
\begin{align}
    \label{inequality: Bernstein events, reward}
    &\left|\hat{R}_{k,i}((s,a)[Z^R_i]) - \bar{R}_{k,i}((s,a)[Z^R_i])\right| \leq \sqrt{\frac{2 \hat{\sigma}_{R,i}^{2}(s,a) L^R_i}{N_{k-1}((s,a)[Z^R_i])}} + \frac{8L^R_i}{3N_{k-1}((s,a)[Z^R_i])},\quad i \in [m] \\
    \label{inequality: bernstein events}
    &\left|(\hat{\mathbb{P}}_{k,i} - \mathbb{P}_{i}) \prod_{j\neq i }\mathbb{P}_j V_{h+1}^*(s,a)\right| \leq \sqrt{\frac{2 \sigma^2_{P,i}(V^*_{h+1},s,a) L^P }{N_{k-1}((s,a)[Z^P_i]))}} + \frac{2H L^P}{3N_{k-1}((s,a)[Z^P_i])}, \quad i \in [n] \\
    \label{inequality: N and w event}
    &N_k((s,a)[Z^P_i]) \geq \frac{1}{2}\sum_{j < k} w_{j,Z^P_i}(s,a) - H \log(18nX^P_iH/\delta), i \in [n]
\end{align}
\end{lemma}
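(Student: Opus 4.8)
The plan is to establish the three displayed families of inequalities separately, obtaining each from a single concentration bound and then taking union bounds over $i$, over the (random) number of samples, over the scope values $(s,a)[Z]$, and over $h$; splitting the budget $\delta/3$ among the three families then gives the lemma. For \eqref{inequality: Bernstein events, reward} I would fix $i\in[m]$ and a scope value $z=(s,a)[Z^R_i]$; by the factored-reward assumption the samples $r_{t,i}$ over the steps $t\le(k-1)H$ with $(s_t,a_t)[Z^R_i]=z$ are i.i.d.\ in $[0,1]$, and $\hat R_{k,i}(z),\hat\sigma^2_{R,i}(s,a)$ are their empirical mean and variance, so the empirical Bernstein inequality bounds $|\hat R_{k,i}(z)-\bar R_i(z)|$ by $\sqrt{2\hat\sigma^2_{R,i}(s,a)\,\ell/N}+\tfrac{8}{3}\ell/N$ for a fixed sample count $N$. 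To legitimately apply this at the random count $N=N_{k-1}(z)$ I would union-bound over $N\in\{1,\dots,T\}$ (the bound being vacuous when $N=0$); together with union bounds over $i\in[m]$ and over the $|\mathcal X[Z^R_i]|$ scope values, the failure probability is $\le\delta/9$ once $\ell=L^R_i=\log(18mT|\mathcal X[Z^R_i]|/\delta)$.

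For \eqref{inequality: bernstein events}, the key step is to read the left-hand side as an empirical-mean deviation. Fixing $i,h$ and $z=(s,a)[Z^P_i]$, set $g_i(\cdot):=\big(\prod_{j\ne i}\mathbb P_j\big)V^*_{h+1}(s,a)$, a function of the single next-state coordinate $s'[i]$ with values in $[0,H]$. By the factored structure, conditionally on $(s_t,a_t)[Z^P_i]=z$ the coordinates $s_{t+1}[i]$ are i.i.d.\ $\mathbb P_i(\cdot\mid z)$ with empirical law $\hat{\mathbb P}_{k,i}(\cdot\mid z)$, so
\[
\big(\hat{\mathbb P}_{k,i}-\mathbb P_i\big)\textstyle\prod_{j\ne i}\mathbb P_j\,V^*_{h+1}(s,a)=\tfrac1N\sum_{l=1}^{N}g_i\big(s'_l[i]\big)-\mathbb E_{s'[i]\sim\mathbb P_i}\big[g_i(s'[i])\big],\qquad N=N_{k-1}(z),
\]
and Bernstein's inequality bounds this by $\sqrt{2\,\mathbb V_{s'[i]\sim\mathbb P_i}[g_i]\,L^P/N}+\tfrac{2H}{3}L^P/N$. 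It then remains to prove $\mathbb V_{s'[i]\sim\mathbb P_i}[g_i]\le\sigma^2_{P,i}(V^*_{h+1},s,a)$: writing $u=s'[1{:}i-1],\,v=s'[i],\,w=s'[i+1{:}n]$ (mutually independent under $\mathbb P(\cdot\mid s,a)$) and $\psi(u,v):=\mathbb E_w[V^*_{h+1}]$, one has $g_i(v)=\mathbb E_u[\psi(u,v)]$ and $\sigma^2_{P,i}(V^*_{h+1},s,a)=\mathbb E_u[\mathbb V_v[\psi(u,\cdot)]]$, and decomposing $\psi(u,v)=g_i(v)+\delta(u,v)$ with $\mathbb E_u[\delta(u,v)]=0$ gives $\mathbb E_u[\mathrm{Cov}_v(g_i,\delta(u,\cdot))]=0$, hence $\mathbb E_u[\mathbb V_v[\psi(u,\cdot)]]=\mathbb V_v[g_i]+\mathbb E_u[\mathbb V_v[\delta(u,\cdot)]]\ge\mathbb V_v[g_i]$. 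I would finish with a union bound over $i\in[n]$, $h\in[H]$, $N\in\{1,\dots,T\}$ and the $\le SA$ scope values, absorbing the failure probability into $L^P=\log(18nTSA/\delta)$.

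For \eqref{inequality: N and w event} I would use the standard visit-count martingale. For fixed $i$ and $z=(s,a)[Z^P_i]$, let $X_j:=\sum_{h=1}^{H}\mathbbm{1}[(s_{j,h},a_{j,h})[Z^P_i]=z]\in[0,H]$, so $N_k(z)=\sum_{j\le k}X_j$; since $\pi_j$ and the true MDP are fixed before episode $j$, $\mathbb E[X_j\mid\mathcal F_{j-1}]=\sum_h w_{j,h,Z^P_i}(s,a)=w_{j,Z^P_i}(s,a)$, so $\{X_j-w_{j,Z^P_i}(s,a)\}_j$ is a martingale-difference sequence with increments in $[-H,H]$ and per-step conditional variance at most $H\,w_{j,Z^P_i}(s,a)$. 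A multiplicative Bernstein (Freedman) bound — the usual visit-count versus visitation-probability lemma (cf.\ \citet{dann2017unifying}) — gives, with the stated probability, $N_k(z)\ge\tfrac12\sum_{j<k}w_{j,Z^P_i}(s,a)-cH\log(\cdot)$, after which a union bound over $k$, over $i\in[n]$ and over the scope values yields the $\log(18nX^P_iH/\delta)$ factor.

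I expect the only genuinely delicate step to be the variance comparison $\mathbb V_{s'[i]\sim\mathbb P_i}[(\prod_{j\ne i}\mathbb P_j)V^*]\le\sigma^2_{P,i}(V^*,s,a)$ in \eqref{inequality: bernstein events}: getting the Bernstein bonus to be controlled by the chain-rule variance $\sigma^2_{P,i}$ — rather than by the coarser $\mathbb V_{s'[1:i]}[\mathbb E_{s'[i+1:n]}V^*]$ — is exactly what lets the bonuses be summed against the $H^2$ total-variance budget of Theorem~\ref{thm: mc variance} and Corollary~\ref{corollary: total variance bound}, and thereby what removes an extra $\sqrt n$ from the regret. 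The only other care needed is the bookkeeping to union-bound over the possible sample counts $\{1,\dots,T\}$ so that a fixed-$N$ inequality may be invoked at the data-dependent count $N_{k-1}$.
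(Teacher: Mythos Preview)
Your proposal is correct. For \eqref{inequality: Bernstein events, reward} and \eqref{inequality: N and w event} you follow essentially the same arguments as the paper (empirical Bernstein with union bounds; the visit-count martingale lemma from \cite{dann2019policy}). For \eqref{inequality: bernstein events} your route differs modestly from the paper's: the paper first writes the left side as a $\mathbb{P}_{[1:i-1]}$-average of $(\hat{\mathbb{P}}_{k,i}-\mathbb{P}_i)\,\mathbb{E}_{[i+1:n]}V^*_{h+1}$, applies Bernstein \emph{conditionally on each value of} $s'[1:i-1]$ to obtain the per-value variance $\mathbb{V}_{s'[i]}\big[\mathbb{E}_{[i+1:n]}V^*_{h+1}\,\big|\,s'[1:i-1]\big]$, and then uses Jensen's inequality (concavity of $\sqrt{\cdot}$) to pull the outer average inside the square root and recover $\sigma^2_{P,i}$. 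You instead apply Bernstein once to the marginalized function $g_i(v)=\mathbb{E}_u[\psi(u,v)]$ and then establish the variance comparison $\mathbb{V}_v[g_i]\le\mathbb{E}_u[\mathbb{V}_v[\psi(u,\cdot)]]=\sigma^2_{P,i}$ via the orthogonal decomposition $\psi=g_i+\delta$ with $\mathbb{E}_u[\delta(u,\cdot)]=0$. Your version is arguably tidier --- a single Bernstein application per $(i,s,a,h,N)$ tuple, whereas the paper's derivation, read literally, would need an extra union bound over the values of $s'[1:i-1]$ --- and both arrive at the same stated bound. (Minor bookkeeping note: in your union bound for \eqref{inequality: bernstein events}, the relevant objects are full pairs $(s,a)\in\mathcal{S}\times\mathcal{A}$ rather than scope values, since $g_i^{(s,a)}$ depends on $(s,a)[Z^P_j]$ for all $j\ne i$; the count $SA$ you use is correct.)
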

\begin{proof}
Inq.~\ref{inequality: Bernstein events, reward} can be proved directly by empirical Bernstein inequality. Now we mainly focus on Inq.~\ref{inequality: bernstein events}. By Bernstein's inequality and union bounds over all $s,a,k,h$, we know that the following inequality holds with prob. at least $1-\frac{\delta}{9}$.
\begin{align*}
    &\left|(\hat{\mathbb{P}}_{k,i} - \mathbb{P}_{i}) \prod_{j\neq i }\mathbb{P}_j V_{h+1}^*(s,a)\right|\\
    = &\sum_{s'[1:i-1] \in \mathcal{X}[1:i-1]} \mathbb{P}(s'[1:i-1]|s,a) \left|  \left( \hat{\mathbb{P}}_{k,i} - \mathbb{P}_{i} \right)\prod_{j=i+1 }^{n}\mathbb{P}_j V_{h+1}^*(s,a)\right| \\
    \leq &\sum_{s'[1:i-1] \in \mathcal{X}[1:i-1]} \mathbb{P}(s'[1:i-1]|s,a) \sqrt{\frac{2\operatorname{Var}_{s'[i]\sim \mathbb{P}_{i}(\cdot| (s,a)[Z_i^P])}\left(\mathbb{E}_{s'[i+1:n] \sim {\mathbb{P}}_{[i+1:n]}(\cdot|s,a)} V_{h+1}(s')\mid s'[1:i-1]\right) L^P}{N_{k-1}((s,a)[Z^P_i])}}\\
    & + \sum_{s'[1:i-1] \in \mathcal{X}[1:i-1]} \mathbb{P}(s'[1:i-1]|s,a) \frac{2HL^P}{3N_{k-1}((s,a)[Z^P_i])} \\
    \leq & \sqrt{\frac{2 \sigma^2(V^*_{h+1},s,a) L^P }{N_{k-1}((s,a)[Z^P_i]))}} + \frac{2H L^P}{3N_{k-1}((s,a)[Z^P_i])}
\end{align*}
The last inequality is due to Jensen's inequality. That is,
\begin{align*}
    \sum_{s'[1:i-1]} \mathbb{P}(s'[1:i-1]) \sqrt{\frac{C_1}{N_{k-1}((s,a)[Z_i^P])}} 
    \leq & \sqrt{\sum_{s'[1:i-1]}\frac{C_1\mathbb{P}(s'[1:i-1])}{N_{k-1}((s,a)[Z_i^P])}} \\
    = & \sqrt{2 \sigma^2(V^*_{h+1},s,a) L^P\cdot\frac{1}{N_{k-1}((s,a)[Z_i^P])}}
\end{align*}
where $\mathbb{P}(s'[1:i-1])$ is a shorthand of $\mathbb{P}(s'[1:i-1]|s,a)$, and $C_1$ here denotes $$2\operatorname{Var}_{s'[i]\sim {\mathbb{P}}_{i}(\cdot| (s,a)[Z_i^P])}\left(\mathbb{E}_{s'[i+1:n] \sim {\mathbb{P}}_{[i+1:n]}(\cdot|s,a)} V(s')\mid s'[1:i-1]\right) L^P.$$

Inq.~\ref{inequality: N and w event} follows the same proof of the failure event $F^N$ in section B.1 of \cite{dann2019policy}.
\end{proof}

\section{Proof of Theorem~\ref{thm: Hoeffding regret}}
\label{section: proof of Hoeffding bound}
\subsection{Estimation Error Decomposition}
\label{section: decomposition Lemma}
\begin{lemma}
\label{Lemma: decomposition inq}
The estimation error can be decomposed in the following way:
\begin{align}
    \label{inq:  decomposition inq 1}
    |(\hat{\mathbb{P}}_k - \mathbb{P})(\cdot|s,a)|_1 \leq & \sum_{i=1}^{n} |(\hat{\mathbb{P}}_{k,i} - \mathbb{P}_i )(\cdot|(s,a)[Z_i^P])|_1 \\
    \label{inq:  decomposition inq 2}
     |(\hat{\mathbb{P}}_k - \mathbb{P})V(s,a)| 
        \leq& \sum_{i=1}^{n} \left|  (\hat{\mathbb{P}}_{k,i} - \mathbb{P}_{i}) \left(\prod_{j\neq i, j=1}^{n} \mathbb{P}_{j}\right)V(s,a)\right| \notag\\
        &+ \sum_{i=1}^{n}\sum_{j\neq i,j=1}^{n} |V|_{\infty}\left|\left(\hat{\mathbb{P}}_{k,i} - \mathbb{P}_i\right)(\cdot|(s,a)[Z^P_i])\right|_1 \cdot \left|\left(\hat{\mathbb{P}}_{k,j} - \mathbb{P}_j\right)(\cdot|(s,a)[Z^P_j])\right|_1,
\end{align}
here $V$ denotes any value function mapping from $\mathcal{S}$ to $\mathbb{R}$, e.g. $V^*_{h+1}$ or $\hat{V}_{k,h+1} - V^*_{h+1}$.
\end{lemma}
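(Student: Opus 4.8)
The plan is to establish both inequalities through a \emph{hybrid (telescoping) decomposition} that replaces one factored transition at a time, exploiting two structural facts: the operators $\hat{\mathbb{P}}_{k,j}$ and $\mathbb{P}_j$ act on disjoint output coordinates $\mathcal{S}_j$, so they commute; and applying a genuine probability kernel to a signed measure (or to a bounded function, from the other side) does not increase its total variation (resp.\ sup) norm, while a product of kernels over disjoint coordinates has $L_1$ norm equal to the product of the factor norms.

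First I would record the algebraic telescoping identity
\[
\prod_{j=1}^{n}\hat{\mathbb{P}}_{k,j}-\prod_{j=1}^{n}\mathbb{P}_{j}=\sum_{i=1}^{n}\Big(\prod_{j<i}\hat{\mathbb{P}}_{k,j}\Big)\,\big(\hat{\mathbb{P}}_{k,i}-\mathbb{P}_{i}\big)\,\Big(\prod_{j>i}\mathbb{P}_{j}\Big),
\]
with empty products understood as the identity. For \eqref{inq:  decomposition inq 1}, apply the triangle inequality in $L_1$; for the $i$-th summand, expand the $L_1$ norm as a sum over $s'=(s'[1],\dots,s'[n])$, move the absolute value so that it only touches the coordinate-$i$ factor (valid since the prefix and suffix factors are nonnegative), and carry out the sums over the coordinates $j\neq i$. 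Because every $\hat{\mathbb{P}}_{k,j}(\cdot\mid(s,a)[Z^P_j])$ and $\mathbb{P}_{j}(\cdot\mid(s,a)[Z^P_j])$ is a probability distribution on $\mathcal{S}_j$, each such marginal sum is $1$, leaving exactly $\big|(\hat{\mathbb{P}}_{k,i}-\mathbb{P}_{i})(\cdot\mid(s,a)[Z^P_i])\big|_1$, which is \eqref{inq:  decomposition inq 1}.

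For \eqref{inq:  decomposition inq 2}, apply the same identity to $(\hat{\mathbb{P}}_k-\mathbb{P})V(s,a)$, but then replace each prefix $\prod_{j<i}\hat{\mathbb{P}}_{k,j}$ by the true $\prod_{j<i}\mathbb{P}_{j}$: the resulting leading term, after using commutativity of the disjoint-coordinate operators, is precisely $\sum_{i=1}^{n}\big|(\hat{\mathbb{P}}_{k,i}-\mathbb{P}_{i})\big(\prod_{j\neq i}\mathbb{P}_{j}\big)V(s,a)\big|$. The error introduced in the $i$-th term by this swap is
\[
\Big[\textstyle\prod_{j<i}\hat{\mathbb{P}}_{k,j}-\prod_{j<i}\mathbb{P}_{j}\Big]\,\big(\hat{\mathbb{P}}_{k,i}-\mathbb{P}_{i}\big)\,\big(\textstyle\prod_{j>i}\mathbb{P}_{j}\big)V .
\]
Peeling the operators off from the right, its absolute value is at most $|V|_{\infty}\cdot\big|\prod_{j<i}\hat{\mathbb{P}}_{k,j}-\prod_{j<i}\mathbb{P}_{j}\big|_1\cdot\big|\hat{\mathbb{P}}_{k,i}-\mathbb{P}_{i}\big|_1$ (the last factor $\prod_{j>i}\mathbb{P}_{j}$, being a probability kernel, contributes $1$). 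Re-using \eqref{inq:  decomposition inq 1} on the prefix gives $\big|\prod_{j<i}\hat{\mathbb{P}}_{k,j}-\prod_{j<i}\mathbb{P}_{j}\big|_1\le\sum_{l<i}\big|\hat{\mathbb{P}}_{k,l}-\mathbb{P}_{l}\big|_1$; summing over $i$ and crudely bounding $\sum_{i}\sum_{l<i}$ by $\sum_{i}\sum_{j\neq i}$ produces the stated higher-order term.

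The main obstacle I anticipate is bookkeeping rather than mathematical depth: one must set up the ``operators on disjoint coordinates'' formalism carefully enough to justify that applying a probability kernel does not increase the relevant norm, that a tensor product of kernels factorizes in $L_1$, and the commutation used to rewrite $\big(\prod_{j<i}\mathbb{P}_j\big)(\hat{\mathbb{P}}_{k,i}-\mathbb{P}_i)\big(\prod_{j>i}\mathbb{P}_j\big)$ as $(\hat{\mathbb{P}}_{k,i}-\mathbb{P}_i)\prod_{j\neq i}\mathbb{P}_j$, and then keep straight which factored transitions are ``hatted'' versus ``true'' across the two nested telescopings. No concentration or probabilistic argument enters here: the statement is a purely deterministic identity/inequality.
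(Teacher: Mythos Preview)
Your proposal is correct and takes essentially the same approach as the paper: both arguments rest on a hybrid/telescoping decomposition that peels off one factor at a time and controls the cross terms via $L_1$ norms of the per-factor differences. The only cosmetic difference is organizational---the paper writes a three-term split $\hat A\hat B-AB=(\hat A-A)B+A(\hat B-B)+(\hat A-A)(\hat B-B)$ and recurses on the middle piece, whereas you write the full telescope $\sum_i(\prod_{j<i}\hat{\mathbb{P}}_{k,j})(\hat{\mathbb{P}}_{k,i}-\mathbb{P}_i)(\prod_{j>i}\mathbb{P}_j)$ at once and then swap the prefixes; both routes yield the same leading and higher-order terms.
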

\begin{proof}
Inq.~\ref{inq: decomposition inq 1} has the same form of Lemma 32 in \cite{li2009unifying} and Lemma 1 in \cite{osband2014near}. We mainly focus on Inq.~\ref{inq: decomposition inq 2}. We can decompose the difference in the following way:
    \begin{align}
        &\left|(\hat{\mathbb{P}}_k - \mathbb{P})V(s,a)\right|\notag \\
        \label{inq: transition difference}
        \leq &\left|(\hat{\mathbb{P}}_{k,n} - \mathbb{P}_n) \prod_{i=1}^{n-1} \mathbb{P}_i V(s,a)\right|
        + \left|\mathbb{P}_n ( \prod_{i=1}^{n-1} \hat{\mathbb{P}}_{k,i} -  \prod_{i=1}^{n-1} \mathbb{P}_i)V(s,a)\right| 
        +  \left|(\hat{\mathbb{P}}_{k,n} - \mathbb{P}_n) ( \prod_{i=1}^{n-1} \hat{\mathbb{P}}_{k,i} -  \prod_{i=1}^{n-1} \mathbb{P}_i)V^*(s,a) \right|
    \end{align}
    
    For the last term of Inq.~\ref{inq: transition difference}, we have 
    \begin{align*}
        &\left|(\hat{\mathbb{P}}_{k,n} - \mathbb{P}_n) ( \prod_{i=1}^{n-1} \hat{\mathbb{P}}_{k,i} -  \prod_{i=1}^{n-1} \mathbb{P}_i)V(s,a)\right| \\
        \leq & \left|\left(\hat{\mathbb{P}}_{k,n} - \mathbb{P}_n\right)(\cdot|(s,a)[Z^P_n])\right|_1 \cdot  \left| \prod_{i=1}^{n-1} \hat{\mathbb{P}}_{k,i}(\cdot|s,a[Z^P_i]) -  \prod_{i=1}^{n-1} \mathbb{P}_i(\cdot|s,a[Z^P_i])\right|_1 \cdot |V|_{\infty} \\
        \leq & \left|\left(\hat{\mathbb{P}}_{k,n} - \mathbb{P}_n\right)(\cdot|(s,a)[Z^P_n])\right|_1\sum_{i=1}^{n-1} \left|\left(\hat{\mathbb{P}}_{k,i} - \mathbb{P}_i\right)(\cdot|(s,a)[Z^P_i])\right|_1 \cdot |V|_{\infty},
    \end{align*}
    Where the last inequality is due to Inq.~\ref{inq: decomposition inq 1}.
    
    For the second part of Inq.~\ref{inq: transition difference}, we can further decompose the term as:
    \begin{align}
    \label{inq: transition difference 1}
        &\left|\mathbb{P}_{n} \left( \prod_{i=1}^{n-1} \hat{\mathbb{P}}_{k,i} -  \prod_{i=1}^{n-1} \mathbb{P}_i\right)V(s,a)\right| \notag\\
        \leq & \left|\mathbb{P}_n \left(\hat{\mathbb{P}}_{k,n-1} - \mathbb{P}_{n-1}\right) \prod_{i=1}^{n-2} \mathbb{P}_i V(s,a) \right|
        +\left| \mathbb{P}_n \mathbb{P}_{n-1} \left( \prod_{i=1}^{n-2} \hat{\mathbb{P}}_{k,i} -  \prod_{i=1}^{n-2} \mathbb{P}_i\right)V(s,a) \right|\notag\\
        &+  \left|\mathbb{P}_n\left(\hat{\mathbb{P}}_{k,n-1} - \mathbb{P}_{n-1}\right)  \left(\prod_{i=1}^{n-2} \hat{\mathbb{P}}_{k,i} -  \prod_{i=1}^{n-2} \mathbb{P}_i\right)V(s,a)\right|
    \end{align}
    
    Following the same decomposition technique, we can prove Inq.~\ref{inq: decomposition inq 2} by recursively decomposing the second term over all possible $n$:
    \begin{align*}
        |(\hat{\mathbb{P}}_k - \mathbb{P})V^*(s,a)| 
        \leq& \sum_{i=1}^{n} \left|  (\hat{\mathbb{P}}_{k,i} - \mathbb{P}_{i}) \left(\prod_{j\neq i, j=1}^{n} \mathbb{P}_{j}\right)V(s,a)\right| \\
        &+ \sum_{i=1}^{n}\sum_{j\neq i,j=1}^{n} \left|\left(\hat{\mathbb{P}}_{k,i} - \mathbb{P}_i\right)(\cdot|(s,a)[Z^P_i])\right|_1 \cdot \left|\left(\hat{\mathbb{P}}_{k,j} - \mathbb{P}_j\right)(\cdot|(s,a)[Z^P_j])\right|_1 \cdot |V|_{\infty}
    \end{align*}
\end{proof}

\begin{lemma}
\label{Lemma: transition product inequality}
Under event $\Lambda_1$, then the following Inequality holds:
\begin{align}
    \label{inq: reward sum 1}
    &|\hat{R}_{k}(s,a) - \bar{R}(s,a)| \leq \frac{1}{m}\sum_{i=1}^{m}\sqrt{\frac{2 L^R_i}{N_{k-1}((s,a)[Z^R_i])}}\\
    \label{inq: product 1}
    &|(\hat{\mathbb{P}}_k - \mathbb{P})(\cdot|s,a)|_1 \leq \sum_{i=1}^{n}\left(\sqrt{\frac{4 |\mathcal{S}_i|L^P}{N_{k-1}((s,a)[Z^P_i])}}+ \frac{4|\mathcal{S}_i|L^P}{3N_{k-1}((s,a)[Z_i^P])}\right)\\
    \label{inq: product 2}
    &|(\hat{\mathbb{P}}_k - \mathbb{P})V^*(s,a)| \leq \sum_{i=1}^{n} \sqrt{\frac{2H^2L^P}{N_{k-1}((s,a)[Z^P_i])}} \notag \\
        &+\sum_{i=1}^{n}\sum_{j\neq i,j=1}^{n} H \left(\sqrt{\frac{4|\mathcal{S}_i|L^P}{N_{k-1}((s,a)[Z^P_i])}} + \frac{4|\mathcal{S}_i|L^P}{3N_{k-1}((s,a)[Z^P_i])}\right)\left(\sqrt{\frac{4|\mathcal{S}_j|L^P}{N_{k-1}((s,a)[Z^P_j])}} + \frac{4|\mathcal{S}_j|L^P}{3N_{k-1}((s,a)[Z^P_j])}\right)
\end{align}
\end{lemma}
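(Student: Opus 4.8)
The plan is to obtain all three inequalities by feeding the per-factor concentration bounds collected in the event $\Lambda_1$ (Lemma~\ref{Lemma: high prob. event}) into the additive decompositions of Lemma~\ref{Lemma: decomposition inq}, working throughout on the event $\Lambda_1$. None of the steps require new ideas; the work is entirely bookkeeping.

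For Inq.~\ref{inq: reward sum 1}, I would expand $\hat{R}_k(s,a) = \frac{1}{m}\sum_{i=1}^m \hat{R}_{k,i}((s,a)[Z_i^R])$ and $\bar{R}(s,a) = \frac{1}{m}\sum_{i=1}^m \bar{R}_i((s,a)[Z_i^R])$, apply the triangle inequality to get $|\hat{R}_k - \bar{R}|(s,a) \le \frac{1}{m}\sum_{i=1}^m |\hat{R}_{k,i} - \bar{R}_i|((s,a)[Z_i^R])$, and bound each summand by Inq.~\ref{ineq: reward concentration}. For Inq.~\ref{inq: product 1}, I would start from Inq.~\ref{inq: decomposition inq 1} of Lemma~\ref{Lemma: decomposition inq}, namely $|(\hat{\mathbb{P}}_k - \mathbb{P})(\cdot|s,a)|_1 \le \sum_{i=1}^n |(\hat{\mathbb{P}}_{k,i} - \mathbb{P}_i)(\cdot|(s,a)[Z_i^P])|_1$, and substitute the Weissman-type $\ell_1$ bound Inq.~\ref{ineq: transition concentration 2} for each factor, rewriting $2\sqrt{|\mathcal{S}_i|L^P/N_{k-1}((s,a)[Z_i^P])}$ as $\sqrt{4|\mathcal{S}_i|L^P/N_{k-1}((s,a)[Z_i^P])}$ to match the stated form.

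For Inq.~\ref{inq: product 2}, I would invoke Inq.~\ref{inq: decomposition inq 2} of Lemma~\ref{Lemma: decomposition inq} with $V = V^*_{h+1}$ (which is legitimate since that decomposition holds for any bounded value function), splitting the error into $\sum_{i=1}^n |(\hat{\mathbb{P}}_{k,i} - \mathbb{P}_i)(\prod_{j\neq i}\mathbb{P}_j)V^*(s,a)|$ plus the cross terms $\sum_{i=1}^n \sum_{j\neq i} |V^*|_\infty \, |(\hat{\mathbb{P}}_{k,i}-\mathbb{P}_i)(\cdot|(s,a)[Z_i^P])|_1 \, |(\hat{\mathbb{P}}_{k,j}-\mathbb{P}_j)(\cdot|(s,a)[Z_j^P])|_1$. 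The first sum is handled term-by-term by Inq.~\ref{ineq: transition concentration 1} (observing that $\prod_{j\neq i}\mathbb{P}_j V^*$ is a $[0,H]$-valued function, so the Hoeffding bound applies), yielding $\sum_{i=1}^n \sqrt{2H^2 L^P/N_{k-1}((s,a)[Z_i^P])}$. For the cross terms I would use $|V^*_{h+1}|_\infty \le H$ (rewards lie in $[0,1]$ and at most $H$ steps remain) together with Inq.~\ref{ineq: transition concentration 2} applied to each of the two factors, again rewriting $2\sqrt{\cdot}$ as $\sqrt{4\cdot}$; this reproduces exactly the double sum on the right-hand side of Inq.~\ref{inq: product 2}.

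I do not expect a genuine obstacle: the substantive content is already contained in Lemma~\ref{Lemma: decomposition inq} and in the event $\Lambda_1$. The only points requiring mild care are (i) matching the normalization conventions ($2\sqrt{x}$ versus $\sqrt{4x}$), (ii) noting that $\prod_{j\neq i}\mathbb{P}_j V^*$ remains bounded in $[0,H]$ so that Inq.~\ref{ineq: transition concentration 1} is applicable to it, and (iii) using $|V^*_{h+1}|_\infty \le H$ to control the quadratic cross terms; the decomposition lemma is stated for a generic bounded $V$, so specializing to $V=V^*_{h+1}$ is immediate.
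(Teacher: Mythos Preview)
Your proposal is correct and follows essentially the same approach as the paper: apply the per-factor concentration bounds from the high-probability event $\Lambda_1$ to the additive decompositions of Lemma~\ref{Lemma: decomposition inq}, using the triangle inequality for the reward bound, Inq.~\ref{inq:  decomposition inq 1} with Inq.~\ref{ineq: transition concentration 2} for the $\ell_1$ bound, and Inq.~\ref{inq:  decomposition inq 2} with Inq.~\ref{ineq: transition concentration 1} (first-order terms) and Inq.~\ref{ineq: transition concentration 2} plus $|V^*|_\infty\le H$ (cross terms) for the value bound. The paper's proof is exactly this; your caveats (i)--(iii) are all that is needed.
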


\begin{proof}
    Inq.~\ref{inq: reward sum 1} can be proved by Lemma~\ref{Lemma: high prob. event}:
    \begin{align*}
        |\hat{R}_{k}(s,a) - \bar{R}(s,a)| &\leq \frac{1}{m}\sum_{i=1}^{m} |\hat{R}_{k}(s,a) - \bar{R}(s,a)| 
        \leq  \frac{1}{m}\sum_{i=1}^{m}\sqrt{\frac{2 L^R_i}{N_{k-1}((s,a)[Z^R_i])}}
    \end{align*}
    Inq.~\ref{inq: product 1} follows directly by applying Lemma~\ref{Lemma: high prob. event} to Lemma~\ref{Lemma: decomposition inq}. 
    \begin{align*}
        |(\hat{\mathbb{P}}_k - \mathbb{P})(\cdot|s,a)|_1 
        \leq & \sum_{i=1}^{n} |(\hat{\mathbb{P}}_{k,i} - \mathbb{P}_i )(\cdot|(s,a)[Z_i^P])|_1 
        \leq \sum_{i=1}^{n} \left(\sqrt{\frac{ 4|\mathcal{S}_i|L^P}{N_{k-1}((s,a)[Z_i^P])}} + \frac{4|\mathcal{S}_i|L^P}{3N_{k-1}((s,a)[Z_i^P])}\right)
    \end{align*}
    
    Similarly, Inq~\ref{inq: product 2} can be proved by:
    \begin{align*}
         &|(\hat{\mathbb{P}}_k - \mathbb{P})V^*(s,a)| \\
        \leq & \sum_{i=1}^{n} \left|  (\hat{\mathbb{P}}_{k,i} - \mathbb{P}_{i}) \left(\prod_{j\neq i, j=1}^{n} \mathbb{P}_{j}\right)V^*(s,a)\right|\\
        &+ \sum_{i=1}^{n}\sum_{j\neq i,j=1}^{n} H\left|\left(\hat{\mathbb{P}}_{k,i} - \mathbb{P}_i\right)(\cdot|(s,a)[Z^P_i])\right|_1 \cdot \left|\left(\hat{\mathbb{P}}_{k,j} - \mathbb{P}_j\right)(\cdot|(s,a)[Z^P_j])\right|_1 \\
    \leq& \sum_{i=1}^{n} \sqrt{\frac{2H^2L^P}{N_{k-1}((s,a)[Z^P_i])}} 
          \\
        &+\sum_{i=1}^{n}\sum_{j\neq i,j=1}^{n} H \left(\sqrt{\frac{4|\mathcal{S}_i|L^P}{N_{k-1}((s,a)[Z^P_i])}} + \frac{4|\mathcal{S}_i|L^P}{3N_{k-1}((s,a)[Z^P_i])}\right)\left(\sqrt{\frac{4|\mathcal{S}_j|L^P}{N_{k-1}((s,a)[Z^P_j])}} + \frac{4|\mathcal{S}_j|L^P}{3N_{k-1}((s,a)[Z^P_j])}\right)
    \end{align*}
\end{proof}

\subsection{Optimism}
\begin{lemma}
(Optimism) Under event $\Lambda_1$, $\hat{V}_{k,h}(s) \geq V^*_h(s)$ for any $k,h,s$.
\end{lemma}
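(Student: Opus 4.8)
The plan is to argue by backward induction on $h$, from $h=H+1$ down to $h=1$, with the claim $\hat{V}_{k,h}(s)\ge V^*_h(s)$ for all $s$ as the inductive hypothesis (the episode index $k$ is fixed throughout). The base case $h=H+1$ is immediate since $\hat{V}_{k,H+1}\equiv 0\equiv V^*_{H+1}$ by construction. For the inductive step it suffices to prove $\hat{Q}_{k,h}(s,a)\ge Q^*_h(s,a)$ for every $(s,a)$, because then $\hat{V}_{k,h}(s)=\max_a\hat{Q}_{k,h}(s,a)\ge\max_a Q^*_h(s,a)=V^*_h(s)$.

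First I would dispatch the trivial cases: if $(s,a)\notin\mathcal{K}_P$ then $\hat{Q}_{k,h}(s,a)=H\ge Q^*_h(s,a)$ since all rewards lie in $[0,1]$, and if the $\min$ defining $\hat{Q}_{k,h}$ is attained at $H$ the same bound holds. So assume $(s,a)\in\mathcal{K}_P$ and $\hat{Q}_{k,h}(s,a)=\hat{R}_k(s,a)+CB_k(s,a)+\hat{\mathbb{P}}_k\hat{V}_{k,h+1}(s,a)$. Writing $Q^*_h(s,a)=\bar R(s,a)+\mathbb{P}V^*_{h+1}(s,a)$, I decompose
\begin{align*}
  \hat{Q}_{k,h}(s,a)-Q^*_h(s,a)
  =\ & \big(\hat{R}_k-\bar R\big)(s,a)+CB_k(s,a)+\hat{\mathbb{P}}_k\big(\hat{V}_{k,h+1}-V^*_{h+1}\big)(s,a)\\
  & + \big(\hat{\mathbb{P}}_k-\mathbb{P}\big)V^*_{h+1}(s,a).
\end{align*}
The third term is nonnegative: $\hat{\mathbb{P}}_k=\prod_j\hat{\mathbb{P}}_{k,j}$ is a genuine probability distribution and $\hat{V}_{k,h+1}-V^*_{h+1}\ge 0$ by the inductive hypothesis. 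For the remaining two (possibly negative) terms I would invoke, on the event $\Lambda_1$, Lemma~\ref{Lemma: transition product inequality}: Inequality~\ref{inq: reward sum 1} bounds $|(\hat{R}_k-\bar R)(s,a)|$ by $\frac1m\sum_{i=1}^m\sqrt{2L^R_i/N_{k-1}((s,a)[Z^R_i])}$, and Inequality~\ref{inq: product 2} bounds $|(\hat{\mathbb{P}}_k-\mathbb{P})V^*_{h+1}(s,a)|$ by $\sum_{i=1}^n\sqrt{2H^2L^P/N_{k-1}((s,a)[Z^P_i])}+\sum_{i=1}^n\sum_{j\ne i}H\phi_{k,i}(s,a)\phi_{k,j}(s,a)$. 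Comparing with the definition of $CB_k(s,a)$ for FMDP-CH, one sees it is exactly the sum of these two upper bounds, so it cancels the negative contributions and yields $\hat{Q}_{k,h}(s,a)\ge Q^*_h(s,a)$, closing the induction.

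The bookkeeping around pairs with a vanishing factored count is routine: whenever some $N_{k-1}((s,a)[Z^P_i])$ or $N_{k-1}((s,a)[Z^R_i])$ is zero, the associated bonus term is infinite (so the $\min$ clips $\hat{Q}_{k,h}$ to $H$), and for a reward factor with zero count the estimate $\hat{R}_{k,i}=1\ge\bar R_i$ is already on the optimistic side. The only genuinely substantive step — and where I expect to spend the most care — is Inequality~\ref{inq: product 2} itself: obtaining it requires using the product form of $\hat{\mathbb{P}}_k$ to split $(\hat{\mathbb{P}}_k-\mathbb{P})V^*_{h+1}$ into a leading additive part $\sum_i(\hat{\mathbb{P}}_{k,i}-\mathbb{P}_i)(\prod_{j\ne i}\mathbb{P}_j)V^*_{h+1}$ — controlled by the single-factor Hoeffding bound~\ref{ineq: transition concentration 1} — plus second-order cross terms that are products of two $L_1$ deviations, controlled by~\ref{ineq: transition concentration 2}; this is precisely what Lemma~\ref{Lemma: decomposition inq} packages, so on $\Lambda_1$ the argument goes through. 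It is essential that the concentration is applied to the \emph{fixed} function $V^*_{h+1}$ rather than the data-dependent $\hat{V}_{k,h+1}$, which is exactly why the induction funnels the $\hat{V}_{k,h+1}-V^*_{h+1}$ discrepancy through the nonnegativity observation instead of through a confidence bound.
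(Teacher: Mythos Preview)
Your proposal is correct and follows essentially the same approach as the paper: backward induction on $h$, the same decomposition of $\hat{Q}_{k,h}-Q^*_h$ into the reward error, the $(\hat{\mathbb{P}}_k-\mathbb{P})V^*_{h+1}$ term, and the nonnegative $\hat{\mathbb{P}}_k(\hat{V}_{k,h+1}-V^*_{h+1})$ term, with the first two controlled via Lemma~\ref{Lemma: transition product inequality} (Inequalities~\ref{inq: reward sum 1} and~\ref{inq: product 2}) so that $CB_k$ absorbs them. The only cosmetic difference is that the paper compares $\hat{V}_{k,h}(s)$ directly to $V^*_h(s)$ by evaluating at the single action $a=\pi^*_h(s)$, whereas you prove the slightly stronger (but equally easy) statement $\hat{Q}_{k,h}(s,a)\ge Q^*_h(s,a)$ for all $(s,a)$ and handle the $(s,a)\notin\mathcal{K}_P$ and clipping-at-$H$ corner cases more explicitly.
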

\begin{proof}
We prove the Lemma by induction. Firstly, for $h = H+1$, the inequality holds trivially since $\hat{V}_{k,H+1}(s) = V^*_{H+1}(s) = 0$.
   \begin{align*}
       &\hat{V}_{k,h}(s) - V^*_{h}(s) \\
       \geq & \hat{R}_k(s,\pi^*_h(s))+ CB_k(s,\pi^*_h(s)) + \hat{\mathbb{P}}_{k} \hat{V}_{k,h+1}(s,\pi^*_h(s)) - \bar{R}(s,\pi^*_h(s))- \mathbb{P} V^*_{h+1}(s, \pi^*_h(s)) \\
       = & \hat{R}_k(s,\pi^*_h(s)) - \bar{R}(s,\pi^*_h(s)) +  CB_k(s,\pi^*_h(s)) + \hat{\mathbb{P}}_{k} (\hat{V}_{k,h+1}-V^*_{h+1})(s,\pi^*_h(s)) + (\hat{\mathbb{P}}_{k}-\mathbb{P})V^*_{h+1}(s,\pi^*(s)) \\
       \geq & \hat{R}_k(s,\pi^*_h(s)) - \bar{R}(s,\pi^*_h(s)) + CB_k(s,\pi^*_h(s))+ (\hat{\mathbb{P}}_{k}-\mathbb{P})V^*_{h+1}(s,\pi^*(s)) \\
       \geq & 0
   \end{align*} 
   The first inequality is due to $\hat{V}_{k,h}(s) \geq \hat{Q}_{k,h}(s,\pi^*_h(s))$. The second inequality follows by induction condition that $\hat{V}_{k,h+1}(s) \geq V^*_{h+1}(s)$ for all $s$. The last inequality is due to Inq.~\ref{inq: reward sum 1} and Inq.~\ref{inq: product 2} in Lemma~\ref{Lemma: transition product inequality}.
\end{proof}

\subsection{Proof of Theorem~\ref{thm: Hoeffding regret}}
Now we are ready to prove Thm.~\ref{thm: Hoeffding regret}.
\begin{proof} (Proof of Thm.~\ref{thm: Hoeffding regret})
   \begin{align*}
       & V^*_h(s_{k,h} ) - V^{\pi_k}_h(s_{k,h} ) \\
       \leq & \hat{V}_{k,h}(s_{k,h} ) -  V^{\pi_k}_h(s_{k,h} ) \\
       = &\hat{R}_k(s_{k,h},\pi_{k,h}(s_{k,h})) + \hat{\mathbb{P}}_{k} \hat{V}_{k,h+1}(s_{k,h},\pi_{k,h}(s_{k,h})) + CB_k(s_{k,h},\pi_{k,h}(s_{k,h})) \\
       &- \bar{R}(s_{k,h},\pi_{k,h}(s_{k,h})) - \mathbb{P} V^{\pi_k}_{h+1}(s_{k,h},\pi_{k,h}(s_{k,h})) \\
       = & \hat{V}_{k,h+1}(s_{k,h+1} ) -  V^{\pi_k}_{h+1}(s_{k,h+1} ) + \hat{R}_k(s,\pi_{k,h}(s_{k,h})) - \bar{R}(s,\pi_{k,h}(s_{k,h} ))
        + CB_k(s_{k,h},\pi^*_h(s ))\\ 
       &+ \mathbb{P} \left(\hat{V}_{k,h+1} -  V^{\pi_k}_{h+1}\right)(s_{k,h},\pi_{k,h}(s_{k,h}) ) - \left(\hat{V}_{k,h+1} -  V^{\pi_k}_{h+1}\right)(s_{k,h+1}) \\
       &+ \left(\hat{\mathbb{P}}_{k} - \mathbb{P}\right)V^*_{h+1}(s_{k,h}, \pi_{k,h}(s_{k,h})) \\
       &+ \left(\hat{\mathbb{P}}_{k} - \mathbb{P}\right)\left(\hat{V}_{k,h+1}-V^*_{h+1}\right)\left(s_{k,h},\pi_{k,h}(s_{k,h})\right)
   \end{align*}
   The first inequality is due to optimism $\hat{V}_{k,h}(s_{k,h}) \geq V^*_h(s_{k,h})$. The first equality is due to Bellman equation for $V^{\pi_k}_h$ and $\hat{V}_{k,h}$.
   
   For notation simplicity, we define 
   \begin{align*}
       &\delta^1_{k,h} = \hat{R}_k(s,\pi_{k,h}(s_{k,h})) - \bar{R}(s,\pi_{k,h}(s_{k,h})\\
       & \delta^2_{k,h} = \mathbb{P} \left(\hat{V}_{k,h+1} -  V^{\pi_k}_{h+1}\right)(s_{k,h},\pi_{k,h}(s_{k,h}) ) - \left(\hat{V}_{k,h+1} -  V^{\pi_k}_{h+1}\right)(s_{k,h+1}) \\
       & \delta^3_{k,h} = \left(\hat{\mathbb{P}}_{k} - \mathbb{P}\right)V^*_{h+1}(s_{k,h},\pi_{k,h}(s_{k,h}))
   \end{align*}
   Firstly we focus on the upper bound of $\left(\hat{\mathbb{P}}_{k,h} - \mathbb{P}\right)\left(\hat{V}_{k,h+1}-V^*_{h+1}\right)\left(s_{k,h},\pi_{k,h}(s_{k,h})\right)$. We bound this term following the idea of \cite{azar2017minimax}.
   \begin{align*}
    %\label{equation: (hatP-P)(hatV -V) decomposition}
       &\left(\hat{\mathbb{P}}_{k} - \mathbb{P}\right)\left(\hat{V}_{k,h+1}-V^*_{h+1}\right)(s_{k,h},a_{k,h})  \\
       \leq &\sum_{i=1}^{n} (\hat{\mathbb{P}}_i - \mathbb{P}_i) \prod_{j=1,j\neq i}^{n} \mathbb{P}_j \left(\hat{V}_{k,h+1}-V^*_{h+1}\right)(s_{k,h},a_{k,h} ) \\
       &+  \sum_{i=1}^{n}\sum_{j=1}^{n} H \left|\left(\hat{\mathbb{P}}_i - \mathbb{P}_i\right)(\cdot|(s_{k,h},a_{k,h})[Z^P_i])\right|_1\left|\left(\hat{\mathbb{P}}_j - \mathbb{P}_j\right)(\cdot|(s_{k,h},a_{k,h})[Z^P_j])\right|_1 \\
       \leq & \sum_{i=1}^{n} \left(\sum_{s'[i] \in \mathcal{S}[i]}\sqrt{2\frac{\mathbb{P}_i(s'[i]|\mathcal{X}[Z_i^P])L^P}{N_{k-1}((s,a)[Z^P_i])}} + \frac{L^P}{3N_{k-1}((s,a)[Z^P_i])}\right) \prod_{j=1,j\neq i}^{n} \mathbb{P}_j \left(\hat{V}_{k,h+1}-V^*_{h+1}\right)(s_{k,h},a_{k,h} )\\
       &+\sum_{i=1}^{n}\sum_{j\neq i,j=1}^{n} H \left(\sqrt{\frac{4|\mathcal{S}_i|L^P}{N_{k-1}((s,a)[Z^P_i])}} + \frac{4|\mathcal{S}_i|L^P}{3N_{k-1}((s,a)[Z^P_i])}\right)\left(\sqrt{\frac{4|\mathcal{S}_j|L^P}{N_{k-1}((s,a)[Z^P_j])}} + \frac{4|\mathcal{S}_j|L^P}{3N_{k-1}((s,a)[Z^P_j])}\right) \\
       \leq & \sum_{i=1}^{n}\sum_{s'[i] \in \mathcal{S}[i]}\sqrt{2\frac{\mathbb{P}_i(s'[i]|\mathcal{X}[Z_i^P])L^P}{N_{k-1}((s,a)[Z^P_i])}}  \prod_{j=1,j\neq i}^{n} \mathbb{P}_j \left(\hat{V}_{k,h+1}-V^*_{h+1}\right)(s_{k,h},a_{k,h} ) + \sum_{i=1}^n \frac{|\mathcal{S}_i|HL^P}{3N_{k-1}((s,a)[Z^P_i])}\\
       &+\sum_{i=1}^{n}\sum_{j\neq i,j=1}^{n} H \left(\sqrt{\frac{4|\mathcal{S}_i|L^P}{N_{k-1}((s,a)[Z^P_i])}} + \frac{4|\mathcal{S}_i|L^P}{3N_{k-1}((s,a)[Z^P_i])}\right)\left(\sqrt{\frac{4|\mathcal{S}_j|L^P}{N_{k-1}((s,a)[Z^P_j])}} + \frac{4|\mathcal{S}_j|L^P}{3N_{k-1}((s,a)[Z^P_j])}\right) 
   \end{align*}
   The first inequality is due to Lemma~\ref{Lemma: decomposition inq}. The second inequality is because of Lemma~\ref{Lemma: high prob. event}, and the last inequality is due to the fact that $\left|\hat{V}_{k,h+1}-V^*_{h+1}\right|_{\infty} \leq H$.
   
   For each $i \in [n]$, we consider those $s'[i]$ satisfying $N_{k-1}((s,a)[Z^P_i]) \mathbb{P}_i(s'[i]|(s_{k,h},a_{k,h})[Z^P_i]) \geq 2n^2H^2 L^P$ and $N_{k-1}((s,a)[Z^P_i]) \mathbb{P}_i(s'[i]|(s_{k,h},a_{k,h})[Z^P_i]) \leq 2n^2H^2 L^P$ separately.
   
   For those $s'[i]$ satisfying $N_{k-1}((s,a)[Z^P_i]) \mathbb{P}_i(s'[i]|s_{k,h},a_{k,h}) \geq 2n^2H^2L^P$, the first term can be bounded by
   \begin{align*}
    &\sum_{i=1}^{n}\sum_{s'[i] \in \mathcal{S}[i]}\sqrt{2\frac{\mathbb{P}_i(s'[i]|\mathcal{X}[Z_i^P])L^P}{N_{k-1}((s,a)[Z^P_i])}}  \prod_{j=1,j\neq i}^{n} \mathbb{P}_j \left(\hat{V}_{k,h+1}-V^*_{h+1}\right)(s_{k,h},a_{k,h} ) \\
    = & \sum_{i=1}^{n}\sum_{s'[i] \in \mathcal{S}[i]}\mathbb{P}_i(s'[i]|\mathcal{X}[Z_i^P])\sqrt{2\frac{L^P}{\mathbb{P}_i(s'[i]|\mathcal{X}[Z_i^P])N_{k-1}((s,a)[Z^P_i])}}  \prod_{j=1,j\neq i}^{n} \mathbb{P}_j \left(\hat{V}_{k,h+1}-V^*_{h+1}\right)(s_{k,h},a_{k,h} ) \\
    \leq &\frac{1}{H} \mathbb{P} \left(\hat{V}_{k,h+1}-V^*_{h+1}\right)(s_{k,h},a_{k,h} ) \\
    = & \frac{1}{H}\left(\hat{V}_{k,h+1}-V^*_{h+1}\right)(s_{k,h+1},a_{k,h+1}) \\
    &+\frac{1}{H} \left(\mathbb{P} \left(\hat{V}_{k,h+1}-V^*_{h+1}\right)(s_{k,h},a_{k,h} ) - \left(\hat{V}_{k,h+1}-V^*_{h+1}\right)(s_{k,h+1},a_{k,h+1}) \right)
   \end{align*}
   where the second term can be regarded as a martingale difference sequence, and we denote it as $\delta^4_{k,h}$.
   
    For those $s'[i]$ satisfying $N_{k-1}((s,a)[Z_i^P]) \mathbb{P}_i(s'[i]|s_{k,h},a_{k,h}) \leq 2n^2H^2L^P$, the summation can be bounded by
    \begin{align*}
       \sum_{i=1}^{n} \frac{nH^2  |\mathcal{S}_i|L^P}{N_{k-1}((s,a)[Z^P_i])}
   \end{align*}
   
   For notation simplicity, we define $\delta_{k,h}^5$ as:
   \begin{align*}
       \delta_{k,h}^5 = &\sum_{i=1}^{n}\sum_{j\neq i,j=1}^{n} H \left(\sqrt{\frac{4|\mathcal{S}_i|L^P}{N_{k-1}((s,a)[Z^P_i])}} + \frac{4|\mathcal{S}_i|L^P}{3N_{k-1}((s,a)[Z^P_i])}\right)\left(\sqrt{\frac{4|\mathcal{S}_j|L^P}{N_{k-1}((s,a)[Z^P_j])}} + \frac{4|\mathcal{S}_j|L^P}{3N_{k-1}((s,a)[Z^P_j])}\right) \\
       + &  \sum_{i=1}^{n} \frac{2nH^2  |\mathcal{S}_i|L^P}{N_{k-1}((s,a)[Z^P_i])}
   \end{align*}
   
   To sum up, by the above analysis, we prove that 
   \begin{align*}
       \left(\hat{\mathbb{P}}_{k,h} - \mathbb{P}\right)\left(\hat{V}_{k,h+1}-V^*_{h+1}\right)\left(s_{k,h},\pi_{k,h}(s_{k,h})\right) \leq \frac{1}{H}\left(\hat{V}_{k,h+1}-V^*_{h+1}\right)(s_{k,h+1},a_{k,h+1}) + \delta^4_{k,h} + \delta^{5}_{k,h}
   \end{align*}
   
Now we are ready to summarize all the terms in the regret. Firstly, we recursively calculate the regret for all $h \in [H]$.
\begin{align*}
    & V^{*}_1(s_{k,1},a_{k,1}) - V^{\pi_k}_1(s_{k,1},a_{k,1}) \leq \hat{V}_{k,1}(s_{k,1},a_{k,1}) - V^{\pi_k}(s_{k,1},a_{k,1}) \\
    \leq & CB(s_{k,1},a_{k,1})+\delta^1_{k,1} + \delta^2_{k,1} + \delta^3_{k,1} + \delta^4_{k,1} + \delta^5_{k,1} + (1+  \frac{1}{H})  \left(V_{k,2}(s_{k,2},a_{k,2}) - V^{\pi_k}_2(s_{k,2},a_{k,2}) \right)\\
    & \quad \cdots \\
    \leq &\sum_{h=1}^{H} \left(1+ \frac{1}{H}\right)^{h-1} \left(CB_k(s_h,a_h)+\delta^1_{k,h} + \delta^2_{k,h} + \delta^3_{k,h} + \delta^4_{k,h} + \delta^5_{k,h}\right)\\
    \leq &\sum_{h=1}^{H} e \left(CB_k(s_h,a_h)+\delta^1_{k,h} + \delta^2_{k,h} + \delta^3_{k,h} + \delta^4_{k,h} + \delta^5_{k,h}\right)
\end{align*}

Then we sum up the regret over $k$ episodes,
\begin{align*}
    \regret(K) &\leq \sum_{k=1}^{K} ( V^{*}_1(s_1,a_1) - V^{\pi_k}_1(s_1,a_1)) \\
    &\leq \sum_{k=1}^{K}\sum_{h=1}^{H} e \left(CB_k(s_h,a_h)+\delta^1_{k,h} + \delta^2_{k,h} + \delta^3_{k,h} + \delta^4_{k,h} + \delta^5_{k,h}\right)
\end{align*}
$\delta^2_{k,h}$ and $\delta^4_{k,h}$ can be regarded as martingale difference sequence, the summation of which can be bounded by $O(H\sqrt{T \log(T)})$ by Lemma~\ref{Lemma: transition product inequality}, while $\delta^1_{k,h}$ and $\delta^3_{k,h}$ can also be bounded by Lemma~\ref{Lemma: transition product inequality}. The summation of different terms in $\delta^1_{k,h}, \delta^3_{k,h}, \delta^4_{k,h}$ and $\delta^5_{k,h}$ can be separated into the following categories. In the following proof, we use $C$ to denote the dependence of other parameters except the counters $N_k((s_{k,h},a_{k,h})[Z_i])$

For those terms of the form $\frac{C}{\sqrt{N_{k}((s_{k,h},a_{k,h})[Z_i])}}$, we have
\begin{align*}
    \sum_{k} \sum_{h} \frac{C}{\sqrt{N_{k}((s_{k,h},a_{k,h})[Z_i])}} 
    \leq & HC + \sum_{x[Z_i] \in \mathcal{X}[Z_i]} \sum_{c = 1}^{N_{K}(x[Z_i])} \frac{C}{\sqrt{c}} \\
    = & HC + \sum_{x[Z_i]  \in \mathcal{X}[Z_i]} C\sqrt{N_{K}(x[Z_i])} \\
    \leq & HC + C\sqrt{|\mathcal{X}[Z_i]|T}
\end{align*}
The last inequality is due to Cauchy-Schwarz inequality. This term influence the main factors in the final regret.

For those terms of the form $\frac{C}{N_{k}((s_{k,h},a_{k,h})[Z_i])}$, we have
\begin{align*}
    \sum_{k} \sum_{h} \frac{C}{N_{k}((s_{k,h},a_{k,h})[Z_i])} 
    \leq & HC + \sum_{x[Z_i] \in \mathcal{X}[Z_i]} \sum_{c = 1}^{N_{K}(x[Z_i])} \frac{C}{c} \\
    \leq & HC + \sum_{x[Z_i] \in \mathcal{X}[Z_i] } C \ln\left(N_{K}(x[Z_i])\right)\\
    \leq & HC + C|\mathcal{X}[Z_i]| \ln T,
\end{align*}
which has only logarithmic dependence on $T$. 

For those terms of the form $\frac{C}{\sqrt{N_{k}((s_{k,h},a_{k,h})[Z_i])N_{k}((s_{k,h},a_{k,h})[Z_j])}}$. we define $N_k((s,a)[Z_i],(s,a)[Z_j])$ as the number of times that agent has encountered $(s,a)[Z_i]$ and $(s,a)[Z_j]$ simultaneously for the first $k$ episodes.  It is not hard to find that $N_k((s,a)[Z_i]) \geq N_k((s,a)[Z_i],(s,a)[Z_j])$ and $N_k((s,a)[Z_j]) \geq N_k((s,a)[Z_i],(s,a)[Z_j])$.
\begin{align*}
    &\sum_{k} \sum_{h} \frac{C}{\sqrt{N_{k}((s_{k,h},a_{k,h})[Z_i])N_{k}((s_{k,h},a_{k,h})[Z_j])}} \\
    \leq &\sum_{k} \sum_{h} \frac{C}{N_{k}((s_{k,h},a_{k,h})[Z_i],(s_{k,h},a_{k,h})[Z_j])} \\
    \leq & HC + \sum_{x[Z_i\cup Z_j] \in \mathcal{X}[Z_i\cup Z_j]} C \ln\left(N_{k}((s_{k,h},a_{k,h})[Z_i],(s_{k,h},a_{k,h})[Z_j])\right) \\
    \leq & HC + C|\mathcal{X}[Z_i\cup Z_j]| \ln T,
\end{align*}
which also has only logarithmic dependence on $T$.

For other terms with the form of $\frac{C}{\left(N_{k}((s_{k,h},a_{k,h})[Z_i])\right)^2}$ and $\frac{C}{N_{k}((s_{k,h},a_{k,h})[Z_i])\sqrt{N_{k}((s_{k,h},a_{k,h})[Z_j])}}$, the summation of these terms has no dependence on $T$, which is negligible since $T$ is the dominant factor.

By bounding these different kinds of terms with the above methods, we can finally show that 
$$\regret(K) = \mathcal{O} \left(\frac{1}{m}\sum_{i=1}^{m} \sqrt{|\mathcal{X}[Z_i^R]|T \log (10mT|\mathcal{X}[Z_i^R]|/\delta) } + \sum_{j=1}^{n} H \sqrt{|\mathcal{X}[Z_j^P]|T \log (10nTSA/\delta) }\right)$$
Here $\mathcal{O}$ hides the lower-order factors w.r.t $T$.
\end{proof}

\section{Proof of Theorem~\ref{thm: bernstein regret}}
\label{section: proof of Bernstein bound}
\subsection{Estimation Error Decomposition}
\begin{lemma}
\label{Lemma: transition product inequality 2}
Under event $\Lambda_1$ and $\Lambda_2$, we have 
\begin{align*}
        &\left|\hat{R}_{k}(s,a) - \bar{R}(s,a)\right| \leq  \frac{1}{m}\sum_{i=1}^{m} \sqrt{\frac{2 \hat{\sigma}_{R,i}^{2}(s,a) L^R_i}{N_{k-1}((s,a)[Z^R_i])}} + \frac{1}{m}\sum_{i=1}^{m}\frac{8L^R_i}{3N_{k-1}((s,a)[Z^R_i])}\\
        &|(\hat{\mathbb{P}_k} - \mathbb{P})V^*_{h+1}(s,a)| \\
        \leq& \sum_{i=1}^{n} \left(\sqrt{\frac{2 \sigma^2_{P,i}(V^*_{h+1},s, a) L^P }{N_{k-1}((s,a)[Z^P_i]))}} + \frac{2H L^P}{3N_{k-1}((s,a)[Z^P_i])}\right) \\
        & + \sum_{i=1}^{n}\sum_{j\neq i,j=1}^{n} H\left(\sqrt{\frac{4|\mathcal{S}_i|L^P}{N_{k-1}((s,a)[Z^P_i])}} + \frac{4|\mathcal{S}_i|L^P}{3N_{k-1}((s,a)[Z^P_i])}\right)\left(\sqrt{\frac{4|\mathcal{S}_j|L^P}{N_{k-1}((s,a)[Z^P_j])}} + \frac{4|\mathcal{S}_j|L^P}{3N_{k-1}((s,a)[Z^P_j])}\right)   
    \end{align*}
\end{lemma}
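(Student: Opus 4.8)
The plan is to obtain both inequalities purely by assembling already-established pieces: the additive estimation-error decomposition of Lemma~\ref{Lemma: decomposition inq}, together with the Bernstein-type concentration bounds collected in the events $\Lambda_1$ (Lemma~\ref{Lemma: high prob. event}) and $\Lambda_2$ (Lemma~\ref{Lemma: high prob event 2}). No new probabilistic argument is needed; the work is in tracking which bound is applied to which term and in checking that the variance proxies line up. I would treat the reward estimate and the transition estimate separately.

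For the reward bound, I would use that $\hat{R}_k = \frac{1}{m}\sum_{i=1}^m \hat{R}_{k,i}$ and $\bar{R} = \frac{1}{m}\sum_{i=1}^m \bar{R}_i$ by the definition of factored rewards, so the triangle inequality gives $|\hat{R}_k(s,a) - \bar{R}(s,a)| \le \frac{1}{m}\sum_{i=1}^m |\hat{R}_{k,i}((s,a)[Z^R_i]) - \bar{R}_i((s,a)[Z^R_i])|$. Applying the empirical-Bernstein bound Inq.~\ref{inequality: Bernstein events, reward} of $\Lambda_2$ to each summand yields exactly the claimed expression, with the $\sqrt{\cdot}$ term and the $\tfrac{8L^R_i}{3N_{k-1}}$ term each averaged over $i$.

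For the transition bound, I would invoke the product decomposition in Lemma~\ref{Lemma: decomposition inq} with the value function taken to be $V = V^*_{h+1}$, noting that $\|V^*_{h+1}\|_\infty \le H$. This splits $|(\hat{\mathbb{P}}_k - \mathbb{P})V^*_{h+1}(s,a)|$ into a ``leading'' sum $\sum_{i=1}^n |(\hat{\mathbb{P}}_{k,i} - \mathbb{P}_i)(\prod_{j\ne i}\mathbb{P}_j)V^*_{h+1}(s,a)|$ plus bilinear cross terms in the $L_1$ deviations of the factored transitions. Each term of the leading sum is controlled by the Bernstein bound Inq.~\ref{inequality: bernstein events} of $\Lambda_2$, whose variance proxy is precisely $\sigma^2_{P,i}(V^*_{h+1},s,a)$; this produces the first sum in the statement. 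Each cross term is at most $H$ times a product of two $L_1$ deviations, and each such deviation is bounded by the Weissman-type estimate Inq.~\ref{ineq: transition concentration 2} of $\Lambda_1$; rewriting $2\sqrt{|\mathcal{S}_i|L^P/N_{k-1}((s,a)[Z^P_i])}$ as $\sqrt{4|\mathcal{S}_i|L^P/N_{k-1}((s,a)[Z^P_i])}$ puts these terms into exactly the form of the second (double) sum in the statement. Adding the two contributions closes the argument.

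The only steps requiring genuine care, as opposed to being mechanical, are: (i) checking that the variance term returned by Inq.~\ref{inequality: bernstein events} is exactly the nested-conditioning quantity $\sigma^2_{P,i}(V^*_{h+1},s,a)$ from Appendix~A (the order of taking $\mathbb{E}_{s'[1:i-1]}$, then $\mathbb{V}_{s'[i]}$, then $\mathbb{E}_{s'[i+1:n]}$ must match, and the derivation of Inq.~\ref{inequality: bernstein events} already performs this via Jensen's inequality); and (ii) confirming that every occurrence of $\|V\|_\infty$ in Lemma~\ref{Lemma: decomposition inq} may be replaced by $H$ because $V^*_{h+1}$ takes values in $[0,H]$. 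The genuinely hard work --- the additive decomposition for products of factored kernels and the Bernstein concentration that isolates the correct one-step variance --- is already encapsulated in Lemma~\ref{Lemma: decomposition inq} and Lemma~\ref{Lemma: high prob event 2}, so the remaining obstacle is essentially just the careful cross-term bookkeeping described above.
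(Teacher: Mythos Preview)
Your proposal is correct and follows essentially the same approach as the paper's proof: apply the triangle inequality and the empirical-Bernstein bound (Inq.~\ref{inequality: Bernstein events, reward}) for the reward part, and for the transition part invoke the additive decomposition of Lemma~\ref{Lemma: decomposition inq} with $V=V^*_{h+1}$, bound the leading terms via Inq.~\ref{inequality: bernstein events} and the cross terms via the $L_1$ concentration Inq.~\ref{ineq: transition concentration 2}. The only difference is presentational --- the paper states these steps more tersely, whereas you spell out the two sanity checks about the variance proxy and $\|V^*_{h+1}\|_\infty \le H$.
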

\begin{proof}
The first inequality follows directly by the definition that $R(s,a) = \frac{1}{m} \sum_{i=1}^{m} R_{i}(s,a)$ and Lemma~\ref{Lemma: high prob event 2}. We now prove the second inequality. By Lemma~\ref{Lemma: decomposition inq}, we have
\begin{align*}
        |(\hat{\mathbb{P}}_k - \mathbb{P})V^*_{h+1}(s,a)| 
        \leq& \sum_{i=1}^{n} \left|  (\hat{\mathbb{P}}_{k,i} - \mathbb{P}_{i}) \left(\prod_{j\neq i, j=1}^{n} \mathbb{P}_{j}\right)V^*_{h+1}(s,a)\right| \notag\\
        &+ \sum_{i=1}^{n}\sum_{j\neq i,j=1}^{n} H\left|\left(\hat{\mathbb{P}}_{k,i} - \mathbb{P}_i\right)(\cdot|(s,a)[Z^P_i])\right|_1 \cdot \left|\left(\hat{\mathbb{P}}_{k,j} - \mathbb{P}_j\right)(\cdot|(s,a)[Z^P_j])\right|_1,
    \end{align*}
By Inq.~\ref{ineq: transition concentration 2} in Lemma~\ref{Lemma: high prob. event} and Inq.~\ref{inequality: bernstein events} in Lemma~\ref{Lemma: high prob event 2}, we have 
\begin{align*}
        &|(\hat{\mathbb{P}} - \mathbb{P})V^*_{h+1}(s,a)| \\
        \leq& \sum_{i=1}^{n} \left(\sqrt{\frac{2 \sigma^2_{P,i}(V^*_{h+1},s, a) L^P }{N_{k-1}((s,a)[Z^P_i]))}} + \frac{2H L^P}{3N_{k-1}((s,a)[Z^P_i])}\right) \\
        & + \sum_{i=1}^{n}\sum_{j\neq i,j=1}^{n} H\left(\sqrt{\frac{4|\mathcal{S}_i|L^P}{N_{k-1}((s,a)[Z^P_i])}} + \frac{4|\mathcal{S}_i|L^P}{3N_{k-1}((s,a)[Z^P_i])}\right)\left(\sqrt{\frac{4|\mathcal{S}_j|L^P}{N_{k-1}((s,a)[Z^P_j])}} + \frac{4|\mathcal{S}_j|L^P}{3N_{k-1}((s,a)[Z^P_j])}\right)  
    \end{align*}
\end{proof}

\subsection{Omitted proof in Section~\ref{sec: variance of factored Markov chain}}
\label{sec: Omitted proof in sec: variance of factored Markov chain}
\begin{proof}
(Proof of Theorem.~\ref{thm: mc variance})
\begin{align*}
    &\omega^2_h(s) \\
    = &\mathbb{E}\left[(J_{h:H}(s_h)-V_h(s_h))^2\mid s_h=s\right]\\
     =& \sum_{s'}\mathbb{P}(s'|s) \mathbb{E}\left[(J_{h+1:H}(s_{h+1})+ r_h- V_h(s_h))^2\mid s_h = s, s_{h+1} = s'\right] \\
     = &\sum_{s'}\mathbb{P}(s'|s) \mathbb{E}\left[J^2_{h+1:H}(s_{h+1})+ r^2_h+ V^2_h(s_h)\mid s_h = s, s_{h+1} = s'\right] \\
     &+ \sum_{s'}\mathbb{P}(s'|s) \mathbb{E}\left[2r_h(J_{h+1:H}(s_{h+1})-V_h(s_h)) -2J_{h+1:H}(s_{h+1})V_h(s_h)\mid s_h = s, s_{h+1} = s'\right] 
\end{align*}
Given $s_h=s,s_{h+1}=s'$, $r_h$, $V_h(s_h)$ and $J_{h+1:H}(s_{h+1})$ are conditionally independent, thus we have 
\begin{align*}
    &\mathbb{E}[2r_h(J_{h+1:H}(s_{h+1})-V_h(s_h))\mid s_h=s,s_{h+1}=s'] = \mathbb{E}[2\bar{R}(s_h)(V_{h+1}(s_{h+1})-V_h(s_h))\mid s_h=s,s_{h+1}=s']\\
    &\mathbb{E}[2J_{h+1:H}(s_{h+1})V_h(s_h)\mid s_h=s,s_{h+1}=s'] = \mathbb{E}[2V_{h+1}(s_{h+1})V_h(s_h)\mid s_h=s,s_{h+1}=s']
\end{align*}

Therefore, we have
\begin{align}
    \label{eqn: variance decomposition in the Bellman equation of MC}
    &\omega^2_h(s)  \notag\\
    = &\sum_{s'}\mathbb{P}(s'|s) \mathbb{E}\left[J^2_{h+1:H}(s_{h+1})+ r^2_h+ V^2_h(s_h)\mid s_h = s, s_{h+1} = s'\right]  \notag\\
    &+\sum_{s'}\mathbb{P}(s'|s) \mathbb{E}\left[2\bar{R}(s_h)(V_{h+1}(s_{h+1})-V_h(s_h)) -2V_{h+1}(s_{h+1})V_h(s_h)\mid s_h = s, s_{h+1} = s'\right] \notag\\
    = & \mathbb{E}\left[r^2_h - \bar{R}^2(s_h)\mid s_h = s\right] 
    + \sum_{s'}\mathbb{P}(s'|s) \mathbb{E}\left[J^2_{h+1:H}(s_{h+1}) - (V_h(s_h)-\bar{R}(s_h))^2\mid s_h = s, s_{h+1} = s'\right]  \notag\\
    = & \mathbb{E}\left[r^2_h - \bar{R}^2(s_h)\mid s_h = s\right] 
    + \sum_{s'}\mathbb{P}(s'|s) \mathbb{E}\left[J^2_{h+1:H}(s_{h+1}) - \left(\sum_{s''}\mathbb{P}(s''|s)V_{h+1}(s'')\right)^2\mid s_h = s, s_{h+1} = s'\right]  \notag\\
    = &\mathbb{E}\left[r^2_h - \bar{R}^2(s_h)\mid s_h = s\right]  + \sum_{s'}\mathbb{P}(s'|s)  V^2_{h+1}(s') - \left(\sum_{s''}\mathbb{P}(s''|s)V_{h+1}(s'')\right)^2 \notag \\
    &+ \sum_{s'}\mathbb{P}(s'|s) \mathbb{E}\left[J^2_{h+1:H}(s_{h+1}) - V_{h+1}^2(s_{h+1})\mid s_{h+1} = s'\right]
\end{align}
The second equality is due to the fact that $V_{h}(s) = \bar{R}(s) + \sum_{s'} \mathbb{P}(s'|s) V_{h+1}(s')$.

For the factored rewards, since the rewards $r_{h,i}$ are conditionally independent give $s$, we have 
\begin{align*}
    \mathbb{E}\left[r^2_h - \bar{R}^2(s_h)\mid s_h = s\right] = \frac{1}{m^2}\sum_{i=1}^{m} \sigma^2_{R, i}(s)
\end{align*}

For the factored transition, we decompose the variance in the following way:
\begin{align*}
    &\sum_{s'}\mathbb{P}(s'|s)  V^2_{h+1}(s') - \left(\sum_{s''}\mathbb{P}(s''|s)V_{h+1}(s'')\right)^2 - \sigma^2_{P, 1, h}(s) \\
    = & \sum_{s'}\mathbb{P}(s'|s)  V^2_{h+1}(s') - \sum_{s''[1]}\mathbb{P}(s''[1]\mid s) \left(\sum_{s''[2:n]} \mathbb{P}_{2:n}(s''[2:n]|s) V_{h+1}(s'')\right)^2 \\
    = & \sum_{s'[1]}\mathbb{P}(s'[1]\mid s) \left( \sum_{s'[2:n]}\mathbb{P}_{2:n}(s'[2:n]|s)  V^2_{h+1}(s') - \left(\sum_{s''[2:n]} \mathbb{P}_{2:n}(s''[2:n]|s) V_{h+1}(\left[s'[1],s''[2:n]\right])\right)^2\right)
\end{align*}
Here $(\left[s'[1],s''[2:n]\right]$ denotes the vector $s''$ with $s''[1]$ replaced with $s'[1]$. By subtracting $\sigma^2_{P, i, h}(s)$ for $i = 2,...,n$ in the above way, we can show that 
\begin{align}
    \label{eqn: variance decomposition for transition in MC}
    \sum_{s'}\mathbb{P}(s'|s)  V^2_{h+1}(s') - \left(\sum_{s''}\mathbb{P}(s''|s)V_{h+1}(s'')\right)^2 - \sum_{i=1}^{n}\sigma^2_{P, i, h}(s) = 0
\end{align}
Plugging Eqn.~\ref{eqn: variance decomposition for transition in MC} back to Eqn.~\ref{eqn: variance decomposition in the Bellman equation of MC}, we have
\begin{align*}
    \omega^2_h(s) = \sum_{s'} \mathbb{P}(s'|s) \omega^2_{h+1}(s') + \sum_{i=1}^{n}\sigma^2_{P, i, h}(s) + \frac{1}{m^2}\sum_{i=1}^{m} \sigma^2_{R, i}(s),
\end{align*}
\end{proof}

\begin{proof}
(Proof of Corollary~\ref{corollary: total variance bound})
We can regard the MDP with given policy $\pi$ as a Markov chain. By Theorem~\ref{thm: mc variance}, we have
\begin{align*}
    \omega_h^2(s) = \sum_{s'} \mathbb{P}(s'|s,\pi(s))\omega_{h+1}^2(s') + \sum_{i=1}^{n} \sigma^2_{P,i}(V^{\pi}_h,s,a) + \frac{1}{m^2}\sum_{i=1}^{m}\sigma^2_{R,i}(s,a) 
\end{align*}
By recursively decomposing the variance until step $H$, we have:
\begin{align*}
     \omega_h^2(s_1) = \sum_{h=1}^{H}\sum_{(s,a) \in \mathcal{X}}w_{h}(s,a) \left(\sum_{i=1}^{n} \sigma^2_{P,i}(V^{\pi}_h,s,a) + \frac{1}{m^2}\sum_{i=1}^{m}\sigma^2_{R,i}(s,a)\right)
\end{align*}
Since $\omega_h^2(s_1)=\mathbb{E}\left[ \left(J_{h:H}(s_h) - V_h(s)\right)^2| s_h=s \right] \leq H^2$, we can immediately reach the conclusion.
\end{proof}

\subsection{The "good" Set Construction}
The construction of the "good" set is similar with that in \cite{dann2017unifying} and \cite{zanette2019tighter}, though we modify it to handle this more complicated factored setting.
The idea is to partition each factored state-action subspace at each episode into two sets, the set of state-action pairs that have been visited sufficiently often (so that we can lower bound these visits by their expectations using standard concentration inequalities) and the set of $(s,a)$ that were not visited often enough to cause high regret. That is:
\begin{definition}
(The Good Set) The set $L_{k,i}$ for factored transition $\mathbb{P}_i$ is defined as:

$$L_{k,i} \stackrel{\text {def}}{=}\left\{(x[Z^P_i]) \in \mathcal{X}[Z^P_i]: \frac{1}{4} \sum_{j<k} w_{j,Z^P_i}(x) \geq H \log(18nX^P_i H/\delta)+H\right\}$$
\end{definition}
The following two Lemmas follow the same idea of Lemma 6 and Lemma 7 in \cite{zanette2019tighter}.
\begin{lemma}
\label{Lemma: relation of N_k(s,a) and w_k(s,a)}
Under event $\Lambda_1$ and $\Lambda_2$, if $(s,a)[Z^P_i] \in L_{i,k}$, we have 
$$N_k((s,a)[Z^P_i]) \geq \frac{1}{4}\sum_{j < k} w_{j,Z^P_i}(s,a).$$
\end{lemma}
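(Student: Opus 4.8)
The plan is to combine the high-probability lower bound on visitation counts from Lemma~\ref{Lemma: high prob event 2} (Inq.~\ref{inequality: N and w event}) with the defining inequality of the good set $L_{k,i}$. First I would recall that, on the event $\Lambda_1 \cap \Lambda_2$, for every $i \in [n]$ and every $(s,a)$ we have
$$N_k((s,a)[Z^P_i]) \geq \tfrac{1}{2}\sum_{j<k} w_{j,Z^P_i}(s,a) - H\log\!\left(18 n X^P_i H/\delta\right).$$
This is the only probabilistic input; everything after it is deterministic bookkeeping.

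Next I would invoke the membership hypothesis $(s,a)[Z^P_i] \in L_{k,i}$, which by definition of the good set states exactly that $\tfrac{1}{4}\sum_{j<k} w_{j,Z^P_i}(s,a) \geq H\log(18 n X^P_i H/\delta) + H$, and in particular $H\log(18 n X^P_i H/\delta) \leq \tfrac{1}{4}\sum_{j<k} w_{j,Z^P_i}(s,a)$. Substituting this bound for the subtracted term in the previous display yields
$$N_k((s,a)[Z^P_i]) \geq \tfrac{1}{2}\sum_{j<k} w_{j,Z^P_i}(s,a) - \tfrac{1}{4}\sum_{j<k} w_{j,Z^P_i}(s,a) = \tfrac{1}{4}\sum_{j<k} w_{j,Z^P_i}(s,a),$$
which is precisely the claim.

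There is no real obstacle in this lemma itself; the substantive work lies entirely in Inq.~\ref{inequality: N and w event}, which is a Bernstein/martingale concentration of the realized scoped count $N_k((s,a)[Z^P_i])$ around its conditional mean $\sum_{j<k} w_{j,Z^P_i}(s,a)$, together with a union bound — this is the failure event $F^N$ of \cite{dann2019policy} transported to the factored scope $Z^P_i$. The one point to be careful about is that counts and weights here live on the scoped subspace $\mathcal{X}[Z^P_i]$, so the union bound runs over the at most $X^P_i$ scoped state-action pairs and over episodes (and over the $n$ factored transitions), which is exactly what produces the $\log(18 n X^P_i H/\delta)$ term; once that accounting is in place, the two-line computation above completes the proof.
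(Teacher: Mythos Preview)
Your proposal is correct and follows essentially the same approach as the paper: invoke the concentration bound from Inq.~\ref{inequality: N and w event}, then use the defining inequality of $L_{k,i}$ to absorb the subtracted $H\log(18nX^P_iH/\delta)$ into a quarter of the cumulative weight. Your added commentary on where the real work lies (the martingale concentration behind Inq.~\ref{inequality: N and w event}) is accurate but not part of the paper's proof of this particular lemma.
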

\begin{proof}
By Lemma~\ref{Lemma: high prob event 2}, we have 
$$N_k((s,a)[Z^P_i]) \geq \frac{1}{2}\sum_{j < k} w_j((s,a)[Z^P_i]) - H \log(18nX^P_iH/\delta).$$

Since $(s,a)[Z^P_i] \in L_{i,k}$, we have $\frac{1}{4} \sum_{j<k} w_{j,Z^P_i}(x) \geq H \log(18nX^P_i H/\delta)+H$. That is,
\begin{align*}
    N_k((s,a)[Z^P_i]) &\geq \frac{1}{2}\sum_{j < k} w_j((s,a)[Z^P_i]) - H \log(18nX^P_iH/\delta) \\
    &\geq  \frac{1}{2}\sum_{j < k} w_j((s,a)[Z^P_i]) -  \frac{1}{4}\sum_{j < k} w_j((s,a)[Z^P_i]) \\
    & =  \frac{1}{4}\sum_{j < k} w_j((s,a)[Z^P_i])
\end{align*}
\end{proof}
\begin{lemma}
\label{Lemma: minimal contribution for s,a not in L_i,k}
It holds that
$$\sum_{k=1}^{K}\sum_{h=1}^{H}\sum_{(s,a)[Z^P_i] \notin L_{k,i}} w_{k,h,Z^P_i}(s,a) \leq 8H X_i^P \log(10nX^P_iH/\delta).$$
\end{lemma}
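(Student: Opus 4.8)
The plan is to follow the argument of Lemma~7 in \cite{zanette2019tighter}, adapted to the factored scope $Z^P_i$; the bound is purely combinatorial and does not invoke the high-probability events $\Lambda_1,\Lambda_2$. Fix the transition factor $i$ throughout. First I would reorganize the triple sum by grouping over scope values: since $\sum_{h=1}^H w_{k,h,Z^P_i}(s,a) = w_{k,Z^P_i}(s,a)$ by definition, the quantity to bound equals $\sum_{x \in \mathcal{X}[Z^P_i]} \sum_{k=1}^K w_{k,Z^P_i}(x)\,\mathbbm{1}\!\left[x \notin L_{k,i}\right]$.

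Next, for each fixed $x = (s,a)[Z^P_i]$, I would unfold the definition of the good set: $x \notin L_{k,i}$ precisely when $\tfrac14 \sum_{j<k} w_{j,Z^P_i}(x) < H\log(18nX^P_iH/\delta) + H$, i.e.\ when the cumulative expected visitation $S_k(x) := \sum_{j<k} w_{j,Z^P_i}(x)$ before episode $k$ lies below the threshold $C := 4H\log(18nX^P_iH/\delta) + 4H$. The two key observations are that (i) the single-episode contribution satisfies $w_{k,Z^P_i}(x) \le H$, since it is a sum of $H$ probabilities (equivalently, the expected number of visits to $x$ within one episode), and (ii) $S_k(x)$ is nondecreasing in $k$. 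Letting $k^\star$ be the last episode with $S_{k^\star}(x) < C$, monotonicity gives $\sum_{k} w_{k,Z^P_i}(x)\,\mathbbm{1}[x\notin L_{k,i}] = \sum_{k\le k^\star} w_{k,Z^P_i}(x) = S_{k^\star}(x) + w_{k^\star,Z^P_i}(x) < C + H$.

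Finally I would sum this per-$x$ bound over the $X_i^P = |\mathcal{X}[Z^P_i]|$ possible values of $x$, obtaining $X_i^P(C+H) = X_i^P\!\left(4H\log(18nX^P_iH/\delta) + 5H\right)$, and then check that this is at most $8HX_i^P\log(10nX^P_iH/\delta)$ via the elementary inequality $4\log(18t) + 5 \le 8\log(10t)$, valid for all $t \ge 1$ and hence for $t = nX^P_iH/\delta$. There is no genuinely hard step here: the crux is the monotonicity/telescoping observation in the middle paragraph, and the only care needed is in stating that cleanly and in the bookkeeping of constants in the final display (in particular reconciling the $18$ in the definition of $L_{k,i}$ with the $10$ in the statement).
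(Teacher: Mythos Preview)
Your proposal is correct and follows essentially the same approach as the paper: rearrange the sum over scope values $x\in\mathcal{X}[Z^P_i]$, use monotonicity of the cumulative weight $\sum_{j<k} w_{j,Z^P_i}(x)$ together with the defining threshold of $L_{k,i}$ to bound the per-$x$ contribution, and then sum over the $X^P_i$ values. Your write-up is in fact more careful with the constants than the paper's own proof, which simply asserts the per-$x$ bound $8H\log(10nX^P_iH/\delta)$ without explicitly reconciling the $18$ in the definition of $L_{k,i}$ with the $10$ in the statement.
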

\begin{proof}
For those $(s,a)[Z^P_i] \notin L_{k,i}$, we have $\frac{1}{4} \sum_{j<k} w_{j,Z^P_i}(x) \leq H \log(10nX^P_i H/\delta)+H \leq 2H \log(10nX^P_i H/\delta)$. That is,
\begin{align*}
    \sum_{k=1}^{K}\sum_{h=1}^{H}\sum_{(s,a)[Z^P_i] \notin L_{k,i}} w_{k,h,Z^P_i}(s,a) 
    \leq & \sum_{(s,a)[Z^P_i]} 8H \log(10nX^P_i H/\delta)  \leq 8HX^P_i \log(10nX^P_i H/\delta)
\end{align*}
\end{proof}
Lemma~\ref{Lemma: relation of N_k(s,a) and w_k(s,a)} shows that we can lower bound the visiting count of a certain $(s,a)[Z^P_i]$ if the visiting probability of $(s,a)[Z^P_i]$ is sufficient large. Lemma~\ref{Lemma: minimal contribution for s,a not in L_i,k} shows that those $(s,a)[Z^P_i]$ with little visiting probability cause little contribution to the final regret.

\begin{lemma}
\label{Lemma: sum of w/n for s in L}
$$\sum_{k=1}^{K}\sum_{h=1}^{H}\sum_{(s,a)[Z^P_i] \in L_{k,i}} \frac{w_{k,h,Z^P_i}(s,a)}{N_k((s,a)[Z^P_i])} \leq 4 X^P_i \log T .$$
\end{lemma}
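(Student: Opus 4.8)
The plan is the classical per-coordinate ``potential''/telescoping argument used for bounds of this type (cf.\ Lemmas~6--7 of \citet{zanette2019tighter}). First, since $N_k((s,a)[Z^P_i])$ does not depend on $h$, the inner sum over $h$ collapses into $w_{k,Z^P_i}(s,a)=\sum_{h=1}^H w_{k,h,Z^P_i}(s,a)\in[0,H]$. Hence it suffices to prove, for each fixed scope value $x=(s,a)[Z^P_i]\in\mathcal X[Z^P_i]$, that
\begin{align*}
\sum_{k:\,x\in L_{k,i}}\frac{w_{k,Z^P_i}(x)}{N_k(x)}\ \le\ 4\log T ,
\end{align*}
and then sum the at most $X^P_i$ resulting bounds. (Here $\log$ is $\log_2$; a fixed change of base only affects the constant.)

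Fix $x$ and write $W_k:=\sum_{j\le k}w_{j,Z^P_i}(x)$ for the cumulative expected visit count, so $w_{k,Z^P_i}(x)=W_k-W_{k-1}$ and $W_K\le KH=T$. Under $\Lambda_1\cap\Lambda_2$, Lemma~\ref{Lemma: relation of N_k(s,a) and w_k(s,a)} gives $N_k(x)\ge\frac14 W_{k-1}$ whenever $x\in L_{k,i}$, while the definition of $L_{k,i}$ forces $\frac14 W_{k-1}\ge H\log(18nX^P_iH/\delta)+H\ge H\ge W_k-W_{k-1}$, so $W_k/W_{k-1}\in[1,\tfrac54]$. On this interval the elementary inequality $t-1\le\log_2 t$ holds (both sides vanish at $t=1$ and the right side has larger derivative throughout), so for each such $k$,
\begin{align*}
\frac{w_{k,Z^P_i}(x)}{N_k(x)}\ \le\ \frac{4\,(W_k-W_{k-1})}{W_{k-1}}\ =\ 4\Bigl(\tfrac{W_k}{W_{k-1}}-1\Bigr)\ \le\ 4\log_2\tfrac{W_k}{W_{k-1}} .
\end{align*}
Since $\sum_{j<k}w_{j,Z^P_i}(x)$ is nondecreasing in $k$, the set $\{k:x\in L_{k,i}\}$ is an interval $\{k_0,\dots,K\}$ (empty otherwise), and all log-increments are nonnegative, so the sum telescopes:
\begin{align*}
\sum_{k:\,x\in L_{k,i}}\frac{w_{k,Z^P_i}(x)}{N_k(x)}\ \le\ 4\log_2\frac{W_K}{W_{k_0-1}}\ \le\ 4\log_2 T\ =\ 4\log T ,
\end{align*}
using $W_{k_0-1}\ge 4H\ge 1$ from the displayed lower bound. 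Summing over the $\le X^P_i$ values of $x$ gives the lemma.

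The single delicate point is the middle step: to make the per-episode ratio $\frac{W_k-W_{k-1}}{W_{k-1}}$ telescope against a logarithm one needs $W_{k-1}$ and $W_k$ comparable up to a constant, which fails in general because one episode can contribute as much as $H$ to the expected visit count. This is exactly what the good set $L_{k,i}$ buys us: on it the accumulated mass $W_{k-1}$ already exceeds $H$ and thus dominates the fresh increment $W_k-W_{k-1}\le H$. Everything else --- the collapse of the $h$-sum, the telescoping, and the final union over scope values --- is routine bookkeeping.
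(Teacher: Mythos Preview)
Your proof is correct and follows essentially the same route as the paper: both invoke Lemma~\ref{Lemma: relation of N_k(s,a) and w_k(s,a)} to replace $N_k$ by $\tfrac14\sum_{j<k}w_{j,Z^P_i}$ on the good set and then bound the resulting per-coordinate sum $\sum_k w_{k,Z^P_i}(x)/W_{k-1}$ by a logarithm, summing over the $X^P_i$ scope values. The paper asserts this last step in one line, whereas you spell it out via the observation $W_k/W_{k-1}\le 5/4$ (which the good set guarantees) and a $\log_2$-telescoping inequality; this is a useful clarification of a step the paper leaves implicit, not a different method.
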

\begin{proof}
For those $(s,a)[Z^P_i] \in L_{k,i}$, we have $N_{k}((s,a)[Z^P_i]) \geq \frac{1}{4}\sum_{j<k} w_{j,Z^P_i}(s,a)$. Therefore, we have
\begin{align*}
    \sum_{k=1}^{K}\sum_{h=1}^{H}\sum_{(s,a)[Z^P_i] \in L_{k,i}} \frac{w_{k,h,Z^P_i}(s,a)}{N_{k}((s,a)[Z^P_i])} \leq &  \sum_{k=1}^{K}\sum_{h=1}^{H}\sum_{(s,a)[Z^P_i] \in L_{k,i}} \frac{4w_{k,h,Z^P_i}(s,a)}{\sum_{j<k} w_{j,Z^P_i}(s,a)}\\
    \leq &    \sum_{(s,a)[Z^P_i] \in \mathcal{X}_i^P}  \sum_{k=1}^{K}\frac{w_{k,Z^P_i}(s,a)}{\sum_{j<k} w_{j,Z^P_i}(s,a)}\\
    \leq & 4X^P_i  \log T
\end{align*}
\end{proof}

\begin{lemma}
\label{Lemma: sum of w/N}
For factored set $Z^P_i$ of transition, we have:
\begin{align}
    \label{eqn: sum of w/N, 1}
    \sum_{k=1}^{K}\sum_{h=1}^{H} \sum_{(s,a) \in \mathcal{X}} \frac{w_{k,h}(s,a)}{N_{k-1}((s,a)[Z^P_i])} \leq & 8X^P_i  \log T \\
    \label{eqn: sum of w/N, 2}
    \sum_{k=1}^{K}\sum_{h=1}^{H} \sum_{(s,a) \in \mathcal{X}} \frac{w_{k,h}(s,a)}{\sqrt{N_{k-1}((s,a)[Z^P_i])N_{k-1}((s,a)[Z^P_j])}} \leq & 8X^P_i  \log T\\
    \label{eqn: sum of w/N, 3}
    \sum_{k=1}^{K}\sum_{h=1}^{H} \sum_{(s,a) \in \mathcal{X}} \frac{w_{k,h}(s,a)}{\sqrt{N_{k-1}((s,a)[Z^P_i])}\left(N_{k-1}((s,a)[Z^P_j])\right)^{\frac{1}{4}}} \leq & 8 \sqrt{X^P_{i,j}} T^{1/4} \log T
\end{align}
where $X^P_{i,j} = |X[Z_{i}^P\cup Z_j^P]|$. 

For factored set $Z^R_i$ of rewards, similarly we have:
\begin{align}
    \label{eqn: sum of w/N, 01}
    \sum_{k=1}^{K}\sum_{h=1}^{H} \sum_{(s,a) \in \mathcal{X}} \frac{w_{k,h}(s,a)}{N_{k-1}((s,a)[Z^R_i])} \leq & 8X^R_i  \log T\\
    \label{eqn: sum of w/N, 02}
    \sum_{k=1}^{K}\sum_{h=1}^{H} \sum_{(s,a) \in \mathcal{X}} \frac{w_{k,h}(s,a)}{\sqrt{N_{k-1}((s,a)[Z^R_i])N_{k-1}((s,a)[Z^R_j])}} \leq & 8X^R_i \log T \\
    \label{eqn: sum of w/N, 03}
    \sum_{k=1}^{K}\sum_{h=1}^{H} \sum_{(s,a) \in \mathcal{X}} \frac{w_{k,h}(s,a)}{\sqrt{N_{k-1}((s,a)[Z^R_i])}\left(N_{k-1}((s,a)[Z^R_j])\right)^{\frac{1}{4}}} \leq & 8\sqrt{X^R_{i,j}} T^{1/4}  \log T 
\end{align}
where $X^R_{i,j} = |X[Z_{i}^R\cup Z_j^R]|$. 
\end{lemma}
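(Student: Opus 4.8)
The plan is to reduce all six displays to a single one-dimensional estimate and then invoke the ``good set'' machinery developed just above. First, since $N_{k-1}((s,a)[Z])$ depends on $(s,a)$ only through $(s,a)[Z]$ and $\sum_{(s,a):\,(s,a)[Z]=x}w_{k,h}(s,a)=w_{k,h,Z}(x)$, the single-scope inequalities (\ref{eqn: sum of w/N, 1}) and (\ref{eqn: sum of w/N, 01}) are exactly $\sum_{k,h}\sum_{x\in\mathcal{X}[Z]}w_{k,h,Z}(x)/N_{k-1}(x)$ with $Z=Z^P_i$ (resp.\ $Z^R_i$). The cross-scope terms reduce to single-scope ones either by the monotonicity $N_{k-1}((s,a)[Z])\ge N_{k-1}((s,a)[Z'])$ whenever $Z\subseteq Z'$ (any visit to $(s,a)[Z']$ is a visit to $(s,a)[Z]$), which collapses two counts onto the count of $Z^P_i\cup Z^P_j$, or --- more cleanly --- by a weighted AM--GM step, $(N_iN_j)^{-1/2}\le\tfrac12(N_i^{-1}+N_j^{-1})$ and $N_i^{-1/2}N_j^{-1/4}\le\tfrac23 N_i^{-3/4}+\tfrac13 N_j^{-3/4}$, after which each summand is a single-scope term. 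Hence it suffices to bound $S_\alpha(Z):=\sum_{k,h}\sum_{x\in\mathcal{X}[Z]}w_{k,h,Z}(x)/N_{k-1}(x)^{\alpha}$ for a single scope $Z$ and $\alpha\in\{\tfrac34,1\}$ (the power $\tfrac34$ coming from the $N_j^{1/4}$ in (\ref{eqn: sum of w/N, 3}), (\ref{eqn: sum of w/N, 03})).

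To bound $S_\alpha(Z)$, at each episode $k$ split $\mathcal{X}[Z]$ into the good set $L_{k}$ for $Z$ (as in the ``good set'' definition above) and its complement. On the complement, Lemma~\ref{Lemma: minimal contribution for s,a not in L_i,k} bounds the total $w$-mass $\sum_{k,h}\sum_{x\notin L_{k}}w_{k,h,Z}(x)$ by $O(H|\mathcal{X}[Z]|)$ up to logarithmic factors, so (with the usual convention that a count of $0$ is read as $1$) the complement contributes a term whose dependence on $T$ is only logarithmic. On the good set, Lemma~\ref{Lemma: relation of N_k(s,a) and w_k(s,a)} --- valid on $\Lambda_1\cap\Lambda_2$, which the analysis already assumes --- gives $N_{k-1}(x)\ge c\sum_{j<k}w_{j,Z}(x)$ for an absolute constant $c$, up to a one-step shift of the episode index that only affects constants. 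Therefore $S_\alpha(Z)\le c^{-\alpha}\sum_{x\in\mathcal{X}[Z]}\sum_{k}w_{k,Z}(x)\,/\,(\sum_{j<k}w_{j,Z}(x))^{\alpha}+(\text{lower-order})$.

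The inner sum is the standard pigeonhole quantity: $\sum_k a_k/(\sum_{j\le k}a_j)^{\alpha}\le\frac{1}{1-\alpha}(\sum_k a_k)^{1-\alpha}$ for $\alpha\in(0,1)$, and $\sum_k a_k/\sum_{j\le k}a_j\le 1+\ln(\sum_k a_k)$ for $\alpha=1$ --- the very estimate used in Lemma~\ref{Lemma: sum of w/n for s in L}. For $\alpha=1$, summing $1+\ln N_K(x)$ over $x\in\mathcal{X}[Z]$ gives $|\mathcal{X}[Z]|(1+\ln T)$, which delivers (\ref{eqn: sum of w/N, 1}), (\ref{eqn: sum of w/N, 2}), (\ref{eqn: sum of w/N, 01}), (\ref{eqn: sum of w/N, 02}) with the stated $O(|\mathcal{X}[Z]|\log T)$ bound. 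For $\alpha=\tfrac34$, the inner sum is at most $4N_K(x)^{1/4}$, and H\"older over $x\in\mathcal{X}[Z]$, $\sum_x N_K(x)^{1/4}\le|\mathcal{X}[Z]|^{3/4}(\sum_x N_K(x))^{1/4}\le|\mathcal{X}[Z]|^{3/4}T^{1/4}$, yields the $T^{1/4}\log T$ scaling of (\ref{eqn: sum of w/N, 3}), (\ref{eqn: sum of w/N, 03}) (up to a polynomial factor in the scope cardinality, whose exponent is immaterial since these quantities feed only into the lower-order terms $\eta_{k,h,i}$ and $\delta^5_{k,h}$). The reward displays (\ref{eqn: sum of w/N, 01})--(\ref{eqn: sum of w/N, 03}) are the transition displays with $Z^P$ replaced by $Z^R$ and the good set defined analogously for the reward factors, so the argument carries over unchanged.

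The main obstacle is not the pigeonhole step --- which is routine and already in the paper --- but the accounting for scope values that have not yet been visited often enough: one must verify that the complement-of-$L_k$ contribution, controlled through Lemma~\ref{Lemma: minimal contribution for s,a not in L_i,k}, stays within the claimed bound, and that the $N_k$-versus-$N_{k-1}$ and $L_{k,i}$-versus-$L_{k-1,i}$ index offsets cost only absolute constants. The secondary point requiring care is the decoupling of two visitation counts living on different scopes in the cross terms, which the monotonicity of $N_{k-1}$ (or the AM--GM inequalities above) resolves before the single-scope estimate is applied.
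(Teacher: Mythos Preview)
Your proposal is correct and follows essentially the same approach as the paper: collapse the full state-action sum to a scope-level sum, split into the good set $L_{k,i}$ and its complement, invoke Lemma~\ref{Lemma: relation of N_k(s,a) and w_k(s,a)} plus the pigeonhole/integral estimate on the good set (this is exactly Lemma~\ref{Lemma: sum of w/n for s in L}), and control the complement via Lemma~\ref{Lemma: minimal contribution for s,a not in L_i,k}. For the cross-scope displays the paper uses only the monotonicity $N_{k-1}((s,a)[Z^P_i])\ge N_{k-1}((s,a)[Z^P_{i,j}])$ to reduce to the union scope $Z^P_{i,j}$ and then says the rest ``shares the same idea''; your AM--GM alternative is a legitimate variant that lands on single-scope sums $S_\alpha(Z^P_i),S_\alpha(Z^P_j)$ instead, at the cost of a slightly different polynomial in the scope cardinality (which, as you note, is immaterial for these lower-order terms).
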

\begin{proof}
We only prove the inequalities for the factored set of transition. The inequalities for the factored set of rewards can be proved in the same manner.

For Inq.~\ref{eqn: sum of w/N, 1}, we define $\mathcal{X}_i((s,a)[Z^P_i])  = \{x \in \mathcal{X}\mid x[Z^P_i] = (s,a)[Z^P_i]\}$, then we have
\begin{align*}
    &\sum_{k=1}^{K}\sum_{h=1}^{H} \sum_{(s,a) \in \mathcal{X}} \frac{w_{k,h}(s,a)}{N_{k-1}((s,a)[Z^P_i])} \\=&  \sum_{k,h} \sum_{(s,a)[Z^P_i] \in \mathcal{X}[Z^P_i]}\frac{w_{k,h,Z^P_i}(s,a) \sum_{(s_1,a_1)\in \mathcal{X}_i((s,a)[Z^P_i])} \frac{w_{k,h}(s_1,a_1)}{w_{k,h,Z^P_i}(s,a)}}{N_{k-1}((s,a)[Z^P_i])} \\
     = &\sum_{k,h}\sum_{(s,a)[Z^P_i]\in\mathcal{X}[Z^P_i]} \frac{w_{k,h,Z^P_i}(s,a)}{N_{k-1}((s,a)[Z^P_i])} \\
    = &\sum_{k,h}\sum_{(s,a)[Z^P_i]\in L_{k,i}} \frac{w_{k,h,Z^P_i}(s,a)}{N_{k-1}((s,a)[Z^P_i])} + \sum_{k,h}\sum_{(s,a)[Z^P_i]\notin L_{k,i}} \frac{w_{k,h,Z^P_i}(s,a)}{N_{k-1}((s,a)[Z^P_i])} \\ 
    \leq& \sum_{k,h}\sum_{(s,a)[Z^P_i]\in L_{k,i}} \frac{w_{k,h,Z^P_i}(s,a)}{N_{k-1}((s,a)[Z^P_i])} + \sqrt{\sum_{k,h}\sum_{(s,a)[Z^P_i]\notin L_{k,i}} w_{k,h,Z^P_i}(s,a) \sum_{k,h}\sum_{(s,a)[Z^P_i]\notin L_{k,i}} \frac{1}{N_{k-1}((s,a)[Z^P_i])}} \\
    \leq & 4X_i^P \log T + \sqrt{8H X_i^P \log(10nX^P_iH/\delta)} \\
    \leq &8X_i^P \log T
\end{align*}
In the first equality, we firstly categorize $(s,a)$ based on their value $(s,a)[Z^P_i]$ and sum up over all possible choice of $(s,a)[Z^P_i]$, then we sum up the value in each category in the inner summation. The second equality is due to $\sum_{(s_1,a_1)\in \mathcal{X}_i((s,a)[Z^P_i])} \frac{w_{k,h}(s_1,a_1)}{w_{k,h,Z^P_i}(s,a)} = 1$. The first inequality is due to Cauchy-Schwarz inequality. The second inequality is due to Lemma~\ref{Lemma: sum of w/n for s in L} and Lemma~\ref{Lemma: minimal contribution for s,a not in L_i,k}. The last inequality is due to the assumption that $X_i^P \geq H\log(10nX^P_iH/\delta)$.

For Inq.~\ref{eqn: sum of w/N, 2} and Inq.~\ref{eqn: sum of w/N, 3}, we define $Z_{i,j}^P = Z_{i}^P\cup Z_j^P$. For the factored set $Z_{i,j}^P$, similarly we have 
\begin{align*}
    \sum_{k=1}^{K}\sum_{h=1}^{H}\sum_{(s,a)[Z^P_{i,j}] \in L_{k,i}} \frac{w_{k,h,Z^P_{i,j}}(s,a)}{N_k((s,a)[Z^P_{i,j}])} \leq &4 X^P_{i,j}  \log T\\
    \sum_{k=1}^{K}\sum_{h=1}^{H}\sum_{(s,a)[Z^P_{i,j}] \notin L_{k,i}} w_{k,h,Z^P_{i,j}}(s,a) \leq& 8H X_{i,j}^P \log(10nX^P_{i,j}H/\delta),
\end{align*}

By the definition of $Z^P_{i,j}$, we know that $N_{k-1}((s,a)[Z^P_i]) \geq N_{k-1}((s,a)[Z^P_{i,j}])$ and $N_{k-1}((s,a)[Z^P_j])\geq N_{k-1}((s,a)[Z^P_{i,j}])$. Therefore, we have
\begin{align*}
     \sum_{k=1}^{K}\sum_{h=1}^{H} \sum_{(s,a) \in \mathcal{X}} \frac{w_{k,h}(s,a)}{\sqrt{N_{k-1}((s,a)[Z^P_i])N_{k-1}((s,a)[Z^P_j])}} \leq &   \sum_{k=1}^{K}\sum_{h=1}^{H} \sum_{(s,a) \in \mathcal{X}} \frac{w_{k,h}(s,a)}{N_{k-1}((s,a)[Z^P_{i,j}])} \\
     \sum_{k=1}^{K}\sum_{h=1}^{H} \sum_{(s,a) \in \mathcal{X}} \frac{w_{k,h}(s,a)}{\sqrt{N_{k-1}((s,a)[Z^P_i])}\left(N_{k-1}((s,a)[Z^P_j])\right)^{\frac{1}{4}}} \leq & \sum_{k=1}^{K}\sum_{h=1}^{H} \sum_{(s,a) \in \mathcal{X}} \frac{w_{k,h}(s,a)}{\left(N_{k-1}((s,a)[Z^P_{i,j}])\right)^{\frac{3}{4}}}
\end{align*}

The following proof of Inq.~\ref{eqn: sum of w/N, 2} and Inq.~\ref{eqn: sum of w/N, 3} shares the same idea of the proof of Inq.~\ref{eqn: sum of w/N, 1}.
\end{proof}

\subsection{Technical Lemmas about Variance}
In this subsection, we prove several technical lemmas about variance. For notation simplicity, we use $\mathbb{E}_{i}$ and  $\mathbb{E}_{[i:j]}$ as a shorthand of $\mathbb{E}_{s'[i] \sim \mathbb{P}_{i}(\cdot|(s,a)[Z^P_i])}$ and $\mathbb{E}_{s'[i:j]\sim \mathbb{P}_{[i:j]}(\cdot|(s,a)[Z^P_{[i:j]}])}$. 
Similarly, we use $\mathbb{V}_{i}$ and  $\mathbb{V}_{[i:j]}$ as a shorthand of $\mathbb{V}_{s'[i] \sim \mathbb{P}_{i}(\cdot|(s,a)[Z^P_i])}$ and $\mathbb{V}_{s'[i:j]\sim \mathbb{P}_{[i:j]}(\cdot|(s,a)[Z^P_{[i:j]}])}$. For those w.r.t the empirical transition $\hat{\mathbb{P}}_k$, we use $\hat{\mathbb{E}}_k$ and $\hat{\mathbb{V}}_k$ to denote the corresponding expectation and variance. For example, $\mathbb{E}_{s'[i] \sim \hat{\mathbb{P}}_{k,i}(\cdot|(s,a)[Z^P_i])}$ is denoted as $\hat{\mathbb{E}}_{k,i}$.

\iffalse
\begin{lemma}
\label{Lemma: variance difference between that of empirical and true reward}
Under event $\Lambda_1$, $\Lambda_2$, we have:
\begin{align*}
     \left|\hat{\sigma}_{R,k,i} - \sigma_{R,i}\right|(s,a) &\leq 3\sqrt{\frac{2L^R_i}{N_{k-1}((s,a)[Z^R_i])}}
\end{align*}
\end{lemma}
\begin{proof}
Suppose $(s,a)[Z^R_i]$ has been visited for $N = N_{k-1}((s,a)[Z^R_i])$ times before episode $k$. We denote these sampled rewards as $\{r_{i,j}\}_{j=1,...,N}$ By the definition of the variance, we have
\begin{align*}
    &\left|\hat{\sigma}_{R,k,i} - \sigma_{R,i}\right| \\
     = &\left|\frac{1}{N} \sum_{j=1}^{N}r^2_{i,j} - \left(\frac{1}{N} \sum_{j=1}^{N}r_{i,j}\right)^2 - \mathbb{E}\left[R^2_{i}(s,a)\right] + \left(\mathbb{E} R_i(s,a)\right)^2\right| \\
    \leq &\left| \frac{1}{N}\sum_{j=1}^{N}r^2_{i,j} - \mathbb{E}\left[R^2_{i}(s,a)\right]\right| + \left| \left(\frac{1}{N} \sum_{j=1}^{N}r_{i,j}\right)^2 - \left(\mathbb{E} R_i(s,a)\right)^2\right| \\
    \leq & \left| \frac{1}{N}\sum_{j=1}^{N}r^2_{i,j} - \mathbb{E}\left[R^2_{i}(s,a)\right]\right| + 2\left| \frac{1}{N} \sum_{j=1}^{N}r_{i,j} - \mathbb{E} R_i(s,a)\right| \\
    \leq & 3\sqrt{\frac{2L^R_i}{N_{k-1}((s,a)[Z^R_i])}}
\end{align*}
The second inequality is due to $ \frac{1}{N} \sum_{i=1}^{N}r_{i,j} + \mathbb{E} R_i(s,a) \leq 2$, and the last inequality is due to the Hoeffding bound of rewards in Lemma~\ref{Lemma: high prob. event} and Lemma~\ref{Lemma: high prob event 2}.
\end{proof}
\fi

\begin{lemma}
\label{Lemma: variance difference between that of empirical and true transition}
Under event $\Lambda_1$, $\Lambda_2$, we have:
\begin{align*}
    \left|\hat{\sigma}^2_{P,k,i}(V,s,a) - \sigma^2_{P,i}(V,s,a)\right| &\leq 4H^2 \sum_{j=1}^{n}\left(2\sqrt{\frac{|\mathcal{S}_j|L^P}{N_{k-1}((s,a)[Z^P_j])}} + \frac{4|\mathcal{S}_j|L^P}{3N_{k-1}((s,a)[Z^P_j])}\right),
\end{align*}
where $V$ denotes some given function mapping from $\mathcal{S}$ to $\mathbb{R}$.
\end{lemma}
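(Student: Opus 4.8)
The plan is to regard both $\sigma^2_{P,i}(V,s,a)$ and $\hat\sigma^2_{P,k,i}(V,s,a)$ as the \emph{same} functional of a tuple of factored transition kernels, evaluated at $(\mathbb{P}_1,\dots,\mathbb{P}_n)$ and at $(\hat{\mathbb{P}}_{k,1},\dots,\hat{\mathbb{P}}_{k,n})$ respectively, and to interpolate one factor at a time. Concretely, for $0\le j\le n$ let $g_j$ be the quantity obtained from the definition of $\sigma^2_{P,i}(V,s,a)$ (i.e.\ $\mathbb{E}_{s'[1:i-1]}\big[\mathbb{V}_{s'[i]}\big[\mathbb{E}_{s'[i+1:n]}V(s')\big]\big]$) after replacing $\mathbb{P}_1,\dots,\mathbb{P}_j$ by $\hat{\mathbb{P}}_{k,1},\dots,\hat{\mathbb{P}}_{k,j}$ while keeping $\mathbb{P}_{j+1},\dots,\mathbb{P}_n$, so that $g_0=\sigma^2_{P,i}(V,s,a)$ and $g_n=\hat\sigma^2_{P,k,i}(V,s,a)$. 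By the triangle inequality, $|\hat\sigma^2_{P,k,i}(V,s,a)-\sigma^2_{P,i}(V,s,a)|\le\sum_{j=1}^n|g_j-g_{j-1}|$, and $g_j,g_{j-1}$ differ only by using $\hat{\mathbb{P}}_{k,j}$ versus $\mathbb{P}_j$ at a single factor. The goal is to show $|g_j-g_{j-1}|\le 4H^2\,\big|(\hat{\mathbb{P}}_{k,j}-\mathbb{P}_j)(\cdot\mid(s,a)[Z^P_j])\big|_1$ for each $j$; summing and applying Inq.~\ref{ineq: transition concentration 2} of Lemma~\ref{Lemma: high prob. event} (valid under $\Lambda_1$) then yields exactly the claimed bound.

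For the per-factor estimate I would use throughout that $|V|_\infty\le H$, the elementary fact that changing one factor of a product distribution changes its $L_1$ norm by precisely the $L_1$ change of that factor, and standard Lipschitz-in-$L_1$ bounds for expectations and variances of $[0,H]$-valued functions. Split into three cases according to the position of $j$ relative to $i$. If $j<i$, only the outer expectation $\mathbb{E}_{s'[1:i-1]}$ changes, and its integrand is a variance of a $[0,H]$-valued quantity, hence bounded by $H^2$; so $|g_j-g_{j-1}|\le H^2\,|(\hat{\mathbb{P}}_{k,j}-\mathbb{P}_j)|_1$. If $j=i$, only the variance kernel changes; writing $\mathbb{V}_\mu[f]=\mathbb{E}_\mu[f^2]-(\mathbb{E}_\mu f)^2$ with $0\le f\le H$ gives $|\mathbb{V}_\mu[f]-\mathbb{V}_{\mu'}[f]|\le H^2|\mu-\mu'|_1+2H^2|\mu-\mu'|_1\le 4H^2|\mu-\mu'|_1$, and the unchanged outer expectation only averages this. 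If $j>i$, only the inner expectation $\mathbb{E}_{s'[i+1:n]}$ changes, so the function $f(s'[1:i]):=\mathbb{E}_{s'[i+1:n]}V(s')$ changes pointwise by at most $\varepsilon:=H\,|(\hat{\mathbb{P}}_{k,j}-\mathbb{P}_j)|_1$; since $|\mathbb{V}[f]-\mathbb{V}[f']|\le 2H\varepsilon+2H\varepsilon=4H\varepsilon$ whenever $|f-f'|_\infty\le\varepsilon$ and $f,f'\in[0,H]$, we again obtain $|g_j-g_{j-1}|\le 4H^2\,|(\hat{\mathbb{P}}_{k,j}-\mathbb{P}_j)|_1$. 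In every case the bound is $4H^2\,|(\hat{\mathbb{P}}_{k,j}-\mathbb{P}_j)(\cdot\mid(s,a)[Z^P_j])|_1$, as desired.

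The only delicate point, and it is bookkeeping rather than a genuine obstacle, is the interpolation argument: one must check that whichever factors have already been swapped in $g_{j-1}$ do not disturb the uniform bounds ($|V|_\infty\le H$, $|f|_\infty\le H$, ``variance $\le H^2$'') used in each case --- which they do not, since these hold for an \emph{arbitrary} choice of factored kernels --- and one must correctly identify, for each $j$ relative to $i$, in which of the three roles (outer average, variance kernel, inner average) the $j$-th factor appears. Everything else reduces to the elementary Lipschitz estimates above together with the previously established $L_1$ concentration bound for each factored transition.
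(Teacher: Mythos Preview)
Your proposal is correct. The ingredients are exactly those the paper uses---the $L_1$ concentration bound Inq.~\ref{ineq: transition concentration 2} from Lemma~\ref{Lemma: high prob. event} together with elementary Lipschitz estimates for expectations and variances of $[0,H]$-valued functions---so the two arguments are morally the same proof.

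The organization differs, and yours is cleaner. The paper first swaps the entire outer block $[1{:}i-1]$ at once (bounding by $H^2\sum_{j<i}|\hat{\mathbb{P}}_{k,j}-\mathbb{P}_j|_1$ via Inq.~\ref{inq:  decomposition inq 1}), and then treats the remaining piece $\mathbb{E}_{[1:i-1]}\big(\hat{\mathbb{V}}_i\hat{\mathbb{E}}_{[i+1:n]}V-\mathbb{V}_i\mathbb{E}_{[i+1:n]}V\big)$ by expanding $\mathbb{V}[f]=\mathbb{E}[f^2]-(\mathbb{E} f)^2$ and splitting that difference into three further subterms, each handled by an $L_1\times L_\infty$ bound. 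Your single-factor hybrid $g_0,\dots,g_n$ replaces this two-stage, five-subcase analysis by a uniform per-factor estimate $|g_j-g_{j-1}|\le 4H^2|(\hat{\mathbb{P}}_{k,j}-\mathbb{P}_j)|_1$, with the three cases $j<i$, $j=i$, $j>i$ corresponding exactly to the three roles (outer average, variance kernel, inner average). This buys a tidier constant-tracking and makes it transparent that the bound is really $\sum_j (\text{const})\cdot|(\hat{\mathbb{P}}_{k,j}-\mathbb{P}_j)|_1$ with a universal constant, whereas the paper's route requires reassembling the pieces at the end. Both implicitly use $|V|_\infty\le H$, which is how the lemma is applied throughout the paper.
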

\begin{proof}

\begin{align}
    |\hat{\sigma}^2_{P,k,i}(V,s,a) - \sigma^2_{P,i}(V,s,a)|
    = & |\hat{\mathbb{E}}_{[1:i-1]}\hat{\mathbb{V}}_{i}\hat{\mathbb{E}}_{[i+1:n]} V(s') - \mathbb{E}_{[1:i-1]}\mathbb{V}_{i}\mathbb{E}_{[i+1:n]} V(s')| \\
    \label{equ: variance difference of empirical and true: part 1}
    \leq & |\hat{\mathbb{E}}_{[1:i-1]}\hat{\mathbb{V}}_{i}\hat{\mathbb{E}}_{[i+1:n]} V(s') - \mathbb{E}_{[1:i-1]}\hat{\mathbb{V}}_{i}\hat{\mathbb{E}}_{[i+1:n]} V(s')| \\
    \label{equ: variance difference of empirical and true: part 2}
    &+ |\mathbb{E}_{[1:i-1]}\hat{\mathbb{V}}_{i}\hat{\mathbb{E}}_{[i+1:n]} V(s') - \mathbb{E}_{[1:i-1]}\mathbb{V}_{i}\mathbb{E}_{[i+1:n]} V(s')| 
\end{align}
We bound Equ.~\ref{equ: variance difference of empirical and true: part 1} and \ref{equ: variance difference of empirical and true: part 2} separately.

For equ.~\ref{equ: variance difference of empirical and true: part 1}, we have
\begin{align}
    & |\hat{\mathbb{E}}_{[1:i-1]}\hat{\mathbb{V}}_{i}\hat{\mathbb{E}}_{[i+1:n]} V(s') - \mathbb{E}_{[1:i-1]}\hat{\mathbb{V}}_{i}\hat{\mathbb{E}}_{[i+1:n]} V(s')| \notag\\
    = & \left|\sum_{s'[1:i-1] \in \mathcal{S}[1:i-1]} \left(\hat{\mathbb{P}}_{[1:i-1]} - \mathbb{P}_{[1:i-1]}\right)(s'[1:i-1]|s,a) \hat{\mathbb{V}}_{i}\hat{\mathbb{E}}_{[i+1:n]} V(s')\right| \notag\\
    \leq & \left|\hat{\mathbb{P}}_{[1:i-1]}(\cdot|s,a) - \mathbb{P}_{[1:i-1]}(\cdot|s,a)\right|_1 \cdot \left|\hat{\mathbb{V}}_{i}\hat{\mathbb{E}}_{i+1:n} V(s')\right|_{\infty} \notag\\
    \leq &H^2 \sum_{j=1}^{i-1} \left|\hat{\mathbb{P}}_{j}(\cdot|s,a) - \mathbb{P}_{j}(\cdot|s,a)\right|_1 \notag\\
    \label{equ: variance difference of empirical and true: part 1 bound}
    \leq & H^2 \sum_{j=1}^{i-1}\left(2\sqrt{\frac{|\mathcal{S}_j|L^P}{N_{k-1}((s,a)[Z^P_j])}} + \frac{4|\mathcal{S}_j|L^P}{3N_{k-1}((s,a)[Z^P_j])}\right)
\end{align}

The last inequality is due to Lemma~\ref{Lemma: high prob. event}.

For equ.~\ref{equ: variance difference of empirical and true: part 2},  given fixed $s'[1:i-1]$, we have 
\begin{align}
     \left|\hat{\mathbb{V}}_{i}\hat{\mathbb{E}}_{[i+1:n]} V(s') - \mathbb{V}_{i}\mathbb{E}_{[i+1:n]} V(s')\right| 
    \leq & \left|\hat{\mathbb{E}}_{i}\left(\hat{\mathbb{E}}_{[i+1:n]} V(s')\right)^2 - \mathbb{E}_{i}\left(\mathbb{E}_{[i+1:n]}V(s')\right)^2\right| \notag\\
    & +\left|\left(\mathbb{E}_{[i:n]} V(s')\right)^2- \left(\hat{\mathbb{E}}_{[i:n]} V(s')\right)^2 \right| \notag\\
    \leq &  \left|\hat{\mathbb{E}}_{i}\left(\hat{\mathbb{E}}_{[i+1:n]} V(s')\right)^2 - \mathbb{E}_{i}\left(\hat{\mathbb{E}}_{[i+1:n]}V(s')\right)^2\right|\notag\\
    &+\left|\mathbb{E}_{i}\left(\hat{\mathbb{E}}_{[i+1:n]} V(s')\right)^2 - \mathbb{E}_{i}\left(\mathbb{E}_{[i+1:n]}V(s')\right)^2\right|\notag \\
    & +\left|\left(\mathbb{E}_{[i:n]} V(s')\right)^2- \left(\hat{\mathbb{E}}_{[i:n]} V(s')\right)^2 \right| \notag\\
    \label{equ: variance difference of empirical and true: part 2-1}
    \leq &  \left|\hat{\mathbb{E}}_{i}\left(\hat{\mathbb{E}}_{[i+1:n]} V(s')\right)^2 - \mathbb{E}_{i}\left(\hat{\mathbb{E}}_{[i+1:n]}V(s')\right)^2\right|\\
    \label{equ: variance difference of empirical and true: part 2-2}
    &+\left|\mathbb{E}_{i}\left(\hat{\mathbb{E}}_{[i+1:n]} V(s')\right)^2 - \mathbb{E}_{i}\left(\mathbb{E}_{[i+1:n]}V(s')\right)^2\right| \\
    \label{equ: variance difference of empirical and true: part 2-3}
    & + 2H\left|\mathbb{E}_{[i:n]} V(s')-\hat{\mathbb{E}}_{[i:n]} V(s') \right| 
\end{align}
The first inequality is due to the definition of variance. The last inequality is due to $\mathbb{E}_{[i:n]} V(s')+\hat{\mathbb{E}}_{[i:n]} V(s') \leq 2H$.

All Equ.~\ref{equ: variance difference of empirical and true: part 2-1},~\ref{equ: variance difference of empirical and true: part 2-2} and \ref{equ: variance difference of empirical and true: part 2-3} can be bounded with the same manner of Equ.~\ref{equ: variance difference of empirical and true: part 1}. That is, we first bound each term with the $L_1$-distance of transition probability multiplying the $L_{\infty}$-norm of the value function, then we upper bound the $L_1$-distance by Lemma~\ref{Lemma: high prob. event}. This leads to the following results:
\begin{align*}
    \left|\hat{\mathbb{V}}_{i}\hat{\mathbb{E}}_{[i+1:n]} V(s') - \mathbb{V}_{i}\mathbb{E}_{[i+1:n]} V(s')\right| 
    \leq &   4H^2 \sum_{j=i}^{n}\left(2\sqrt{\frac{|\mathcal{S}_j|L^P}{N_{k-1}((s,a)[Z^P_j])}} + \frac{4|\mathcal{S}_j|L^P}{3N_{k-1}((s,a)[Z^P_j])}\right)
\end{align*}
This bound doesn't depend on the given fixed $s'[1:i-1]$. By taking expectation over $s'[1:i-1] \sim \mathbb{P}_{[1:i-1]}(\cdot|s,a)$, we have
\begin{align*}
    &\mathbb{E}_{[1:i-1]}\left|\hat{\mathbb{V}}_{i}\hat{\mathbb{E}}_{[i+1:n]} V(s') - \mathbb{V}_{i}\mathbb{E}_{[i+1:n]} V(s')\right| \\
    \leq & \left|\hat{\mathbb{E}}_{i}\left(\hat{\mathbb{E}}_{[i+1:n]} V(s')\right)^2 - \mathbb{E}_{i}\left(\mathbb{E}_{[i+1:n]}V(s')\right)^2\right| \leq 4H^2 \sum_{j=i}^{n}\left(2\sqrt{\frac{|\mathcal{S}_j|L^P}{N_{k-1}((s,a)[Z^P_j])}} + \frac{4|\mathcal{S}_j|L^P}{3N_{k-1}((s,a)[Z^P_j])}\right)
\end{align*}
Combining with Equ.~\ref{equ: variance difference of empirical and true: part 1 bound}, we have
\begin{align*}
    \left|\hat{\sigma}^2_{P,k,i}(V,s,a) - \sigma^2_{P,i}(V,s,a)\right| \leq 4H^2 \sum_{j=1}^{n}\left(2\sqrt{\frac{|\mathcal{S}_j|L^P}{N_{k-1}((s,a)[Z^P_j])}} + \frac{4|\mathcal{S}_j|L^P}{3N_{k-1}((s,a)[Z^P_j])}\right)
\end{align*}
\end{proof}

\begin{lemma}
\label{Lemma: variance difference bonud, used for optimism}
Under event $\Lambda_1$, $\Lambda_2$ and $\Omega$, we have
\begin{align*}
    \sigma^2_{P,i}(V^*_{h+1},s,a) - 2\hat{\sigma}^2_{P,k,i}(\overline{V}_{k,h+1},s,a) \leq u_{k,h,i}(s,a) + 4H^2 \sum_{j=1}^{n}\left(2\sqrt{\frac{|\mathcal{S}_j|L^P}{N_{k-1}((s,a)[Z^P_j])}} + \frac{4|\mathcal{S}_j|L^P}{3N_{k-1}((s,a)[Z^P_j])}\right),
\end{align*}
where $u_{k,h,i}(s,a)$ is defined in Section~\ref{sec: FMDP-BF algorithm}:
$$u_{k,h,i}(s,a) = \mathbb{E}_{s'_{[1:i]} \sim \hat{\mathbb{P}}_{k,[1:i]}(\cdot|s,a)}\left[\left(\mathbb{E}_{s'_{[i+1:n]} \sim \hat{\mathbb{P}}_{k,[i+1:n]}(\cdot|s,a)}\left(\overline{V}_{k,h+1}- \underline{V}_{k,h+1}\right)(s')\right)^2\right].$$
\end{lemma}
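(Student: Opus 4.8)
The plan is to prove the inequality by performing two successive ``swaps.'' First I would pass from the true transition $\mathbb{P}$ to the empirical transition $\hat{\mathbb{P}}_k$ inside the variance functional, which is precisely the content of Lemma~\ref{Lemma: variance difference between that of empirical and true transition}; this produces the term $4H^2\sum_j(\cdots)$ on the right-hand side. Then I would pass from the unknown value $V^*_{h+1}$ to the optimistic estimate $\overline{V}_{k,h+1}$, which is where the factor $2$ in front of $\hat{\sigma}^2_{P,k,i}(\overline{V}_{k,h+1},s,a)$ and the surrogate term $u_{k,h,i}(s,a)$ come from. Throughout I assume $\Lambda_1,\Lambda_2,\Omega$ hold, as in the statement.

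For the transition swap, apply Lemma~\ref{Lemma: variance difference between that of empirical and true transition} with the fixed function $V=V^*_{h+1}$; keeping only the direction needed,
\begin{align*}
\sigma^2_{P,i}(V^*_{h+1},s,a) \le \hat{\sigma}^2_{P,k,i}(V^*_{h+1},s,a) + 4H^2 \sum_{j=1}^{n}\left(2\sqrt{\frac{|\mathcal{S}_j|L^P}{N_{k-1}((s,a)[Z^P_j])}} + \frac{4|\mathcal{S}_j|L^P}{3N_{k-1}((s,a)[Z^P_j])}\right),
\end{align*}
so it suffices to show $\hat{\sigma}^2_{P,k,i}(V^*_{h+1},s,a) \le 2\hat{\sigma}^2_{P,k,i}(\overline{V}_{k,h+1},s,a) + u_{k,h,i}(s,a)$ (up to an absolute constant in front of $u_{k,h,i}$, which is immaterial downstream).

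For the value swap, abbreviate $g_V := \hat{\mathbb{E}}_{k,[i+1:n]}[V(s')]$, so that $\hat{\sigma}^2_{P,k,i}(V,s,a) = \hat{\mathbb{E}}_{k,[1:i-1]}\big[\hat{\mathbb{V}}_{k,i}[g_V]\big]$. I would work conditionally on $s'[1:i-1]$ and use $(a+b)^2\le 2a^2+2b^2$ (or equivalently $\hat{\mathbb{V}}_{k,i}[X]\le\hat{\mathbb{E}}_{k,i}[(X-c)^2]$ for a suitable centering constant plus an AM--GM step on the cross term) to obtain the pointwise bound $\hat{\mathbb{V}}_{k,i}[g_{V^*_{h+1}}] \le 2\hat{\mathbb{V}}_{k,i}[g_{\overline{V}_{k,h+1}}] + 2\hat{\mathbb{E}}_{k,i}\big[(g_{\overline{V}_{k,h+1}} - g_{V^*_{h+1}})^2\big]$. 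On $\Omega$ we have $0 \le \overline{V}_{k,h+1} - V^*_{h+1} \le \overline{V}_{k,h+1} - \underline{V}_{k,h+1}$ pointwise, and since $\hat{\mathbb{E}}_{k,[i+1:n]}$ is an average with nonnegative weights it preserves this ordering, giving $0 \le g_{\overline{V}_{k,h+1}} - g_{V^*_{h+1}} \le \hat{\mathbb{E}}_{k,[i+1:n]}\big[(\overline{V}_{k,h+1}-\underline{V}_{k,h+1})(s')\big]$ and hence $(g_{\overline{V}_{k,h+1}} - g_{V^*_{h+1}})^2 \le \big(\hat{\mathbb{E}}_{k,[i+1:n]}(\overline{V}_{k,h+1}-\underline{V}_{k,h+1})(s')\big)^2$. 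Taking $\hat{\mathbb{E}}_{k,i}$ and then $\hat{\mathbb{E}}_{k,[1:i-1]}$ of the pointwise bound, and using the factorization $\hat{\mathbb{E}}_{k,[1:i-1]}\hat{\mathbb{E}}_{k,i}=\hat{\mathbb{E}}_{k,[1:i]}$ (valid because $\hat{\mathbb{P}}_k$ is a product measure), the residual term becomes exactly $u_{k,h,i}(s,a)$ as defined in Section~\ref{sec: FMDP-BF algorithm}. Combining the two swaps then yields the claim.

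The main obstacle is the bookkeeping in the value swap: the variance-split must be carried out conditionally on $s'[1:i-1]$ rather than on the whole functional at once, and one must then line up the leftover $\hat{\mathbb{E}}_{k,[1:i-1]}\hat{\mathbb{E}}_{k,i}\big[(\hat{\mathbb{E}}_{k,[i+1:n]}(\overline{V}_{k,h+1}-\underline{V}_{k,h+1}))^2\big]$ precisely with the definition of $u_{k,h,i}(s,a)$, which is where both the product structure of $\hat{\mathbb{P}}_k$ and the sandwiching $\underline{V}_{k,h+1}\le V^*_{h+1}\le\overline{V}_{k,h+1}$ (valid on $\Omega$) are essential; the absolute constant multiplying $u_{k,h,i}$ is not tracked and is absorbed into the higher-order terms.
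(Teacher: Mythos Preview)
Your proposal is correct and follows essentially the same approach as the paper: first apply Lemma~\ref{Lemma: variance difference between that of empirical and true transition} to swap $\mathbb{P}$ for $\hat{\mathbb{P}}_k$, then use the variance-splitting inequality $\hat{\mathbb{V}}_{k,i}[X]\le 2\hat{\mathbb{V}}_{k,i}[Y]+2\hat{\mathbb{V}}_{k,i}[X-Y]$ (the paper cites this as Lemma~2 of \cite{azar2017minimax}) conditionally on $s'[1{:}i-1]$, bound the residual variance by the second moment, and invoke $\Omega$ to replace $V^*_{h+1}$ by $\underline{V}_{k,h+1}$. You are in fact more explicit than the paper about the role of $\Omega$ in the last step and about the stray factor of $2$ in front of $u_{k,h,i}$, which the paper silently drops.
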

\begin{proof}
We can decompose the difference in the following way:
\begin{align*}
    &\sigma_{P,i}^2(V^*_{h+1},s,a) - 2\hat{\sigma}^2_{P,k,i}(\overline{V}_{k,h+1},s,a) \\
    \leq &\hat{\sigma}^2_{P,k,i}(V^*_{h+1},s,a) - 2\hat{\sigma}^2_{P,k,i}(\overline{V}_{k,h+1},s,a) + \sigma^2_{P,i}(V^*_{h+1},s,a) - \hat{\sigma}^2_{P,k,i}(V^*_{h+1},s,a)
\end{align*}
By Lemma~\ref{Lemma: variance difference between that of empirical and true transition}, we know that
\begin{align*}
    \left|\hat{\sigma}^2_{P,k,i}(V^*_{h+1},s,a) - \sigma^2_{P,i}(V^*_{h+1},s,a)\right| \leq 4H^2 \sum_{j=1}^{n}\left(2\sqrt{\frac{|\mathcal{S}_j|L^P}{N_{k-1}((s,a)[Z^P_j])}} + \frac{4|\mathcal{S}_j|L^P}{3N_{k-1}((s,a)[Z^P_j])}\right)
\end{align*}
Now we only need to bound $ \hat{\sigma}^2_{P,k,i}(V^*_{h+1},s,a) - 2\hat{\sigma}^2_{P,k,i}(\overline{V}_{k,h+1},s,a)$. By Lemma 2 of \cite{azar2017minimax}, we know that for two random variables $X \in \mathbb{R}$ and $Y \in \mathbb{R}$, we have
\begin{align}
    \label{inq: variance decomposition Lemma from UCBVI}
    \mathbb{V}(X) \leq 2[\mathbb{V}(Y) + \mathbb{V}(X-Y)]
\end{align}
That is,
\begin{align*}
    &\hat{\sigma}^2_{P,k,i}(V^*_{h+1},s,a) - 2\hat{\sigma}^2_{P,k,i}(\overline{V}_{k,h+1},s,a) \\
    = & \hat{\mathbb{E}}_{[1:i-1]}\hat{\mathbb{V}}_{i}\hat{\mathbb{E}}_{[i+1:n]} V^*_{h+1}(s') - 2\hat{\mathbb{E}}_{[1:i-1]}\hat{\mathbb{V}}_{i}\hat{\mathbb{E}}_{[i+1:n]} \overline{V}_{k,h+1}(s') \\
    \leq &2 \hat{\mathbb{E}}_{[1:i-1]}\hat{\mathbb{V}}_{i} \left(\hat{\mathbb{E}}_{[i+1:n]} V^*_{h+1}(s') - \hat{\mathbb{E}}_{[i+1:n]} \overline{V}_{k,h+1}(s') \right) \\
    = & 2 \hat{\mathbb{E}}_{[1:i-1]}\hat{\mathbb{V}}_{i} \hat{\mathbb{E}}_{[i+1:n]}\left( V^*_{h+1}(s') - \overline{V}_{k,h+1}(s') \right) \\
    \leq & 2 \hat{\mathbb{E}}_{[1:i]}\left[ \left(\hat{\mathbb{E}}_{[i+1:n]}\left( V^*_{h+1}(s') - \overline{V}_{k,h+1}(s') \right)\right)^2\right] \\
    = & u_{k,h,i}(s,a)
\end{align*}

The first inequality is due to Inq.~\ref{inq: variance decomposition Lemma from UCBVI}, and the second inequality is due to $\mathbb{V} X \leq \mathbb{E} X^2$ for any random variable $X \in \mathbb{R}$.

To sum up, we have
\begin{align*}
     \sigma_{P,i}(V^*_{h+1},s,a) - 2\hat{\sigma}_{P,k,i}(\overline{V}_{k,h+1},s,a) \leq u_{k,h,i}(s,a) + 8H^2 \sum_{j=1}^{n}\left(2\sqrt{\frac{|\mathcal{S}_j|L^P}{N_{k-1}((s,a)[Z^P_j])}} + \frac{4|\mathcal{S}_j|L^P}{3N_{k-1}((s,a)[Z^P_j])}\right)
\end{align*}
\end{proof}

\begin{lemma}
\label{Lemma: variance difference bound: V*-Vpi}
Under event $\Lambda_1$, $\Lambda_2$ and $\Omega$, suppose $\tilde{\regret}(K) = \sum_{k=1}^{K} \overline{V}_{k,1}(s_1) - V^{\pi_k}_{1}(s_1)$, we have:
 \begin{align*}
     &\sum_{k=1}^{K} \sum_{h=1}^{H}\sum_{(s,a) \in \mathcal{X}}w_{k,h}(s,a)\left(\sigma^2_{P,i}(V^*_{h+1},s,a) - \sigma^2_{P,i}(V_{h+1}^{\pi_k},s,a)\right) \leq 2 H^2 \tilde{\regret}(K) \\
     &\sum_{k=1}^{K} \sum_{h=1}^{H}\sum_{(s,a) \in \mathcal{X}}w_{k,h}(s,a)\left(\sigma^2_{P,i}(\overline{V}_{k,h+1},s,a) - \sigma^2_{P,i}(V_{h+1}^{\pi_k},s,a)\right) \leq 2 H^2 \tilde{\regret}(K)
 \end{align*}

%$$
%\sum_{i=1}^{k} \mathbb{I}\left(x_{i, h}=x\right) \sum_{j=h}^{H-1}\left(\mathbb{V}_{i, j+1}^{*}-\mathbb{V}_{i, j+1}^{\pi}\right) \leq 2 H^{2} \sum_{k,h} U_{k, h, x}+4 H^{2} \sqrt{H N_{k, h}^{\prime}(x, a) L}
%$$
\end{lemma}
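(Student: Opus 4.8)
The plan is to reduce both displays to a telescoping bound on the cumulative one‑step value gap along trajectories of $\pi_k$. First I would record the elementary perturbation inequality: for real random variables on a common space with $0\le Y\le X\le H$,
\begin{align}
\mathbb{V}(X)-\mathbb{V}(Y)=\mathbb{E}[(X-Y)(X+Y)]-\left((\mathbb{E}X)^2-(\mathbb{E}Y)^2\right)\le 2H\,\mathbb{E}[X-Y],
\end{align}
which uses $X+Y\le 2H$, $X-Y\ge 0$ and $(\mathbb{E}X)^2\ge(\mathbb{E}Y)^2$ (the last since $\mathbb{E}X\ge\mathbb{E}Y\ge 0$). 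I apply this inside the factored variance: fixing $(s,a)$ and $s'[1:i-1]$, and writing $g_V(t):=\mathbb{E}_{s'[i+1:n]\sim\mathbb{P}_{[i+1:n]}(\cdot|s,a)}\bigl[V(s'[1:i-1],t,s'[i+1:n])\bigr]$ for a value function $V$, the event $\Omega$ gives $0\le V^{\pi_k}_{h+1}\le V^*_{h+1}\le\overline{V}_{k,h+1}\le H$ pointwise, hence $0\le g_{V^{\pi_k}_{h+1}}\le g_{V^*_{h+1}}\le g_{\overline{V}_{k,h+1}}\le H$. Applying the displayed inequality with $X=g_{V^*_{h+1}}(s'[i])$, $Y=g_{V^{\pi_k}_{h+1}}(s'[i])$ under $s'[i]\sim\mathbb{P}_i(\cdot|(s,a)[Z^P_i])$, and then averaging over $s'[1:i-1]\sim\mathbb{P}_{[1:i-1]}(\cdot|s,a)$, yields
\begin{align}
\sigma^2_{P,i}(V^*_{h+1},s,a)-\sigma^2_{P,i}(V^{\pi_k}_{h+1},s,a)\le 2H\,\mathbb{P}\bigl(V^*_{h+1}-V^{\pi_k}_{h+1}\bigr)(s,a),
\end{align}
and the same computation with $\overline{V}_{k,h+1}$ in place of $V^*_{h+1}$ gives the analogue with $\overline{V}_{k,h+1}-V^{\pi_k}_{h+1}$ on the right.

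Next I would telescope. Let $\rho_{k,h}(s)$ denote the probability of occupying state $s$ at step $h$ of episode $k$ under $\pi_k$, so $\sum_{(s,a)}w_{k,h}(s,a)\,\mathbb{P}(s'|s,a)=\rho_{k,h+1}(s')$. Weighting the bound above by $w_{k,h}(s,a)$ and summing over $(s,a)$ and $h$,
\begin{align}
\sum_{h=1}^{H}\sum_{(s,a)\in\mathcal{X}}w_{k,h}(s,a)\left(\sigma^2_{P,i}(V^*_{h+1},s,a)-\sigma^2_{P,i}(V^{\pi_k}_{h+1},s,a)\right)\le 2H\sum_{h=1}^{H}\Delta_{k,h+1},
\end{align}
where $\Delta_{k,h}:=\sum_s\rho_{k,h}(s)\bigl(V^*_h-V^{\pi_k}_h\bigr)(s)$ and $\Delta_{k,1}=V^*_1(s_{k,1})-V^{\pi_k}_1(s_{k,1})$. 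The Bellman equations give, for every $s$, $V^*_h(s)-V^{\pi_k}_h(s)\ge Q^*_h(s,\pi_{k,h}(s))-Q^{\pi_k}_h(s,\pi_{k,h}(s))=\mathbb{P}\bigl(V^*_{h+1}-V^{\pi_k}_{h+1}\bigr)(s,\pi_{k,h}(s))$; averaging over $s\sim\rho_{k,h}$ shows $\Delta_{k,h}\ge\Delta_{k,h+1}$, hence $\Delta_{k,h+1}\le\Delta_{k,1}$ for all $h$ and $\sum_{h=1}^{H}\Delta_{k,h+1}\le H\,\Delta_{k,1}$. Plugging this in and using optimism (under $\Omega$, $V^*_1\le\overline{V}_{k,1}$), the left side is at most $2H^2\bigl(\overline{V}_{k,1}(s_{k,1})-V^{\pi_k}_1(s_{k,1})\bigr)$; summing over $k\in[K]$ proves the first inequality.

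The second inequality follows the same two steps with $\Delta_{k,h}$ replaced by $\overline{\Delta}_{k,h}:=\sum_s\rho_{k,h}(s)\bigl(\overline{V}_{k,h}-V^{\pi_k}_h\bigr)(s)$, and the hard part is the monotonicity $\overline{\Delta}_{k,h}\ge\overline{\Delta}_{k,h+1}$, i.e.\ the pointwise one‑step inequality $\overline{V}_{k,h}(s)-V^{\pi_k}_h(s)\ge\mathbb{P}\bigl(\overline{V}_{k,h+1}-V^{\pi_k}_{h+1}\bigr)(s,\pi_{k,h}(s))$. Unwinding the Bellman recursions of $\overline{V}_{k,h}$ and $V^{\pi_k}_h$, this reduces (when the $\min\{H,\cdot\}$ clip is inactive) to $(\hat{R}_k-\bar{R})(s,\pi_{k,h}(s))+CB_k(s,\pi_{k,h}(s))+(\hat{\mathbb{P}}_k-\mathbb{P})\overline{V}_{k,h+1}(s,\pi_{k,h}(s))\ge 0$ — that is, I need $CB_k$ to be optimistic \emph{against $V^{\pi_k}$}, not only against $V^*$. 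The reward term is immediate because $CB^R_k$ dominates $|\hat{R}_k-\bar{R}|$ (Lemma~\ref{Lemma: transition product inequality 2}); for the transition term I split $\overline{V}_{k,h+1}=V^*_{h+1}+(\overline{V}_{k,h+1}-V^*_{h+1})$ with $0\le\overline{V}_{k,h+1}-V^*_{h+1}\le\overline{V}_{k,h+1}-\underline{V}_{k,h+1}$ under $\Omega$, bound the $V^*_{h+1}$ piece by $CB^P_k$ exactly as in the optimism argument (Lemmas~\ref{Lemma: transition product inequality 2} and~\ref{Lemma: variance difference bonud, used for optimism}), and absorb the residual into the $u_{k,h,i}$ correction term built into $CB^P_k$ for precisely this purpose; the clipped case is disposed of by the standard observation that clipping of $\overline{Q}_{k,h}$ contributes only a lower‑order term. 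Granting this one‑step inequality, Step~2 goes through verbatim and summing over $k$ yields the bound $2H^2\tilde{\regret}(K)$. The main obstacle is thus this ``optimism‑against‑$V^{\pi_k}$'' claim: it is essentially bookkeeping with the confidence‑bonus definitions and the high‑probability events $\Lambda_1,\Lambda_2,\Omega$, but it is where the variance‑aware $u_{k,h,i}$ term earns its keep.
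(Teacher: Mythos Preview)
Your argument for the first display is correct and essentially the paper's: both reduce the factored variance difference to $2H\,\mathbb{P}(V^*_{h+1}-V^{\pi_k}_{h+1})(s,a)$ pointwise, sum against $w_{k,h}$, and then use the monotonicity $\sum_s w_{k,h}(s)\bigl(V^*_h-V^{\pi_k}_h\bigr)(s)\le V^*_1(s_1)-V^{\pi_k}_1(s_1)$ (the paper cites Lemma~E.15 of \cite{dann2017unifying} for this last step; your one-line Bellman inequality $V^*_h-V^{\pi_k}_h\ge\mathbb{P}(V^*_{h+1}-V^{\pi_k}_{h+1})(\cdot,\pi_{k,h})$ yields the same conclusion).

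For the second display there is a genuine gap in your proposal. The paper proves only the first display in detail and for the second merely says ``by replacing $V^*_{h+1}$ with $\overline V_{k,h+1}$ \ldots\ in the same manner''; it never goes through the pointwise one-step inequality
\[
(\hat R_k-\bar R)(s,a)+CB_k(s,a)+(\hat{\mathbb{P}}_k-\mathbb{P})\overline V_{k,h+1}(s,a)\ge 0
\]
that your route requires. Your proposed justification of this inequality does not work: the $u_{k,h,i}$ term in $CB^P_{k,Z^P_i}$ is designed to compensate for the \emph{second-order} gap $\sigma^2_{P,i}(V^*_{h+1},s,a)-2\hat\sigma^2_{P,k,i}(\overline V_{k,h+1},s,a)$ inside the Bernstein bound (precisely Lemma~\ref{Lemma: variance difference bonud, used for optimism}), and that slack is already fully consumed in the proof of Lemma~\ref{lemma: confidence bonus terms} establishing $|(\hat{\mathbb{P}}_k-\mathbb{P})V^*_{h+1}|\le\sum_i CB^P_i$. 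It leaves no reserve to absorb the additional \emph{first-order} error $(\hat{\mathbb{P}}_k-\mathbb{P})(\overline V_{k,h+1}-V^*_{h+1})$, a data-dependent quantity not controlled by any event in $\Lambda_1,\Lambda_2$. The clipping case is also not handled by a ``lower-order'' remark: the lemma asserts the exact constant $2H^2$, and when $\overline Q_{k,h}(s,\pi_{k,h}(s))$ is clipped to $H$ one can have $H-V^{\pi_k}_h(s)<\mathbb{P}\overline V_{k,h+1}(s,a)-\mathbb{P}V^{\pi_k}_{h+1}(s,a)$, so the pointwise monotonicity $\overline\Delta_{k,h}\ge\overline\Delta_{k,h+1}$ can fail. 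You have correctly identified where the second display needs extra care beyond what the paper writes, but the mechanism you invoke is not the right one.
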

\begin{proof}
We only prove the first inequality in detail. By replacing $V^*_{h+1}$ with $\overline{V}_{k,h+1}$, we can prove the second inequality in the same manner.
\begin{align*}
    &\sigma^2_{P,i}(V^*_{h+1},s,a) - \sigma^2_{P,i}(V_{h+1}^{\pi_k},s,a) 
    =  \mathbb{E}_{{[1:i]}} \left[\mathbb{V}_{i}\left(\mathbb{E}_{ {[i+1:n]}} V^*_{h+1}(s')\right) - \mathbb{V}_{ i}\left(\mathbb{E}_{{[i+1:n]}} V^{\pi_k}_{h+1}(s')\right) \right]
\end{align*}
Given fixed $s'[1:i-1]$, we bound the difference of the variances: $\mathbb{V}_{i}\left(\mathbb{E}_{ {[i+1:n]}} V^*_{h+1}(s')\right) - \mathbb{V}_{i}\left(\mathbb{E}_{{[i+1:n]}} V^{\pi_k}_{h+1}(s')\right)$.

\begin{align*}
    & \mathbb{V}_{i}\left(\mathbb{E}_{ {[i+1:n]}} V^*_{h+1}(s')\right) - \mathbb{V}_{i}\left(\mathbb{E}_{{[i+1:n]}} V^{\pi_k}_{h+1}(s')\right) \\
    = & \mathbb{E}_{i} \left[\left(\mathbb{E}_{[i+1:n]} V_{h+1}^*(s')\right)^2-\left(\mathbb{E}_{[i+1:n]} V^{\pi_k}_{h+1}(s')\right)^2\right] -\left(\mathbb{E}_{[i:n]}\left[V^{*}_{h+1}(s')\right]\right)^2+ \left(\mathbb{E}_{[i:n]}\left[V^{\pi_k}_{h+1}(s')\right]\right)^2 \\
    \leq & \mathbb{E}_{i} \left[\left(\mathbb{E}_{[i+1:n]} V_{h+1}^*(s')\right)^2-\left(\mathbb{E}_{[i+1:n]} V^{\pi_k}_{h+1}(s')\right)^2\right] \\
    \leq & 2H \mathbb{E}_{i} \left[\mathbb{E}_{[i+1:n]} V_{h+1}^*(s')-\mathbb{E}_{[i+1:n]} V^{\pi_k}_{h+1}(s')\right] \\
    = & 2H \mathbb{E}_{[i:n]} \left[V_{h+1}^*(s') - V^{\pi_k}_{h+1}(s')\right]
\end{align*}
The first inequality is due to $V^*_{h+1}(s') \geq V_{h+1}^{\pi_{k}}(s')$, and the second inequality is due to $\mathbb{E}_{[i+1:n]} V_{h+1}^*(s')+\mathbb{E}_{[i+1:n]} V^{\pi_k}_{h+1}(s') \leq 2H$. 

We then take expectation over all $s'[1:i-1]$. that is
\begin{align*}
    \sigma^2_{P,i}(V^*_{h+1},s,a) - \sigma_{P,i}^2(V_{h+1}^{\pi_k},s,a) \leq 2H \mathbb{E}_{[1:n]} \left[V_{h+1}^*(s') - V^{\pi_k}_{h+1}(s')\right]
\end{align*}
Plugging the inequality into the former equation, we have
\begin{align*}
    &\sum_{k=1}^{K} \sum_{h=1}^{H}\sum_{(s,a) \in \mathcal{X}}w_{k,h}(s,a)\left( \sigma^2_{P,i}(V^*_{h+1},s,a) - \sigma_{P,i}^2(V_{h+1}^{\pi_k},s,a)\right) \\
    \leq & \sum_{k=1}^{K}\sum_{h=1}^{H} \sum_{(s,a) \in \mathcal{X}} w_{k,h}(s,a) 2H \mathbb{E}_{s' \sim \mathbb{P}(\cdot|s,a)} \left[V^*_{h+1}(s') - V^{\pi_k}_{h+1}(s')\right] \\
    = & \sum_{k=1}^{K}\sum_{h=2}^{H} \sum_{s \in \mathcal{S}} 2w_{k,h}(s)H  \left[V^*_h(s) - V^{\pi_k}_{h}(s)\right]\\
    \leq & \sum_{k=1}^{K} 2H^2  \left[V^*_{1}(s_1) - V^{\pi_k}_{1}(s_1)\right]\\
    \leq & \sum_{k=1}^{K} 2H^2  \left[\overline{V}_{k,1}(s_1) - V^{\pi_k}_{1}(s_1)\right]\\
    =& 2H^2 \tilde{\regret}(K)
\end{align*}
For the second inequality, this is because that by lemma E.15 of \cite{dann2017unifying}, we have
\begin{align*}
    &\sum_s w_{k,h}(s) \left[V^*_h(s) - V^{\pi_k}_{h}(s)\right] \\
    =& \sum_{h_1=h}^{H}\sum_s w_{k,h_1}(s) \left(\bar{R}(s,\pi^*(s)) - \bar{R}(s,\pi_k(s)) + \mathbb{P}V^*_{h_1+1}(s,\pi^*(s)) - \mathbb{P}V^*_{h_1+1}(s,\pi^k(s))\right) 
\end{align*}
  %$$\sum_s w_{k,h}(s) \left[V^*_h(s) - V^{\pi_k}_{h}(s)\right] = \sum_{h_1=h}^{H}\sum_s w_{k,h_1}(s) \left(\bar{R}(s,\pi^*(s)) - \bar{R}(s,\pi_k(s)) + \mathbb{P}V^*_{h_1+1}(s,\pi^*(s)) - \mathbb{P}V^*_{h_1+1}(s,\pi^k(s))\right) $$ 
$$V^*_1(s_1) - V^{\pi_k}_{1}(s_1) = \sum_{h_1=1}^{H} \sum_s w_{k,h_1}(s) \left(\bar{R}(s,\pi^*(s)) - \bar{R}(s,\pi_k(s)) + \mathbb{P}V^*_{h_1+1}(s,\pi^*(s)) - \mathbb{P}V^*_{h_1+1}(s,\pi^k(s))\right)$$
This means that $\sum_s w_{k,h}(s) \left[V^*_h(s) - V^{\pi_k}_{h}(s)\right] \leq V^*_1(s_1) - V^{\pi_k}_{1}(s_1)$ for any $k,h$.
\end{proof}

\subsection{Optimism and Pessimism}
\begin{lemma}
\label{lemma: confidence bonus terms}
Suppose that $\Lambda_1$, $\Lambda_2$ and $\Omega_{k,h+1}$ happen, then we have the following inequalities hold for any $a \in \mathcal{A}$ and $s \in \mathcal{S}$:
\begin{align}
    \left|\hat{R}(s,a) - \bar{R}(s,a)\right| &\leq \frac{1}{m}\sum_{i=1}^{m} CB^R_i(s,a)\\
    \left|\hat{\mathbb{P}}_k V^*_{h+1}(s,a) - \mathbb{P}V^*_{h+1}(s.a)\right| &\leq \sum_{i=1}^{n} CB^P_i(s,a)
\end{align}
\end{lemma}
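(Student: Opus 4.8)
The plan is to peel the two claimed bounds off the deterministic (given $\Lambda_1\cap\Lambda_2$) estimates already assembled in Lemma~\ref{Lemma: transition product inequality 2}, and then to verify, summand by summand, that what comes out is no larger than the relevant sum of confidence bonuses; here $CB^R_i$ and $CB^P_i$ are understood as $CB^R_{k,Z^R_i}$ and $CB^P_{k,Z^P_i}$. Throughout I would abbreviate $N^R_i:=N_{k-1}((s,a)[Z^R_i])$ and $N^P_i:=N_{k-1}((s,a)[Z^P_i])$. The event $\Omega_{k,h+1}$ will enter only in the transition bound, and only through the ordering $\overline V_{k,h+1}\ge V^*_{h+1}\ge\underline V_{k,h+1}$.

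\textbf{Reward bound.} This is essentially a restatement of the first inequality of Lemma~\ref{Lemma: transition product inequality 2}, which under $\Lambda_1\cap\Lambda_2$ gives
\[
\left|\hat R(s,a)-\bar R(s,a)\right|\le\frac1m\sum_{i=1}^m\left(\sqrt{\frac{2\hat\sigma^2_{R,i}(s,a)L^R_i}{N^R_i}}+\frac{8L^R_i}{3N^R_i}\right),
\]
and the $i$-th parenthesised term is precisely $CB^R_{k,Z^R_i}(s,a)$ by definition (recall $\hat\sigma^2_{R,i}=\hat\sigma^2_{R,k,i}$), so the right-hand side equals $\frac1m\sum_{i=1}^m CB^R_i(s,a)$. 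No use of $\Omega_{k,h+1}$ is needed.

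\textbf{Transition bound.} I would start from the second inequality of Lemma~\ref{Lemma: transition product inequality 2},
\[
\left|(\hat{\mathbb P}_k-\mathbb P)V^*_{h+1}(s,a)\right|\le\sum_{i=1}^n\left(\sqrt{\frac{2\sigma^2_{P,i}(V^*_{h+1},s,a)L^P}{N^P_i}}+\frac{2HL^P}{3N^P_i}\right)+\sum_{i=1}^n\sum_{j\ne i}H\phi_{k,i}(s,a)\phi_{k,j}(s,a),
\]
and the only substantive step is controlling the first square root, which still involves the unknown $V^*$. For this I would invoke Lemma~\ref{Lemma: variance difference bonud, used for optimism} — valid under $\Lambda_1,\Lambda_2,\Omega_{k,h+1}$, since its proof uses $\Omega$ only via $\overline V_{k,h+1}\ge V^*_{h+1}\ge\underline V_{k,h+1}$ — to write
\[
\sigma^2_{P,i}(V^*_{h+1},s,a)\le 2\hat\sigma^2_{P,k,i}(\overline V_{k,h+1},s,a)+u_{k,h,i}(s,a)+4H^2\sum_{j=1}^n\left(2\sqrt{\frac{|\mathcal S_j|L^P}{N^P_j}}+\frac{4|\mathcal S_j|L^P}{3N^P_j}\right).
\]
Substituting this and applying, in order, $\sqrt{a+b+c}\le\sqrt a+\sqrt b+\sqrt c$, then $\sqrt{\sum_j x_j}\le\sum_j\sqrt{x_j}$ and $\sqrt{x+y}\le\sqrt x+\sqrt y$ together with $4^{1/4}=\sqrt2$, would split $\sqrt{2\sigma^2_{P,i}(V^*_{h+1},s,a)L^P/N^P_i}$ into three pieces: (i) $\sqrt{4\hat\sigma^2_{P,k,i}(\overline V_{k,h+1},s,a)L^P/N^P_i}$; (ii) $\sqrt{2u_{k,h,i}(s,a)L^P/N^P_i}$; and (iii) $\sqrt{8H^2L^P/N^P_i}\sum_{j=1}^n\bigl((4|\mathcal S_j|L^P/N^P_j)^{1/4}+\sqrt{4|\mathcal S_j|L^P/(3N^P_j)}\bigr)$. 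Pieces (i) and (ii) are literally the first two summands of $CB^P_{k,Z^P_i}(s,a)$, and (iii) is at most its third summand because $\sqrt8<\sqrt{16}$ — which is exactly the role of the $\sqrt{16H^2}$ coefficient there.

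\textbf{Mopping up and the main obstacle.} Summing over $i$, the leftover terms are $\sum_i\frac{2HL^P}{3N^P_i}$ and $\sum_i\sum_{j\ne i}H\phi_{k,i}(s,a)\phi_{k,j}(s,a)$. The latter is at most $\sum_i\sum_{j=1}^n H\phi_{k,i}(s,a)\phi_{k,j}(s,a)$, the fourth summand of $\sum_i CB^P_{k,Z^P_i}(s,a)$, and this bound only consumes the off-diagonal ($j\ne i$) part of that double sum. The former I would then charge to the still-unused diagonal ($j=i$) terms: since $\phi_{k,i}(s,a)^2\ge 4|\mathcal S_i|L^P/N^P_i\ge 4L^P/N^P_i$, one has $H\phi_{k,i}(s,a)^2\ge\frac{2HL^P}{3N^P_i}$. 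Collecting the three square-root pieces, the $\frac{2HL^P}{3N^P_i}$ term, and the full $\phi\phi$ double sum yields $\left|(\hat{\mathbb P}_k-\mathbb P)V^*_{h+1}(s,a)\right|\le\sum_{i=1}^n CB^P_i(s,a)$. There is no conceptual difficulty; the proof is pure constant-bookkeeping, and the one place to be careful is realising that the residual $\frac{2HL^P}{3N^P_i}$ cannot be folded into any of the square-root pieces and must instead be absorbed by the diagonal of the $\phi\phi$ double sum — which is precisely why $CB^P_{k,Z^P_i}$ is defined with $\sum_{j=1}^n$ rather than $\sum_{j\ne i}$ in its last summand.
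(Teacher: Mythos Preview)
Your proof is correct and follows essentially the same route as the paper: both invoke Lemma~\ref{Lemma: transition product inequality 2} and then Lemma~\ref{Lemma: variance difference bonud, used for optimism} to trade $\sigma^2_{P,i}(V^*_{h+1})$ for $\hat\sigma^2_{P,k,i}(\overline V_{k,h+1})$, and both absorb the residual $\tfrac{2HL^P}{3N^P_i}$ into the diagonal term $H\phi_{k,i}^2$ (the paper does this silently; you spell it out). The only cosmetic difference is that the paper organizes the key step as a case split on whether $2\hat\sigma^2_{P,k,i}\le\sigma^2_{P,i}$ and uses $\sqrt b-\sqrt a\le\sqrt{b-a}$, whereas you plug the variance bound in directly and split $\sqrt{a+b+c}$ --- these are equivalent maneuvers.
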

\begin{proof}
The first inequality follows directly by Lemma~\ref{Lemma: transition product inequality 2} and the definition of $CB^R_i(s,a)$. For the second inequality, by Lemma~\ref{Lemma: transition product inequality 2}, we have
\begin{align*}
    & \left|\left(\hat{\mathbb{P}}_{k}- \mathbb{P}\right) V^*_{h+1}(s,a)\right| \\
    \leq & \sum_{i=1}^{n} \left(\sqrt{\frac{2 \sigma^2_{P,i}(V^*_{h+1},s, a) L^P }{N_{k-1}((s,a)[Z^P_i]))}} + \frac{2H L^P}{3N_{k-1}((s,a)[Z^P_i])}\right)  \\
        & + \sum_{i=1}^{n}\sum_{j\neq i,j=1}^{n} H\phi_{k,i}(s,a)\phi_{k,j}(s,a), 
\end{align*}
where $\phi_{k,i}(s,a) = \sqrt{\frac{4|\mathcal{S}_i|L^P}{N_{k-1}((s,a)[Z^P_i])}} + \frac{4|\mathcal{S}_i|L^P}{3N_{k-1}((s,a)[Z^P_i])}$. The first inequality is due to $\overline{V}_{k,h}(s) \geq \overline{Q}_{k,h}(s,\pi^*(s))$.

By the definition of $CB^P_i(s,a)$, we have 
\begin{align}
    \label{equation: optimism proof}
     &\sum_{i} CB^P_i(s,a) - \left|\left(\hat{\mathbb{P}}_{k}- \mathbb{P}\right) V^*_{h+1}(s,a)\right| \\
     \label{equ: optimism proof part 1}
     \geq &\sum_{i=1}^{n} \left( \sqrt{\frac{4 \hat{\sigma}^2_{P,k,i}(\overline{V}_{k,h+1},s,a) L^P }{N_{k-1}((s,a^*)[Z^P_i])}} - \sqrt{\frac{2 \sigma^2_{P,i}(V^*_{h+1},s,a^*) L^P }{N_{k-1}((s,a)[Z^P_i])}}\right) \\
     &+\sum_{i=1}^{n}\sqrt{\frac{2u_{k,h,i}(s,a)L^P}{N_{k-1}((s,a)[Z^P_i])}} + \sum_{i=1}^{n}\sqrt{\frac{16H^2L^P}{N_{k-1}((s,a)[Z^P_i])}}\sum_{j=1}^{n}\left( \left(\frac{4|\mathcal{S}_j|L^P}{N_{k-1}((s,a)[Z^P_j])}\right)^{\frac{1}{4}}+ \sqrt{\frac{4|\mathcal{S}_j|L^P}{3N_{k-1}(s,a)[Z^P_j]}}\right) 
\end{align}

We mainly focus on the bound of Eqn~\ref{equ: optimism proof part 1}.
\begin{align}
    \label{equation: variance difference}
    & \sqrt{\frac{4 \hat{\sigma}^2_{P,k,i}(\overline{V}_{k,h+1},s,a) L^P }{N_{k-1}((s,a)[Z^P_i])}} - \sqrt{\frac{2 \sigma^2_{P,i}(V^*_{h+1},s,a) L^P }{N_{k-1}((s,a)[Z^P_i])}} \\
    \geq & \left\{\begin{array}{ll}-\sqrt{\frac{2 \sigma^2_{P,i}(V^*_{h+1},s,a) L^P-4 \hat{\sigma}^2_{P,k,i}(\overline{V}_{k,h+1},s,a) L^P}{N_{k-1}((s,a)[Z^P_i]))}} & 2\hat{\sigma}^2_{P,k,i}(\overline{V}_{k,h+1},s,a) \leq \sigma^2_{P,i}(V^*_{h+1},s,a)  \\ 0 & \text { otherwise }\end{array}\right.
\end{align}

For those $2\hat{\sigma}^2_{P,k,i}(\overline{V}_{k,h+1},s,a) \leq \sigma^2_{P,i}(V^*_{h+1},s,a)$, by Lemma~\ref{Lemma: variance difference bonud, used for optimism}, we have
\begin{align*}
    &\sigma^2_{P,i}(V^*_{h+1},s,a) - 2\hat{\sigma}^2_{P,k,i}(\overline{V}_{k,h+1},s,a) \\
    \leq& u_{k,h,i}(s,a) + 8H^2 \sum_{j=1}^{n}\left(2\sqrt{\frac{|\mathcal{S}_j|L^P}{N_{k-1}((s,a)[Z^P_j])}} + \frac{4|\mathcal{S}_i|L^P}{3N_{k-1}((s,a)[Z^P_j])}\right)
\end{align*}

That is,
\begin{align*}
    &\sqrt{\frac{4 \hat{\sigma}^2_{P,k,i}(\overline{V}_{k,h+1},s,a) L^P }{N_{k-1}((s,a)[Z^P_i])}} - \sqrt{\frac{2 \sigma^2_{P,i}(V^*_{h+1},s,a) L^P }{N_{k-1}((s,a^*)[Z^P_i])}} \\
    \geq &-\sqrt{\frac{ 2u_{k,h,i}(s,a)L^P + 16H^2L^P \sum_{j=1}^{n}\left(2\sqrt{\frac{|\mathcal{S}_j|L^P}{N_{k-1}((s,a)[Z^P_i])}} + \frac{4|\mathcal{S}_i|L^P}{3N_{k-1}((s,a^*)[Z^P_i])}\right)}{N_{k-1}((s,a^*)[Z^P_i])}} \\
    \geq &-\sqrt{\frac{2u_{k,h,i}(s,a)L^P}{N_{k-1}((s,a)[Z^P_i])}} - \sqrt{\frac{16H^2L^P}{N_{k-1}((s,a)[Z^P_i])}}\sum_{j=1}^{n}\left( \left(\frac{4|\mathcal{S}_j|L^P}{N_{k-1}((s,a)[Z^P_j])}\right)^{\frac{1}{4}}+ \sqrt{\frac{4|\mathcal{S}_j|L^P}{3N_{k-1}(s,a)[Z^P_j]}}\right) 
\end{align*}

Combining with Eq.~\ref{equation: optimism proof}, we prove that $$\left|\hat{\mathbb{P}}_k V^*_{h+1}(s,a) - \mathbb{P}V^*_{h+1}(s.a)\right| \leq \sum_{i=1}^{n} CB^P_i(s,a)$$ 
\end{proof}

The optimism is proved by induction.
\begin{lemma}
\label{Lemma: optimism berstein}
(Optimism) Suppose that $\Lambda_1$, $\Lambda_2$ and $\Omega_{k,h+1}$ happen, then we have $\overline{V}_{k,h} \geq V^*_{h}$.
\end{lemma}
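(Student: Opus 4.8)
The plan is to derive the claim directly from Lemma~\ref{lemma: confidence bonus terms}, which already packages the concentration work: on $\Lambda_1\cap\Lambda_2\cap\Omega_{k,h+1}$, the aggregate bonus $CB_k(s,a)=CB^R_k(s,a)+CB^P_k(s,a)$ dominates both $|\hat R_k(s,a)-\bar R(s,a)|$ and $|(\hat{\mathbb{P}}_k-\mathbb{P})V^*_{h+1}(s,a)|$. Fix an arbitrary $s\in\mathcal{S}$ and set $a^\ast=\pi^\ast_h(s)$. Since $\overline{V}_{k,h}(s)=\max_a \overline{Q}_{k,h}(s,a)\ge \overline{Q}_{k,h}(s,a^\ast)$, it suffices to show $\overline{Q}_{k,h}(s,a^\ast)\ge V^\ast_h(s)$. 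If $\overline{Q}_{k,h}(s,a^\ast)=H$ (the truncation case, which also covers $(s,a^\ast)\notin\mathcal{K}$), this is immediate because $V^\ast_h(s)\le H$; so I would assume $\overline{Q}_{k,h}(s,a^\ast)=\hat R_k(s,a^\ast)+CB_k(s,a^\ast)+\hat{\mathbb{P}}_k\overline{V}_{k,h+1}(s,a^\ast)$.

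Next I would expand the gap $\overline{Q}_{k,h}(s,a^\ast)-V^\ast_h(s)$ via the Bellman equation $V^\ast_h(s)=\bar R(s,a^\ast)+\mathbb{P}V^\ast_{h+1}(s,a^\ast)$ and split the transition term as
\[
\hat{\mathbb{P}}_k\overline{V}_{k,h+1}(s,a^\ast)-\mathbb{P}V^\ast_{h+1}(s,a^\ast)=\hat{\mathbb{P}}_k\big(\overline{V}_{k,h+1}-V^\ast_{h+1}\big)(s,a^\ast)+(\hat{\mathbb{P}}_k-\mathbb{P})V^\ast_{h+1}(s,a^\ast).
\]
The first summand is nonnegative: $\hat{\mathbb{P}}_k$ is a probability kernel and, on $\Omega_{k,h+1}$, $\overline{V}_{k,h+1}\ge V^\ast_{h+1}$ pointwise. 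Hence
\[
\overline{Q}_{k,h}(s,a^\ast)-V^\ast_h(s)\ \ge\ (\hat R_k-\bar R)(s,a^\ast)+CB_k(s,a^\ast)+(\hat{\mathbb{P}}_k-\mathbb{P})V^\ast_{h+1}(s,a^\ast).
\]
Applying Lemma~\ref{lemma: confidence bonus terms} to the two error terms, namely $(\hat R_k-\bar R)(s,a^\ast)\ge -CB^R_k(s,a^\ast)$ and $(\hat{\mathbb{P}}_k-\mathbb{P})V^\ast_{h+1}(s,a^\ast)\ge -CB^P_k(s,a^\ast)$, and recalling $CB_k=CB^R_k+CB^P_k$, the right-hand side is $\ge 0$, which yields $\overline{V}_{k,h}(s)\ge V^\ast_h(s)$ for all $s$.

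The only genuinely delicate point — and the reason the statement is phrased conditionally on $\Omega_{k,h+1}$ rather than proved by a clean downward induction on $h$ as in the Hoeffding case — is that the Bernstein bonus $CB^P_k$ is built from the empirical variance $\hat\sigma^2_{P,k,i}(\overline{V}_{k,h+1},\cdot)$ of the \emph{optimistic} value, not of $V^\ast_{h+1}$; bounding $\sigma^2_{P,i}(V^\ast_{h+1},\cdot)$ in terms of it is exactly what forces the extra $u_{k,h,i}$ and $\eta_{k,h,i}$ terms into the bonus. That coupling is absorbed inside Lemma~\ref{lemma: confidence bonus terms} (through Lemma~\ref{Lemma: variance difference bonud, used for optimism} and Lemma~\ref{Lemma: variance difference between that of empirical and true transition}), so once that lemma is available the present argument reduces to the one-step comparison above. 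Consequently the real work lives in Lemma~\ref{lemma: confidence bonus terms}, and the event $\Omega_{k,h+1}$ that this proof presumes is ultimately established by chaining this lemma together with the analogous pessimism estimate $\underline{V}_{k,h}\le V^\ast_h$ over $h=H+1,H,\dots,1$ and over all episodes $k$ when assembling $\Omega$.
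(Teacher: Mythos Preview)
Your proof is correct and follows essentially the same approach as the paper: compare $\overline{V}_{k,h}(s)$ against $V^*_h(s)$ through $\overline{Q}_{k,h}(s,\pi^*_h(s))$, split $\hat{\mathbb{P}}_k\overline{V}_{k,h+1}-\mathbb{P}V^*_{h+1}$ into the induction piece (nonnegative on $\Omega_{k,h+1}$) and the model-error piece, then invoke Lemma~\ref{lemma: confidence bonus terms}. Your version is slightly more careful in explicitly treating the truncation case $\overline{Q}_{k,h}(s,a^\ast)=H$, which the paper's proof leaves implicit.
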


\begin{proof}
\begin{align*}
    &\overline{V}_{k,h}(s) - V^*_{h}(s) \\
    = & CB_k(s,\pi^*(s)) +  \hat{\mathbb{P}}_{k} \overline{V}_{k,h+1}(s,\pi^*(s)) - \mathbb{P}V^*_{h+1}(s,\pi^*(s)) +\hat{R}_{k}(s,\pi^*(s)) - \bar{R}_{k}(s,\pi^*(s)) \\
    \geq &CB_k(s,\pi^*(s)) + \left(\hat{\mathbb{P}}_{k}- \mathbb{P}\right) V^*_{h+1}(s,\pi^*(s)) +\hat{R}_{k}(s,\pi^*(s)) - \bar{R}_{k}(s,\pi^*(s))\\
    \geq& 0
\end{align*}
    The first inequality is due to induction condition that $\Omega_{k,h+1}$ happens. The last inequality is due to Lemma~\ref{lemma: confidence bonus terms}.

\end{proof}

\begin{lemma}
\label{Lemma: pessimism bernstein}
(Pessimism) Suppose that $\Lambda_1$, $\Lambda_2$ and $\Omega_{k,h+1}$ happen, then we have $\underline{V}_{k,h} \leq V^*_{h}$.
\end{lemma}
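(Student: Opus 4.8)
The plan is to run the mirror image of the optimism argument in Lemma~\ref{Lemma: optimism berstein}, propagating the pessimistic estimate \emph{downward} and reusing the bonus bounds of Lemma~\ref{lemma: confidence bonus terms}. Fix $k,h,s$ and set $a=\pi_{k,h}(s)$. First I would dispose of the clipping in the definition of $\underline{V}_{k,h}$: if $\underline{V}_{k,h}(s)=0$, then since every per-step reward lies in $[0,1]$ we have $V^*_h(s)\ge 0=\underline{V}_{k,h}(s)$ and there is nothing to prove (this branch also absorbs the degenerate case $(s,a)\notin\mathcal{K}$, where some count $N_{k-1}((s,a)[Z^P_i])=0$ makes $CB_k(s,a)$ vacuous and forces $\underline{V}_{k,h}(s)=0$). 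So assume henceforth that $\underline{V}_{k,h}(s)=\hat{R}_k(s,a)-CB_k(s,a)+\hat{\mathbb{P}}_k\underline{V}_{k,h+1}(s,a)$, with all relevant counters positive.

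Next I would compare with $V^*_h$ via $V^*_h(s)\ge Q^*_h(s,a)=\bar{R}(s,a)+\mathbb{P}V^*_{h+1}(s,a)$, which gives
\[
\underline{V}_{k,h}(s)-V^*_h(s)\ \le\ \big(\hat{R}_k-\bar{R}\big)(s,a)-CB_k(s,a)+\hat{\mathbb{P}}_k\underline{V}_{k,h+1}(s,a)-\mathbb{P}V^*_{h+1}(s,a).
\]
The crucial step is the split
\[
\hat{\mathbb{P}}_k\underline{V}_{k,h+1}(s,a)-\mathbb{P}V^*_{h+1}(s,a)=\hat{\mathbb{P}}_k\big(\underline{V}_{k,h+1}-V^*_{h+1}\big)(s,a)+\big(\hat{\mathbb{P}}_k-\mathbb{P}\big)V^*_{h+1}(s,a).
\]
Under $\Omega_{k,h+1}$ we have $\underline{V}_{k,h+1}\le V^*_{h+1}$ pointwise, and since $\hat{\mathbb{P}}_k$, being a product of estimated factored transition kernels, is itself a probability distribution over $\mathcal{S}$, the term $\hat{\mathbb{P}}_k(\underline{V}_{k,h+1}-V^*_{h+1})(s,a)$ is nonpositive and can be dropped.

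It then remains to show $\big(\hat{R}_k-\bar{R}\big)(s,a)+\big(\hat{\mathbb{P}}_k-\mathbb{P}\big)V^*_{h+1}(s,a)\le CB_k(s,a)$. This is exactly what Lemma~\ref{lemma: confidence bonus terms} provides: that lemma is quantified over every $(s,a)$, so applying it with the action $a=\pi_{k,h}(s)$ yields $|\hat{R}_k(s,a)-\bar{R}(s,a)|\le\frac1m\sum_i CB^R_i(s,a)$ and $|(\hat{\mathbb{P}}_k-\mathbb{P})V^*_{h+1}(s,a)|\le\sum_i CB^P_i(s,a)$, and the sum of these two bounds is precisely $CB_k(s,a)$. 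Hence $\underline{V}_{k,h}(s)-V^*_h(s)\le CB_k(s,a)-CB_k(s,a)=0$, completing the proof.

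I do not anticipate a genuine obstacle: this is the symmetric twin of Lemma~\ref{Lemma: optimism berstein}, and all the analytic heavy lifting — the additive estimation-error decomposition (Lemma~\ref{Lemma: transition product inequality 2}) and the fact that the Bernstein bonus built from the surrogate variance $\hat{\sigma}^2_{P,k,i}(\overline{V}_{k,h+1},\cdot)$ together with the $u_{k,h,i}$ correction dominates $(\hat{\mathbb{P}}_k-\mathbb{P})V^*_{h+1}$ — is already encapsulated in Lemma~\ref{lemma: confidence bonus terms}. The only points needing care are the clipping case above, keeping the sign of $\hat{\mathbb{P}}_k(\underline{V}_{k,h+1}-V^*_{h+1})\le 0$ straight (the reverse inequality would be useless here), and noting that Lemma~\ref{lemma: confidence bonus terms} holds for all actions so it legitimately applies at $a=\pi_{k,h}(s)$ rather than only at $\pi^*(s)$.
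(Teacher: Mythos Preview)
Your proof is correct and follows essentially the same route as the paper's own argument: use $\Omega_{k,h+1}$ to replace $\underline{V}_{k,h+1}$ by $V^*_{h+1}$ under $\hat{\mathbb{P}}_k$, then invoke Lemma~\ref{lemma: confidence bonus terms} at the action $a=\pi_{k,h}(s)$ to absorb both error terms into $CB_k(s,a)$, and finish via $V^*_h(s)\ge Q^*_h(s,a)$. If anything you are slightly more careful than the paper, since you explicitly treat the clipping branch $\underline{V}_{k,h}(s)=0$ and the degenerate case $(s,a)\notin\mathcal{K}$, which the paper leaves implicit.
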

\begin{proof}
\begin{align*}
    &\underline{V}_{k,h}(s) \\
    = & \hat{R}_{k}(s,\pi_{k,h}(s)) - CB_k(s,\pi_{k,h}(s)) + \hat{\mathbb{P}}_k\underline{V}_{k,h+1}(s,\pi_{k,h}(s)) \\
    \leq &\hat{R}(s,\pi_{k,h}(s)) - CB_k(s,\pi_{k,h}(s)) + \hat{\mathbb{P}}_k V^*_{h+1}(s,\pi_{k,h}(s)) \\
    = & \bar{R}(s,\pi_{k,h}(s)) + \mathbb{P}V^*_{h+1}(s,\pi_{k,h}(s)) \\
    &+ \left(\hat{R}(s,\pi_{k,h}(s)) - \bar{R}(s,\pi_{k,h}(s)) - \frac{1}{m}\sum_{i=1}^m CB^R_i(s,\pi_{k,h}(s))\right)\\
    & + \left(\hat{\mathbb{P}}_k V^*_{h+1}(s,\pi_{k,h}(s)) - \mathbb{P}V^*_{h+1}(s,\pi_{k,h}(s)) -\sum_{i=1}^{n} CB^P_i(s,\pi_{k,h}(s))\right)
\end{align*}
The inequality is due to $\underline{V}_{k,h+1}(s') \leq V^*_{h+1}(s')$ since event $\Omega_{k,h+1}$ happens.

By lemma~\ref{lemma: confidence bonus terms}, we have
\begin{align*}
    \left|\hat{R}_k(s,\pi_{k,h}(s)) - \bar{R}(s,\pi_k(s))\right| &\leq \frac{1}{m}\sum_{i=1}^{m} CB^R_i(s,\pi_{k,h}(s))\\
    \left|\hat{P}_k V^*_{h+1}(s,\pi_{k,h}(s)) - \mathbb{P}V^*_{h+1}(s.\pi_{k,h}(s))\right| &\leq \sum_{i=1}^{n} CB^P_i(s,\pi_{k,h}(s))
\end{align*}

Therefore, we have
\begin{align*}
     \underline{V}_{k,h}(s,\pi_{k,h}(s)) 
     \leq &\bar{R}(s,\pi_{k,h}(s)) + \mathbb{P}V_{h+1}^*(s,\pi_{k,h}(s)) \\
     \leq &\bar{R}(s,\pi^*_h(s)) + \mathbb{P}V_{h+1}^*(s,\pi^*_h(s)) \\
     \leq& V^*_h(s,a)
\end{align*}
\end{proof}

\begin{lemma}
\label{Lemma: optimism and pessimism}
(Optimism and pessimism) Under event $\Lambda_1$ and $\Lambda_2$, we have $\Omega_{k,h}$ holds for all $k$ and $h$.
\end{lemma}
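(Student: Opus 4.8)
The plan is to prove the statement by a downward induction on the horizon index $h$, carried out for each episode $k$ separately; since $\Lambda_1$ and $\Lambda_2$ are fixed events that we condition on throughout (they were already shown to hold simultaneously for all $k,h,s,a$ with probability at least $1-\delta$ in Lemmas~\ref{Lemma: high prob. event} and~\ref{Lemma: high prob event 2}), it suffices to establish $\Omega_{k,h}$ deterministically on that event.

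First I would dispatch the base case $h=H+1$. Algorithm~\ref{alg:Algorithm with Bernstein-type Bonus} initializes $\overline{V}_{k,H+1}(s)=\underline{V}_{k,H+1}(s)=0$ for all $s$, and $V^*_{H+1}(s)=0$ as well since no reward accrues after step $H$; hence $\overline{V}_{k,H+1}\ge V^*_{H+1}\ge \underline{V}_{k,H+1}$, i.e.\ $\Omega_{k,H+1}$ holds. For the inductive step, fix $h\le H$ and assume $\Omega_{k,h+1}$ holds. Then the hypotheses of Lemma~\ref{Lemma: optimism berstein} (optimism) and of Lemma~\ref{Lemma: pessimism bernstein} (pessimism)—namely $\Lambda_1$, $\Lambda_2$, and $\Omega_{k,h+1}$—are all satisfied, so those lemmas give $\overline{V}_{k,h}\ge V^*_h$ and $\underline{V}_{k,h}\le V^*_h$, which together are exactly $\Omega_{k,h}$. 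Iterating from $h=H$ down to $h=1$ yields $\Omega_{k,h}$ for all $h$, and since $k$ was arbitrary this gives $\Omega_{k,h}$ for every $k$ and $h$, i.e.\ the event $\Omega$.

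The only point that requires a word of care—and the closest thing to an obstacle—is that the induction be well-founded in view of the dependency structure of the algorithm. The confidence bonus $CB_k$ that feeds $\overline{V}_{k,h}$ and $\underline{V}_{k,h}$ is assembled from $\hat R_k$, $\hat{\mathbb{P}}_k$, the empirical variances $\hat\sigma^2_{R,k,i}$, $\hat\sigma^2_{P,k,i}$, and the term $u_{k,h,i}$; the first four are determined from data collected before episode $k$, while $\hat\sigma^2_{P,k,i}(\overline{V}_{k,h+1},\cdot)$ and $u_{k,h,i}$ depend only on $\overline{V}_{k,h+1}$ and $\underline{V}_{k,h+1}$, which are already fixed by the time the backward recursion reaches step $h$. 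Thus nothing at step $h$ depends on quantities at step $h'\le h$, so the recursion $\Omega_{k,h+1}\Rightarrow\Omega_{k,h}$ is legitimate, and chaining it down the horizon completes the proof.
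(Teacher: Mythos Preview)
Your proof is correct and follows the same approach as the paper: the paper's proof is a one-liner that says ``By Lemma~\ref{Lemma: optimism berstein} and Lemma~\ref{Lemma: pessimism bernstein}, through induction over all possible $k,h$, we can prove the Lemma,'' and you have simply spelled out that induction explicitly, including the base case $h=H+1$ and the well-foundedness of the backward recursion.
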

\begin{proof}
By Lemma~\ref{Lemma: optimism berstein} and Lemma~\ref{Lemma: pessimism bernstein}, through induction over all possible $k,h$, we can prove the Lemma.
\end{proof}

\subsection{Proof of Theorem~\ref{thm: bernstein regret}}
\begin{proof}
We decompose $\tilde{\regret}(K) = \sum_{k=1}^{K}\left( \overline{V}_{k,1}(s_{k,1},a_{k,1}) - V^{\pi_k}_{1}(s_{k_1},a_{k,1})\right)$ in the classical way \citep{azar2017minimax,zanette2019tighter,dann2019policy}, that is
\begin{align}
    &\sum_{k=1}^{K}\left( \overline{V}_{k,1}(s_{1}) - V^{\pi_k}_{1}(s_{1})\right) \\
    \label{eqn: regret bernstein part 1}
    \leq & \sum_{k,h} \sum_{s_h,a_h} w_{k,h}(s_h,a_h) CB_{k}(s_h,a_h) \\
    \label{eqn: regret bernstein part 2}
    &+ \sum_{k,h}\sum_{s_h,a_h} w_{k,h}(s_h,a_h)  \left(\hat{\mathbb{P}}_{k}-\mathbb{P}\right)V^*_{h+1}(s_h,a_h)\\
    \label{eqn: regret bernstein part 3}
    & + \sum_{k,h}\sum_{s_h,a_h} w_{k,h}(s_h,a_h) \left(\hat{\mathbb{P}}_{k}-\mathbb{P}\right)\left(\overline{V}_{k,h+1} - V^*_{h+1}\right)(s_h,a_h) \\
    \label{eqn: regret bernstein part 4}
    & + \sum_{k,h}\sum_{s_h,a_h} w_{k,h}(s_h,a_h) \left(\hat{R}_{k}(s_h,a_h) - \bar{R}(s_h,a_h)\right)
\end{align}  
We bound Equ.~\ref{eqn: regret bernstein part 1}, \ref{eqn: regret bernstein part 2}, \ref{eqn: regret bernstein part 3} and \ref{eqn: regret bernstein part 4} separately by Lemma~\ref{Lemma: sum bonus bound}, Lemma~\ref{Lemma: sum (hatP-P)V* bound}, Lemma~\ref{Lemma: sum (hatP-P)(hatV-V*)} and Lemma~\ref{Lemma: sum reward bound}. Combining the results of these Lemmas, we have
\begin{align}
    \label{equation: bernstein regret proof}
    \tilde{\regret}(K) 
    \leq  C_1\frac{1}{m}\sum_{i=1}^{m} \sqrt{X_i^R T L^R_i \log T} + C_2\sqrt{\sum_{i=1}^{n}HT X_i^PL^P \log T}
    + C_3\sqrt{nH^2\tilde{\regret}(K)\sum_{i=1}^{n}X_i^PL^P \log T}
\end{align}
Here $C_1, C_2, C_3$ denote some constants. Solving the $\tilde{\regret}(K)$ in Inq~\ref{equation: bernstein regret proof}, we can show that 
$$\tilde{\regret}(K) \leq  O\left(\frac{1}{m}\sum_{i=1}^{m} \sqrt{X_i^R T L^R_i \log T} + \sqrt{\sum_{i=1}^{n}HT X_i^PL^P \log T}\right),$$
where $O$ hides the lower order terms w.r.t $T$.

By the optimism principle (Lemma~\ref{Lemma: optimism and pessimism}), we have $V^*_{1}(s_{k,1},a_{k,1}) \leq \overline{V}_{k,1}(s_{k,1},a_{k,1})$. This leads to the final result:
$$\sum_{k=1}^{K}\left( V^*_{1}(s_{k,1},a_{k,1}) - V^{\pi_k}_{1}(s_{k_1},a_{k,1})\right) \leq  O\left(\frac{1}{m}\sum_{i=1}^{m} \sqrt{X_i^R T L^R_i \log T} + \sqrt{\sum_{i=1}^{n}HT X_i^PL^P\log T}\right).$$
\end{proof}

\subsection{Bounding the Main Terms}
\begin{lemma}
\label{Lemma: sum bonus bound}
Under event $\Lambda_1$, $\Lambda_2$ and $\Omega_{k,h}$, suppose $\tilde{\regret}(K) = \sum_{k=1}^{K} \overline{V}_{k,1}(s_1) - V^{\pi_k}_{1}(s_1)$, we have
\begin{align*}
    &\sum_{k=1}^{K} \sum_{h=1}^{H}\sum_{s,a} w_{k,h}(s,a) CB_k(s,a) \\
    \leq & O\left(\frac{1}{m} \sum_{i=1}^{m}\sqrt{T X^R_i L^R_i \log T}+\sqrt{HT\sum_{i=1}^{n}X^P_i L^P \log T} + \sqrt{H^2n\tilde{\regret}(K)\sum_{i=1}^{n} X^P_iL^P \log T} \right) 
\end{align*}
\end{lemma}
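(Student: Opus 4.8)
The plan is to split $CB_k = CB^R_k + CB^P_k$ into its reward and transition parts, decompose each further into a leading (variance) term and lower-order remainders, bound each piece by a Cauchy--Schwarz argument of the EULER/UCBVI type, and then assemble. Throughout we work under $\Lambda_1$, $\Lambda_2$ and the optimism/pessimism event, so that $\underline{V}_{k,h}\le V^*_h\le\overline{V}_{k,h}$ and $V^{\pi_k}_h\le V^*_h$. The two ingredients beyond the flat-MDP analysis are: (i) Corollary~\ref{corollary: total variance bound} (from Theorem~\ref{thm: mc variance}), which controls $\sum_{k,h}\sum_{s,a} w_{k,h}(s,a)\sum_{i=1}^n\sigma^2_{P,i}(V^{\pi_k}_{h+1},s,a)\le H^2K=HT$ jointly over all transition factors; and (ii) the variance-transfer lemmas, Lemma~\ref{Lemma: variance difference between that of empirical and true transition} and Lemma~\ref{Lemma: variance difference bound: V*-Vpi}, which let us replace the empirical variance $\hat{\sigma}^2_{P,k,i}(\overline{V}_{k,h+1},\cdot)$ appearing in the bonus by the true variance $\sigma^2_{P,i}(V^{\pi_k}_{h+1},\cdot)$ at a cost that is either lower-order in $T$ or a multiple of $\tilde{\regret}(K)$. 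All sums of the form $\sum_{k,h}\sum_{s,a} w_{k,h}(s,a)/N_{k-1}((s,a)[Z])$ and their variants are handled by Lemma~\ref{Lemma: sum of w/N}.

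For the reward part, write $CB^R_k=\frac1m\sum_{i=1}^m CB^R_{k,Z^R_i}$. Its leading term is $\frac1m\sum_i\sqrt{2L^R_i}\sum_{k,h}\sum_{s,a}w_{k,h}(s,a)\sqrt{\hat{\sigma}^2_{R,k,i}(s,a)/N_{k-1}((s,a)[Z^R_i])}$. Since $R_i$ takes values in $[0,1]$ we have $\hat{\sigma}^2_{R,k,i}\le\frac14$, so Cauchy--Schwarz (splitting $w_{k,h}=\sqrt{w_{k,h}}\cdot\sqrt{w_{k,h}}$) together with $\sum_{k,h}\sum_{s,a}w_{k,h}(s,a)=T$ and $\sum_{k,h}\sum_{s,a}w_{k,h}(s,a)/N_{k-1}((s,a)[Z^R_i])\le 8X^R_i\log T$ (Lemma~\ref{Lemma: sum of w/N}) yields $\frac1m\sum_i O(\sqrt{T X^R_i L^R_i\log T})$. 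The additive part $\frac1m\sum_i\frac{8L^R_i}{3N_{k-1}}$ sums to $O(\frac1m\sum_i L^R_i X^R_i\log T)$, which is lower-order in $T$.

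For the transition leading term, a first Cauchy--Schwarz gives, for each $i$,
\begin{align*}
\sum_{k,h}\sum_{s,a}w_{k,h}(s,a)\sqrt{\frac{4\hat{\sigma}^2_{P,k,i}(\overline{V}_{k,h+1},s,a)L^P}{N_{k-1}((s,a)[Z^P_i])}}
\le\sqrt{4L^P\,S_i}\;\sqrt{\,\textstyle\sum_{k,h}\sum_{s,a}\frac{w_{k,h}(s,a)}{N_{k-1}((s,a)[Z^P_i])}\,},
\end{align*}
with $S_i:=\sum_{k,h}\sum_{s,a}w_{k,h}(s,a)\,\hat{\sigma}^2_{P,k,i}(\overline{V}_{k,h+1},s,a)$ and the last factor $\le\sqrt{8X^P_i\log T}$. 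I would then bound $S_i$ by writing $\hat{\sigma}^2_{P,k,i}(\overline{V}_{k,h+1},\cdot)\le\sigma^2_{P,i}(\overline{V}_{k,h+1},\cdot)+\mathrm{err}$ (Lemma~\ref{Lemma: variance difference between that of empirical and true transition}, whose error sums, by Lemma~\ref{Lemma: sum of w/N}, to terms lower-order in $T$), and $\sigma^2_{P,i}(\overline{V}_{k,h+1},\cdot)=\sigma^2_{P,i}(V^{\pi_k}_{h+1},\cdot)+\mathrm{gap}$ with $\sum_{k,h}\sum w_{k,h}\cdot\mathrm{gap}\le 2H^2\tilde{\regret}(K)$ (Lemma~\ref{Lemma: variance difference bound: V*-Vpi}). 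Crucially, summing over $i$ \emph{before} the outer square root and invoking Corollary~\ref{corollary: total variance bound} gives $\sum_{i=1}^n S_i\le HT+2H^2 n\,\tilde{\regret}(K)+(\text{l.o.t.})$, and a second Cauchy--Schwarz over $i$ then gives
\begin{align*}
\sum_{i=1}^n\sqrt{4L^P X^P_i\log T\cdot S_i}\le\sqrt{4L^P\log T\textstyle\sum_i X^P_i}\;\sqrt{\textstyle\sum_i S_i}\;\lesssim\;\sqrt{HT\textstyle\sum_i X^P_i L^P\log T}+\sqrt{H^2 n\,\tilde{\regret}(K)\textstyle\sum_i X^P_i L^P\log T}.
\end{align*}
Taking the sum over $i$ first is exactly what turns an $HT$-per-factor bound into a single $HT$, and is where Theorem~\ref{thm: mc variance} delivers the $\sqrt n$ improvement.

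For the remaining transition terms: $\sqrt{2u_{k,h,i}(s,a)L^P/N_{k-1}}$ is handled by Jensen's inequality, $u_{k,h,i}(s,a)\le\hat{\mathbb{P}}_k(\overline{V}_{k,h+1}-\underline{V}_{k,h+1})^2(s,a)$, after which $(\overline{V}-\underline{V})\le H$ and a telescoping identity (in the spirit of Lemma~\ref{Lemma: variance difference bound: V*-Vpi} and Lemma E.15 of \cite{dann2017unifying}) bound $\sum_{k,h}\sum_{s,a} w_{k,h}(s,a)u_{k,h,i}(s,a)$ by $O\big(H^2\tilde{\regret}(K)+H^2\sum_{k,h}\sum_{s,a}w_{k,h}(s,a)CB_k(s,a)\big)$ up to lower-order terms; Cauchy--Schwarz and summing over $i$ then produce a \emph{self-bounding} inequality for the left-hand side of the lemma, which on being solved contributes only the already-present $\sqrt{H^2 n\,\tilde{\regret}(K)\sum_i X^P_i L^P\log T}$ term plus $T$-independent lower-order terms. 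The remaining pieces of $\eta_{k,h,i}$ --- the $\sqrt{H^2L^P/N_{k-1}}\sum_j(4|\mathcal{S}_j|L^P/N_{k-1})^{1/4}$, the $\sum_j H\phi_{k,i}\phi_{k,j}$, and the $2HL^P/(3N_{k-1})$ contributions --- all sum to $o(\sqrt T)$ by the $T^{1/4}$- and logarithmic-type estimates in Lemma~\ref{Lemma: sum of w/N}. Collecting the reward and transition contributions gives the stated bound. I expect the main obstacle to be the bookkeeping in the $S_i$ step --- keeping every variance-transfer error (empirical $\to$ true, $\overline{V}_{k,h+1}\to V^{\pi_k}_{h+1}$) strictly below the order $\sqrt{HT\,X^P_i L^P\log T}$, and correctly resolving the circular dependence through $u$ and $\overline{V}-\underline{V}$ by the self-bounding argument above, which is finally closed when the master recursion for $\tilde{\regret}(K)$ is solved.
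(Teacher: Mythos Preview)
Your decomposition and treatment of the leading reward and transition variance terms match the paper's argument essentially line by line: split into $CB^R$ and $CB^P$, replace $\hat\sigma^2_{P,k,i}(\overline V_{k,h+1},\cdot)$ by $\sigma^2_{P,i}(V^{\pi_k}_{h+1},\cdot)$ via Lemma~\ref{Lemma: variance difference between that of empirical and true transition} and Lemma~\ref{Lemma: variance difference bound: V*-Vpi}, apply Cauchy--Schwarz so that the sum over $i$ sits \emph{inside} the square root, and then invoke Corollary~\ref{corollary: total variance bound} to turn $\sum_i\sigma^2_{P,i}$ into a single $HT$. The paper does this with one Cauchy--Schwarz over the combined index $(k,h,s,a,i)$, while you split it into a per-$i$ step followed by a Cauchy--Schwarz over $i$; the two routes are equivalent.

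The genuine difference is in the $u_{k,h,i}$ term. The paper does \emph{not} close a self-bounding recursion for $\sum w_{k,h}CB_k$ through the $u$ term. Instead it invokes Lemma~\ref{Lemma: summation of barV-underlineV}, which shows directly that $\sum_{k,h,s,a} w_{k,h}(s,a)\,u_{k,h,i}(s,a)=O(H^2\sum_j\sqrt{TX^P_j|\mathcal S_j|L^P})=O(\sqrt{T})$, so that after Cauchy--Schwarz the whole $u$ contribution is $O(T^{1/4})$ and simply absorbed. That lemma, in turn, rests on Lemma~\ref{Lemma: summation of barV-underlineV with true transition} and the key estimate $\sum_{k,h,s,a}w_{k,h}CB_k^2=O(\log T)$ (Lemma~\ref{Lemma: sum of square CB}), together with Lemma~\ref{Lemma: connect barV - underline V with CB}. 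Your self-bounding route (bound $u$ by $H(\overline V-\underline V)$, unroll via Lemma~\ref{Lemma: connect barV - underline V with CB}, get $\sum w\,u\le O(H^2\sum w\,CB_k)$, then solve $B\le M+C\sqrt{H^2nB\sum_iX^P_iL^P\log T}$) is also valid and yields only lower-order extras after solving the quadratic; but note that what falls out of that recursion is a $T$-independent additive term, not the $\sqrt{H^2n\,\tilde{\regret}(K)\sum_iX^P_iL^P\log T}$ you mention --- that term has already been produced by the transition-variance step and the $u$ term does not add to it.
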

\begin{proof}
By the definition of $CB_k(s,a)$, we have
\begin{align}
    &\sum_{k,h,s,a} w_{k,h}(s,a) CB_k(s,a) \\
    = & \sum_{k,h,s,a} w_{k,h}(s,a) \left(\frac{1}{m}\sum_{i=1}^{m} CB_{k,i}^{R}(s,a) + \sum_{i=1}^n CB_{k,i}^P(s,a)\right) \\
     \label{eqn: summation of CB part 1}
    = & \sum_{k,h,s,a} w_{k,h}(s,a) \left(\frac{1}{m}\sum_{i=1}^m\sqrt{\frac{2 \hat{\sigma}_{R,k,i}^{2}(s,a) L^R_i}{N_{k-1}((s,a)[Z^R_i])}} + \sum_{i=1}^{n}  \sqrt{\frac{4 \hat{\sigma}_{P,k,i}^2(\overline{V}_{k,h+1},s, a) L^P}{N_{k-1}((s,a)[Z^P_i])}}\right) \\
    \label{eqn: summation of CB part 2}
    & + \sum_{k,h,s,a} w_{k,h}(s,a) \frac{1}{m} \sum_{i=1}^{m}\frac{8L^R_i}{3N_{k-1}((s,a)[Z^R_i])} \\
    \label{eqn: summation of CB part 3}
     & + \sum_{k,h,s,a} w_{k,h}(s,a)\sum_{i=1}^{n}\left(\sqrt{\frac{16H^2L^P}{N_{k-1}((s,a)[Z^P_i])}}\sum_{j=1}^{n}\left( \left(\frac{4|\mathcal{S}_j|L^P}{N_{k-1}((s,a)[Z^P_j])}\right)^{\frac{1}{4}}+ \sqrt{\frac{4|\mathcal{S}_j|L^P}{3N_{k-1}(s,a)[Z^P_j]}}\right) \right) \\
     \label{eqn: summation of CB part 3.5}
     &+\sum_{k,h,s,a}w_{k,h}(s,a)\sum_{i=1}^{n}\sum_{j\neq i,j=1}^{n}  \frac{36H |\mathcal{S}_i||\mathcal{S}_j|(L^P)^2}{\sqrt{N_{k-1}((s,a)[Z^P_i])N_{k-1}((s,a)[Z^P_j])}} \\
     \label{eqn: summation of CB part 4}
    &+ \sum_{k,h,s,a} w_{k,h}(s,a) \sum_{i=1}^{n} \sqrt{\frac{2u_{k,h,i}(s,a)L^P}{N_{k-1}((s,a)[Z^P_i])}}
\end{align}

By Lemma~\ref{Lemma: sum of w/N}, the upper bound of Eqn.~\ref{eqn: summation of CB part 2}, \ref{eqn: summation of CB part 3} and \ref{eqn: summation of CB part 3.5} is $O(T^{\frac{1}{4}})$, which doesn't contribute to the main factor in the regret. We prove the upper bound of Eqn.~\ref{eqn: summation of CB part 1} and Eqn.~\ref{eqn: summation of CB part 4} in detail. 
 
 By Lemma~\ref{Lemma: variance difference between that of empirical and true transition}, we have 
 \begin{align*}
     \left|\hat{\sigma}^2_{P,k,i}(\overline{V}_{k,h+1},s,a) - \sigma^2_{P,i}(\overline{V}_{k,h+1},s,a)\right| &\leq 4H^2 \sum_{j=1}^{n}\left(2\sqrt{\frac{|\mathcal{S}_j|L^P}{N_{k-1}((s,a)[Z^P_j])}} + \frac{4|\mathcal{S}_j|L^P}{3N_{k-1}((s,a)[Z^P_j])}\right),
 \end{align*}

Then Eqn.~\ref{eqn: summation of CB part 1} can be bounded as
 \begin{align}
     & \frac{1}{m}\sum_{i=1}^{m}\sqrt{\frac{2 \hat{\sigma}_{R,k,i}^{2}(s,a) L^R_i}{N_{k-1}((s,a)[Z^R_i])}} + \sum_{i=1}^{n}\sqrt{\frac{4 \hat{\sigma}_{P,k,i}^2(\overline{V}_{k,h+1},s, a) L^P}{N_{k-1}((s,a)[Z^P_i])}} \notag\\
     \leq & \frac{1}{m}\sum_{i=1}^{m}\sqrt{\frac{2 \hat{\sigma}_{R,k,i}^{2}(s,a) L^R_i}{N_{k-1}((s,a)[Z^R_i])}}
      +\sum_{i=1}^{n}\sqrt{\frac{4 \sigma_{P,i}^2(\overline{V}_{k,h+1},s, a) L^P}{N_{k-1}((s,a)[Z^P_i])}} \notag \\
      &+ \sum_{i=1}^{n}\sqrt{\frac{4 |\sigma_{P,i}^2(\overline{V}_{k,h+1},s, a)-\hat{\sigma}_{P,k,i}^2(\overline{V}_{k,h+1},s, a)| L^P}{N_{k-1}((s,a)[Z^P_i])}} \notag \\
     \label{eqn: summation of CB part 1-0}
     \leq & \frac{1}{m}\sum_{i=1}^{m}\sqrt{\frac{2 \hat{\sigma}_{R,k,i}^{2}(s,a) L^R_i}{N_{k-1}((s,a)[Z^R_i])}} \\
     \label{eqn: summation of CB part 1-1}
      & +  \sum_{i=1}^{n}\sqrt{\frac{4 \sigma_{P,i}^2(\overline{V}_{k,h+1},s, a) L^P}{N_{k-1}((s,a)[Z^P_i])}} \\
     \label{eqn: summation of CB part 1-2}
     &+ 8H\sum_{i=1}^{n}\sqrt{\frac{L^P}{N_{k-1}((s,a)[Z^P_i])}}\sum_{j=1}^{n} \left(\left(\frac{|\mathcal{S}_j|L^P}{N_{k-1}((s,a)[Z^P_j])}\right)^{\frac{1}{4}} + \sqrt{\frac{4|\mathcal{S}_j|L^P}{3N_{k-1}((s,a)[Z^P_j])}}\right)
 \end{align}
 Similar with Eqn.~\ref{eqn: summation of CB part 1-2}, the summation of Eqn.~\ref{eqn: summation of CB part 1-2} is upper bounded by $O(T^{1/4})$ by Lemma~\ref{Lemma: sum of w/N}. For Eqn.~\ref{eqn: summation of CB part 1-0}, we have
 \begin{align*}
     &\sum_{k,h} \sum_{s,a} w_{k,h}(s,a) \frac{1}{m}\sum_{i=1}^{m}\sqrt{\frac{2 \hat{\sigma}_{R,k,i}^{2}(s,a) L^R_i}{N_{k-1}((s,a)[Z^R_i])}} \\
     \leq & \sum_{k,h} \sum_{s,a} w_{k,h}(s,a) \frac{1}{m} \sum_{i=1}^{m} \sqrt{\frac{2 L^R_i}{N_{k-1}((s,a)[Z^R_i])}} \\
     \leq &\frac{1}{m}\sum_{i=1}^{m}\sqrt{\sum_{k,h}\sum_{s,a} w_{k,h}(s,a)} \sqrt{\sum_{k,h}\sum_{s,a}\frac{2 w_{k,h}(s,a)L^R_i}{N_{k-1}((s,a)[Z^R_i])}} \\
     \leq & \frac{1}{m}\sum_{i=1}^{m}\sqrt{T} \sqrt{\sum_{k,h}\sum_{s,a}\frac{2 w_{k,h}(s,a)L^R_i}{N_{k-1}((s,a)[Z^R_i])}}
 \end{align*}
 The first inequality is due to $\hat{\sigma}_{R,k,i}^{2}(s,a) \leq 1$. The second inequality is due to Cauchy-Schwarz inequality. By Lemma~\ref{Lemma: sum of w/N}, the summation can be bounded by $\frac{1}{m} \sum_{i=1}^{m} \sqrt{X^R_iL^R_iT \log T}$.
 
 For Eqn.~\ref{eqn: summation of CB part 1-1}, we have
 \begin{align*}
     &\sum_{k,h} \sum_{s,a} w_{k,h}(s,a)  \sum_{i=1}^{n}\sqrt{\frac{4 \sigma_{P,i}^2(\overline{V}_{k,h+1},s, a) L^P}{N_{k-1}((s,a)[Z^P_i])}} \\
     \leq & \sqrt{\sum_{k,h,s,a}w_{k,h}(s,a)  \sum_{i=1}^{n}\sigma_{P,i}^2(\overline{V}_{k,h+1},s, a)} \cdot \sqrt{\sum_{k,h,s,a} w_{k,h}(s,a)\sum_{i=1}^{n}\frac{4L^P}{N_{k-1}((s,a)[Z^P_i])}}\\
     \leq & \sqrt{\sum_{k,h,s,a}w_{k,h}(s,a)  \sum_{i=1}^{n}\sigma_{P,i}^2(\overline{V}_{k,h+1},s, a)}  \sqrt{\sum_{i=1}^{n}4X^P_i L^P \log T}\\
     = & \sqrt{\sum_{k,h,s,a}w_{k,h}(s,a)  \sum_{i=1}^{n}\sigma_{P,i}^2(V^{\pi_k}_{h+1},s, a)} \sqrt{\sum_{i=1}^{n}4X^P_i L^P \log T} \\
     & + \sqrt{\sum_{k,h,s,a}w_{k,h}(s,a)  \sum_{i=1}^{n}\left(\sigma_{P,i}^2(\overline{V}_{k,h+1},s, a)-\sigma_{P,i}^2(V^{\pi_k}_{h+1},s, a)\right)} \sqrt{\sum_{i=1}^{n}4X^P_i L^P \log T} \\
     \leq &  \sqrt{\sum_{k,h,s,a}w_{k,h}(s,a)  \sum_{i=1}^{n}\sigma_{P,i}^2(V^{\pi_k}_{h+1},s, a)} \sqrt{\sum_{i=1}^{n}4X^P_i L^P} + \sqrt{2H^2n\tilde{\regret}(K)\sum_{i=1}^{n} X^P_iL^P \log T} \\
     \leq &  \sqrt{\sum_{k,h,s,a}w_{k,h}(s,a) \left(\frac{1}{m^2}\sum_{i=1}^{m}\sigma_{R,i}^{2}(s,a) + \sum_{i=1}^{n}\sigma_{P,i}^2(V^{\pi_k}_{h+1},s, a)\right)} \sqrt{\sum_{i=1}^{n}4X^P_i L^P \log T} + \sqrt{2H^2n\tilde{\regret}(K)\sum_{i=1}^{n} X^P_iL^P \log T} \\
     \leq & \sqrt{HT} \cdot \sqrt{\sum_{i=1}^{n}4X^P_i L^P \log T} + \sqrt{2H^2n\tilde{\regret}(K)\sum_{i=1}^{n} X^P_iL^P \log T} 
 \end{align*}
 The first inequality is due to Cauchy-Schwarz inequality. The second inequality is due to Lemma~\ref{Lemma: sum of w/N}. The third inequality is due to Lemma~\ref{Lemma: variance difference bound: V*-Vpi}. The forth inequality is due to $\sigma^2_{R,k,i}(s,a) \geq 0$, and the last inequality is because of Corollary~\ref{corollary: total variance bound}.
 
  For Eqn.~\ref{eqn: summation of CB part 4}, we have
  \begin{align}
  \label{equation: CB last term}
     & \sum_{k,h,s,a} w_{k,h}(s,a) \sum_{i=1}^{n} \sqrt{\frac{2u_{k,h,i}(s,a)L^P}{N_{k-1}((s,a)[Z^P_i])}} \notag\\
     \leq & \sum_{i=1}^{n} \sqrt{ 2L^P\left(\sum_{k,h,s,a} \frac{4w_{k,h}(s,a)}{N_{k-1}((s,a)[Z_i^P])}\right)\left(\sum_{k,h,s,a}w_{k,h}(s,a)u_{k,h,i}(s,a)\right)} \notag\\
     \leq & \sum_{i=1}^{n} \sqrt{ 64X^P_i L^P \log T\left(\sum_{k,h,s,a}w_{k,h}(s,a)u_{k,h,i}(s,a)\right)}  
  \end{align}
  By Lemma~\ref{Lemma: summation of barV-underlineV}, we know that the summation $\sum_{k,h,s,a}w_{k,h}(s,a)u_{k,h,i}(s,a)$ is of order $O(T^{\frac{1}{2}})$. This means that Equ.~\ref{equation: CB last term} is of order $O(T^{\frac{1}{4}})$, which doesn't contribute to the main term ($O(\sqrt{T})$).

\end{proof}

\begin{lemma}
\label{Lemma: sum (hatP-P)V* bound}
Under event $\Lambda_1$, $\Lambda_2$ and $\Omega$, suppose $\tilde{\regret}(K) = \sum_{k=1}^{K} \overline{V}_{k,1}(s_1) - V^{\pi_k}_{1}(s_1)$, we have
\begin{align*}
    &\sum_{k=1}^{K}\sum_{h=1}^{H}\sum_{(s,a) \in \mathcal{X}} w_{k,h}(s,a) \left(\hat{\mathbb{P}}_{k}-\mathbb{P}\right)V^*(s,a) \\
    \leq& O\left(\sqrt{(HT + nH^2\tilde{\regret}(K))\sum_{i=1}^{n}X^P_iL^P }\right)
\end{align*}
\end{lemma}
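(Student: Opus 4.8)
The plan is to start from the estimation-error decomposition, isolate a single ``main'' variance term, and control it by a Cauchy--Schwarz argument that couples the total-variance bound of Corollary~\ref{corollary: total variance bound} with the visit-count sums of Lemma~\ref{Lemma: sum of w/N}; all other terms will turn out to be lower-order in $T$.

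First I would invoke Lemma~\ref{Lemma: transition product inequality 2} (which itself is obtained by feeding the high-probability events of Lemmas~\ref{Lemma: high prob. event} and~\ref{Lemma: high prob event 2} into the decomposition Lemma~\ref{Lemma: decomposition inq}) to write, at the horizon $h$ attached to $w_{k,h}$,
\begin{align*}
\left|(\hat{\mathbb{P}}_k-\mathbb{P})V^*_{h+1}(s,a)\right|
\le \sum_{i=1}^{n}\sqrt{\frac{2\sigma^2_{P,i}(V^*_{h+1},s,a)L^P}{N_{k-1}((s,a)[Z^P_i])}}
+\sum_{i=1}^{n}\frac{2HL^P}{3N_{k-1}((s,a)[Z^P_i])}
+\sum_{i=1}^{n}\sum_{j\ne i}H\,\phi_{k,i}(s,a)\,\phi_{k,j}(s,a).
\end{align*}
After multiplying by $w_{k,h}(s,a)$ and summing over $k,h,(s,a)$, the last two groups of terms are handled by Lemma~\ref{Lemma: sum of w/N}: the $1/N_{k-1}$ term gives $O\!\left(\sum_i X^P_i L^P\log T\right)$ via Inq.~\ref{eqn: sum of w/N, 1}, and each cross term $\phi_{k,i}\phi_{k,j}$ gives at most $O\!\left(\sqrt{X^P_{i,j}}\,T^{1/4}\log T\right)$ via Inq.~\ref{eqn: sum of w/N, 2} and Inq.~\ref{eqn: sum of w/N, 3}; neither contributes to the leading $\sqrt{T}$ order, so both may be absorbed into the $O(\cdot)$.

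It then remains to bound $\sum_{k,h,s,a} w_{k,h}(s,a)\sum_{i=1}^{n}\sqrt{\sigma^2_{P,i}(V^*_{h+1},s,a)L^P/N_{k-1}((s,a)[Z^P_i])}$. Splitting each summand as $\sqrt{w_{k,h}\,\sigma^2_{P,i}}\cdot\sqrt{w_{k,h}\,L^P/N_{k-1}}$ and applying Cauchy--Schwarz, this is at most
\begin{align*}
\sqrt{\sum_{k,h,s,a}w_{k,h}(s,a)\sum_{i=1}^{n}\sigma^2_{P,i}(V^*_{h+1},s,a)}\;
\cdot\;
\sqrt{L^P\sum_{k,h,s,a}w_{k,h}(s,a)\sum_{i=1}^{n}\frac{1}{N_{k-1}((s,a)[Z^P_i])}}.
\end{align*}
The second factor is $O\!\left(\sqrt{L^P\log T\sum_i X^P_i}\right)$ by Inq.~\ref{eqn: sum of w/N, 1}. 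For the first factor I would pass from $V^*$ to the executed-policy value $V^{\pi_k}$: Lemma~\ref{Lemma: variance difference bound: V*-Vpi} gives $\sum_{k,h,s,a}w_{k,h}(s,a)\bigl(\sigma^2_{P,i}(V^*_{h+1},s,a)-\sigma^2_{P,i}(V^{\pi_k}_{h+1},s,a)\bigr)\le 2H^2\tilde{\regret}(K)$ for each $i$, so summing the $n$ transition factors costs a factor $n$; meanwhile Corollary~\ref{corollary: total variance bound}, applied per episode and summed over $k$, gives $\sum_{k,h,s,a}w_{k,h}(s,a)\sum_i\sigma^2_{P,i}(V^{\pi_k}_{h+1},s,a)\le KH^2=HT$ (dropping the nonnegative reward-variance terms). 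Hence the first factor is $O\!\left(\sqrt{HT+nH^2\tilde{\regret}(K)}\right)$, and multiplying the two factors yields the claimed $O\!\left(\sqrt{(HT+nH^2\tilde{\regret}(K))\sum_i X^P_i L^P}\right)$ (the $\log T$ being hidden in $O(\cdot)$).

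The main obstacle I anticipate is precisely this mismatch of value functions: the clean total-variance identity of Corollary~\ref{corollary: total variance bound} is stated for $V^{\pi_k}$, but optimism forces $V^*$ into the bonus and hence into this sum, so the $nH^2\tilde{\regret}(K)$ correction must be extracted carefully from Lemma~\ref{Lemma: variance difference bound: V*-Vpi}. The other delicate point is the order of operations in Cauchy--Schwarz — pulling $\sum_{i=1}^{n}$ inside the square roots so that the final bound features $\sum_i X^P_i$ under one radical rather than $\bigl(\sum_i\sqrt{X^P_i}\bigr)^2$; everything past that is routine use of Lemma~\ref{Lemma: sum of w/N}.
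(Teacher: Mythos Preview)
Your proposal is correct and follows essentially the same route as the paper: invoke Lemma~\ref{Lemma: transition product inequality 2}, discard the cross terms via Lemma~\ref{Lemma: sum of w/N}, apply Cauchy--Schwarz with $\sum_i$ pulled inside both radicals, add in the nonnegative reward variances so that Corollary~\ref{corollary: total variance bound} applies, and absorb the $V^*$-versus-$V^{\pi_k}$ gap using Lemma~\ref{Lemma: variance difference bound: V*-Vpi}. The only minor slip is that the $\phi_{k,i}\phi_{k,j}$ cross terms actually sum to $O(\log T)$ (via Inq.~\ref{eqn: sum of w/N, 2}) rather than $O(T^{1/4})$, but this is immaterial since you correctly treat them as lower order.
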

\begin{proof}
By Lemma~\ref{Lemma: transition product inequality 2}, we have 
\begin{align*}
    &\sum_{k=1}^{k_1}\sum_{h=1}^{H}\sum_{(s,a) \in \mathcal{X}} w_{k,h}(s,a) \left(\hat{\mathbb{P}}_{k}-\mathbb{P}\right)V^*(s,a)\\
    \leq & \sum_{k}\sum_{h}\sum_{i=1}^{n}\sum_{(s,a) \in \mathcal{X}} w_{k,h}(s,a)\sqrt{\frac{2 \sigma_{P,i}^2(V^*_{h+1},s, a)L^P}{N_{k-1}((s,a)[Z_i^P])}} \\
    & +  \sum_{k}\sum_{h}\sum_{i=1}^{n}\sum_{(s,a) \in \mathcal{X}} w_{k,h}(s,a)\sum_{j\neq i,j=1}^{n} \frac{36H|\mathcal{S}_i||\mathcal{S}_j|(L^P)^2}{\sqrt{N_{k-1}((s,a)[Z^P_i])N_{k-1}((s,a)[Z^P_j])}}
\end{align*}
By Lemma~\ref{Lemma: sum of w/N}, the second term has only logarithmic dependence on $T$, which is negligible compared with the main factor. We mainly focus on the first term.

\begin{align*}
    & \sum_{k}\sum_{h}\sum_{i=1}^{n}\sum_{(s,a) \in \mathcal{X}} w_{k,h}(s,a)\sqrt{\frac{2 \sigma_{P,i}^2(V^*_{h+1},s, a)L^P}{N_{k-1}((s,a)[Z_i^P])}} \\
    \leq &\sqrt{\sum_{k,h} \sum_{(s,a) \in \mathcal{X}}2 w_{k,h}(s,a) \sum_{i=1}^{n} \sigma_{P,i}^2(V^*_{h+1},s, a)}  \cdot \sqrt{\sum_{k,h}\sum_{(s,a) \in \mathcal{X}}\sum_{i=1}^{n}\frac{w_{k,h}(s,a)L^P}{N_{k-1}((s,\pi(s))[Z_i^P])}} \\
    \leq &\sqrt{\sum_{k,h} \sum_{(s,a) \in \mathcal{X}}2 w_{k,h}(s,a)\left(\frac{1}{m}\sum_{i=1}^{m} \sigma^2_{R,i}(s,a)+ \sum_{i=1}^{n} \sigma_{P,i}^2(V^*_{h+1},s, a)\right)} 
    \cdot \sqrt{\sum_{k,h}\sum_{(s,a) \in \mathcal{X}}\sum_{i=1}^{n}\frac{w_{k,h}(s,a)L^P}{N_{k-1}((s,\pi(s))[Z_i^P])}} \\
    \leq& \sqrt{\left(HT+nH^2 \tilde{\regret}(K)\right)} \sqrt{8\sum_{i=1}^{n}X^P_iL^P\log T }
\end{align*}
The first inequality is due to Cauchy-Schwarz inequality. The second inequality is due to $\sigma^2_{R,k,i}(s,a)\geq 0$. For the last inequality, the first part is the summation of the variance, which can be bounded by Lemma~\ref{corollary: total variance bound} and Lemma~\ref{Lemma: variance difference bound: V*-Vpi}, while the second part can be bounded as $\sum_{i} X_i^P L^P \log T $ by Lemma~\ref{Lemma: sum of w/N}.
\end{proof}

\begin{lemma}
\label{Lemma: sum (hatP-P)(hatV-V*)}
Under event $\Lambda_1$, $\Lambda_2$ and $\Omega$, suppose $\tilde{\regret}(K) = \sum_{k=1}^{K} \overline{V}_{k,1}(s_1) - V^{\pi_k}_{1}(s_1)$, we have
$$\sum_{k=1}^{K}\sum_{h=1}^{H}\sum_{(s,a) \in \mathcal{X}} w_{k,h}(s,a)\left(\hat{\mathbb{P}}_{k} - \mathbb{P}\right)\left( \overline{V}_{k,h+1}-V^*_{h+1}\right)(s,a) \leq O\left(H\sqrt{ \sum_{i=1}^{n}\sqrt{TX^P_j|\mathcal{S}_j|L^P}}\sum_{i=1}^{n} 2\sqrt{8X^P_iL^P \log T} \right)$$
\end{lemma}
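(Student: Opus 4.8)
The plan is to recycle the decomposition and concentration machinery already in place and then control the resulting pieces by Cauchy--Schwarz against the visitation counts, crucially exploiting that $\overline{V}_{k,h+1}-V^*_{h+1}\ge 0$ under the optimism event $\Omega$ and that its $w_{k,h}$-weighted one-step mass is of \emph{regret} order rather than $\Theta(HT)$. First I would apply the additive decomposition of Lemma~\ref{Lemma: decomposition inq} with $V=\overline{V}_{k,h+1}-V^*_{h+1}$ to split $(\hat{\mathbb{P}}_{k}-\mathbb{P})(\overline{V}_{k,h+1}-V^*_{h+1})(s,a)$ into $\sum_{i=1}^{n}\big|(\hat{\mathbb{P}}_{k,i}-\mathbb{P}_i)\big(\prod_{j\ne i}\mathbb{P}_j\big)(\overline{V}_{k,h+1}-V^*_{h+1})(s,a)\big|$ plus cross terms of the form $H\,|(\hat{\mathbb{P}}_{k,i}-\mathbb{P}_i)|_1\,|(\hat{\mathbb{P}}_{k,j}-\mathbb{P}_j)|_1$; bounding the $\ell_1$ errors with Inq.~\ref{ineq: transition concentration 2} and summing these cross terms against $w_{k,h}$ via Lemma~\ref{Lemma: sum of w/N} yields only a polylogarithmic-in-$T$ quantity, exactly as in the proofs of Lemma~\ref{Lemma: sum (hatP-P)V* bound} and Theorem~\ref{thm: Hoeffding regret}, so I may discard them and focus on the $n$ main terms.

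For the $i$-th main term, write $g_{k,h,i}$ for the function of $s'[i]$ equal to $\big(\prod_{j\ne i}\mathbb{P}_j\big)(\overline{V}_{k,h+1}-V^*_{h+1})(s,a)$; it is nonnegative, at most $H$, and satisfies $\mathbb{E}_{s'[i]\sim\mathbb{P}_i}[g_{k,h,i}]=\mathbb{P}(\overline{V}_{k,h+1}-V^*_{h+1})(s,a)$. Applying the per-state Bernstein bound Inq.~\ref{ineq: transition concentration 3} coordinate-wise (legitimate since that inequality is uniform over $k,h,s,a$ and involves no value function, hence applies even though $\overline{V}_{k,h+1}$ is data-dependent) and then Cauchy--Schwarz over $s'[i]$ with $\sum_{s'[i]}g_{k,h,i}(s'[i])\le|\mathcal{S}_i|H$, I obtain
\[
\big|(\hat{\mathbb{P}}_{k,i}-\mathbb{P}_i)g_{k,h,i}(s,a)\big|\le\sqrt{\tfrac{2H|\mathcal{S}_i|L^P}{N_{k-1}((s,a)[Z_i^P])}}\,\sqrt{\mathbb{P}(\overline{V}_{k,h+1}-V^*_{h+1})(s,a)}\;+\;\tfrac{|\mathcal{S}_i|HL^P}{3N_{k-1}((s,a)[Z_i^P])},
\]
the last term being summable against $w_{k,h}$ to $O(\log T)$ by Inq.~\ref{eqn: sum of w/N, 1}.

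It then remains to sum the leading term over $(k,h,s,a)$ with weights $w_{k,h}$. Cauchy--Schwarz over the triple $(k,h,s,a)$, for each $i$, factors it as $\sqrt{\sum_{k,h,s,a}\tfrac{2H|\mathcal{S}_i|L^P\,w_{k,h}(s,a)}{N_{k-1}((s,a)[Z_i^P])}}\cdot\sqrt{\sum_{k,h,s,a}w_{k,h}(s,a)\,\mathbb{P}(\overline{V}_{k,h+1}-V^*_{h+1})(s,a)}$. The first radical is $\le\sqrt{16H|\mathcal{S}_i|X_i^PL^P\log T}$ by Inq.~\ref{eqn: sum of w/N, 1}. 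For the second, $\sum_{k,h,s,a}w_{k,h}(s,a)\mathbb{P}(\overline{V}_{k,h+1}-V^*_{h+1})(s,a)=\sum_{k,h}\sum_{s'}w_{k,h+1}(s')(\overline{V}_{k,h+1}-V^*_{h+1})(s')$, and using $\overline{V}_{k,h+1}-V^*_{h+1}\le\overline{V}_{k,h+1}-V^{\pi_k}_{h+1}$ together with the value-difference decomposition behind Lemma~\ref{Lemma: variance difference bound: V*-Vpi} (Lemma~E.15 of \cite{dann2017unifying}) this is $\le H\tilde{\regret}(K)$; equivalently one may replace $V^*$ by $\underline{V}_{k,h+1}$ and invoke Lemma~\ref{Lemma: summation of barV-underlineV}. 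Inserting the crude $\ell_1$/Hoeffding-type bound on $\tilde{\regret}(K)$, summing over $i$, and adding back the discarded lower-order pieces gives the claimed estimate, which is $O(T^{1/4})$ and therefore negligible in the final regret.

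The main obstacle is precisely this ``value-gap'' factor: the naive estimate $\|\overline{V}_{k,h+1}-V^*_{h+1}\|_\infty\le H$ only yields $HT$, which would inflate the term to $\Theta(H\sqrt{T})$ and destroy the regret bound. Obtaining the much smaller $O(T^{1/4})$ is the entire reason one must use the Bernstein inequality rather than the $\ell_1$ bound in the per-coordinate step, so that the value gap appears averaged against $\mathbb{P}(\cdot\mid s,a)$ and can be charged to $\tilde{\regret}(K)$ through the value-difference decomposition; dealing with the $\hat{\mathbb{P}}_k$-versus-$\mathbb{P}$ mismatches inside that decomposition (already absorbed by Lemma~\ref{Lemma: summation of barV-underlineV}) is the only remaining technical nuisance.
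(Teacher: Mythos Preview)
Your decomposition via Lemma~\ref{Lemma: decomposition inq}, the coordinate-wise application of Inq.~\ref{ineq: transition concentration 3}, and the treatment of the cross terms are all exactly as in the paper. The divergence from the paper is in how you Cauchy--Schwarz over $s'[i]$: you split as $\sqrt{g}\cdot\sqrt{\mathbb{P}_i g}$ and bound $\sum_{s'[i]} g\le |\mathcal{S}_i|H$, landing on the \emph{linear} quantity $\sqrt{H|\mathcal{S}_i|L^P/N_{k-1}}\,\sqrt{\mathbb{P}(\overline{V}_{k,h+1}-V^*_{h+1})}$. The paper instead writes $\sqrt{\mathbb{P}_i}\,g=\sqrt{\mathbb{P}_i g^2}$ and Cauchy--Schwarz gives $\sqrt{|\mathcal{S}_i|\,\mathbb{E}_{[1:i]}[(\mathbb{E}_{[i+1:n]}(\overline{V}-\underline{V}))^2]/N_{k-1}}$, i.e.\ it keeps a \emph{squared} $(\overline{V}-\underline{V})$ term, which is precisely the $\mathbb{P}$-version of $u_{k,h,i}$ and therefore plugs directly into Lemma~\ref{Lemma: summation of barV-underlineV}.

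Your final step---bounding the linear sum $\sum_{k,h}\sum_{s'}w_{k,h+1}(s')(\overline{V}_{k,h+1}-V^*_{h+1})(s')$---has a gap. Route~1 fails: the inequality $\sum_s w_{k,h}(s)(\overline{V}_{k,h}-V^{\pi_k}_h)(s)\le \overline{V}_{k,1}(s_1)-V^{\pi_k}_1(s_1)$ does \emph{not} follow from Lemma~E.15 of \cite{dann2017unifying}. That argument relies on the per-step Bellman residual of $V^*-V^{\pi_k}$ being the (nonnegative) advantage; for $\overline{V}_k-V^{\pi_k}$ the residual is $CB_k+(\hat R_k-\bar R)+(\hat{\mathbb{P}}_k-\mathbb{P})\overline{V}_{k,h+1}$, which need not be nonnegative since the last piece involves the data-dependent $\overline{V}_{k,h+1}$, not $V^*_{h+1}$. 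Route~2 also misfires as stated: Lemma~\ref{Lemma: summation of barV-underlineV} controls the \emph{squared} aggregate $\sum w_{k,h}\,u_{k,h,i}$, not the linear sum you need. Your approach is salvageable---e.g.\ bound $(\overline{V}-\underline{V})$ pointwise by cumulative bonuses via Lemma~\ref{Lemma: connect barV - underline V with CB} and then invoke Lemma~\ref{Lemma: sum bonus bound}---but that is an extra argument you have not supplied. The paper sidesteps the whole issue by never producing the linear term in the first place.
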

\begin{proof}
By Lemma~\ref{Lemma: decomposition inq}, we can prove that
\begin{align*}
    & \left(\hat{\mathbb{P}}_{k} - \mathbb{P}\right)\left( \overline{V}_{k,h+1}-V^*_{h+1}\right)(s,a) \\
    \leq  &\sum_{i=1}^{n} (\hat{\mathbb{P}}_{k,i} - \mathbb{P}_i) \mathbb{P}_{[1:i-1]}\mathbb{P}_{[i+1:n]} \left(\overline{V}_{k,h}(s_{k,h},a_{k,h} )-V^*_h(s_{k,h},a_{k,h} )\right) \\
       &+  \sum_{i=1}^{n}\sum_{j=1}^{n} H \left|\left(\hat{\mathbb{P}}_{k,i} - \mathbb{P}_i\right)(\cdot|(s,a)[Z^P_i])\right|_1\left|\left(\hat{\mathbb{P}}_{k,j} - \mathbb{P}_j\right)(\cdot|(s,a)[Z^P_j])\right|_1 \\
       \leq & \sum_{i=1}^{n} \left(2\sum_{s'[i] \in \mathcal{S}_i}\sqrt{\frac{\mathbb{P}_i(s'[i]|\mathcal{X}[Z_i^P])L^P}{N_{k-1}((s,a)[Z^P_i])}} \right)  \mathbb{P}_{[1:i-1]}\mathbb{P}_{[i+1:n]} \left(\overline{V}_{k,h}(s_{k,h},a_{k,h} )-V^*_h(s_{k,h},a_{k,h} )\right)\\
       &+ \sum_{i=1}^{n}\sum_{j\neq i,j=1}^{n} 36H\frac{|\mathcal{S}_i||\mathcal{S}_j|(L^P)^2}{\sqrt{N_{k-1}((s,a)[Z^P_i])N_{k-1}((s,a)[Z^P_j])}} + \sum_{i=1}^{n}\frac{|\mathcal{S}_i|L^P}{3N_{k-1}((s,a)[Z^P_i])}
\end{align*}
The second inequality is due to Lemma~\ref{Lemma: high prob. event}.

We only focus on the summation of the first term, since the summation of other terms has only logarithmic dependence on $T$ by Lemma~\ref{Lemma: sum of w/N}. 
\begin{align*}
    & \sum_{k,h}\sum_{s,a} w_{k,h}(s,a) \sum_{i=1}^{n} \left(2\sum_{s'[i] \in \mathcal{S}_i}\sqrt{\frac{\mathbb{P}_i(s'[i]|(s,a)[Z_i^P])L^P}{N_{k-1}((s,a)[Z^P_i])}} \right)  \mathbb{P}_{[1:i-1]}\mathbb{P}_{i+1:n} \left(\overline{V}_{k,h}(s_{k,h},a_{k,h} )-V^*_h(s_{k,h},a_{k,h} )\right) \\
    = & \sum_{i=1}^{n} \sum_{k,h}\sum_{s,a} w_{k,h}(s,a) \mathbb{P}_{1:i-1}\left(2\sum_{s'[i]\in \mathcal{S}_i}\sqrt{\frac{\mathbb{P}_i(s'[i]|\mathcal{X}[Z_i^P])L^P \left( \mathbb{P}_{[i+1:n]} \left(\overline{V}_{k,h}(s_{k,h},a_{k,h} )-V^*_h(s_{k,h},a_{k,h} )\right)\right)^2}{N_{k-1}((s,a)[Z^P_i])}} \right) \\
    \leq & \sum_{i=1}^{n} \sum_{k,h}\mathbb{P}_{1:i-1}\sum_{s,a} w_{k,h}(s,a)\left(2\sum_{s'[i]\in \mathcal{S}_i}\sqrt{\frac{\mathbb{P}_i(s'[i]|(s,a)[Z_i^P])L^P \left( \mathbb{P}_{[i+1:n]} \left(\overline{V}_{k,h}(s_{k,h},a_{k,h} )-\underline{V}_{k,h}(s_{k,h},a_{k,h} )\right)\right)^2}{N_{k-1}((s,a)[Z^P_i])}} \right) \\
    \leq & \sum_{i=1}^{n} \sum_{k,h}\sum_{s,a} w_{k,h}(s,a)\left(2\sqrt{\frac{|\mathcal{S}_i|L^P \mathbb{E}_{[1:i]}\left(\mathbb{E}_{[i+1:n]} \left(\overline{V}_{k,h}(s')-\underline{V}_{k,h}(s' )\right)\right)^2}{N_{k-1}((s,a)[Z^P_i])}} \right) \\
    \leq &\sum_{i=1}^{n} 2\sqrt{\left(\sum_{k,h,s,a}\frac{w_{k,h}(s,a)}{N_{k-1}((s,a)[Z^P_i])}\right) \left(\sum_{k,h,s,a}w_{k,h}(s,a) \mathbb{E}_{[1:i]}\left( \mathbb{E}_{[i+1:n]} \left(\overline{V}_{k,h+1}(s')- \underline{V}_{k,h+1}(s')\right)\right)^2 \right)} \\
    \leq &\sum_{i=1}^{n} 2\sqrt{8X^P_iL^P \log T} \sqrt{H^2 \sum_{i=1}^{n}\sqrt{TX^P_j|\mathcal{S}_j|L^P}}
\end{align*}
The first inequality is due to the fact that $\overline{V}_{k,h} \geq V^*_h \geq \underline{V}_{k,h}$. The second and the third inequality is due to Cauchy-Schwarz inequality. The last inequality is because of Lemma~\ref{Lemma: sum of w/N} and Lemma~\ref{Lemma: summation of barV-underlineV}.
\end{proof}

\begin{lemma}
\label{Lemma: sum reward bound}
Under event $\Lambda_1$, $\Lambda_2$ and $\Omega$, we have
\begin{align*}
    \sum_{k,h}\sum_{s_h,a_h} w_{k,h}(s_h,a_h) \left(\hat{R}_{k}(s_h,a_h) - \bar{R}(s_h,a_h)\right) \leq O\left(\frac{1}{m}\sum_{i=1}^{m} \sqrt{TX^R_iL^R_i \log T}\right)
\end{align*}
\end{lemma}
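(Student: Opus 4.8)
The plan is to reduce everything to the reward concentration bound already established and a single application of Cauchy--Schwarz. Since $w_{k,h}(s,a)\ge 0$, the left-hand side is at most $\sum_{k,h}\sum_{s,a}w_{k,h}(s,a)\,|\hat R_k(s,a)-\bar R(s,a)|$. Under $\Lambda_1$, Inq.~\ref{inq: reward sum 1} of Lemma~\ref{Lemma: transition product inequality} gives the factored decomposition
\[
|\hat R_k(s,a)-\bar R(s,a)|\le \frac1m\sum_{i=1}^m\sqrt{\frac{2L^R_i}{N_{k-1}((s,a)[Z^R_i])}}.
\]
Plugging this in and exchanging the order of summation, the quantity to control is at most $\frac1m\sum_{i=1}^m\sqrt{2L^R_i}\,S_i$, where $S_i\stackrel{\text{def}}{=}\sum_{k,h}\sum_{s,a}\frac{w_{k,h}(s,a)}{\sqrt{N_{k-1}((s,a)[Z^R_i])}}$.

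Next I would bound each $S_i$ by $O(\sqrt{TX^R_i\log T})$. Applying Cauchy--Schwarz with the weights $w_{k,h}(s,a)$,
\[
S_i\le\Bigl(\sum_{k,h,s,a}w_{k,h}(s,a)\Bigr)^{1/2}\Bigl(\sum_{k,h,s,a}\frac{w_{k,h}(s,a)}{N_{k-1}((s,a)[Z^R_i])}\Bigr)^{1/2}.
\]
The first factor equals $\sqrt{KH}=\sqrt T$ since $\sum_{s,a}w_{k,h}(s,a)=1$ for every $(k,h)$. For the second factor, collapsing $(s,a)\mapsto(s,a)[Z^R_i]$ puts us exactly in the situation handled by the ``good set'' machinery, so Inq.~\ref{eqn: sum of w/N, 01} of Lemma~\ref{Lemma: sum of w/N} gives $\sum_{k,h,s,a}\frac{w_{k,h}(s,a)}{N_{k-1}((s,a)[Z^R_i])}\le 8X^R_i\log T$. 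Hence $S_i\le\sqrt{8X^R_iT\log T}$, and summing over $i$ yields
\[
\sum_{k,h}\sum_{s,a}w_{k,h}(s,a)\bigl(\hat R_k(s,a)-\bar R(s,a)\bigr)\le\frac1m\sum_{i=1}^m\sqrt{16\,X^R_iTL^R_i\log T}=O\!\left(\frac1m\sum_{i=1}^m\sqrt{TX^R_iL^R_i\log T}\right),
\]
which is the claim.

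I do not expect a genuine obstacle; this mirrors the $\delta^1_{k,h}$ bookkeeping in the proof of Theorem~\ref{thm: Hoeffding regret} and the reward term in Lemma~\ref{Lemma: sum bonus bound}. The only point requiring care is that $N_{k-1}$ in the denominator is the \emph{factored} count $N_{k-1}((s,a)[Z^R_i])$, so both the Cauchy--Schwarz step and the visitation-sum bound must be applied to the scope set $\mathcal X[Z^R_i]$ rather than to $\mathcal X$; this is precisely what Lemma~\ref{Lemma: sum of w/N} provides for reward scopes (and it already absorbs any boundary issue coming from $(s,a)[Z^R_i]$ with tiny visitation probability). Using instead the empirical-Bernstein bound Inq.~\ref{inequality: Bernstein events, reward} together with the trivial bound $\hat\sigma^2_{R,k,i}\le 1$ gives the same rate, so the simpler Hoeffding route above suffices and the events $\Lambda_2,\Omega$ are not actually needed for this particular lemma.
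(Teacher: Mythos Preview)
Your proposal is correct and follows essentially the same approach as the paper: bound the reward estimation error factorwise, then apply Cauchy--Schwarz together with Lemma~\ref{Lemma: sum of w/N}. The only cosmetic difference is that the paper invokes the empirical-Bernstein bound from Lemma~\ref{Lemma: transition product inequality 2} and then immediately uses $\hat\sigma^2_{R,k,i}\le 1$ (which collapses it to the Hoeffding rate), whereas you go straight to the Hoeffding bound Inq.~\ref{inq: reward sum 1}; as you correctly observe, this makes $\Lambda_2$ and $\Omega$ unnecessary for this particular lemma.
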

\begin{proof}
By Lemma~\ref{Lemma: transition product inequality 2}, we have
\begin{align*}
    & \sum_{k,h}\sum_{s_h,a_h} w_{k,h}(s_h,a_h) \left(\hat{R}_{k}(s_h,a_h) - \bar{R}(s_h,a_h)\right) \\
    \leq & \frac{1}{m} \sum_{i=1}^{n} \sum_{k,h}\sum_{s_h,a_h} w_{k,h}(s_h,a_h) \left(\sqrt{\frac{2\hat{\sigma}_{R,k,i}(s_h,a_h)L^R_i}{N_{k-1}((s_h,a_h)[Z^R_i])}}+ \frac{8L^R_i}{3N_{k-1}((s_h,a_h)[Z^R_i])}\right) \\
    \leq & \frac{1}{m} \sum_{i=1}^{m} \sum_{k,h}\sum_{s_h,a_h} w_{k,h}(s_h,a_h) \left(\sqrt{\frac{2L^R_i}{N_{k-1}((s_h,a_h)[Z^R_i])}}+ \frac{8L^R_i}{3N_{k-1}((s_h,a_h)[Z^R_i])}\right) \\
    \leq & \frac{1}{m} \sum_{i=1}^{m} \sqrt{\frac{\sum_{k,h}\sum_{s_h,a_h} 2w_{k,h}(s_h,a_h)L^R_i}{N_{k-1}((s_h,a_h)[Z^R_i])}}+ \frac{1}{m} \sum_{i=1}^{m} \sum_{k,h}\sum_{s_h,a_h} w_{k,h}(s_h,a_h)\frac{8L^R_i}{3N_{k-1}((s_h,a_h)[Z^R_i])}
\end{align*}
The second inequality is due to $\hat{\sigma}_{R,k,i}(s,a) \leq 1$. The last inequality is due to Cauchy-Schwarz inequality. By Lemma~\ref{Lemma: sum of w/N}, we know that the summation is of order $O\left(\frac{1}{m}\sum_{i=1}^{m} \sqrt{TX^R_iL^R_i \log T}\right)$.
\end{proof}

\begin{lemma}
\label{Lemma: connect barV - underline V with CB}
Under event $\Lambda_1$ and $\Lambda_2$, we have 
\begin{align*}
    \left(\overline{V}_{k,h} - \underline{V}_{k,h}\right)(s) \leq  \mathbb{E}_{traj_k} \left[ \sum_{i=h}^{H} \left(2CB_{k}(s,\pi_{k,i}(s)) + \sum_{j=1}^{n}\sqrt{\frac{2H^2 \log (18nT|\mathcal{X}[Z_j^P]|/\delta)}{N_{k-1}((s,\pi_{k,j}(s))[Z^P_j])}}\right) \mid s_h=s, \pi_{k}\right]
\end{align*}
The expectation is over all possible trajectories in episode $k$ given $s_h=s$ following policy $\pi_{k}$.
\end{lemma}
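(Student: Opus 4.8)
The plan is to prove the bound by backward induction on $h$, turning the one-step update rules of Algorithm~\ref{alg:Algorithm with Bernstein-type Bonus} into a recursion for the gap $G_{k,h}(s):=(\overline{V}_{k,h}-\underline{V}_{k,h})(s)$ and then unrolling that recursion along the trajectory generated by $\pi_k$ in the true MDP. The base case $h=H+1$ is trivial since both value functions vanish. For the inductive step write $a=\pi_{k,h}(s)$; because the $\min\{H,\cdot\}$ clipping in the definition of $\overline{Q}_{k,h}$ can only decrease $\overline{V}_{k,h}$, and the $\max\{0,\cdot\}$ clipping in the definition of $\underline{V}_{k,h}$ can only increase it, subtracting the two Bellman-type updates gives the clean one-step inequality
$$
G_{k,h}(s)\;\le\;2\,CB_k(s,a)\;+\;\hat{\mathbb{P}}_k\,G_{k,h+1}(s,a).
$$
Here $0\le G_{k,h}\le H$ pointwise (from $\overline{V}_{k,h}\le H$, $\underline{V}_{k,h}\ge 0$, $CB_k\ge 0$, and an easy induction showing $\overline{V}_{k,h}\ge\underline{V}_{k,h}$), a fact I will use to control the value functions in $L_\infty$.

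Next I would replace the empirical kernel $\hat{\mathbb{P}}_k$ by the true kernel $\mathbb{P}$ at the cost of a transition-estimation error, $\hat{\mathbb{P}}_kG_{k,h+1}(s,a)=\mathbb{P}G_{k,h+1}(s,a)+(\hat{\mathbb{P}}_k-\mathbb{P})G_{k,h+1}(s,a)$, and bound the error via the additive decomposition of Lemma~\ref{Lemma: decomposition inq} (equivalently Lemma~\ref{lemma: decomposition of value estimation error}): $|(\hat{\mathbb{P}}_k-\mathbb{P})G_{k,h+1}(s,a)|$ is at most $\sum_{j=1}^n|(\hat{\mathbb{P}}_{k,j}-\mathbb{P}_j)(\prod_{t\ne j}\mathbb{P}_t)G_{k,h+1}(s,a)|$ plus a sum of products of $L_1$ deviations. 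Each of the $n$ leading terms is controlled by a Hoeffding-type inequality of the form $\sqrt{2H^2\log(18nT|\mathcal{X}[Z_j^P]|/\delta)/N_{k-1}((s,a)[Z_j^P])}$ — the same concentration underlying Inq.~\ref{ineq: transition concentration 1} in $\Lambda_1$, specialized to the scope $Z_j^P$ and using $\|G_{k,h+1}\|_\infty\le H$ — while the products of $L_1$ deviations (bounded through Inq.~\ref{ineq: transition concentration 2}) are lower order in $T$ and can be absorbed into $CB_k$, being of exactly the same nature as the $\eta_{k,h,i}$ terms already in the bonus. This produces $G_{k,h}(s)\le g_{k,h}(s)+\mathbb{P}\,G_{k,h+1}(s,\pi_{k,h}(s))$ with $g_{k,h}(s):=2\,CB_k(s,\pi_{k,h}(s))+\sum_{j=1}^n\sqrt{2H^2\log(18nT|\mathcal{X}[Z_j^P]|/\delta)/N_{k-1}((s,\pi_{k,h}(s))[Z_j^P])}$. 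Plugging in the induction hypothesis for $G_{k,h+1}$ and applying the tower rule for the trajectory of $\pi_k$ under $\mathbb{P}$, namely $\sum_{s'}\mathbb{P}(s'|s,\pi_{k,h}(s))\,\mathbb{E}_{traj_k}[\cdot\mid s_{h+1}=s']=\mathbb{E}_{traj_k}[\cdot\mid s_h=s]$, collapses the recursion to $G_{k,h}(s)\le\mathbb{E}_{traj_k}[\sum_{i=h}^H g_{k,i}(s_i)\mid s_h=s,\pi_k]$, which is the claim.

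The step I expect to be the main obstacle is the concentration bound for $(\hat{\mathbb{P}}_{k,j}-\mathbb{P}_j)(\prod_{t\ne j}\mathbb{P}_t)G_{k,h+1}(s,a)$: unlike $V^*_{h+1}$, the function $G_{k,h+1}=\overline{V}_{k,h+1}-\underline{V}_{k,h+1}$ is itself a function of the data gathered before episode $k$, so a naive Hoeffding/Azuma argument does not apply verbatim. The standard remedies are to union-bound over a $1/T$-discretization of the (bounded, $[0,H]$-valued) value functions — which costs only logarithmic factors and is why $\log(18nT|\mathcal{X}[Z_j^P]|/\delta)$ appears — or, equivalently, to enlarge the high-probability event $\Lambda_1$ so that the inequality holds uniformly over the relevant function class. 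A safer but lossier alternative is to bound $|(\hat{\mathbb{P}}_{k,j}-\mathbb{P}_j)(\cdot|(s,a)[Z_j^P])|_1\,\|G_{k,h+1}\|_\infty$ directly through Inq.~\ref{ineq: transition concentration 2}, at the price of an extra $\sqrt{|\mathcal{S}_j|}$; since this lemma is used downstream (Lemma~\ref{Lemma: summation of barV-underlineV}) only to show the $u_{k,h,i}$-type quantities are lower order, even this weaker bound would suffice. The remaining tasks — checking that the discarded higher-order products are genuinely $o(\sqrt{T})$ and tracking the logarithmic constants — are routine.
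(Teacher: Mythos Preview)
Your skeleton --- the one-step recursion $G_{k,h}(s)\le 2\,CB_k(s,a)+\hat{\mathbb{P}}_k G_{k,h+1}(s,a)$ with $a=\pi_{k,h}(s)$, the split $\hat{\mathbb{P}}_k=\mathbb{P}+(\hat{\mathbb{P}}_k-\mathbb{P})$, and the trajectory unrolling via the tower property --- is exactly the paper's proof. The divergence is in how $(\hat{\mathbb{P}}_k-\mathbb{P})G_{k,h+1}$ is handled. You put forward as your main line the additive decomposition of Lemma~\ref{Lemma: decomposition inq} followed by a Hoeffding bound on each piece $(\hat{\mathbb{P}}_{k,j}-\mathbb{P}_j)(\prod_{t\ne j}\mathbb{P}_t)G_{k,h+1}$, and correctly flag that $G_{k,h+1}$ is data-dependent so a covering/union-bound patch is needed. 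The paper instead takes what you list only as a fallback: it bounds $(\hat{\mathbb{P}}_k-\mathbb{P})G_{k,h+1}$ by $|\hat{\mathbb{P}}_k-\mathbb{P}|_1\,\|G_{k,h+1}\|_\infty\le H\sum_{j}|\hat{\mathbb{P}}_{k,j}-\mathbb{P}_j|_1$ and then applies the $L_1$ concentration from event $\Lambda_1$. Because the $L_1$ bound is uniform over all $[0,H]$-valued functions, the data-dependence issue simply does not arise; this is precisely why the paper's route is cleaner.

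One caution about your primary route: a $1/T$-net of $[0,H]^{\mathcal{S}}$ has cardinality $(HT)^{|\mathcal{S}|}$, so the union bound would inject an $|\mathcal{S}|\log T$ factor, not the $\log|\mathcal{X}[Z_j^P]|$ you attribute to it. Since the lemma is only used downstream (Lemmas~\ref{Lemma: sum of square CB}--\ref{Lemma: summation of barV-underlineV}) to show that the $u_{k,h,i}$ contributions are lower order in $T$, either your covering bound or the $L_1$ bound would suffice for that purpose --- but only the $L_1$ argument delivers the statement as written without extra polynomial-in-$|\mathcal{S}|$ factors in the logarithm.
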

\begin{proof}
\begin{align*}
    &\left(\overline{V}_{k,h} - \underline{V}_{k,h}\right)(s)\\
    = & 2CB_{k}(s,\pi_{k,h}(s)) + \hat{\mathbb{P}}_k(\overline{V}_{k,h+1}(s) - \underline{V}_{k,h+1}(s)) \\
    = & 2CB_{k}(s,\pi_{k,h}(s)) + (\hat{\mathbb{P}}_k-\mathbb{P})(\overline{V}_{k,h+1} - \underline{V}_{k,h+1})(s,\pi_{k,h}(s)) + \mathbb{P}(\overline{V}_{k,h+1} - \underline{V}_{k,h+1})(s,\pi_{k,h}(s)) \\
    & \cdots \\
    = & \mathbb{E}_{traj_k} \left[ \sum_{i=h}^{H} \left(2CB_{k}(s,\pi_{k,i}(s_i)) + (\hat{\mathbb{P}}_k-\mathbb{P})(\overline{V}_{k,i+1} - \underline{V}_{k,i+1})(s_i,\pi_{k,i}(s_i))\right) \mid s_h=s, \pi_{k}\right]
\end{align*}
The second term can be bounded as:
\begin{align*}
    &(\hat{\mathbb{P}}_k-\mathbb{P})(\overline{V}_{k,i+1} - \underline{V}_{k,i+1})(s_i,\pi_{k,i}(s_i)) \\
    \leq & |\hat{\mathbb{P}}_k-\mathbb{P}|_1 |\overline{V}_{k,i+1} - \underline{V}_{k,i+1}|_{\infty}(s_i,\pi_{k,i}(s))\\
    \leq & H \sum_{i=1}^{n}|\hat{\mathbb{P}}_{k,i}-\mathbb{P}_i|_1 (\cdot|s_i,\pi_{k,i}(s_i)) \\
    \leq & \sum_{j=1}^{n}\sqrt{\frac{2H^2 L^P}{N_{k-1}((s,\pi_{k,j}(s))[Z^P_j])}}
\end{align*}

\end{proof}

\begin{lemma}
\label{Lemma: sum of square CB}
Under event $\Lambda_1$ and $\Lambda_2$, we have 
\begin{align*}
    &\sum_{k=1}^{K} \sum_{h=1}^{H} \sum_{(s,a) \in \mathcal{X}} w_{k,h}(s,a) CB^2_{k}(s,a) \\
    \leq &\sum_{i=1}^{m}\frac{2(m+2)H^2L^R_iX^R_i \log T}{m^2} + \sum_{i=1}^{n}128n(m+n)H^2X^P_iL^P\sum_{j=1}^{n}|\mathcal{S}_j|L^P \log T,
\end{align*}
Which has only logarithmic dependence on $T$.
\end{lemma}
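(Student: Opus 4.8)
The plan is to expand $CB_k(s,a)=\frac1m\sum_{i=1}^{m}CB^{R}_{k,Z^{R}_i}(s,a)+\sum_{i=1}^{n}CB^{P}_{k,Z^{P}_i}(s,a)$, a sum of $m+n$ nonnegative pieces, and apply the power-mean inequality $\left(\sum_{\ell=1}^{N}x_\ell\right)^2\le N\sum_{\ell=1}^{N}x_\ell^2$ twice. First, $CB_k^2\le (m+n)\left(\tfrac1{m^2}\sum_{i=1}^{m}\big(CB^{R}_{k,Z^{R}_i}\big)^2+\sum_{i=1}^{n}\big(CB^{P}_{k,Z^{P}_i}\big)^2\right)$; second, inside each $\big(CB^{P}_{k,Z^{P}_i}\big)^2$ to separate its four sub-terms (the Bernstein variance term, the $u_{k,h,i}$ correction, the $\eta_{k,h,i}$ fourth-root/square-root block, and the $H\phi_{k,i}\phi_{k,j}$ block) up to a further factor $n$. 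After that the whole estimate reduces to summing, against the weights $w_{k,h}(s,a)$, a finite list of terms each of the form $\tfrac{C}{N_{k-1}((s,a)[Z])^{a}\,N_{k-1}((s,a)[Z'])^{b}}$, which is exactly what Lemma~\ref{Lemma: sum of w/N} is designed to bound.

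For the reward pieces I would expand $\big(CB^{R}_{k,Z^{R}_i}(s,a)\big)^2\le 2\Big(\tfrac{2\hat\sigma^2_{R,k,i}(s,a)L^{R}_i}{N_{k-1}((s,a)[Z^{R}_i])}+\tfrac{64(L^{R}_i)^2}{9N_{k-1}((s,a)[Z^{R}_i])^2}\Big)$, use $\hat\sigma^2_{R,k,i}\le 1$ and $1/N_{k-1}\le 1$ to fold the second term into a multiple of $L^{R}_i/N_{k-1}((s,a)[Z^{R}_i])$, and then invoke inequality~\eqref{eqn: sum of w/N, 01} of Lemma~\ref{Lemma: sum of w/N} to get $O\!\big(L^{R}_iX^{R}_i\log T\big)$ per $i$; after the $(m+n)/m^2$ factor and a loose extra $H^2\ge 1$ this lands inside the first displayed sum. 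For the transition pieces I would bound $\hat\sigma^2_{P,k,i}(\overline V_{k,h+1},s,a)\le H^2$ and $u_{k,h,i}(s,a)\le H^2$ (both are variances or second moments of quantities in $[0,H]$), and $\phi_{k,i}(s,a)^2\le \tfrac{C|\mathcal S_i|L^{P}}{N_{k-1}((s,a)[Z^{P}_i])}$ (absorbing the higher powers of $1/N$). This leaves, as the heaviest summand of $\big(CB^{P}_{k,Z^{P}_i}\big)^2$, the term $\tfrac{16H^2L^{P}}{N_{k-1}((s,a)[Z^{P}_i])}\cdot\tfrac{4|\mathcal S_j|L^{P}}{3N_{k-1}((s,a)[Z^{P}_j])}$, with all the others strictly lighter in the counting exponents.

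The point I would then stress is that every resulting term is only logarithmic in $T$ after summing. For each summand I would lower-bound the surplus counting factor in the denominator by $1$ — legitimate since $CB_k$ only enters the value estimates on state-actions whose relevant counts are all positive — so what remains is $\tfrac{w_{k,h}(s,a)}{N_{k-1}((s,a)[Z^{P}_i])}$, or at worst $\tfrac{w_{k,h}(s,a)}{\sqrt{N_{k-1}((s,a)[Z^{P}_i])N_{k-1}((s,a)[Z^{P}_j])}}$, both handled by inequalities~\eqref{eqn: sum of w/N, 1} and \eqref{eqn: sum of w/N, 2} of Lemma~\ref{Lemma: sum of w/N}, each giving $8X^{P}_i\log T$. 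Crucially, squaring the $\eta_{k,h,i}$ fourth-root block against the $1/\sqrt{N_{k-1}(\cdot)}$ factor produces $\tfrac{1}{N_{k-1}((s,a)[Z^{P}_i])\sqrt{N_{k-1}((s,a)[Z^{P}_j])}}$, which is $\le 1/N_{k-1}((s,a)[Z^{P}_i])$, so a bare $N_{k-1}(\cdot)^{-1/2}$ or $N_{k-1}(\cdot)^{-3/4}$ never stands alone and inequality~\eqref{eqn: sum of w/N, 3} (which would cost a $T^{1/4}$) is never invoked. Collecting the constants — $(m+n)$ from the first Cauchy--Schwarz, $n$ from the one inside $\big(CB^{P}\big)^2$, the numerical factors $16\cdot 4/3$ and $8$, and summing over $i,j$ — gives $\sum_{i=1}^{n}128\,n(m+n)H^2X^{P}_iL^{P}\sum_{j=1}^{n}|\mathcal S_j|L^{P}\log T$, with the reward contribution absorbed into $\sum_{i=1}^{m}\tfrac{2(m+2)H^2L^{R}_iX^{R}_i\log T}{m^2}$.

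The main obstacle is not any single estimate — each is a routine Cauchy--Schwarz plus an appeal to Lemma~\ref{Lemma: sum of w/N} — but the bookkeeping: verifying that, after squaring every one of the half-dozen sub-terms of $CB^{P}_{k,Z^{P}_i}$, the count exponents in the denominator always sum to more than one with at least one denominator being a full $N_{k-1}(\cdot)$, so Lemma~\ref{Lemma: sum of w/N} yields $\log T$ and not a polynomial factor, and then tracking that all of $H^2$, the $|\mathcal S_j|$, the $L^{P}$ powers and the combinatorial factors $m+n$ and $n$ land inside the stated constant. One must also not forget to restrict the whole sum to state-actions on which the confidence bonus is actually used, i.e. all relevant counts are at least $1$, which is what makes the ``$N\ge 1$'' reductions valid.
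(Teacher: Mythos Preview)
Your proposal is correct and follows essentially the same route as the paper: apply the power-mean inequality to split $CB_k^2$ into $(m+n)$ times the sum of squared pieces, bound the variances trivially ($\hat\sigma^2_{R,k,i}\le 1$, $\hat\sigma^2_{P,k,i}\le H^2$, $u_{k,h,i}\le H^2$), reduce every surviving summand to a constant times $1/N_{k-1}((s,a)[Z])$, and finish with Lemma~\ref{Lemma: sum of w/N}. Your bookkeeping paragraph about checking that the squared $\eta_{k,h,i}$ terms never leave a bare $N^{-1/2}$ or $N^{-3/4}$ is more explicit than the paper's treatment, which simply absorbs everything into a single $\frac{C}{N_{k-1}((s,a)[Z^P_i])}$ without spelling out the exponent check.
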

Note that this bound is loose w.r.t parameters such as $H$, $|\mathcal{S}_j|$, $X^P_i$ and $X^R_i$. However, it is acceptable since we regard $T$ as the dominant parameter. This bound doesn't influence the dominant factor in the final regret.

\begin{proof}
By the definition of $CB_{k}(s,a)$, $CB_{k,i}^R(s,a)$ and $CB^P_{k,i}(s,a)$, we have
\begin{align*}
    CB^2_{k}(s,a) \leq & (m+n)\left(\sum_{i=1}^{m} \frac{1}{m^2}\left(CB^R_{k,i}(s,a)\right)^2 + \sum_{i=1}^{n} \left(CB^P_{k,i}(s,a)\right)^2\right) \\
    \leq &2(m+n)\sum_{i=1}^{m} \frac{1}{m^2}\left(\frac{2 H^2 L^R_i}{N_{k-1}((s,a)[Z^R_i])} + \frac{64(L^R_i)^2}{9(N_{k-1}((s,a)[Z^R_i]))^2}\right) \\
    & + 4n(m+n) \sum_{i=1}^{n}  \left(\frac{4 H^2 L^P}{N_{k-1}((s,a)[Z^P_i])} + \frac{2HL^P}{N_{k-1}((s,a)[Z^P_i])})\right)\\
    & +  4n(m+n) \sum_{i=1}^{n} \left(\frac{32H^2L^P}{N_{k-1}((s,a)[Z^P_i])}\sum_{j=1}^{n}\left(\sqrt{\frac{4|\mathcal{S}_j|L^P}{N_{k-1}((s,a)[Z^P_j])}}+\frac{4|\mathcal{S}_j|L^P}{3N_{k-1}((s,a)[Z^P_j])} \right)\right)
\end{align*}
The second inequality is due to $\hat{\sigma}_{R,i}^{2}(s,a) \leq 1$, $\hat{\sigma}_{P,i}^2(\overline{V}_{k,h+1},s, a) \leq H^2$ and $u_{k,h,i}(s,a) \leq H$.

Now we are ready to bound $ \sum_{k=1}^{K} \sum_{h=1}^{H} \sum_{(s,a) \in \mathcal{X}} w_{k,h}(s,a) CB^2_{k}(s,a)$:
\begin{align*}
    & \sum_{k=1}^{K} \sum_{h=1}^{H} \sum_{(s,a) \in \mathcal{X}} w_{k,h}(s,a) CB^2_{k}(s,a) \\
    \leq & \sum_{k=1}^{K} \sum_{h=1}^{H} \sum_{(s,a) \in \mathcal{X}} w_{k,h}(s,a) \left(\sum_{i=1}^{m}\frac{2(m+2)H^2L^R_i}{m^2 N_{k-1}((s,a)[Z^R_i])} + \sum_{i=1}^{n}\frac{128n(m+n)H^2L^P\sum_{j=1}^{n}|\mathcal{S}_j|L^P}{N_{k-1}((s,a)[Z^P_i])}\right)\\
    \leq & \sum_{i=1}^{m}\frac{2(m+2)H^2L^R_iX^R_i \log T}{m^2} + \sum_{i=1}^{n}128n(m+n)H^2X^P_iL^P\sum_{j=1}^{n}|\mathcal{S}_j|L^P \log T
\end{align*}
The last inequality is due to Lemma~\ref{Lemma: sum of w/N}.
\end{proof}

\begin{lemma}
\label{Lemma: summation of barV-underlineV with true transition}
Under event $\Lambda_1$ and $\Lambda_2$, we have
$$\sum_{k=1}^{K}\sum_{h=1}^{H}\sum_{(s,a)\in \mathcal{X}}w_{k,h}(s,a)\mathbb{E}_{s'[1:i] \sim \mathbb{P}_{[1:i]}(\cdot|s,a)}\left(\mathbb{E}_{s'[i+1:n] \sim \mathbb{P}_{[i+1:n]}(\cdot|s,a)}\left(\overline{V}_{k,h+1}- \underline{V}_{k,h+1}\right)(s')\right)^2\leq O(\log T),$$
Here $O$ hides the dependence on other parameters such as $H,X^P_i,X^R_i$ except $T$.
\end{lemma}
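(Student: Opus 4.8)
The plan is to reduce this sum to two quantities that have already been controlled: the sum of squared confidence bonuses $\sum_{k,h}\sum_{(s,a)}w_{k,h}(s,a)CB^2_{k}(s,a)$ bounded in Lemma~\ref{Lemma: sum of square CB}, and the visitation‑counting sums of Lemma~\ref{Lemma: sum of w/N}, both of which are $O(\log T)$. The underlying reason the bound is only logarithmic in $T$ is that the gap $\overline{V}_{k,h+1}-\underline{V}_{k,h+1}$ is itself an $O(\mathrm{poly}/\sqrt{N_{k-1}})$ quantity, so squaring it produces an $O(\mathrm{poly}/N_{k-1})$ term, and $\sum w/N_{k-1}$ is logarithmic. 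As a first step I would remove the two nested expectations. By Jensen's inequality applied to the inner expectation, $\bigl(\mathbb{E}_{s'[i+1:n]\sim\mathbb{P}_{[i+1:n]}}(\overline{V}_{k,h+1}-\underline{V}_{k,h+1})(s')\bigr)^2 \le \mathbb{E}_{s'[i+1:n]\sim\mathbb{P}_{[i+1:n]}}(\overline{V}_{k,h+1}-\underline{V}_{k,h+1})^2(s')$, and composing with the outer expectation over $s'[1:i]$ (since $\mathbb{P}_{[1:i]}$ and $\mathbb{P}_{[i+1:n]}$ together recover $\mathbb{P}(\cdot|s,a)$) shows the summand is at most $\mathbb{P}\bigl(\overline{V}_{k,h+1}-\underline{V}_{k,h+1}\bigr)^2(s,a)$. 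Pushing the transition forward with $\sum_{(s,a)}w_{k,h}(s,a)\mathbb{P}(s'|s,a)=w_{k,h+1}(s')$, the left‑hand side is bounded by $\sum_{k=1}^K\sum_{h=1}^H\sum_{s'}w_{k,h+1}(s')\bigl(\overline{V}_{k,h+1}-\underline{V}_{k,h+1}\bigr)^2(s')$.

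Next I would bound the squared gap pointwise. Lemma~\ref{Lemma: connect barV - underline V with CB} states that $\overline{V}_{k,h+1}-\underline{V}_{k,h+1}$ at a state is at most a trajectory expectation of a sum of at most $H$ nonnegative per‑step terms, each of the form $2CB_k(\cdot)$ plus $\sum_{j}\sqrt{2H^2L^P/N_{k-1}(\cdot)}$. Squaring this bound, moving the square inside the trajectory expectation by Jensen, and applying the power‑mean inequality both to the sum of $\le H$ summands and to the inner sum of $\le n$ terms gives
\begin{align*}
\bigl(\overline{V}_{k,h+1}-\underline{V}_{k,h+1}\bigr)^2(s')
\;\le\; O(H)\,\mathbb{E}_{\mathrm{traj}_k}\!\left[\sum_{i=h+1}^{H}\left(CB^2_{k}(s_i,a_i) + n H^2 L^P\sum_{j=1}^n\frac{1}{N_{k-1}((s_i,a_i)[Z^P_j])}\right)\,\Big|\,s_{h+1}=s'\right],
\end{align*}
where $(s_i,a_i)$ denotes the step‑$i$ state‑action pair under $\pi_k$ (which is well defined because, conditioned on the past, $CB_k$ and $N_{k-1}$ are deterministic functions of $(s_i,a_i)$).

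Then I would interchange the trajectory expectation with the visitation weights: for $i\ge h+1$ and any nonnegative $g$, the occupancy identity $\sum_{s'}w_{k,h+1}(s')\,\mathbb{E}_{\mathrm{traj}_k}[g(s_i,a_i)\mid s_{h+1}=s'] = \sum_{(s,a)}w_{k,i}(s,a)\,g(s,a)$ holds (no early termination occurs in the FMDP). Summing over $h$ costs one further factor of $H$, leaving
\begin{align*}
\sum_{k=1}^K\sum_{h=1}^H\sum_{s'}w_{k,h+1}(s')\bigl(\overline{V}_{k,h+1}-\underline{V}_{k,h+1}\bigr)^2(s')
\;\le\; O(H^2)\sum_{k}\sum_{i}\sum_{(s,a)}w_{k,i}(s,a)\left(CB^2_{k}(s,a) + n H^2 L^P\sum_{j=1}^n\frac{1}{N_{k-1}((s,a)[Z^P_j])}\right).
\end{align*}
The first group of terms is $O(\log T)$ by Lemma~\ref{Lemma: sum of square CB}, and each term $\sum_{k,i}\sum_{(s,a)}w_{k,i}(s,a)/N_{k-1}((s,a)[Z^P_j])$ is $O(X^P_j\log T)$ by the first inequality of Lemma~\ref{Lemma: sum of w/N}; summing over $j$ and absorbing all constants depending on $H,n,|\mathcal{S}_j|,X^P_j$ into the $O(\cdot)$ yields the claimed $O(\log T)$ bound.

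The main obstacle is the bookkeeping inside the squaring step: $CB_k$ is a sum of heterogeneous pieces (the Bernstein terms $\sqrt{\hat{\sigma}^2 L/N}$, the lower‑order $1/N$ pieces, and the cross pieces $\phi_{k,i}\phi_{k,j}$), so after squaring one must verify that every resulting piece is dominated by a quantity of the form $w/N$, $w/(N_iN_j)\le w/N_{i,j}$ (with $N_{i,j}$ counting joint visits to $(s,a)[Z^P_i\cup Z^P_j]$), or $w/N^2$ — each of which is $O(\log T)$ or smaller by the appropriate part of Lemma~\ref{Lemma: sum of w/N}. Conveniently this verification has essentially been carried out already inside Lemma~\ref{Lemma: sum of square CB}, so the only remaining work is to justify the occupancy‑measure interchange and to track the polynomial‑in‑$H,n,|\mathcal{S}_j|$ prefactors, neither of which affects the dependence on $T$.
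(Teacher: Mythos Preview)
Your proposal is correct and follows essentially the same route as the paper: apply Jensen to collapse the nested factored expectations into $\mathbb{P}(\overline{V}_{k,h+1}-\underline{V}_{k,h+1})^2(s,a)$, push forward to $w_{k,h+1}$, invoke Lemma~\ref{Lemma: connect barV - underline V with CB} to replace the gap by a trajectory average of $U_k=2CB_k+\sum_j\sqrt{2H^2L^P/N_{k-1}}$, square via power-mean/Jensen to pick up an $H^2$ factor, swap trajectory expectations for occupancy weights, and close with Lemma~\ref{Lemma: sum of square CB} and Lemma~\ref{Lemma: sum of w/N}. The only cosmetic difference is that the paper works directly with $U_k^2$ rather than splitting off the $nH^2L^P\sum_j 1/N_{k-1}$ piece as you do, but the bookkeeping is equivalent.
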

\begin{proof}
For notation simplicity, we use $\mathbb{E}_{i}$ and  $\mathbb{E}_{[i:j]}$ as a shorthand of $\mathbb{E}_{s'[i] \sim \mathbb{P}_{i}(\cdot|(s,a)[Z^P_i])}$ and $\mathbb{E}_{s'[i:j]\sim \mathbb{P}_{[i:j]}(\cdot|s,a)}$. 
\begin{align*}
    & \sum_{k,h,s,a}w_{k,h}(s,a) \mathbb{E}_{[1:i]}\left[\left(\mathbb{E}_{[i+1:n]}\left(\overline{V}_{k,h+1}- \underline{V}_{k,h+1}\right)(s')\right)^2\right]  \\
    \leq & \sum_{k,h,s,a} w_{k,h}(s,a) \mathbb{E}_{[1:i]} \mathbb{E}_{[i+1:n]} \left(\overline{V}_{k,h+1}- \underline{V}_{k,h+1}\right)^2(s,a)\\
    = & \sum_{k,h,s,a} w_{k,h}(s,a) \mathbb{E}_{[1:n]} \left(\overline{V}_{k,h+1}- \underline{V}_{k,h+1}\right)^2(s,a)  \\
    = & \sum_{k=1}^{K} \sum_{h = 1}^{H} \sum_{s,a} w_{k,h+1}(s,a) \left(\overline{V}_{k,h+1}- \underline{V}_{k,h+1}\right)^2(s,a) 
\end{align*}

Define $U_{k}(s,a) = 2CB_{k}(s,a) + \sum_{j=1}^{n}\sqrt{\frac{2H^2 L^P}{N_{k-1}((s,a)[Z^P_j])}}$. By Lemma~\ref{Lemma: connect barV - underline V with CB}, we have
\begin{align}
\label{equation: barV-underlineV sum}
    & \sum_{k=1}^{K} \sum_{h = 1}^{H} \sum_{s,a} w_{k,h+1}(s,a) \left(\overline{V}_{k,h+1}- \underline{V}_{k,h+1}\right)^2(s,a) \notag \\
    \leq & \sum_{k,h,s,a} w_{k,h+1}(s,a) \left(\sum_{h_1=h+1}^{H} \sum_{s_{h_1},a_{h_1}} \operatorname{Pr}(s_{h_1},a_{h_1}|s_{h+1} = s, a_{h+1}=a) U_{k}(s_{h_1},a_{h_1})\right)^2 \notag\\
    \leq & \sum_{k,h,s,a} w_{k,h+1}(s,a) H \sum_{h_1=h+1}^{H} \left( \sum_{s_{h_1},a_{h_1}} \operatorname{Pr}(s_{h_1},a_{h_1}|s_{h+1} = s, a_{h+1}=a) U_{k}(s_{h_1},a_{h_1}) \right)^2\notag \\
    \leq &  \sum_{k,h,s,a} w_{k,h+1}(s,a) H \sum_{h_1=h+1}^{H} \sum_{s_{h_1},a_{h_1}} \operatorname{Pr}(s_{h_1},a_{h_1}|s_{h+1} = s, a_{h+1}=a) \left(  U_{k}(s_{h_1},a_{h_1}) \right)^2 \notag\\
    = & \sum_{k,h} H \sum_{h_1 = h+1}^{H} \sum_{s_{h_1},a_{h_1}} w_{k,h_1}(s_{h_1},a_{h_1}) \left(U_{k}(s_{h_1},a_{h_1})\right)^2 \notag\\
    \leq &\sum_{k,h} H^2 \sum_{s_h,a_h} w_{k,h}(s_h,a_h) \left(U_{k}(s_{h},a_{h})\right)^2 
\end{align}
Plugging the definition of $U_{k}(s,a)$  into Equ.~\ref{equation: barV-underlineV sum}, we have:
\begin{align}
    \label{equation: barV-underlineV sum CB1}
    & \sum_{k=1}^{K} \sum_{h = 1}^{H} \sum_{s,a} w_{k,h+1}(s,a) \left(\overline{V}_{k,h+1}- \underline{V}_{k,h+1}\right)^2(s,a)  \\
    \leq &\sum_{k,h}H^2\sum_{s_h,a_h} w_{k,h}(s_h,a_h)\left(2CB_{k}(s,a) + \sum_{j=1}^{n}\sqrt{\frac{2H^2 L^P}{N_{k-1}((s,a)[Z^P_j])}}\right)^2 \\
    \leq &\sum_{k,h}2nH^2 \sum_{s_h,a_h} w_{k,h}(s_h,a_h)\left(CB^2_k(s,a)+ \sum_{j=1}^{n}\frac{2H^2L^P}{N_{k-1}((s,a)[Z_j^P])}\right) \\
    \leq & \sum_{k,h}2nH^2 \sum_{s_h,a_h} w_{k,h}(s_h,a_h)CB^2_k(s,a) + 16nH^4 X^P_iL^P \log T
\end{align}
The last inequality is due to Lemma~\ref{Lemma: sum of w/N}. We can bound $\sum_{k,h}2nH^2 \sum_{s_h,a_h} w_{k,h}(s_h,a_h)CB^2_k(s,a)$ by Lemma~\ref{Lemma: sum of square CB}. Summing up over all terms, we can show that $\sum_{k=1}^{K} \sum_{h = 1}^{H} \sum_{s,a} w_{k,h+1}(s,a) \left(\overline{V}_{k,h+1}- \underline{V}_{k,h+1}\right)^2(s,a)$ is of order $O(1)$.
\end{proof}

\begin{lemma}
\label{Lemma: summation of barV-underlineV}
Under event $\Lambda_1$ and $\Lambda_2$, for any $i \in [n]$,we have
$$\sum_{k=1}^{K}\sum_{h=1}^{H}\sum_{(s,a)\in \mathcal{X}}w_{k,h}(s,a) u_{k,h,i}(s,a)\leq O(H^2 \sum_{j=1}^{n} \sqrt{T X^P_j|\mathcal{S}_j| L^P}),$$
Here $O$ hides the lower order terms w.r.t. $T$.
\end{lemma}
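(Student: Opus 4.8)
The plan is to reduce the empirical quantity $u_{k,h,i}(s,a)$ — defined in Section~\ref{sec: FMDP-BF algorithm} through the estimated transition $\hat{\mathbb{P}}_k$ — to its analogue under the true transition $\mathbb{P}$, for which a bound is already available in Lemma~\ref{Lemma: summation of barV-underlineV with true transition}. Writing $W_{k,h+1} := \overline{V}_{k,h+1} - \underline{V}_{k,h+1}$, which satisfies $0 \le W_{k,h+1} \le H$ on $\Omega$, we have $u_{k,h,i}(s,a) = \hat{\mathbb{E}}_{k,[1:i]}\big[(\hat{\mathbb{E}}_{k,[i+1:n]} W_{k,h+1}(s'))^2\big]$, while the quantity bounded in Lemma~\ref{Lemma: summation of barV-underlineV with true transition} is $\mathbb{E}_{[1:i]}\big[(\mathbb{E}_{[i+1:n]} W_{k,h+1}(s'))^2\big]$. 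First I would swap the transition factors one at a time, exactly as in the proof of Lemma~\ref{Lemma: variance difference between that of empirical and true transition}: replacing $\hat{\mathbb{P}}_{k,j}$ by $\mathbb{P}_j$, either in the outer expectation or inside the squared inner expectation (expanding $(\hat{\mathbb{E}}W)^2 - (\mathbb{E}W)^2 = (\hat{\mathbb{E}}W - \mathbb{E}W)(\hat{\mathbb{E}}W + \mathbb{E}W)$), changes the value by at most $\|(\hat{\mathbb{P}}_{k,j}-\mathbb{P}_j)(\cdot\mid(s,a)[Z^P_j])\|_1$ times the $L_\infty$-norm of the remaining expression, which is $O(H^2)$. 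This yields $|u_{k,h,i}(s,a) - \mathbb{E}_{[1:i]}(\mathbb{E}_{[i+1:n]} W_{k,h+1}(s'))^2| \le c\,H^2 \sum_{j=1}^n \|(\hat{\mathbb{P}}_{k,j}-\mathbb{P}_j)(\cdot\mid(s,a)[Z^P_j])\|_1$ for an absolute constant $c$.

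Next I would invoke the $L_1$ concentration bound, Inq.~\ref{ineq: transition concentration 2} of Lemma~\ref{Lemma: high prob. event}, valid under $\Lambda_1$: $\|(\hat{\mathbb{P}}_{k,j}-\mathbb{P}_j)(\cdot\mid(s,a)[Z^P_j])\|_1 \le 2\sqrt{|\mathcal{S}_j|L^P/N_{k-1}((s,a)[Z^P_j])} + 4|\mathcal{S}_j|L^P/(3N_{k-1}((s,a)[Z^P_j]))$. Multiplying by $w_{k,h}(s,a)$ and summing over $k,h,(s,a)$: the $1/N_{k-1}$ term contributes only $O(H^2 |\mathcal{S}_j| L^P X^P_j \log T)$ by Inq.~\ref{eqn: sum of w/N, 1} of Lemma~\ref{Lemma: sum of w/N}, which is lower-order in $T$; and the $1/\sqrt{N_{k-1}}$ term is handled by Cauchy–Schwarz together with $\sum_{k,h,s,a} w_{k,h}(s,a) = T$ and Inq.~\ref{eqn: sum of w/N, 1}, giving $\sum_{k,h,s,a} w_{k,h}(s,a)/\sqrt{N_{k-1}((s,a)[Z^P_j])} \le \sqrt{T}\cdot\sqrt{8X^P_j\log T}$. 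Hence the total contribution of the swap-error terms is $O\big(H^2 \sum_{j=1}^n \sqrt{T X^P_j |\mathcal{S}_j| L^P \log T}\big)$.

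Finally, the remaining true-transition term $\sum_{k,h,s,a} w_{k,h}(s,a)\, \mathbb{E}_{[1:i]}(\mathbb{E}_{[i+1:n]} W_{k,h+1}(s'))^2$ is $O(\log T)$ by Lemma~\ref{Lemma: summation of barV-underlineV with true transition} (which itself chains through Lemma~\ref{Lemma: connect barV - underline V with CB} and the square-bonus bound Lemma~\ref{Lemma: sum of square CB}). Adding the two pieces and absorbing the $\sqrt{\log T}$ into the $O(\cdot)$ as a lower-order factor in $T$ gives $\sum_{k,h,s,a} w_{k,h}(s,a)\, u_{k,h,i}(s,a) \le O\big(H^2 \sum_{j=1}^n \sqrt{T X^P_j |\mathcal{S}_j| L^P}\big)$, as claimed.

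The main obstacle is the bookkeeping in the first step: $u_{k,h,i}$ is a nested expectation of a square, so replacing $\hat{\mathbb{P}}_k$ by $\mathbb{P}$ requires telescoping through three places — the outer expectation over $s'[1:i]$ and the two factors obtained from $(\hat{\mathbb{E}}W)^2 - (\mathbb{E}W)^2$ over $s'[i+1:n]$ — and one must verify that each residual factor is uniformly $O(H^2)$ and that no cross term of order $\|(\hat{\mathbb{P}}-\mathbb{P})\|_1^2$ escapes the claimed bound. This is routine (it mirrors Lemma~\ref{Lemma: variance difference between that of empirical and true transition}) but is precisely where an extra power of $H$ or $|\mathcal{S}_j|$ could sneak in if done carelessly.
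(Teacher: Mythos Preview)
Your proposal is correct and matches the paper's proof essentially line for line: the paper also splits $u_{k,h,i}$ into the true-transition analogue (bounded $O(\log T)$ by Lemma~\ref{Lemma: summation of barV-underlineV with true transition}) plus two swap-error pieces---one for the outer $\hat{\mathbb{E}}_{[1:i]}\to\mathbb{E}_{[1:i]}$ and one for the inner $\hat{\mathbb{E}}_{[i+1:n]}\to\mathbb{E}_{[i+1:n]}$---each controlled via the $L_1$ concentration Inq.~\ref{ineq: transition concentration 2}, the bound $|W_{k,h+1}|_\infty\le H$, and Cauchy--Schwarz combined with Lemma~\ref{Lemma: sum of w/N}, yielding exactly $O\big(H^2\sum_j\sqrt{TX^P_j|\mathcal{S}_j|L^P}\big)$. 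The only cosmetic difference is that the paper swaps the outer and inner blocks as wholes rather than one factor $j$ at a time, but this is immaterial since either way the error is bounded by $H^2\sum_j\|(\hat{\mathbb{P}}_{k,j}-\mathbb{P}_j)(\cdot\mid(s,a)[Z^P_j])\|_1$.
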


\begin{proof}
For notation simplicity, we use $\mathbb{E}_{i}$ and  $\mathbb{E}_{[i:j]}$ as a shorthand of $\mathbb{E}_{s'[i] \sim \mathbb{P}_{i}(\cdot|(s,a)[Z^P_i])}$ and $\mathbb{E}_{s'[i:j]\sim \mathbb{P}_{[i:j]}(\cdot|(s,a)[Z^P_i])}$. For those expectation w.r.t the empirical transition $\hat{\mathbb{P}}_k$, we use $\hat{\mathbb{E}}_k$ to denote the corresponding expectation.

$u_{k,h,i}(s,a)$ is defined as:
$$u_{k,h,i}(s,a) = \hat{\mathbb{E}}_{[1:i]}\left[\left(\hat{\mathbb{E}}_{[i+1:n]}\left(\overline{V}_{k,h+1}- \underline{V}_{k,h+1}\right)(s')\right)^2\right].$$

\begin{align}
    \label{eqn: summation of barV-underlineV, part 0}
    &\sum_{k,h,s,a}w_{k,h}(s,a) \hat{\mathbb{E}}_{[1:i]}\left[\left(\hat{\mathbb{E}}_{[i+1:n]}\left(\overline{V}_{k,h+1}- \underline{V}_{k,h+1}\right)(s')\right)^2\right] \\
    \label{eqn: summation of barV-underlineV, part 1}
    = & \sum_{k,h,s,a}w_{k,h}(s,a) \mathbb{E}_{[1:i]}\left[\left(\mathbb{E}_{[i+1:n]}\left(\overline{V}_{k,h+1}- \underline{V}_{k,h+1}\right)(s')\right)^2\right] \\
    \label{eqn: summation of barV-underlineV, part 2}
    &+ \sum_{k,h,s,a}w_{k,h}(s,a) \hat{\mathbb{E}}_{[1:i]}\left[\left(\hat{\mathbb{E}}_{[i+1:n]}\left(\overline{V}_{k,h+1}- \underline{V}_{k,h+1}\right)(s')\right)^2\right] \notag \\
    &- \sum_{k,h,s,a}w_{k,h}(s,a) \mathbb{E}_{[1:i]}\left[\left(\hat{\mathbb{E}}_{[i+1:n]}\left(\overline{V}_{k,h+1}- \underline{V}_{k,h+1}\right)(s')\right)^2\right] \\
    \label{eqn: summation of barV-underlineV, part 3}
   & + \sum_{k,h,s,a}w_{k,h}(s,a) \mathbb{E}_{[1:i]}\left[\left(\hat{\mathbb{E}}_{[i+1:n]}\left(\overline{V}_{k,h+1}- \underline{V}_{k,h+1}\right)(s')\right)^2\right] \notag \\
   &- \sum_{k,h,s,a}w_{k,h}(s,a) \mathbb{E}_{[1:i]}\left[\left(\mathbb{E}_{[i+1:n]}\left(\overline{V}_{k,h+1}- \underline{V}_{k,h+1}\right)(s')\right)^2\right] 
\end{align}
That is

We can bound Eqn.~\ref{eqn: summation of barV-underlineV, part 2} and Eqn.~\ref{eqn: summation of barV-underlineV, part 3} by Lemma~\ref{Lemma: high prob. event}. For Eqn.~\ref{eqn: summation of barV-underlineV, part 2}, we have
\begin{align*}
    & \sum_{k,h,s,a}w_{k,h}(s,a) \hat{\mathbb{E}}_{[1:i]}\left[\left(\hat{\mathbb{E}}_{[i+1:n]}\left(\overline{V}_{k,h+1}- \underline{V}_{k,h+1}\right)(s')\right)^2\right] \\
    &- \sum_{k,h,s,a}w_{k,h}(s,a) \mathbb{E}_{[1:i]}\left[\left(\hat{\mathbb{E}}_{[i+1:n]}\left(\overline{V}_{k,h+1}- \underline{V}_{k,h+1}\right)(s')\right)^2\right] \\
    \leq & \sum_{k,h,s,a} w_{k,h}(s,a) \left|\hat{\mathbb{P}}_{[1:i]}(\cdot|s,a) - \mathbb{P}_{[1:i]}(\cdot|s,a)\right|_1 H^2 \\
    \leq & \sum_{k,h,s,a} w_{k,h}(s,a) \sum_{j=1}^{i}\sqrt{\frac{|\mathcal{S}_j|L^P}{N_{k-1}((s,a)[Z^P_j])}} H^2 \\
    \leq &8H^2 \sum_{j=1}^{i} \sqrt{T X^P_j|\mathcal{S}_j| L^P \log T}
\end{align*}
The first inequality is due to $\left(\hat{\mathbb{E}}_{[i+1:n]}\left(\overline{V}_{k,h+1}- \underline{V}_{k,h+1}\right)(s')\right)^2 \leq H^2$ for any given $s'[1:i]$. The second inequality is due to Lemma~\ref{Lemma: high prob. event}. The third inequality is due to Lemma~\ref{Lemma: sum of w/N}.

For Eqn.~\ref{eqn: summation of barV-underlineV, part 3}, similarly we have
\begin{align*}
    & \sum_{k,h,s,a}w_{k,h}(s,a) \mathbb{E}_{[1:i]}\left[\left(\hat{\mathbb{E}}_{[i+1:n]}\left(\overline{V}_{k,h+1}- \underline{V}_{k,h+1}\right)(s')\right)^2\right] \\
    &- \sum_{k,h,s,a}w_{k,h}(s,a) \mathbb{E}_{[1:i]}\left[\left(\mathbb{E}_{[i+1:n]}\left(\overline{V}_{k,h+1}- \underline{V}_{k,h+1}\right)(s')\right)^2\right] \\
    \leq & 2H \sum_{k,h,s,a}w_{k,h}(s,a) \mathbb{E}_{[1:i]}\left[\left(\hat{\mathbb{E}}_{[i+1:n]}\left(\overline{V}_{k,h+1}- \underline{V}_{k,h+1}\right)(s')\right)- \left(\mathbb{E}_{[i+1:n]}\left(\overline{V}_{k,h+1}- \underline{V}_{k,h+1}\right)(s')\right) \right] \\
    \leq & 2H \sum_{k,h,s,a}w_{k,h}(s,a) \mathbb{E}_{[1:i]}\left[ H  \left|\hat{\mathbb{P}}_{[i+1:n]}(\cdot|s,a) - \mathbb{P}_{[i+1:n]}(\cdot|s,a)\right|_1\right]\\
    \leq & 2H^2 \sum_{k,h,s,a}w_{k,h}(s,a) \mathbb{E}_{[1:i]}\left[ \sum_{j=i+1}^{n}\sqrt{\frac{|\mathcal{S}_j|L^P}{N_{k-1}((s,a)[Z^P_j])}}\right] \\
    = & 2H^2 \sum_{k,h,s,a}w_{k,h}(s,a) \sum_{j=i+1}^{n}\sqrt{\frac{|\mathcal{S}_j|L^P}{N_{k-1}((s,a)[Z^P_j])}} \\
    \leq & 16H^2 \sum_{j=i+1}^{n} \sqrt{T X^P_j|\mathcal{S}_j| L^P \log T}
\end{align*}
Eqn.~\ref{eqn: summation of barV-underlineV, part 1} can be bounded by Lemma~\ref{Lemma: summation of barV-underlineV with true transition}, which has only logarithmic dependence on $T$.  
\end{proof}

\section{Proof of Theorem~\ref{thm: lower bound}}
\label{sec: proof of the lower bound}
\begin{proof}
   We consider the following two hard instances.
   
   The first instance is an extension of the hard instance in \cite{jaksch10near}. They proposed a hard instance for non-factored weakly-communicating MDP, which indicates that the lower bound in that setting is $\Omega(\sqrt{DSAT})$. When transformed to the hard instance for non-factored episodic MDP, it shows a lower bound of order $\Omega(\sqrt{HSAT})$ in episodic setting~\cite{azar2017minimax,jin2018q}. Consider a factored MDP instance with $d=m=n$ and $\mathcal{X}[Z_i^R] = \mathcal{X}[Z_i^P] = \mathcal{X}_i =  \mathcal{S}_i \times \mathcal{A}_i, \quad i=1,...,n$. This factored MDP can be decomposed into $n$ independent non-factored MDPs. By simply setting these $n$ non-factored MDPs to be the construction used in \cite{jaksch10near}, the regret for each MDP is $\Omega(\sqrt{H |\mathcal{X}[Z_i^P]| T})$. The total regret is $\Omega(\sum_{i=1}^{n}\sqrt{H |\mathcal{X}[Z_i^P]| T})$. Note that in our setting, the reward $R = \frac{1}{m}\sum_{i=1}^{m} R_i$ is $[0,1]$-bounded. Therefore, we need to normalize the reward function in the hard instance by a factor of $\frac{1}{m}$. This leads to a final lower bound of order $\Omega(\frac{1}{m}\sum_{i=1}^{n}\sqrt{H |\mathcal{X}[Z_i^P]| T}) = \Omega(\frac{1}{n}\sum_{i=1}^{n}\sqrt{H |\mathcal{X}[Z_i^P]| T}) $. Similar construction has been used to prove the lower bound for factored weakly-communicating MDP~\citep{xu2020near}.
   
    The second hard instance is an extension of the hard instance for stochastic multi-armed bandits. The lower bound of stochastic multi-armed bandits shows that the regret of a MAB problem with $k_0$ arms in $T_0$ steps is lower bounded by $\Omega(\sqrt{k_0T_0})$. Consider a factored MDP instance with $d=m=n$ and $\mathcal{X}[Z_i^R] = \mathcal{X}[Z_i^P] = \mathcal{X}_i =  \mathcal{S}_i \times \mathcal{A}_i, \quad i=1,...,n$. There are $m$ independent reward functions, each associated with an independent deterministic transition. For reward function $i$, There are $\log_{2} (|\mathcal{S}_i|)$ levels of states, which form a binary tree of depth $\log_{2} (|\mathcal{S}_i|)$. There are $2^{h-1}$ states in level $h$, and thus $|\mathcal{S}_i|-1$ states in total. Only those states in level $\log_{2} (|\mathcal{S}_i|)$ have non-zero rewards, the number of which is $\frac{|\mathcal{S}_i|}{2}$. After taking actions at state $s'$ in level $\log_{2} (|\mathcal{S}_i|)$, the agent will transits back to state $s'$ in level $\log_{2} (|\mathcal{S}_i|)$. That is to say, the agent can enter "reward states" at least $H-\log_{2} (|\mathcal{S}_i|) \geq \frac{H}{2}$ times in one episodes. For each reward function $i$, the instance can be regarded as an MAB problem $\frac{|\mathcal{S}_i\mathcal{A}_i|}{2}$ arms running for $\frac{KH}{2}$ steps\footnote{The instance is not exactly an MAB with $\frac{|\mathcal{S}_i\mathcal{A}_i|}{2}$ arms running for $\frac{KH}{2}$ steps, since in each episode the agent will choose a state $s'$, and then stay in $s'$ and choose different actions for $\frac{H}{2}$ steps. However, this is a mild difference and we can still follow the same proof idea of the lower bound for MAB (See e.g. Theorem 14.1 in~\cite{lattimore2020bandit})}, thus the regret for reward $i$ is $\Omega(\sqrt{\frac{|\mathcal{S}_i\mathcal{A}_i|}{2}\frac{KH}{2} }) = \Omega(\sqrt{|\mathcal{X}[Z_i^R]|T})$. In this construction, the total reward function can be regarded as the average of $m$ independent reward functions of $m$ stochastic MDP. This indicates that the lower bound is $\Omega\left(\frac{1}{m}\sum_{i=1}^{m} \sqrt{\left|\mathcal{X}[Z_i^R]\right|
  T} \right) \geq \Omega\left(\frac{1}{m}\sum_{i=1}^{m} \sqrt{\left|\mathcal{X}[Z_i^R]\right|
  T} \right)$.

   To sum up, the regret is lower bounded by $$\Omega \left(\max\left\{ \frac{1}{m}\sum_{i=1}^{m} \sqrt{|\mathcal{X}[Z_i^R]|
  T}, \frac{1}{n}\sum_{j=1}^{n} \sqrt{H|\mathcal{X}[Z_j^P]|T }\right\}\right),$$
   which is of the same order as
 $$\Omega\left( \frac{1}{m}\sum_{i=1}^{n} \sqrt{|\mathcal{X}[Z_i^R]|
  T} + \frac{1}{n}\sum_{j=1}^{n} \sqrt{H|\mathcal{X}[Z_j^P]|T }\right).$$
\end{proof}

\section{Omitted Details in Section~\ref{sec: Knapsack RL}}
\label{sec: omitted details for knapsack RL}
\subsection{Specific instances}
\begin{figure}[htbp]
\label{fig1}
\centering
\subfigure{
\begin{minipage}[t]{0.5\linewidth}
\centering
\includegraphics[width=2.7in]{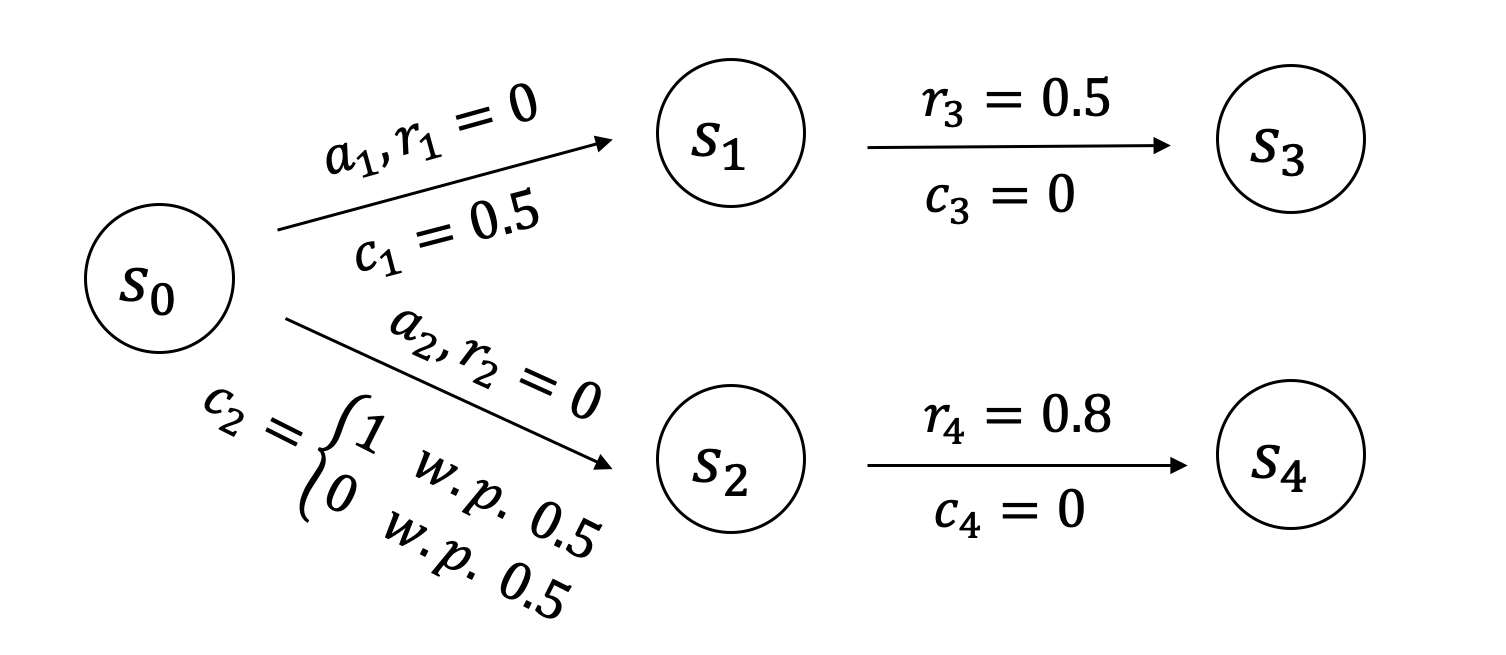}
%\caption{fig1}
\end{minipage}%
}%
\subfigure{
\begin{minipage}[t]{0.5\linewidth}
\centering
\includegraphics[width=2.7in]{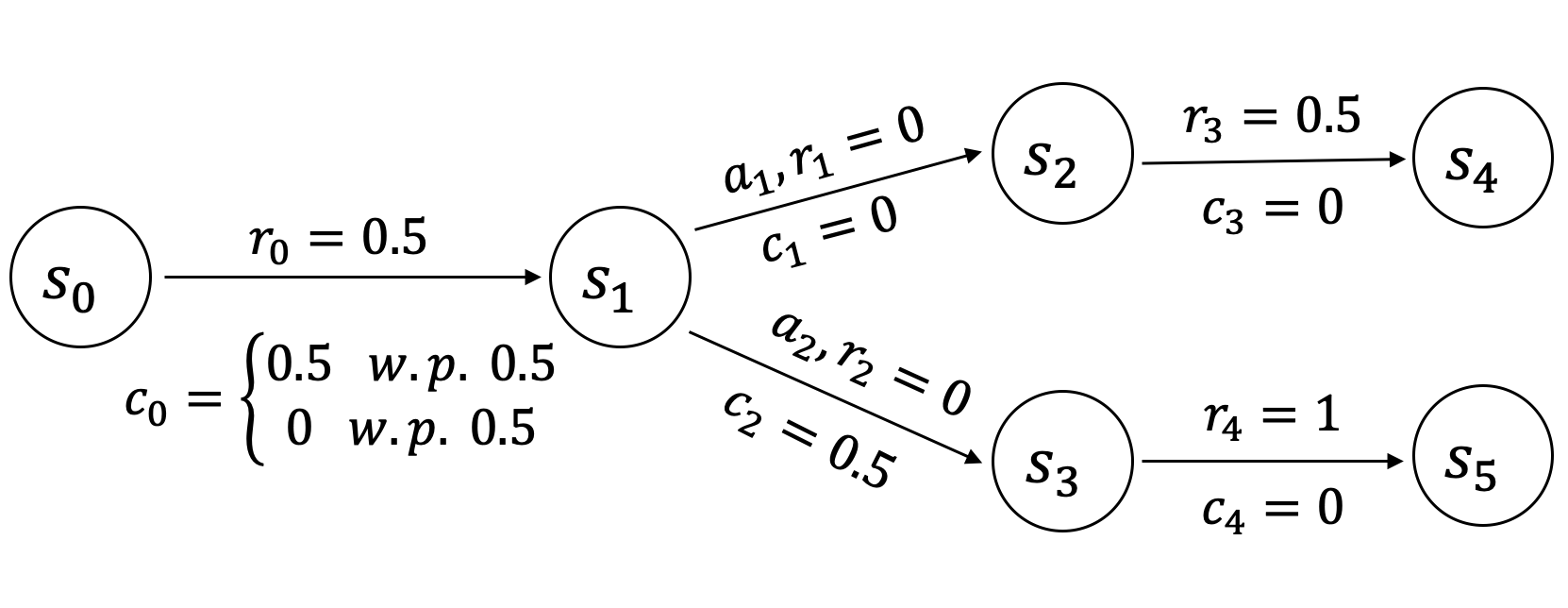}
%\caption{fig2}
\end{minipage}%
}%
\centering
\caption{MDP Instances, Budget $B_0 = 0.5$}
\end{figure}

 We further explain the difference with two specific examples in Fig.~\ref{fig1}. No matter which setting the previous work considers, the main idea of the algorithms in \cite{efroni2020exploration, brantley2020constrained} is to explore the MDP environment, and then find a near-optimal policy satisfying that the \textit{expected} cumulative cost less than a constant vector $\boldsymbol{B}_0$, i.e. $\mathbb{E} [\sum_{h \in [H]} \boldsymbol{c}_h] \leq \boldsymbol{B_0}$. However, in our setting, the agent has to terminate the interaction once the total costs in this episode exceed budget $\boldsymbol{B}$. Because of this difference, their algorithm will converge to an sub-optimal policy with unbounded regret in our setting. In the first MDP instance (Fig.~\ref{fig1}), the agent starts from state $s_0$. After taking action $a_1$, it will transit to $s_1$ with a deterministic cost $c_1=0.5$. After taking action $a_2$, it will transit to $s_2$. The cost of taking $a_2$ is $0$ with prob.~$0.5$, and $1$ with prob~$0.5$. There are no rewards in state $s_0$. In state $s_1$ and $s_2$, the agent will not suffer any costs. The deterministic rewards are $0.5$ and $0.8$ respectively. $s_3$ and $s_4$ are termination states. The budget $B_0$ is $0.5$. For this MDP instance, the optimal policy is to take action $a_1$ in $s_0$, since the agent can receive total rewards $0.5$ by taking $a_1$. If taking action $a_2$, the agent will terminate at state $s_2$ with no rewards with prob.~$0.5$, which leads to an expected total rewards of $0.4$. However, if we run the algorithm in \cite{efroni2020exploration, brantley2020constrained}, the algorithm will converge to the policy that always selects action $a_2$ in $s_0$, since the expected cumulative cost of taking $a_2$ is $0.5 \leq B_0$.

We further show that the policies defined in the previous literature are not expressive enough in our setting. In the second instance, the agent starts in state $s_0$ with one action $a_0$. By taking $a_0$, the agent transits to $s_1$ with no rewards. The cost of taking $a_0$ is $0$ with prob.~$0.5$, and $1$ with prob~$0.5$. In $s_1$, the agent needs to decide to take $a_1$ or $a_2$, with deterministic costs of $0$ and $0.5$ respectively. After taking $a_1$, the agent will transits to $s_2$, in which it can obtain a reward $r_3=0.5$. While by taking $a_2$, the agent can transits to $s_3$, and obtain a reward $r_4=1$. The budget $B = 0.5$. In this instance, the action taken in $s_1$ depends on the remaining budget of the agent. That is to say, the policy is not expressive enough if it is defined as a mapping from \textit{state} to \textit{action}. Instead, we need to define it as a mapping from both \textit{state} and \textit{remaining budget} to \textit{action}. However, previous literature only considers policies on the state space, which cannot deal with this problem.

\subsection{Algorithm and Regret}

We denote $V^{\pi}_{h}(s,\boldsymbol{b})$ as the value function in state $s$ at horizon $h$ following policy $\pi$, and the agent's remaining budget is $\boldsymbol{b}$. 
%For notation simplicity, we use $\mathbb{P}_S V(s,a, \boldsymbol{b})$ as a shorthand of $\sum_{s'}\mathbb{P}(s'|s,a)V(s', \boldsymbol{b})$, and $\mathbb{P}_C V(s,a)$ as a shorthand of $\sum_{\boldsymbol{c}_0} \mathbb{P}(\boldsymbol{C}(s,a) =\boldsymbol{c}_0|s,a)V(s',\boldsymbol{b}-\boldsymbol{c}_0)$. 
For notation simplicity, we define $\mathbb{P}_S\mathbb{P}_C V(s,a) =\sum_{s'}\sum_{\boldsymbol{c}_0}\mathbb{P}(s'|s,a) \mathbb{P}(\boldsymbol{C}(s,a) =\boldsymbol{c}_0|s,a)V(s',\boldsymbol{b}-\boldsymbol{c}_0)$. We use $\mathbb{P}_{C,i}(c_0|s,a)$ to denote the "transition probability" of budget $i$, i.e. $\mathbb{P}(\boldsymbol{C}_i(s,a) = c_0|s,a)$. The Bellman Equation of our setting is written as:
 
 \begin{align}
     \label{equation: Bellman equation}
     V^{\pi}_{h}(s,\boldsymbol{b}) = \left\{
     \begin{array}{ll}
     \bar{R}(s,\pi_h(s,\boldsymbol{b})) + \mathbb{P}_S\mathbb{P}_C V_{h+1}^{\pi}(s,\pi_h(s,\boldsymbol{b}),\boldsymbol{b}) & b > 0\\ 0 &  b \leq 0
     \end{array}\right.
 \end{align}
 
Suppose $N_{k}(s,a)$ denotes the number of times $(s,a)$ has been encountered in the first $k$ episodes. We estimate the mean value of $r(s,a)$, the transition matrix $\mathbb{P}_S$ and $\mathbb{P}_C$ in the following way:
  $$\hat{R}_k(s,a) = \frac{\sum_{k,h}\mathbbm{1}[s_{k,h} = s, a_{k,h} = a] \cdot r_{k,h}}{N_{k-1}(s,a)}$$
  $$\hat{\mathbb{P}}_{S,k}(s'|s,a) = \frac{N_{k-1}(s,a,s')}{N_{k-1}(s,a)}$$
  $$\hat{\mathbb{P}}_{C,k,i}\left(\boldsymbol{C}_i(s,a) = c_0|s,a\right) = \frac{\sum_{k,h}\mathbbm{1}[\boldsymbol{c}_{k,h,i} = c_0, s_{k,h} = s, a_{k,h} = a]}{N_{k-1}(s,a)}$$
  
 Following the definition in the factored MDP setting, we define the confidence bonus for rewards and transition respectively (for $0 \le i \le d$):
  \begin{align}
    CB^R_{k}(s,a) = &\sqrt{\frac{2 \hat{\sigma}_{R}^{2}(s,a) \log(2SAT) }{N_{k-1}(s,a)}} + \frac{8\log(2SAT)}{3N_{k-1}(s,a)}\\
      CB^P_{k,i}(s,a,\boldsymbol{b}) = & \sqrt{\frac{4 \hat{\sigma}_{P,i}^2(\overline{V}_{k,h+1},s, a,\boldsymbol{b}) L}{N_{k-1}(s,a)}} +  \sqrt{\frac{2u_{k,h,i}(s,a,\boldsymbol{b})L}{N_{k-1}(s,a)}}\\
        &+ \sqrt{\frac{32H^2L}{N_{k-1}(s,a)}}\sum_{j=1}^{n}\left( \left(\frac{4nL}{N_{k-1}(s,a)}\right)^{\frac{1}{4}}+ \sqrt{\frac{4nL}{3N_{k-1}(s,a)}}\right)\\
        &+ \sum_{j=1}^{n} H \left(\sqrt{\frac{4|\mathcal{S}_i|L^P}{N_{k-1}(s,a)}} + \frac{4|\mathcal{S}_i|L}{3N_{k-1}(s,a)}\right)\left(\sqrt{\frac{4|\mathcal{S}_j|L^P}{N_{k-1}(s,a)}} + \frac{4|\mathcal{S}_j|L}{3N_{k-1}(s,a)}\right),
  \end{align}
  where $L = \log(2dSAT)+ d \log(mB)$ is the logarithmic factors because of union bounds. The additional $d \log(mB)$ is because that we need to take union bounds over all possible budget $\boldsymbol{b}$. This difference compared with factored MDP is mainly due to the noised offset model.
  
  $CB^R_k(s,a)$ is the confidence bonus for rewards, and $\hat{\sigma}_{R}(s,a)$ denotes the empirical variance of reward $R(s,a)$, which is defined as:
  \begin{align*}
      \hat{\sigma}_{R}(s,a) = \frac{1}{N_{k-1}(s,a)} \sum_{k=1}^{k-1}\sum_{h=1}^{H} \mathbbm{1}\left[(s,a)_{k,h} = (s,a)\right]\cdot \left(r_{k,h}(s_{k,h},a_{k,h})\right)^2 - \left(\hat{R}_{k}(s,a)\right)^2
  \end{align*}
  
  $CB^P_{k,0}(s,a)$ is the confidence bonus for state transition estimation $\hat{\mathbb{P}}_S$, and $\{CB^P_{k,i}(s,a)\}_{i=1,...,d}$ is the confidence bonus for budget transition estimation $\{\hat{\mathbb{P}}_{C,i}\}_{i=1,...,d}$. $\hat{\sigma}_{P,i}^2(\overline{V}_{k,h+1},s, a)$ is the empirical variance of corresponding transition:
  \begin{align*}
    &\hat{\sigma}_0^2(\overline{V}_{k,h+1},s, a,\boldsymbol{b}) =  \operatorname{Var}_{s'\sim \hat{\mathbb{P}}_{S,k}(\cdot| (s,a)[Z_i^P])}\left(\mathbb{E}_{\boldsymbol{c} \sim \hat{\mathbb{P}}_{C,k}(\cdot|s,a)} \overline{V}_{k,h+1}(s',\boldsymbol{b}-\boldsymbol{c})\right) \\
      &\hat{\sigma}_{P,i}^2(\overline{V}_{k,h+1},s, a,\boldsymbol{b}) \\= &\mathbb{E}_{s' \sim \hat{\mathbb{P}}_{S,k}(\cdot|s,a)} \mathbb{E}_{\boldsymbol{c}_{[1:i-1]} \sim \hat{\mathbb{P}}_{C,k,[1:i-1]}(\cdot|s,a)} \left[\operatorname{Var}_{\boldsymbol{c}_{i}\sim \hat{\mathbb{P}}_{C,k,i}(\cdot| s,a)}\left(\mathbb{E}_{\boldsymbol{c}_{[i+1:n]} \sim \hat{\mathbb{P}}_{C,k,[i+1:d]}(\cdot|s,a)} \overline{V}_{k,h+1}(s',\boldsymbol{b} - \boldsymbol{c})\right)\right],
  \end{align*}
  where $\hat{\mathbb{P}}_{C,k,[d_1:d_2]} = \prod_{i=d_1}^{d_2} \hat{\mathbb{P}}_{C,k,i}$.
  
  $\sqrt{\frac{2u_{k,h,i}(s,a,\boldsymbol{b})}{N_{k-1}(s,a)}}$ is added to compensate the error due to the difference between $V^*_{h+1}$ and $\overline{V}_{k,h+1}$, where $u_{k,h,i}(s,a)$ is defined as:
\begin{align*}
u_{k,h,0}(s,a,\boldsymbol{b}) &= \mathbb{E}_{s' \sim \hat{\mathbb{P}}_{S,k}(\cdot|s,a)}\left[\left(\mathbb{E}_{\boldsymbol{c} \sim \hat{\mathbb{P}}_{C,k}(\cdot|s,a)}\left(\overline{V}_{k,h+1}- \underline{V}_{k,h+1}\right)(s',\boldsymbol{b} - \boldsymbol{c})\right)^2\right] \\
    u_{k,h,i}(s,a,\boldsymbol{b}) &= \mathbb{E}_{s' \sim \hat{\mathbb{P}}_{S,k}(\cdot|s,a)} \mathbb{E}_{\boldsymbol{c}_{[1:i]} \sim \hat{\mathbb{P}}_{C,k,[1:i]}(\cdot|s,a)}\left[\left(\mathbb{E}_{\boldsymbol{c}_{[i+1:n]} \sim \hat{\mathbb{P}}_{C,k,[i+1:d]}(\cdot|s,a)}\left(\overline{V}_{k,h+1}- \underline{V}_{k,h+1}\right)(s',\boldsymbol{b}-\boldsymbol{c})\right)^2\right].
\end{align*}

  We calculate the optimistic value function and find the optimal policy $\pi$ via the following value iteration in our algorithm:
 
  \begin{align}
    \label{equation: empirical Bellman equation}
     \overline{V}_{h}(s,\boldsymbol{b}) = \left\{\begin{array}{cc}
         \max_a \left\{\left[\hat{R}(s,a) + CB(s,a) + \hat{\mathbb{P}}_s\hat{\mathbb{P}}_c \overline{V}_{h+1}(s,a,\boldsymbol{b}) \right]\right\} &  b > 0\\
         0 &  b \leq 0
     \end{array}\right.
 \end{align}
 
 %One possible method to tackle this problem is to use LP to find the optimal policy $\pi$.

\begin{algorithm}[tbh]
\caption{FMDP-BF for RLwK}
\label{alg:algorithm for knapsack RL}
  \begin{algorithmic}[5]
    \State \textbf{Input}: $\delta$
    \State Initialize $N(s,a) = 0$ for any $(s,a) \in \mathcal{X}$
    \For{episode $k=1,2,\cdots$}
        \State Set $\overline{V}_{k,H+1}(s,\boldsymbol{b}) = \underline{V}_{k,H+1}(s,\boldsymbol{b}) = 0$ for all $s,a,\boldsymbol{b}$.
        \State Let $\mathcal{K} = \left\{(s, a) \in \mathcal{S} \times \mathcal{A}:  N_{k}(s, a)>0\right\}$
    	\For {horizon $h=H,H-1,...,1$}
    	    \For{$s \in \mathcal{S}$ and all possible budget $\boldsymbol{b}$ from $\boldsymbol{0}$ to $\boldsymbol{B}$}
        	    \For {$a \in \mathcal{A}$ }
                    \If {$(s,a) \in \mathcal{K}$}
                        \State $\overline{Q}_{k,h}(s,a,\boldsymbol{b}) = \min\{H, \hat{R}_{k}(s,a) + CB_k(s,a) + \hat{\mathbb{P}}_{S,k}\hat{\mathbb{P}}_{C,k} \overline{V}_{k,h+1}(s,a,\boldsymbol{b})\}$
                    \Else
                        \State $\overline{Q}_{k,h}(s,a,\boldsymbol{b}) = H$
                    \EndIf
        		\EndFor
        		\State $\pi_{k,h}(s,\boldsymbol{b}) = \arg \max_a \overline{Q}_{k,h}(s,a,\boldsymbol{b})$
        		\State $\overline{V}_{k, h}(s,\boldsymbol{b})=\max _{a \in \mathcal{A}} Q_{k, h}(s, a,\boldsymbol{b})$
        		\State $\underline{V}_{k,h}(s,\boldsymbol{b}) = \max\left\{0, \hat{R}_{k}(s,\pi_{k,h}) - CB_k(s,\pi_{k,h},,\boldsymbol{b}) + \hat{\mathbb{P}}_{k} \underline{V}_{k,h+1}(s,\pi_{k,h},\boldsymbol{b})\right\}$
    		\EndFor
    	\EndFor
    	\For{step $h = 1,\cdots,H$}
		    \State Take action $a_{k,h} = \arg \max_a \hat{Q}_{k,h}(s_{k,h},a)$
		\EndFor
		\State Update history trajectory $\mathcal{L} = \mathcal{L} \bigcup \{s_i,a_i,r_i,s_{i+1}\}_{i=1,2,...,t_k}$
    \EndFor
  \end{algorithmic}
\end{algorithm}

\begin{proof} (Theorem~\ref{Thm: regret for Knapsack RL})
The proof follows almost the same proof framework of Thm.~\ref{thm: bernstein regret}.  The term $\log(SAT)+ d\log(Bm)$ is due to a union bound over all possible $(T,s,a)$ and budget $\boldsymbol{b}$. This difference is because of the additional union bounds over all budget $\boldsymbol{b}$.
\end{proof}

\subsection{Discussions about Assumption~\ref{assumption: unit cost} and Assumption~\ref{assumption: finite support}}
\label{sec: continuous cost distribution}

Assumptions~\ref{assumption: unit cost} and~\ref{assumption: finite support} limit our Algorithm~\ref{alg:algorithm for knapsack RL} to problems with discrete costs. One may wonder whether it is possible to construct $\epsilon$-net for budget $B$ and possible value of the costs when these assumptions don't hold. In that case, we only need to estimate the discrete cost distributions on the $\epsilon$-net, and then apply Algorithm~\ref{alg:algorithm for knapsack RL} to tackle the problem. Unfortunately, we find that the $\epsilon$-net construction doesn't work for continuous cost distributions, and it is unlikely to achieve efficient regret guarantee without further assumptions and the modification of the basic setting. If the cost distributions are continuous and the remaining budget can take any value in $\mathbb{R}$, a small perturbation on the remaining budget may totally change the policy in the following steps and the optimal value.  To be more specific, suppose the agent enters a certain state with remaining budget $b$. There are three actions to choose, with the cost of $b-\epsilon$, $b$ and $b+\epsilon$, respectively. After suffering the cost, the agent can achieve reward of $0$, $0.5$ and $1$ respectively. Note that we can construct such hard instances with $\epsilon$ extremely small. For these hard instances, we need to carefully estimate  the value function of any remaining budget $b \in \mathbb{R}$ and the density functions of the costs, after which we can calculate the value through Bellman backup and find the optimal policy. However, estimating the density functions requires infinite number of samples and makes the problem intractable. In other words, the ``non-smoothness'' of the value and the policy w.r.t the remaining budget makes the problem difficult for continuous value distribution without further assumptions. This ``non-smoothness'' phenomenon also happens in the classical knapsack problem. 

There are two possible ways to remove these assumptions and apply our algorithm to continuous cost distributions with $\epsilon$-net technique. The first idea is to allow the slight violation of the total budget constraints, with the maximum violation threshold $\delta$, or we assume that the value of any initial state is lipschiz w.r.t the total budget $\boldsymbol{B}$ in a small neighborhood of $\boldsymbol{B}$ (With maximum $L_{\infty}$ distance $\delta$). In that case, we can tolerate the estimation error of each cost function to be at most $\frac{\delta}{H}$. $\epsilon$-net technique with $\epsilon = \frac{\delta}{H}$ still works in this case and we can estimate the cost distribution with precision $\frac{\delta}{H}$. This modification is somewhat reasonable since the agent's policies are always ``smooth'' w.r.t the total budget in a small region near $\boldsymbol{B}$  in many real applications such as games and robotics. The second idea is to consider soft constraints. That is, when the budget constraints are violated, the agent will suffer a loss that is linear w.r.t the violation of the constraints. We assume the linear coefficient is relatively large compared with other parameters. This is also a possible method to remove the non-smoothness w.r.t the total budget, which has wide applications in constrained optimization.

\end{document}